%% file: _main.tex
\documentclass{article}

 \usepackage[preprint]{neurips_2026}

\usepackage[utf8]{inputenc} 
\usepackage[T1]{fontenc}    
\usepackage{hyperref}       
\usepackage{url}            
\usepackage{booktabs}       
\usepackage{amsfonts}       
\usepackage{nicefrac}       
\usepackage{microtype}      
\usepackage{xcolor}         

\usepackage{float}
\usepackage{amsmath}
\usepackage{amssymb}
\usepackage{mathtools}
\usepackage{amsthm}
\usepackage{graphicx}
\usepackage{subcaption}
\usepackage{multirow}
\usepackage{comment}
\usepackage{dsfont}

\newcommand{\figref}[1]{Figure~\ref{#1}}

\theoremstyle{plain}
\newtheorem{theorem}{Theorem}[section]
\newtheorem{proposition}[theorem]{Proposition}

\newtheorem{corollary}[theorem]{Corollary}
\theoremstyle{definition}
\newtheorem{definition}[theorem]{Definition}
\newtheorem{assumption}[theorem]{Assumption}
\theoremstyle{remark}
\newtheorem{remark}[theorem]{Remark}

\title{Feedback Manipulation Regularization: Enabling Offline Agent Alignment for Imitation Learning}

\author{%
  Benjamin D. Poole\\
  Department of Computer Science\\
  University of North Carolina at Charlotte\\
  \texttt{bpoole16@charlotte.edu} \\
  \And
  Minwoo Lee\\
  Department of Computer Science\\
  University of North Carolina at Charlotte\\
  \texttt{minwoo.lee@charlotte.edu} \\
}
\begin{document}

\maketitle

\begin{abstract}
Reinforcement learning (RL) research has increasingly shifted focus towards alignment, ensuring agents learn behaviors adhering to human values. While human demonstrations and feedback have proven crucial for alignment, existing approaches predominantly combine these signals using multi-stage pipelines designed for the contextual bandit framing of language generation. Yet little work explores how these complementary inputs can serve as a richer, interconnected signal for single-stage offline training in fully sequential decision-making environments. We propose Feedback Manipulation Regularization (FMR), an algorithm-agnostic method that harnesses evaluative feedback as a corrective signal to improve the alignment of imitation learning policies. We adapt Safety Gymnasium environments to be a principled testbed for alignment evaluation, demonstrating improved aptitude and up to a 98\% reduction in misalignment across a range of imitation learning algorithms. FMR remains robust in limited data regimes, even when learning from scarce aligned and uninformative noisy demonstrations.
\end{abstract}

\section{Introduction}\label{sec:intro}

Reinforcement learning (RL) has traditionally faced two major challenges: aptitude and alignment. While early RL focused on aptitude, improving an agent's ability to learn and master a problem, recent concerns have shifted toward alignment, ensuring agents learn in ways that adhere to human values, intentions, and preferences \citep{ji_align_2025}. These alignment challenges are expected to intensify as agents become more capable \citep{leike_alignment_2018, amodei_aisafty_2016, dulacarnold_challenges_2019}. One prominent solution is Interactive RL (Int-RL), which integrates human input directly into the RL framework \citep{ li_irlsurvey_2019, arzate_irlsurvey_2020, najar_irlsurvey_2021}. Demonstrations and feedback are of particular interest as intuitive mediums for communication. Demonstrations allow humans to explicitly depict desired behavior, while feedback (e.g., preferences) excels at refining behavior by indicating what is good, bad, or simply preferred \citep{zare_ilsurvey_2024, najar_irlsurvey_2021, christiano_preferences_2017, macglashan_coach_2017}.

Recently, these two modalities have seen significant use in fine-tuning Large Language Models via RL to better align model responses with human expectations \citep{ouyang_training_2022}. These approaches commonly employ Learning from Preferences (LfP) or Reinforcement Learning from Human Feedback (RLHF), yet rely on sequential, multi-stage pipelines that can require environment access or an additional supervised trained policy \citep{ouyang_training_2022, lui_on2off_2025, rafailov_dpo_2023}. Only recent works have begun to explore combining demonstrations with feedback in a single-stage offline scenario \citep{meng_simpo_2024, hong_orpo_2024, hejna_cpl_2024, hejna_ipl_2023}. 

Despite this progress, these methods remain primarily designed within the contextual bandit framing of language generation, which does not readily translate to traditional, fully sequential decision-making environments. Consequently, only limited work has attempted to apply preference-based learning to such settings \citep{hejna_cpl_2024, hejna_ipl_2023}. Yet, in terms of alignment, preference-based feedback carries several notable limitations in these fully sequential settings. Firstly, since preference is a relative judgment rather than an absolute declaration of alignment, a preferred behavior is not necessarily an aligned one. Secondly, this ambiguity is further compounded when comparison pairs are generated or sampled randomly, making it difficult to target specific behaviors without explicitly curating aligned and misaligned pairs. Therefore, preferences are fundamentally insufficient for aligning sequential decision-making behavior in RL settings where behavioral precision is essential.

To address the limitations of preference based approaches, we turn to evaluative feedback. Inspired by TAMER and COACH \cite{knox_interactively_2009, macglashan_coach_2017}, evaluative feedback operates at the state-action pair level, asking "is this behavior good or bad?" rather than "which of the two is better?". This removes the relational ambiguity of preference feedback and enables direct targeting of specific behaviors. This feedback structure motivates a reframing of how demonstrations are incorporated, which we formalize through imitation learning. Pairing evaluative feedback with demonstrations allows aligned sub-trajectories to be encouraged and misaligned behaviors to be suppressed. This formulation can further be extended to Learning from Noisy Demonstrations (LfND), where demonstrations of high alignment quality are treated as expert data and demonstrations containing noisy mixtures of aligned and misaligned behaviors are treated as imperfect data. This formulation enables the combination of demonstrations and evaluative feedback in a purely offline, single-stage pipeline for fully sequential decision-making environments.

To incorporate evaluative feedback into the imitation learning framework, we introduce Feedback Manipulation Regularization (FMR), a framework that is agnostic to the choice of imitation learning algorithm. FMR uses demonstrations to learn the state space and leverages evaluative feedback to directly manipulate the policy toward human aligned behavior via temperature scaling. Adapting various Safety Gymnasium environments, we evaluate alignment by harnessing cost signals to indicate deviation from aligned behavior, providing a principled measure of misalignment \citep{ji_safety-gymnasium_2023}. We show that FMR improves alignment across a wide variety of algorithms in a limited data regime (\figref{fig:hook}). Importantly, as the expert-to-imperfect data ratio shrinks, FMR helps maintain performance in both high and low data overlap scenarios where the imperfect demonstrations contain little to no helpful behaviors. Finally, FMR outperforms alternative approaches adapted to utilize evaluative feedback for reward or as a scoring signal to determine preferences between pairwise comparisons.

\begin{figure}[t]
    \centering
    \begin{subfigure}[b]{\textwidth}
        \hspace{0.6cm}
        \includegraphics[width=.85\textwidth,trim=0 665 0 0,clip]{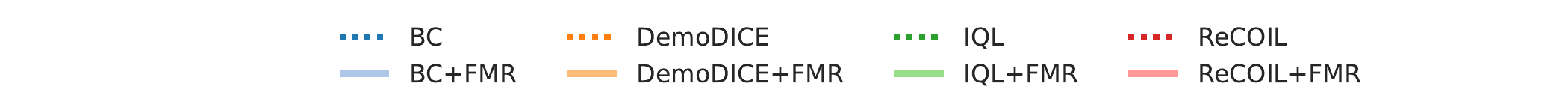}
    \end{subfigure}
    
    \vspace{0.5em}
    
    \begin{subfigure}[b]{0.50\textwidth}
        \centering
        \includegraphics[width=\textwidth,trim=0 0 0 82,clip]{images/main/PathM/misalignment_10-50.pdf}
    \end{subfigure}
    
    \caption{Imitation learning algorithms suffer from high misalignment. Our proposed method, FMR (solid lines) is able to greatly reduce misalignment for each corresponding baseline (dotted lines) without sacrificing aptitude performance. Results depict misalignment learning curves for the PathM navigation policy using a 10-50 data ratio. The shaded region represents the standard error over 5 seeds.}
    \vspace{-0.5cm}
    \label{fig:hook}
\end{figure}

\section{Related Work}

\paragraph{Learning for Noisy Demonstrations} 
Common imitation learning methods include Behavioral Cloning (BC), which maps observations to actions via supervised learning, and IQ-Learn, which frames imitation as inverse soft-RL without explicit reward models \citep{garg_iq-learn_2021}. Meanwhile, Learning from Noisy Demonstrations (LfND) addresses the realistic mixed-quality demonstration setting by learning from distinct expert and imperfect datasets \citep{zare_ilsurvey_2024}. Notable LfND methods include DemoDICE, which matches occupancy to expert data while regularizing against suboptimal data \citep{kim_demodice_2021}, and ReCOIL, which relaxes occupancy-matching to enable imitation from arbitrary off-policy data \citep{sikchi_dual_2024}. Despite outperforming standard imitation learning baselines, LfND methods frequently yield misaligned policies even when expert demonstrations consist entirely of aligned behaviors, a limitation that worsens as the expert-to-imperfect demonstration ratio decreases.

\paragraph{Interactive Reinforcement Learning}
Int-RL integrates human input directly into RL, typically online, in the form of advice (e.g., feedback) or demonstrations \citep{najar_irlsurvey_2021, zare_ilsurvey_2024}. Classical methods like TAMER \citep{knox_interactively_2009} and COACH \citep{macglashan_coach_2017} apply evaluative feedback at the state-action level to label behaviors as ``good'' or ``bad'', with credit assignment enabling targeted sub-trajectory labeling. In contrast, Learning from Preferences (LfP) methods use comparative feedback on trajectory pairs to infer a reward function or policy directly \citep{christiano_preferences_2017, ouyang_training_2022, rafailov_dpo_2023}. While works such as \citep{hejna_ipl_2023} and \citep{hejna_cpl_2024} learn policies directly from offline preferences, this feedback is fundamentally relative rather than absolute. This means a preferred behavior is not necessarily an aligned one. When both behaviors in a comparison pair are misaligned, the least misaligned may still be preferred. Furthermore, randomly sampled comparison pairs limit the ability to target specific aligned or misaligned behaviors, meaning preference feedback provides only a coarse, relative signal ill-suited for the behavioral precision required for alignment in sequential decision-making settings.

\paragraph{Alignment Evaluation} 
Historically, RL performance has focused on total return, leaving the open question of how to properly quantify agent alignment. Return-based alignment metrics \citep{lui_on2off_2025} fail when misaligned behaviors achieve similar returns, while reward model evaluations do not generalize to methods that forgo modeling reward functions \citep{malik_rewardbench_2025}. Meanwhile, SafeRL has long employed Constrained Markov Decision Processes, where environment-elicited costs quantify constraint violations for evaluating the ``safety'' of behaviors \citep{garcia_safesurvey_2015, ji_safety-gymnasium_2023}. Generalizing this idea, cost serves as a natural evaluative metric for alignment, not as a training objective, directly detecting when a policy exhibits misaligned behavior.

\paragraph{Temperature Scaling}  
Temperature scaling divides logits by a temperature $\tau > 0$ before softmax, producing class probabilities $p_i = \frac{e^{z_i / \tau}}{\sum_{j} e^{z_j / \tau}}$, and has seen wide adoption in knowledge distillation \citep{hinton_distilling_2015}, calibration \citep{guo_calibration_2017, xie_calibrating_2024}, and contrastive learning \citep{wang_understanding_2021}. Two extensions are particularly relevant: adaptive temperature scaling predicts a per-input temperature for flexible adjustment \citep{xie_calibrating_2024}, while vector temperature scaling assigns a distinct temperature to each class \citep{guo_calibration_2017}. FMR takes inspiration from both, using feedback to selectively redistribute probability mass toward more aligned behaviors, but does so in probability space rather than in logit space.
 
\section{Preliminaries}
We model the environment as a modified Markov Decision Process (MDP) to account for human feedback $\mathcal{M} = (\mathcal{S}, \mathcal{A}, p, r, h, \gamma)$ with state space $\mathcal{S}$, action space $\mathcal{A}$, transition function $p: \mathcal{S} \times \mathcal{A} \rightarrow \Delta(\mathcal{S})$ over a distribution of states, reward function $r: \mathcal{S} \times \mathcal{A} \rightarrow \mathbb{R}$, human feedback function $h: \mathcal{S} \times \mathcal{A} \rightarrow \{-1, 0, 1\}$, and discount factor $\gamma \in [0, 1]$. The goal of imitation learning and RL is to learn a policy $\pi : \mathcal{S} \rightarrow \Delta(\mathcal{A})$ mapping states to distributions over actions. In RL, this is done by maximizing the return, $G = \sum_{t=0}^{\infty} \gamma^t r(s_t, a_t)$ discounting the future by $\gamma$. In imitation learning, $r$ is unknown, which means it is often estimated, explicitly or implicitly. We define $D^U = D^E \cup D^I$ as the union of the expert and imperfect datasets containing tuples without reward $\mathcal{M} \backslash \{r\}$. 

Classical approaches to imitation learning such as BC aim to learn a policy $\pi$ that maps state $s$ to action $a$ via supervised learning by minimizing the negative log-likelihood:
\begin{equation}
    \min\limits_\pi \mathcal{L}_{\text{BC}}(\pi) = \min\limits_\pi -\frac{1}{| D^U|}\sum_{(s,a) \in D^U} \log \pi(a|s).
\end{equation}
Policy learning methods in LfND build upon this core idea to improve robustness against noise in $D^I$ \citep{kim_demodice_2021, sikchi_dual_2024}.

Additionally, many imitation works aim to estimate the value of a state, where $V_\pi: \mathcal{S} \rightarrow \mathbb{R}$ denotes the state value function of $\pi$. The state value function estimates the expected return $V_\pi(s) = \mathbb{E}_\pi \big [ G | s_0 = s\big ]$ when starting at $s$ and thereafter following $\pi$. Likewise, $Q_\pi:\mathcal{S} \times \mathcal{A} \rightarrow \mathbb{R}$ is the state-action value function of $\pi$ where $Q_\pi(s, a) = \mathbb{E}_\pi \big [ G | s_0 = s, a_0 = a\big ]$. Although traditional value function estimation relies on rewards $r$, imitation learning algorithms have developed alternative techniques for estimating value functions without explicit reward signals \citep{kim_demodice_2021, sikchi_dual_2024, garg_iq-learn_2021}. 

\section{Feedback Manipulation Regularization}\label{sec:fmr}
In this section, we introduce our proposed method of feedback manipulation regularization (FMR) for harnessing both demonstrations and feedback as an interconnected source of information. For this, the goal of FMR is to enable humans to refine an agent's policy by providing evaluative feedback $h$ on demonstrations.

Let $D^U$ denote an imbalanced dataset of demonstrations composed of aligned (expert) and imperfect trajectories where $|D^E| \ll |D^I|$. We make the following assumption about the learned policy $\pi_\theta$ and latent human-aligned policy $\pi_h$.

\begin{assumption}[Coverage]\label{ass:coverage}
For any $s\in\mathcal{S}$, if $\pi_h(a\mid s)>0$ then $\pi_\theta(a\mid s)>0$.
\end{assumption}

Since $\pi_h$ is unknown, we approximate it using $\pi_\theta$, where feedback acts as a corrective signal that redistributes probability mass towards the underlying aligned policy. To achieve this, we take inspiration from temperature scaling for rescaling distributions.

\begin{definition}[Feedback and Temperature]\label{def:feedback-temperature}
Human feedback $h(s,a_j)$ induces a \textit{per-action} temperature $\tau(s, a_j)$ (for simplicity, denoted by $\tau_j$ hereafter) via a strictly decreasing map. Hence, positive $h$ implies $\tau_j < 1$ (\textit{encourage}), and negative $h$ implies $\tau_j > 1$ (\textit{discourage}).
\end{definition}

\begin{remark}[Feedback-Adaptive Temperature]\label{rem:fat-map}
We propose a simple exponential equation given as:
\begin{equation}\label{eq:tau_j}
    \tau_j = \begin{cases}
\beta^{-h} & \text{if } h < 0 \text{ and } j = \text{selected action}, \\
\beta^{h} & \text{if } h > 0 \text{ and } j \neq \text{selected action}, \\
1 & \text{if } h = 0, \\
\end{cases}
\end{equation}
where $\beta > 1$ is a hyperparameter. In a discrete action space, if negative feedback is received, the action selected receives $\tau > 1$. If positive feedback is received, all other actions\footnote{This is akin to assigning $\tau < 1$ to the selected action, but prevents negative loss due to the logarithm.}, other than the action selected, receive $\tau > 1$. When no feedback is received $\tau_j = 1$, which has no impact on learning. Feedback magnitude can be achieved by using credit assignment \citep{knox_interactively_2009} to map feedback to prior states. Repeatedly elicited feedback given in quick succession will then lead to larger magnitude and a greater redistribution of the probability mass.
\end{remark}

\begin{definition}[Temperature Scaling for Human Aligned Policy Approximation]\label{def:temp-scaled_policy}
Given $\pi_\theta(\cdot\mid s)$ and $\tau(\cdot\mid s)$, define the \emph{approximated human-aligned policy} as 
\[
\pi_{\tau}(a_j\mid s)\;:=\; \frac{\pi_\theta(a_j\mid s)\,\tau_j^{-1}}{Z_\theta(s)}, 
\]
where $Z_\theta(s)=\sum_{a_k\in\mathcal{A}} \pi_\theta(a_k\mid s) \tau_k^{-1}$.
\end{definition}
By operating directly in probability space, Definition~\ref{def:temp-scaled_policy} requires no assumptions on how $\pi_\theta$ is computed. Nevertheless, the temperature scaling intuition is preserved where misaligned behaviors are discouraged through redistribution of probability mass, making aligned demonstrations more probable even without explicit positive reinforcement.

With $\pi_\tau$ defined as a feedback-tempered surrogate for $\pi_h$, we formulate policy alignment as a distribution matching problem via reverse KL divergence, a common approach in imitation learning \citep{kim_demodice_2021, sikchi_dual_2024}.

\begin{definition}[Reverse-KL Formulation]\label{def:reverse-kl}
We align $\pi_\theta$ to a feedback-tempered surrogate of $\pi_h$ by minimizing the reverse KL divergence
\[
\mathcal{J}(\theta;\tau) \;:=\; \mathbb{E}_{(s,a)\sim D^U}\!\Big[ D_{\mathrm{KL}}\!\big(\pi_\theta\,\|\, \pi_{\tau}\big) \Big].
\]
\end{definition}

\begin{proposition}[Reverse-KL Decomposition]\label{prop:rkl-decomp}
For any fixed $s$,
\[
D_{\mathrm{KL}}\!\big(\pi_\theta\,\|\, \pi_{\tau}\big) =
\mathbb{E}_{a_j\sim \pi_\theta(\cdot\mid s)}\!\big[\log \tau(s,a_j)\big] + \log Z_\theta(s).
\]
\begin{proof}
From Definition~\ref{def:temp-scaled_policy}, $\log \pi_\tau(a_j\mid s) = \log \pi_\theta(a_j\mid s) - \log \tau_j - \log Z_\theta(s)$. Substituting into the KL divergence and noting that the $\log \pi_\theta(a_j\mid s)$ terms cancel,
\[
\sum_j \pi_\theta(a_j\mid s) \log \frac{\pi_\theta(a_j\mid s)}{\pi_\tau(a_j\mid s)}
= \sum_j \pi_\theta(a_j\mid s) \log \tau_j + \log Z_\theta(s).
\]
\end{proof}
\end{proposition}

\begin{corollary}[Generalized KL]\label{cor:gen-kl}
Let the unnormalized target measure be $\tilde \pi_\tau(a\mid s)\propto \pi_\theta(a\mid s)\,\tau(s,a)^{-1}$. Then
\[
D_{\mathrm{KL}}\!\big(\pi_\theta(\cdot\mid s)\,\|\, \tilde \pi_\tau(\cdot\mid s)\big)
\;=\;
\mathbb{E}_{a\sim \pi_\theta(\cdot\mid s)}\!\big[\log \tau(s,a)\big] + \mathrm{const}.
\]
\end{corollary}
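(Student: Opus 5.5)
The plan is to treat $\tilde\pi_\tau(\cdot\mid s)$ as the unnormalized measure $\tilde\pi_\tau(a\mid s) = c\,\pi_\theta(a\mid s)\,\tau(s,a)^{-1}$ for some positive constant $c$ that does not depend on $a$. We then compute the generalized KL divergence directly from its definition, $D_{\mathrm{KL}}(p\,\|\,q) = \sum_a p(a)\log\frac{p(a)}{q(a)}$, where $q$ need not sum to one. The argument mirrors the proof of Proposition~\ref{prop:rkl-decomp}. The only difference is that the normalizer $Z_\theta(s)$ is replaced by an arbitrary proportionality constant, which can also be written $c = Z'(s)^{-1}$.

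First we write the log-ratio pointwise:
\[
\log\frac{\pi_\theta(a\mid s)}{\tilde\pi_\tau(a\mid s)} = \log\tau(s,a) - \log c .
\]
The $\log\pi_\theta$ terms cancel exactly as before. Assumption~\ref{ass:coverage} and the fact that $\tau>0$ ensure that the ratio is well defined wherever $\pi_\theta(a\mid s)>0$. Actions with $\pi_\theta(a\mid s)=0$ contribute zero under the convention $0\log 0 = 0$.

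Next we take the expectation under $\pi_\theta(\cdot\mid s)$. Because $\sum_a\pi_\theta(a\mid s)=1$, this gives
\[
\mathbb{E}_{a\sim\pi_\theta(\cdot\mid s)}[\log\tau(s,a)] - \log c .
\]
If the generalized KL also carries the mass-correction term $\sum_a\bigl(\tilde\pi_\tau(a\mid s)-\pi_\theta(a\mid s)\bigr)$, that term equals $cZ_\theta(s)-1$. We fold it into the constant together with $-\log c$. The result then holds with $\mathrm{const} = -\log c$, or with $\mathrm{const} = cZ_\theta(s)-1-\log c$ when the correction term is included.

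The main subtlety is the meaning of ``const''. Relative to $a$ it is a genuine constant, but it still depends on $s$ and, through $Z_\theta$, on $\theta$. For the optimization reading of the statement we should therefore note that choosing $c$ fixed, or treated as detached from $\theta$ (stop-gradient), makes the constant irrelevant to the minimizer over $\pi_\theta$. In that case the objective reduces to the feedback-weighted term $\mathbb{E}_{\pi_\theta}[\log\tau]$. Choosing $c = Z_\theta(s)^{-1}$ recovers Proposition~\ref{prop:rkl-decomp} as a special case, which serves as a consistency check.
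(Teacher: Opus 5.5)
Your proposal is correct and follows essentially the paper's route: the corollary is the cancellation argument of Proposition~\ref{prop:rkl-decomp} with $Z_\theta(s)$ replaced by an arbitrary proportionality constant, and your check that $c = Z_\theta(s)^{-1}$ recovers that proposition is the intended link. One small correction: the appeal to Assumption~\ref{ass:coverage} is unnecessary, because $\tilde\pi_\tau(\cdot\mid s)$ automatically has the same support as $\pi_\theta(\cdot\mid s)$ whenever $0<\tau<\infty$.
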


\begin{remark}[Connection to Entropy Regularization]
Setting $\tau(s,a) = \pi_\theta(a\mid s)$ recovers entropy regularization \citep{williams_function_1991, mnih_a3c_2016}. More generally, since $\tau$ is derived from $h$, feedback modulates the entropy of $\pi_\theta$ where negative feedback increases 
entropy while positive feedback decreases it.
\end{remark}

\begin{definition}[Feedback Manipulation Regularization]\label{def:fmr-reg}
Given $\tau$ induced by feedback via Definition~\ref{def:feedback-temperature}, the surrogate regularizer is
\[
\mathcal{R}_{\mathrm{FMR}}(\theta) := \mathbb{E}_{s\sim D^U}\,\mathbb{E}_{a\sim \pi_\theta(\cdot\mid s)}\big[\log \tau(s,a)\big].
\]

Any imitation learning objective can incorporate FMR as
\begin{equation}
\mathcal{L}(\theta)=\mathcal{L}_{\mathrm{IL}}(\theta) \;+\; \alpha\, \mathcal{R}_{\mathrm{FMR}}(\theta),
\end{equation}
where $\mathcal{L}_{\mathrm{IL}}$ is the imitation learning loss and $\alpha > 0$ controls alignment strength.
\end{definition}

Together, Definitions~\ref{def:feedback-temperature} and~\ref{def:temp-scaled_policy} specify how feedback systematically reshapes $\pi_\theta$, while Definition~\ref{def:fmr-reg} provides a model-agnostic regularization term that enforces this alignment under any imitation learning objective.

\section{Experiments}\label{sec:exps}
This section aims to address the following questions: (1) Can FMR enhance the task alignment performance of imitation learning algorithms? (2) How does FMR compare to alternative approaches adapted to utilize evaluative feedback, and can it outperform methods that substitute feedback for reward or use it for pairwise preference scoring? We evaluate all methods across aligned-to-imperfect demonstration data ratios of 10-50 (1:5), 25-50 (1:2), and 50-50 (1:1) to test algorithmic robustness as aligned data $D^E$ becomes increasingly scarce. Additional details regarding the task, aligned policy, and data collection, along with visualizations, are provided in Appendix \ref{app:task-data-details}. Results for the velocity tasks, additional plots for navigation results, and extensive additional experiments are provided in Appendix \ref{app:add-results}.

\subsection{Experimental Setup}\label{sec:exp-setup}

\paragraph{Environments} We adapt environments from Safety Gymnasium \citep{ji_safety-gymnasium_2023}, built on Gymnasium \citep{towers_gymnasium_2024} and MuJoCo \citep{todorov_mujoco_2012}, using cost as a principled measure of misalignment exclusively for evaluation. We evaluate on a 3D navigation task and three velocity-restricted locomotion tasks (Hopper, Swimmer, Walker2D), each with a discretized action space for human demonstration collection. We adapt the original navigation task so that an agent (red sphere) must reach a fixed goal (green cube) using lidar observations, spawning in a randomized upper-right area. To evaluate alignment of the navigation task, two unobservable hazard variants are introduced: passable floor hazards and blocking hazard walls, each incurring a cost of 1 per violation. For velocity tasks, a cost of 1 is incurred per step exceeding a task-specific x-velocity threshold, inducing velocity-restricted gaits as aligned policies. 

\paragraph{Aligned Policies} We define two aligned policies for the navigation task. PathM requires the agent to move diagonally through the center via a narrow hazard-free corridor, while PathBB requires the agent to move along the right boundary then the bottom boundary, entering the goal from below. Hazards are placed to highlight each aligned path and measure deviation, with hazard walls positioned around the goal to enforce directional entry for PathBB. For each aligned policy, $D^E$ comprises aligned human demonstrations that follow the aligned policy's designated path. All demonstrations within $D^I$ do not reach the goal and circle the goal to obfuscate the objective. Critically, PathM exhibits high overlap between $D^E$ and $D^I$, as imperfect demonstrations frequently traverse the center corridor, while PathBB exhibits low overlap, as imperfect demonstrations rarely follow the boundary bottom path. High overlap helps to evaluate whether aligned sub-trajectories in $D^I$ can be identified from misaligned demonstrations, while low overlap looks at generalization to underrepresented aligned behaviors within $D^I$. Data overlap helps to evaluate how aligned sub-trajectories within otherwise misaligned demonstrations contribute to policy learning, with lower overlap making the problem more challenging due to containing fewer relevant sub-trajectories.

For the velocity tasks, we define one aligned policy per environment by imposing an x-velocity threshold, producing a distinctive gait that contrasts with the optimal high-speed policy. SlowHop ($v \leq 0.74$) performs slow forward hops with brief pauses, SlowSwim ($v \leq 0.75$) shortens the peaks of the optimal sinusoidal gait into a slower serpentine motion, and SlowWalk ($v \leq 1.50$) produces a measured bipedal walk in contrast to the optimal running gait. For each velocity task, $D^E$ comprises aligned demonstrations that follow the aligned policy's velocity-restricted gait. Swimmer data has high overlap as $D^E$ contains similar sinusoidal gait to optimal trajectories present in $D^I$ but with reduced amplitude. Hopper, by contrast, has low overlap as $D^I$ predominantly contains demonstrations exceeding the velocity threshold. Walker2D falls between the two, having medium overlap, where the real difficulty comes with the large action space ($|\mathcal{A}|= 3^6$). 

\paragraph{Data and Feedback Collection} For navigation data, $D^E$ comprises 50 aligned human demonstrations collected per aligned policy, and $D^I$ consists of 50 imperfect human demonstrations shared across policies. For velocity data, human proxy demonstrations are collected, where $D^E$ comprises 50 aligned demonstrations from a cost-augmented SAC-Discrete agent \citep{haarnoja_soft_2018, christodoulou_soft_2019}, and $D^I$ comprises 50 demonstrations sampled across standard SAC-Discrete training, yielding variation in quality and dataset overlap. Navigation feedback is collected from a real human evaluator on $D^I$ via a visual replay interface where a 600ms credit assignment window is used to map feedback to prior states \citep{knox_interactively_2009}. For velocity data, proxy feedback is generated by converting cost violations into negative feedback only. Swimmer and Walker2D use dense feedback per violation, while Hopper uses sparser feedback given at violation onset and for 50 subsequent steps. Walker2D additionally receives negative feedback for the 15 steps preceding failure. 

\paragraph{Baselines} To assess the effectiveness of FMR in enhancing the alignment of baseline imitation learning methods trained on $D^U$, we evaluate both traditional imitation learning and robust LfND algorithms such as BC, IQL \citep{garg_iq-learn_2021}, DemoDICE \citep{kim_demodice_2021}, and ReCOIL \citep{sikchi_dual_2024}, with and without FMR augmentation. These represent diverse algorithmic paradigms: actor-critic (DemoDICE, ReCOIL), soft Q-learning (IQL), and supervised learning (BC). Notably, BC and IQL assume minimal imperfect demonstrations, making LfND challenging for them, while DemoDICE and ReCOIL are explicitly designed for this setting. To examine alternative approaches to harnessing evaluative feedback, we use Dual Value Learning (DVL) \citep{sikchi_dual_2024}, an offline RL variant of ReCOIL, where we substitute evaluative feedback for reward. Likewise, we employ Contrastive Preference Learning (CPL) \citep{hejna_cpl_2024}, a single-stage offline learning from preferences (LfP) algorithm, in which evaluative feedback serves as the comparison score for determining preference. This is achieved by calculating the total feedback accumulated per trajectory segment.

\paragraph{Metrics} To assess the aptitude performance for the navigation policies, we report \textit{success rate}, defined as the ratio of episodes that reach the goal to total evaluation episodes. Alternatively, for the velocity policies, we report \textit{total normalized return} where return is normalized based on the performance of $D^E$ (Appendix \ref{app:vel-details}). For all tasks, alignment performance is measured using \textit{misalignment}, defined as the ratio between the number of steps that incur a cost $c_t$ and the total number of steps in the episode $T$ as $\frac{1}{T}\sum_{t=1}^T c_t$.

\paragraph{Implementation Details} All algorithms trained for 1 million batches with evaluations every 10k iterations (50 episodes each). To evaluate the final average performance and standard deviation for each metric, we compute the mean over the last 10 evaluations across all 5 random seeds. When decreasing the ratio between $D^E$ and $D^I$, we find oversampling $D^E$ helps to mitigate lose in performance due to dataset imbalance across all algorithms. For the navigation data, we remove duplicate states occurring at episode initialization. For FMR, we use $\beta = 10$ and $\alpha = 1$ for all algorithms unless stated otherwise. Additionally, for DemoDICE+FMR, ReCOIL+FMR and DVL, we found that they benefit from cosine learning rate decay when oversampling, while the other algorithms showed no significant benefit. For additional implementation details see Appendix \ref{app:imp-details}.

\begin{figure}[t]
    \centering
    
    \begin{subfigure}[b]{\textwidth}
        \centering
        \includegraphics[width=0.65\textwidth,trim=0 275 1 0,clip]{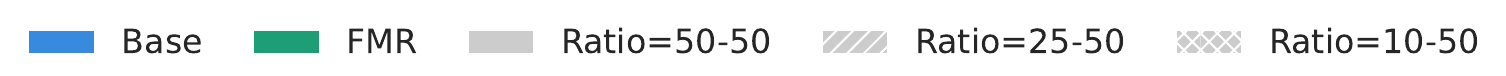}
    \end{subfigure}
    
    \begin{minipage}[t]{0.48\textwidth}
        \centering
        \textbf{PathM}
    \end{minipage}
    \hfill
    \begin{minipage}[t]{0.48\textwidth}
        \centering
        \textbf{PathBB}
    \end{minipage}
    
    \begin{subfigure}[b]{0.45\textwidth}
        \centering
        \includegraphics[width=\textwidth,trim=0 0 0 35,clip]{images/main/PathM/fmr_success_compare.pdf}
    \end{subfigure}
    \hfill
    \begin{subfigure}[b]{0.45\textwidth}
        \centering
        \includegraphics[width=\textwidth,trim=0 0 0 35,clip]{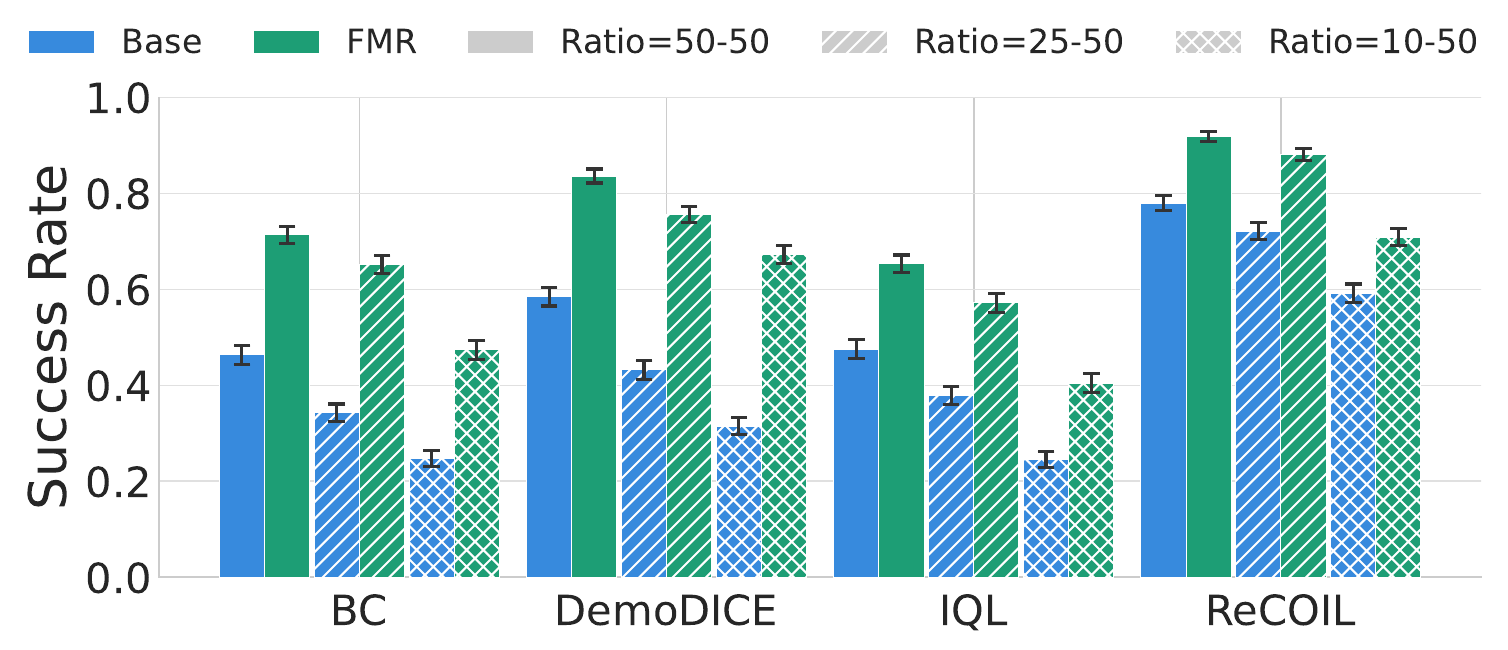}
    \end{subfigure}
    
    \begin{subfigure}[b]{0.45\textwidth}
        \centering
        \includegraphics[width=\textwidth,trim=0 0 0 35,clip]{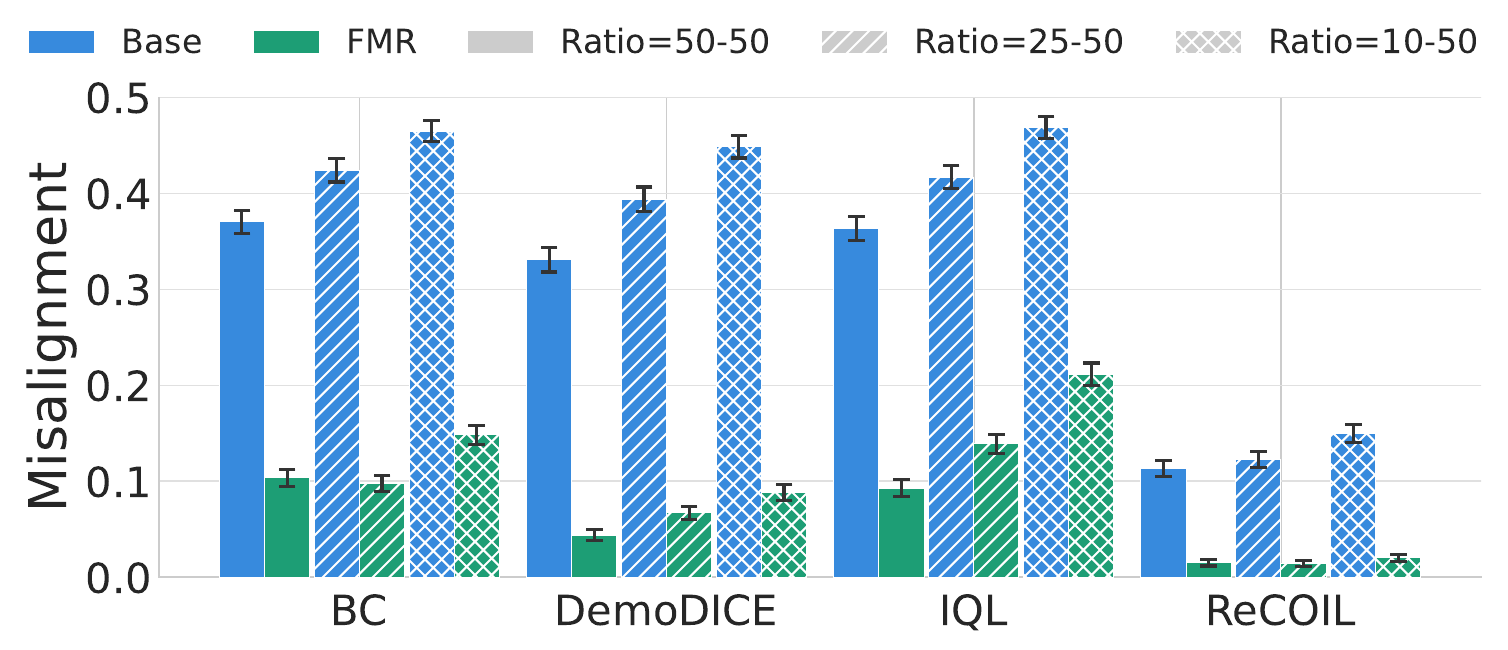}
    \end{subfigure}
    \hfill
    \begin{subfigure}[b]{0.45\textwidth}
        \centering
        \includegraphics[width=\textwidth,trim=0 0 0 35,clip]{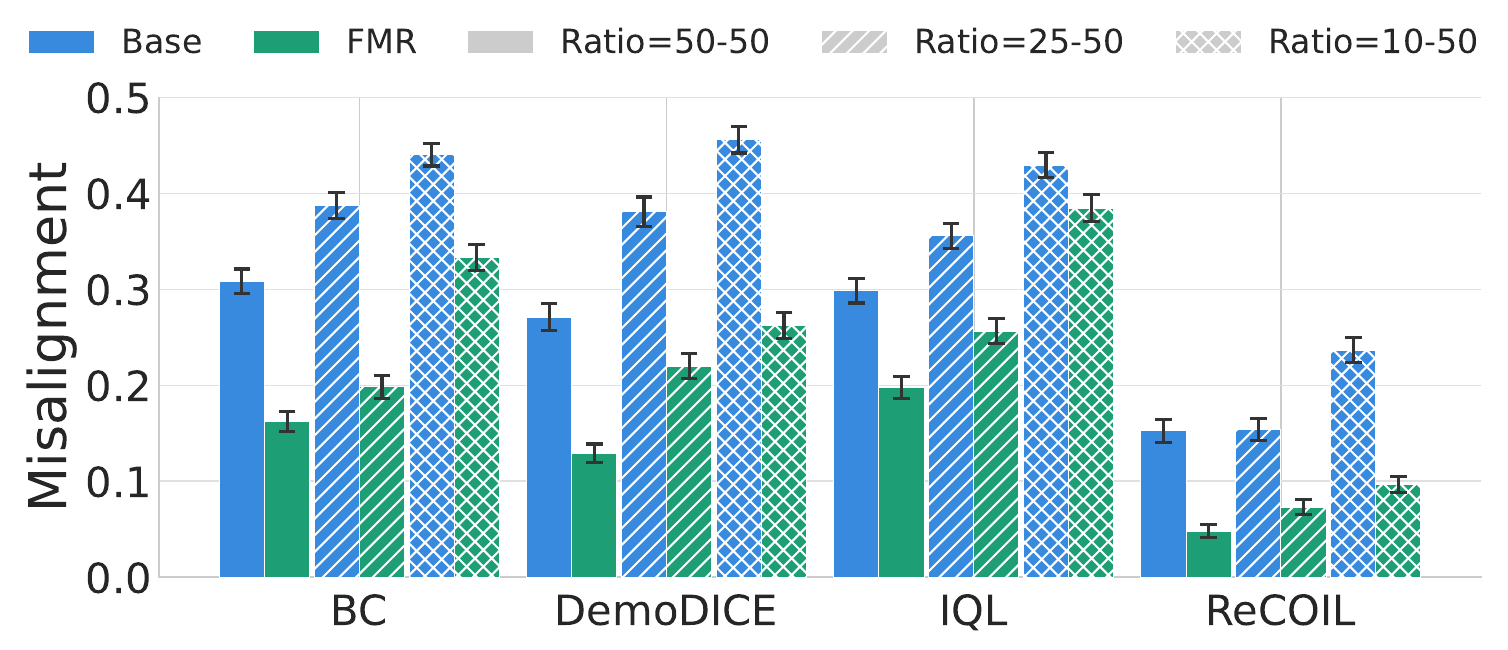}
    \end{subfigure}
    
    \caption{Navigation results across all baseline algorithms and data ratios. Results show mean success rate and misalignment scores with 95\% confidence intervals, averaged over the last 10 evaluations across 5 seeds.}
    \label{fig:nav-results}
\end{figure}


\subsection{Can FMR enhance alignment performance?}\label{sec:fmr-performance}
\figref{fig:nav-results} depicts the aptitude (i.e., success rate) and alignment (i.e., misalignment) performance results for the navigation policies PathM and PathBB across all data ratios. PathM exhibits higher baseline performance than PathBB, likely due to higher overlap between $D^E$ and $D^I$. For both policies, as data ratio decrease, we observe substantial drops in success rate and corresponding increases in misalignment across all baselines. The notable exception is ReCOIL, which demonstrates the strongest baseline performance, particularly on PathM, though still at the cost of higher misalignment. FMR yields dramatic improvements across all baselines, consistently improving success rates and reducing misalignment, even as aligned demonstrations are reduced. ReCOIL+FMR achieves the strongest overall performance, combining ReCOIL's higher success rates with drastically reduced misalignment, achieving up to 92\% reduction on PathM and 67\% on PathBB.

Among the velocity policies, SlowSwim (high overlap data) achieves the strongest baseline performance while SlowHop and SlowWalker prove considerably more challenging (see Appendix \ref{app:vel-results}). For SlowSwim, ReCOIL achieves a return close to $D^E$ with low misalignment, while other baselines reach comparable returns but with significantly higher misalignment that grows as the data ratio decreases. In contrast, all FMR variants demonstrate remarkable stability across data ratios with returns consistently near SlowSwim and up to 98\% reduction in misalignment, with ReCOIL+FMR approaching near-zero misalignment. SlowHop baselines exhibit more varied performance, with BC and IQL significantly exceeding $D^E$ performance and producing much larger misalignment scores, while misalignment worsens as the data ratio decreases across all baselines. FMR variants substantially reduce misalignment by 75\% across all ratios and algorithms, with ReCOIL+FMR maintaining the best alignment. 

SlowWalk follows a similar but more pronounced degradation trend, with baseline misalignment remaining persistently high and returns declining at lower data ratios. For this task, adding positive feedback to all state-action pairs in $D^E$ and increasing $\alpha=2$ for FMR improved performance, an effect attributed to the challenges posed by the large action space. FMR variants consistently reduce misalignment across all algorithms and ratios with ReCOIL+FMR achieving the best trade-off between return and alignment, reducing misalignment by up to 76\% overall. DemoDICE+FMR at the 10-50 ratio is a notable outlier showing little reduction, likely due to the large action space and potentially addressable through better hyperparameter tuning of the base algorithm.

\begin{figure}[t]
    \centering
    
    \begin{subfigure}[b]{\textwidth}
        \centering
        \includegraphics[width=0.65\textwidth,trim=0 275 1 0,clip]{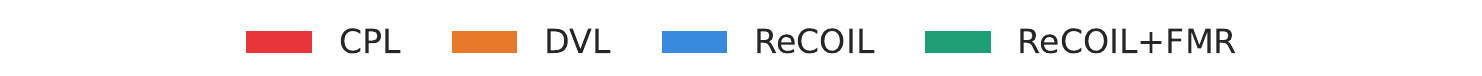}
    \end{subfigure}
    
    \begin{minipage}[t]{0.48\textwidth}
        \centering
        \textbf{PathM}
    \end{minipage}
    \hfill
    \begin{minipage}[t]{0.48\textwidth}
        \centering
        \textbf{PathBB}
    \end{minipage}
    
    \begin{subfigure}[b]{0.40\textwidth}
        \centering
        \includegraphics[width=\textwidth,trim=0 0 0 35,clip]{images/cpl-dvl/PathM/success_compare.pdf}
    \end{subfigure}
    \hfill
    \begin{subfigure}[b]{0.40\textwidth}
        \centering
        \includegraphics[width=\textwidth,trim=0 0 0 35,clip]{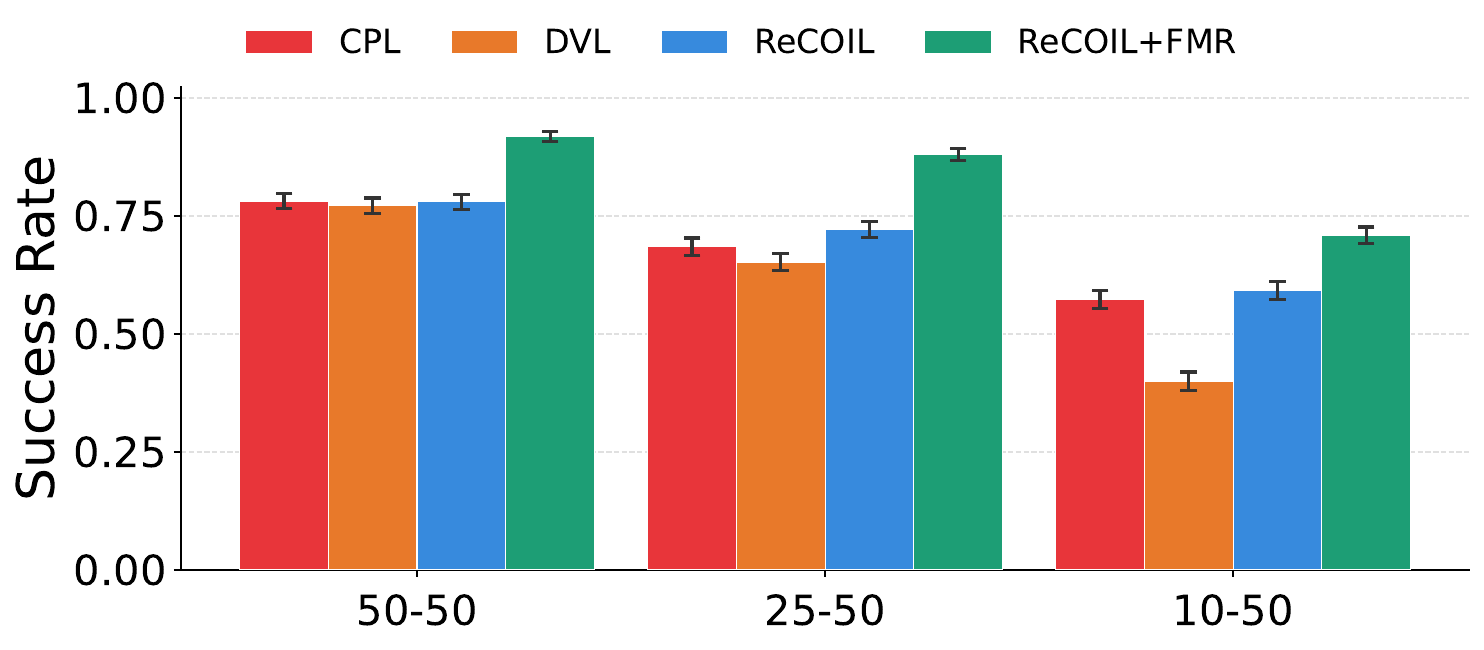}
    \end{subfigure}
    
    \begin{subfigure}[b]{0.40\textwidth}
        \centering
        \includegraphics[width=\textwidth,trim=0 0 0 35,clip]{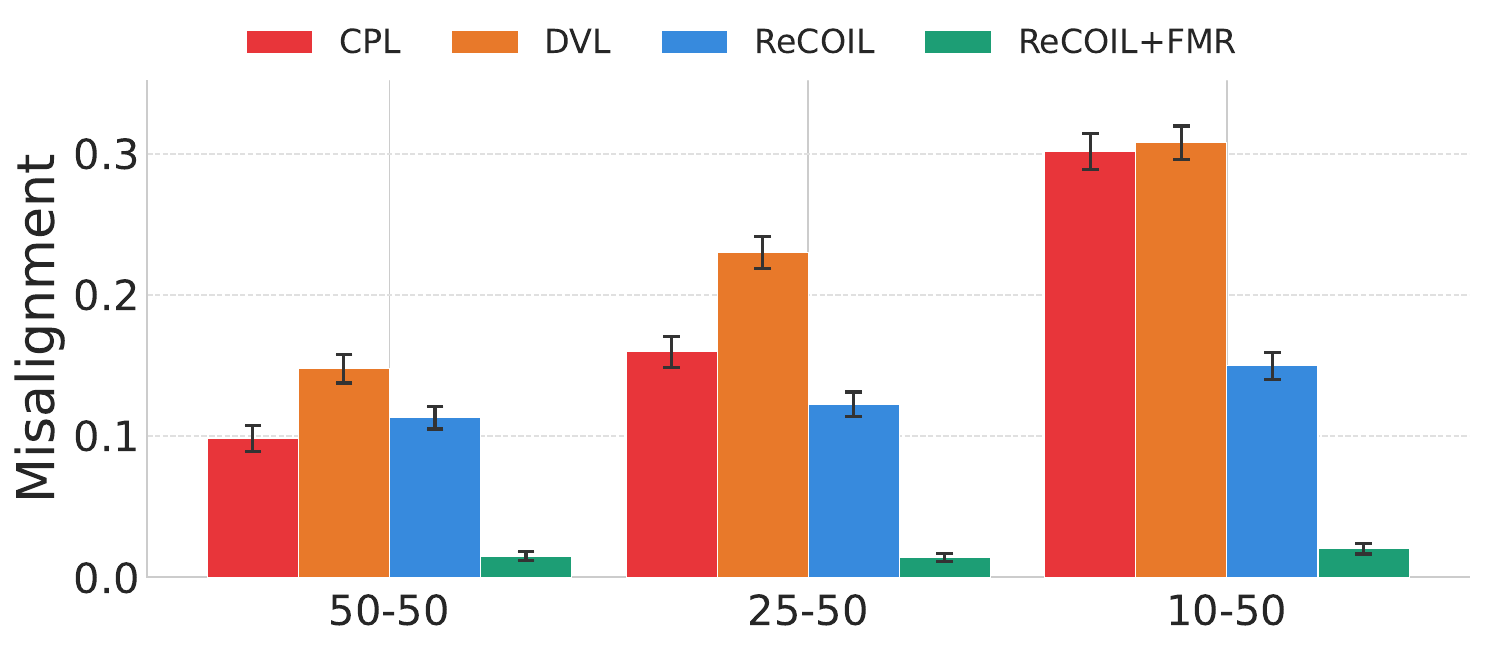}
    \end{subfigure}
    \hfill
    \begin{subfigure}[b]{0.40\textwidth}
        \centering
        \includegraphics[width=\textwidth,trim=0 0 0 35,clip]{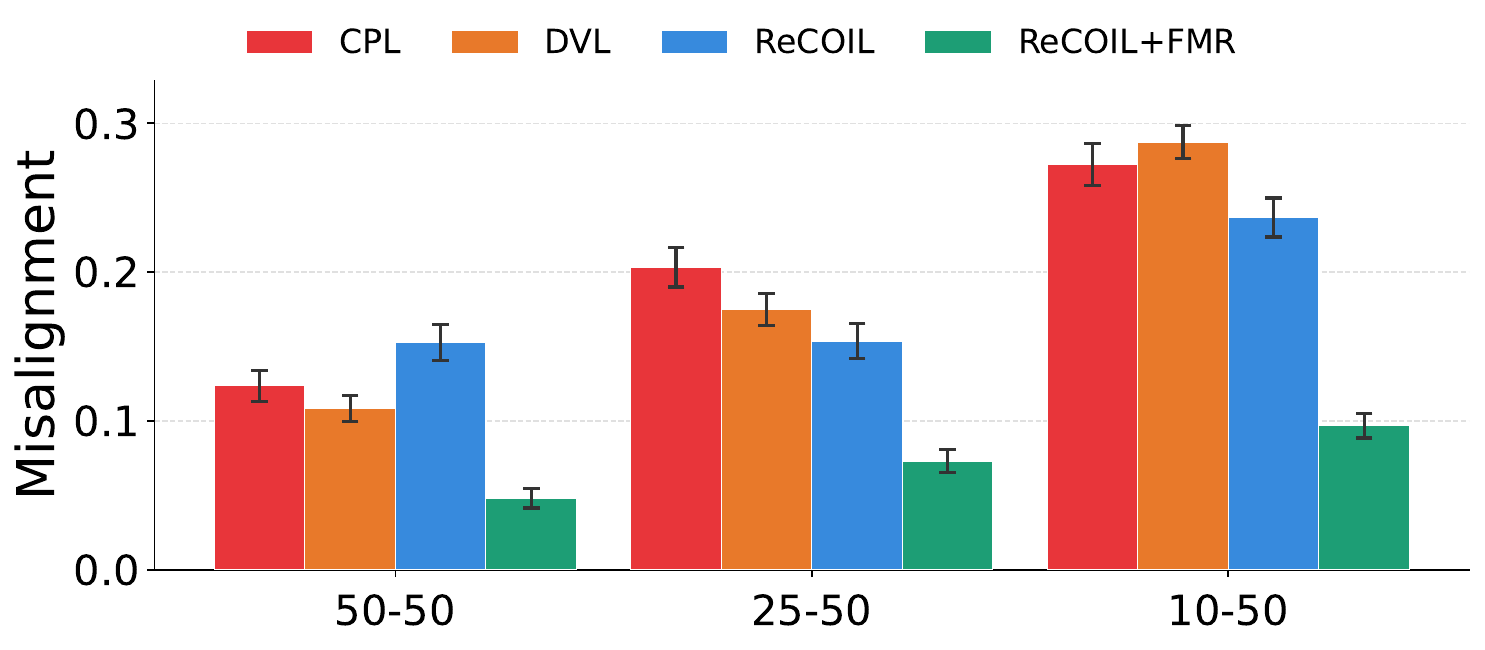}
    \end{subfigure}
    
    \caption{Navigation policies data ratio comparison between FMR, DVL, and CPL with ReCOIL baseline. Results show mean success rate and misalignment scores with 95\% confidence intervals, averaged over the last 10 evaluations across 5 seeds.}
    \label{fig:nav-cpl-dvl-results}
\end{figure}

\subsection{How Does FMR Compare to Alternative Uses of Evaluative Feedback?}\label{sec:alt-methods}
To evaluate FMR's effectiveness at harnessing evaluative feedback, we adapt two popular alternative methods for comparison. Specifically, we compare FMR against DVL \citep{sikchi_dual_2024}, the offline RL version of ReCOIL adapted by replacing its reward signal with evaluative feedback, and CPL \citep{hejna_cpl_2024}, an offline single-stage LfP algorithm adapted by summing evaluative feedback over trajectory segments to derive preference scores between pairs. For both DVL and CPL, in addition to the evaluative feedback for $D^I$, we add positive feedback for all state-action pairs in $D^E$. Without this additional positive feedback, both algorithms can exhibit significant performance degradation.

Results for PathM and PathBB are presented in \figref{fig:nav-cpl-dvl-results}. On PathM, ReCOIL+FMR outperforms all alternatives across all data ratios, achieving the highest success rates with substantially lower misalignment. While CPL and DVL achieve moderate success rates at higher data ratios, both degrade considerably as the ratio decreases, with misalignment rising sharply at the 10-50 ratio. PathBB follows a similar trend, with all methods showing degraded performance at lower data ratios. ReCOIL+FMR again achieves the strongest overall performance, maintaining the highest success rates and lowest misalignment across all ratios. CPL performs comparably to ReCOIL at higher ratios but degrades more sharply as the data ratio decreases, while DVL consistently underperforms both, particularly at the 10-50 ratio.

Across the velocity policies, DVL consistently exhibits poor misalignment despite achieving returns comparable to $D^E$, worsening as the data ratio decreases. Notably, on SlowWalker DVL achieves better performance than both ReCOIL and CPL, though it remains substantially higher than ReCOIL+FMR. CPL performs comparably to ReCOIL+FMR on SlowSwim and SlowHop, achieving competitive returns with low misalignment, but degrades substantially on SlowWalk likely due to the higher-dimensional action space. ReCOIL+FMR demonstrates the most robust performance across all tasks and data ratios, consistently achieving the best trade-off between return and misalignment.

Overall, these results demonstrate that FMR more effectively harnesses evaluative feedback than DVL or CPL. These results suggest DVL cannot properly utilize feedback, consistent with prior work indicating that naively interpreting feedback as reward is ineffective \citep{macglashan_coach_2017, thomaz_reinforcement_2006}. Moreover, CPL falls short even when using evaluative feedback to alleviate the limitations of typical preference feedback.

\section{Discussion}\label{sec:discussion}

\paragraph{Only Negative Feedback} Results for the velocity policies demonstrate that FMR learns effectively from negative feedback alone. Removing positive feedback from aligned navigation policies (Appendix \ref{app:only-neg-results}) yields negligible changes in PathM and PathBB performance, confirming negative feedback as the primary driver of FMR's advantage. We hypothesize this is particularly critical at deviation points, where suboptimal trajectories diverge from the aligned policy, supported by SlowHop's performance as it receives feedback exclusively at such points. Positive feedback for $D^E$ does, however, boost Walker2D results, suggesting it may play a role in high-dimensional action spaces.

\paragraph{Impact of Imperfect Demonstrations} Both navigation and velocity results reveal a consistent performance gap between high overlap policies (PathM, SlowSwim) and low overlap policies (PathBB, SlowHop) with $D^I$. We attribute this to noisy demonstrations in $D^I$, which provide limited learning signal that feedback alone cannot fully correct. For instance, in PathBB, the majority of $D^I$ trajectories are misaligned, with only a small subset reflecting the aligned path. This underscores that imperfect demonstrations must retain some relevance to aligned behavior for FMR to be effective. While FMR mitigates noise through behavior reweighting, its effectiveness is ultimately constrained by data quality. When $D^I$ consists predominantly of irrelevant sub-trajectories, feedback on those segments becomes uninformative, explaining the observed performance decline as $D^E$ shrinks in low overlap tasks, where the agent receives fewer aligned demonstrations with little meaningful feedback from $D^I$ to compensate.

We also examine whether FMR can improve performance using imperfect demonstrations when $D^E$ and $D^I$ have high overlap. Comparing ReCOIL+FMR trained on $D^U$ with a 10-50 demonstration split against BC and IQL trained exclusively on $D^E$ (Appendix \ref{app:imp-demo-results}), ReCOIL+FMR consistently outperforms or matches both baselines across high overlap tasks. This demonstrates that feedback enables learners to benefit from imperfect data when it is structurally aligned with the target behavior. Results on low overlap tasks further corroborate that FMR is most effective when imperfect demonstrations meaningfully intersect with the aligned behavior.

\paragraph{Feedback Scalability} To assess feedback scalability, we reduce the proportion of demonstrations in $D^I$ receiving feedback, under a 50-50 data ratio, by randomly selecting a subset of trajectories and dropping all its feedback (see Appendix~\ref{app:fb-scale-results}). For navigation policies, ReCOIL+FMR exhibits a gradual performance decay as feedback coverage decreases, bounded below by baseline ReCOIL. At 20\% feedback coverage performance marginally exceeds ReCOIL, while 50\% coverage yields a slight degradation relative to full feedback. This decay is likely amplified by random feedback selection, as dropping all feedback for underrepresented or misaligned trajectories risks reinforcing poor behaviors, further motivating targeted feedback at points of deviation. Among velocity policies, only SlowHop follows this trend, while SlowSwim and SlowWalk appear unaffected. Although SlowHop already has strong baseline performance, almost all trajectories in $D^I$ follow the same initial path, meaning that feedback must be provided in sufficient quantity to override the dominant behavioral prior in the data. A promising direction for future work is the identification of similar state-action pairs to generalize feedback across regions of the state-action space, reducing the  feedback burden \citep{du_behavior_2023}.



\paragraph{Generalized Feedback-Adaptive Temperature} All evaluated environments employ discretized action spaces to facilitate the collection of human demonstrations. We therefore consider a natural extension of FMR to continuous action spaces, generalizing feedback-adaptive temperature (Eq.~(\ref{eq:tau_j})) to $\tau^{gen}(s, a) = \beta^{-h}$, hereafter referred to as \textit{generalized feedback-adaptive temperature}. In this formulation, when positive feedback is provided, the temperature adjustment is applied exclusively to the selected action. Positive feedback then yields $\tau^{gen}(s, a) < 1$, resulting in a negative loss that reduces interpretability. Preliminary experiments in discrete action spaces reveal negligible performance differences across all tasks (see Appendix~\ref{app:atau-results}). As FMR operates in probability space and is therefore model-agnostic, these findings suggest that $\tau^{gen}$ facilitates straightforward extension to continuous action spaces.


\section{Conclusion}
We propose Feedback Manipulation Regularization (FMR), a method that leverages the rich, interconnected information contained in demonstration feedback to improve alignment performance and, in many cases, aptitude performance in an offline setting. FMR is compatible with any imitation learning algorithm, functioning analogously to human controlled entropy regularization. Specifically, feedback modulates the entropy of state–action pairs, whereby negative feedback increases entropy to discourage undesirable actions, while positive feedback decreases entropy to reinforce preferred behaviors. Through extensive experiments, we demonstrate that FMR consistently enhances a variety of imitation learning baselines in a limited data regime, yielding policies that better align with human intentions. 

\paragraph{Limitations and Future Work}\label{sec:limits}
FMR's primary limitation is its scalability, though it already demonstrates improved scalability over alternative methods via its superior performance. This scalability nonetheless remains dependent on task complexity and the dynamics between $D^E$ and $D^I$. Natural future directions include generalizing feedback across similar state-action pairs to reduce annotation burden \citep{du_behavior_2023}, and providing targeted feedback for $D^I$ at points which deviate from the aligned policy to improve efficacy. Additionally, while this work is limited to discrete action spaces, the $\tau^{gen}$ formulation suggests a principled extension to continuous action spaces.


\bibliographystyle{unsrt}   
\bibliography{_references}      

\appendix
\input{_appendix}



\end{document}

%% file: _appendix.tex
\appendix
\onecolumn
\clearpage
\section{Implementation Details}\label{app:imp-details}
\input{appendix/implementation.tex}

\clearpage
\section{Task and Data Collection Details}\label{app:task-data-details}
\input{appendix/task-details}

\clearpage
\section{Extended Results}\label{app:add-results}

This section provides a collection of additional experimental results mentioned throughout the paper. All learning curves results are averaged over 5 seeds and smoothed using a moving average of 10. The shaded region represents the standard error.

\subsection{Navigation Task}\label{app:nav-results}
\input{appendix/main-nav}

\clearpage
\subsection{Velocity Tasks}\label{app:vel-results}
\input{appendix/main-vel}

\clearpage
\subsection{Feedback as Preferences (CPL) and Reward (DVL)}\label{app:dvl-results}
\input{appendix/cpl-dvl}

\clearpage
\subsection{Only Negative Feedback}\label{app:only-neg-results}
\input{appendix/no-pos}

\clearpage
\subsection{Usefulness of Imperfect Demonstrations}\label{app:imp-demo-results}
\input{appendix/imp-demos}

\clearpage
\subsection{Feedback Scale}\label{app:fb-scale-results}
\input{appendix/fb-scale}

\clearpage
\subsection{Generalized Feedback-Adaptive Temperature}\label{app:atau-results}
\input{appendix/gen-tau}

\clearpage
\subsection{Hyperparameters}\label{app:hyper-results}
The following subsections conduct various hyperparameter experiments for FMR.
\input{appendix/hyperparams}

\clearpage
\subsection{FMR Induces Feedback-Modulated Entropy}
This subsection visualizes state-space entropy for the aligned navigation policies using ReCOIL models. \figref{fig:pathm-entropy} and \figref{fig:pathbb-entropy} depict the heatmaps for normalized entropy. To generate these figures, normalized entropy is computed using the final actor model from training, with the expert and imperfect trajectories as input. The average over 5 seeds is then computed. FMR increases entropy for positive feedback while decreasing it for negative feedback. Expert demonstrations receive no feedback and therefore converge to high entropy by default. These entropy heatmaps correspond closely with the original feedback heatmaps given in \figref{fig:pathm-feedback} and \figref{fig:pathbb-feedback}.
\input{appendix/entropy}

%% file: appendix/implementation.tex
\subsection{Data Preprocessing} For all tasks, we oversample $D^E$ as the number of demonstrations decreases. Oversampling is performed so that the number of samples in $D^E$ is close to, or slightly lower than, that in $D^I$. Oversampling is done by repeating entire trajectories, and we typically oversample $D^E$ to the size of $D^I$ (i.e., 50 demonstrations). The exceptions are SlowHop and SlowWalk, where the oversample target is 40 demonstrations to prevent $D^E$ from exceeding the number of samples in $D^I$. Additionally, for the navigation task, we removed duplicate starting states. These duplicates arise from a delay in the human beginning their demonstration, which creates repeated states within the first 120 steps.

\subsection{Algorithms} All algorithm networks are implemented in PyTorch using two-layer neural networks with 256 hidden units and ReLU activation. During training, algorithms are trained for 1 million batches (i.e., training iterations) using a learning rate of $3 \times 10^{-4}$ and a batch size of 128. Evaluations are conducted every 10k iterations and consist of 50 episodes each. To evaluate the final average performance and standard deviation for each metric, we compute the mean over the last 10 evaluations from training, across all 5 random seeds. Algorithm hyperparameters are selected based on author recommendations or limited searches. Algorithm-specific hyperparameters and details are provided below.

\paragraph{FMR} Appendix \ref{app:hyper-results} depicts the search over hyperparameters using PathBB ratio 10–50: $\beta \in \{1.5, 10, 100, 1000\}$, $\alpha \in \{0.1, 1, 10, 100\}$, and credit assignment $\in \{1.5, 10, 100, 1000\}$. For all tasks, unless stated otherwise, we set the hyperparameters as follows: $\beta = 10$ for the feedback-adaptive temperature, $\alpha = 1$ for the regularization strength (for both actor and critic), and the credit assignment window is set to 600 ms. When oversampling we found using cosine learning rate decay to be helpful for DemoDICE+FMR and ReCOIL+FMR. For Walker2D, we found that using positive feedback for all expert state-action pairs in $D^E$ and a slightly higher $\alpha = 2$ led to better performance due to the large action space. Although feedback is not typically provided for $D^E$, we observed that positive feedback for Walker2D helped elevate aligned actions. While adding such feedback is trivial, it was generally unnecessary in spaces with smaller action dimensions.

For critic-based algorithms (e.g., IQL, DemoDICE, ReCOIL), FMR can be applied to the critic under the assumption of a discrete action space. This requires transforming the value function into probability space using a function such as softmax. However, the softmax function is highly sensitive to input scale. When value function outputs have large magnitudes, the softmax produces overly peaked distributions that concentrate probability on a single action, likely limiting the ability to modulate entropy. For actor-critic algorithms, we apply FMR to both the actor and the critic.

\paragraph{IQL} Following \citep{garg_iq-learn_2021}, our implementation of IQL uses soft Q-learning to test FMR's ability to manipulate a critic-only algorithm. We implement IQL as a critic-only algorithm to verify FMR's ability to work with critic-only algorithms in discrete spaces. A hyperparameter search was performed over the reward loss $\{ 0.5, 10 \}$ using PathM and all ratios. We set the following hyperparameters for all tasks: $\chi^2$ regularization for the reward loss is $10$, and the soft value temperature is $1$. 

\paragraph{DemoDICE} Our implementation of DemoDICE follows \citep{kim_demodice_2021}. We set the following hyperparameters for all tasks: the discount factor is $0.99$, and the gradient penalty coefficients for the discriminator (i.e., cost) and critic are $0.1$ and $1 \times 10^{-4}$, respectively.

\paragraph{ReCOIL} Our implementation of ReCOIL follows \citep{sikchi_dual_2024} using actor-critic with target networks. We set the following hyperparameters for all tasks: the discount factor is $0.99$, actor temperature is $0.1$, value temperature is $1$, max clip is $7$, reward gap is $2$, target Q gap is $200$, and the target network is updated using a value of $5 \times 10^{-3}$.

\paragraph{DVL} Our implementation of DVL follows \citep{sikchi_dual_2024} using actor-critic with target networks. As DVL is the offline RL variant of ReCOIL that additionally accounts for reward, we adopt largely the same hyperparameters used for ReCOIL across all tasks: a $\chi^2$-based value loss, discount factor of $0.99$, actor temperature of $0.1$, value temperature of $1$, max clip of $5$, and a target network update rate of $5 \times 10^{-3}$. To improve performance across all tasks, we set positive feedback for all expert state-action pairs in $D^E$. When oversampling we found using cosine learning rate decay to be helpful.

\paragraph{CPL} Our implementation of CPL follows \citep{hejna_cpl_2024}. To select hyperparameters, we performed a grid search over the following values: segment length $\{16, 32, 64\}$, segment count $\{20\text{k}, 40\text{k}\}$, contrastive bias $\{0.25, 0.5, 0.75, 0.85, 0.95\}$, and temperature $\{0.1, 0.3, 0.5, 1\}$. Based on performance across the PathM, PathBB, and Walker2D tasks across various ratios, we selected a segment length of $64$, a segment count of $20\text{k}$, a contrastive bias of $0.85$, and a temperature of $0.1$. We use a discount factor of $1$ and dense preference comparisons as recommended in \citep{hejna_cpl_2024}. The preference score is computed using the sum of evaluative feedback per trajectory segment. To improve performance across all tasks, we set positive feedback for all expert state-action pairs in $D^E$.

\subsection{Computing Resources}\label{app:compute}
All experiments are run in a SLURM cluster environment across five different types of GPU nodes: NVIDIA A40 with an Intel Xeon Gold 6326 CPU, NVIDIA A100 with an Intel Xeon Gold 6326 CPU, NVIDIA A100 with an AMD EPYC 7502 CPU, NVIDIA L40s with an Intel Xeon Gold 6338 CPU or Intel Xeon Platinum 8362 CPU, and NVIDIA H200 NVL with a dual 64-core AMD EPYC 9555 CPU. All nodes used 40 GB of RAM.

%% file: appendix/task-details.tex
This section provides various visualizations of the different tasks, along with their corresponding trajectories and feedback. 

\subsection{Navigation Policies}\label{app:nav-details}

\figref{fig:nav-tasks} depicts the navigation task environment for the aligned PathM and PathBB policies. Blue areas represent hazards, the red sphere with a blue box (indicating the front) represents the agent, and the green cube represents the goal. The yellow walls mark the environment boundaries. These are observable to the agent and do not allow the agent to pass through them. When the agent intersects with a hazard, a red sphere will encompass the agent, indicating it is violating a constraint. This is purely a visual effect for observers.

\paragraph{Data Collection} For the navigation policies, human demonstrations are collected for both $D^E$ and $D^I$. For each aligned policy, 50 expert demonstrations are collected for $D^E$ with zero cost violations and 100\% success rate, while 50 imperfect demonstrations are collected for $D^I$ by having humans explore the state space without reaching the goal. \figref{fig:nav-trajs} visualizes a 2D slice of the environment and trajectories for both aligned navigation policies. The green lines indicate the expert trajectories for each respective policy. The gray lines (shared across policies) represent the imperfect trajectories. The majority of $D^I$ demonstrations move diagonally through the center (similar to PathM), while a minority follow the top or right boundary before moving toward the goal (similar to PathBB), with all demonstrations circling the goal to obfuscate the objective. 

Human feedback is collected by replaying demonstrations through a visual interface, where negative or positive feedback can be provided via keyboard input. When a feedback key is pressed, the feedback is mapped to the nearest state. Feedback is collected only for $D^I$, as it contains demonstrations requiring correction. Policy-specific feedback is collected for PathM and PathBB. The feedback methodology is defined such that the human evaluator provides positive feedback when the agent follows or moves toward the aligned policy, and negative feedback when it diverges or appears to be diverging. \figref{fig:nav-feedbacks} displays a heat map of the human feedback for $D^I$, provided with respect to each aligned navigation policy. Notice that feedback for \figref{fig:pathm-feedback} encourages the agent to move diagonally towards the goal or back towards the middle of the environment (indicated by green positive feedback). Likewise, \figref{fig:pathbb-feedback} encourages the agent to move towards the bottom path, although there are few trajectories that do so.

\clearpage
\begin{figure}[h]
    \centering
    \begin{subfigure}[b]{0.48\textwidth}
        \centering
        \includegraphics[width=\textwidth]{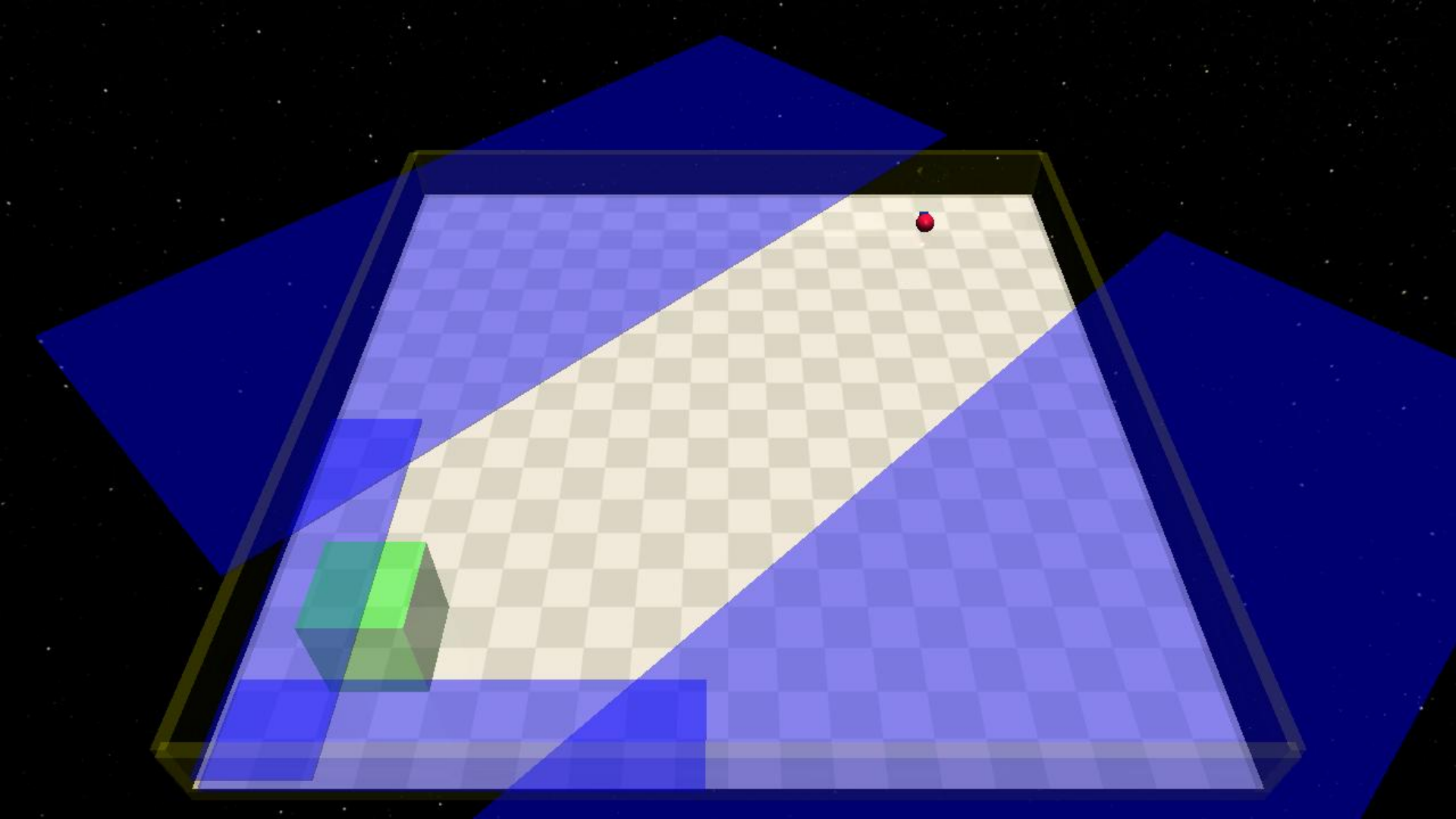}
        \caption{PathM}
    \end{subfigure}
    \hfill
    \begin{subfigure}[b]{0.48\textwidth}
        \centering
        \includegraphics[width=\textwidth]{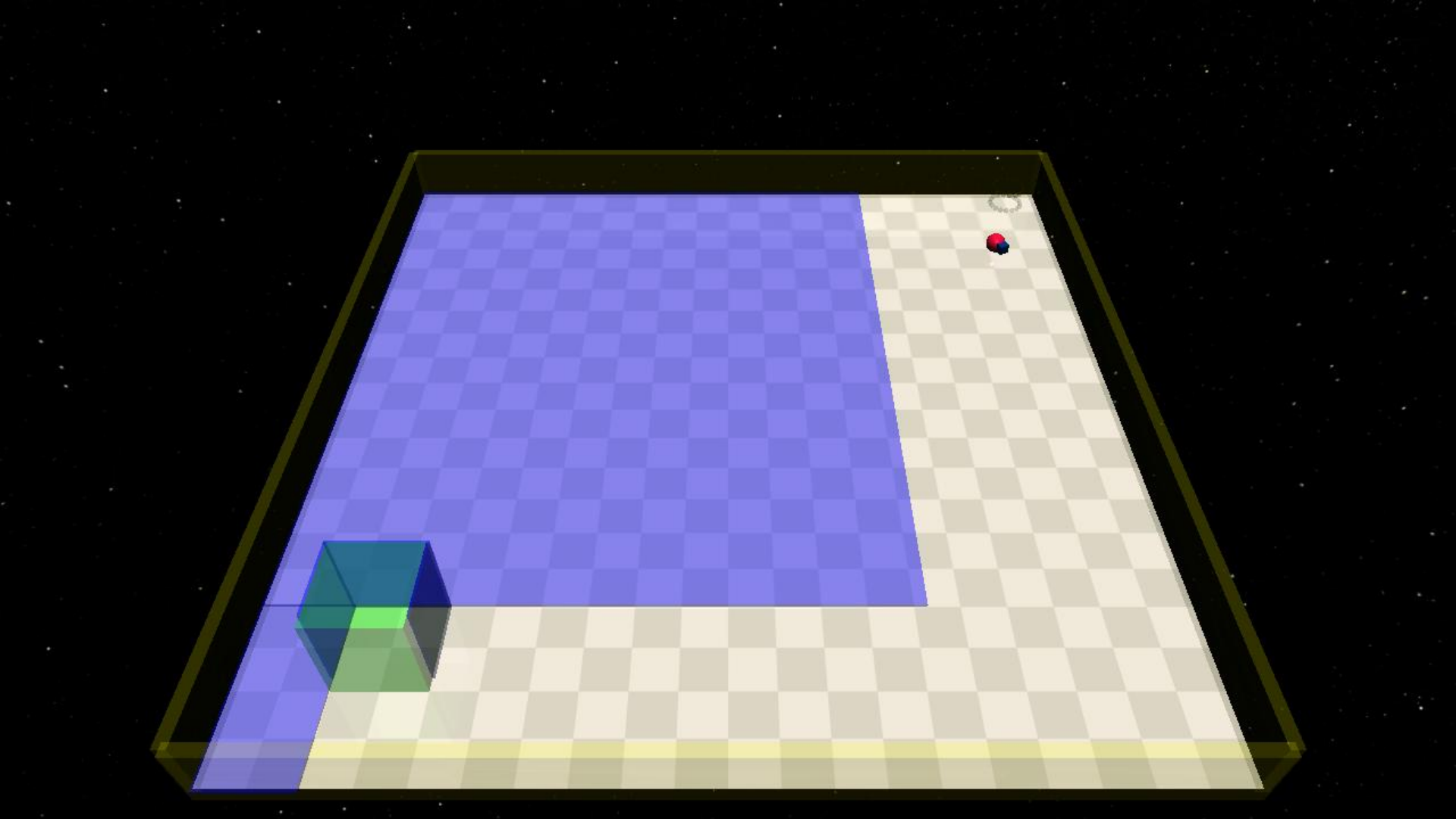}
        \caption{PathBB}
    \end{subfigure}
    \caption{Example of navigation task environment setups.}
    \label{fig:nav-tasks}
\end{figure}

\begin{figure}[H]
    \centering
    \begin{subfigure}[b]{0.48\textwidth}
        \centering
        \includegraphics[width=\textwidth]{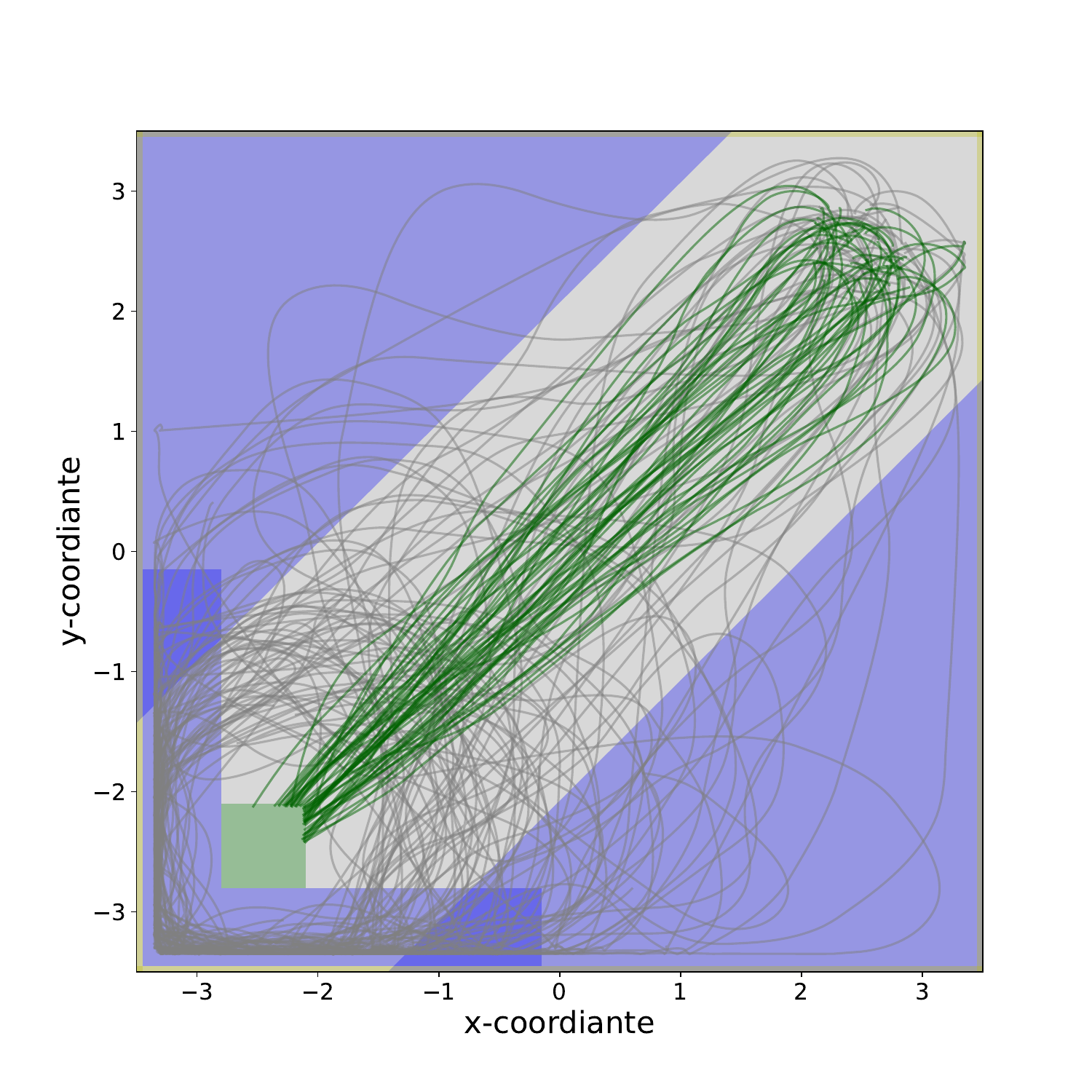}
        \caption{PathM}
    \end{subfigure}
    \hfill
    \begin{subfigure}[b]{0.48\textwidth}
        \centering
        \includegraphics[width=\textwidth]{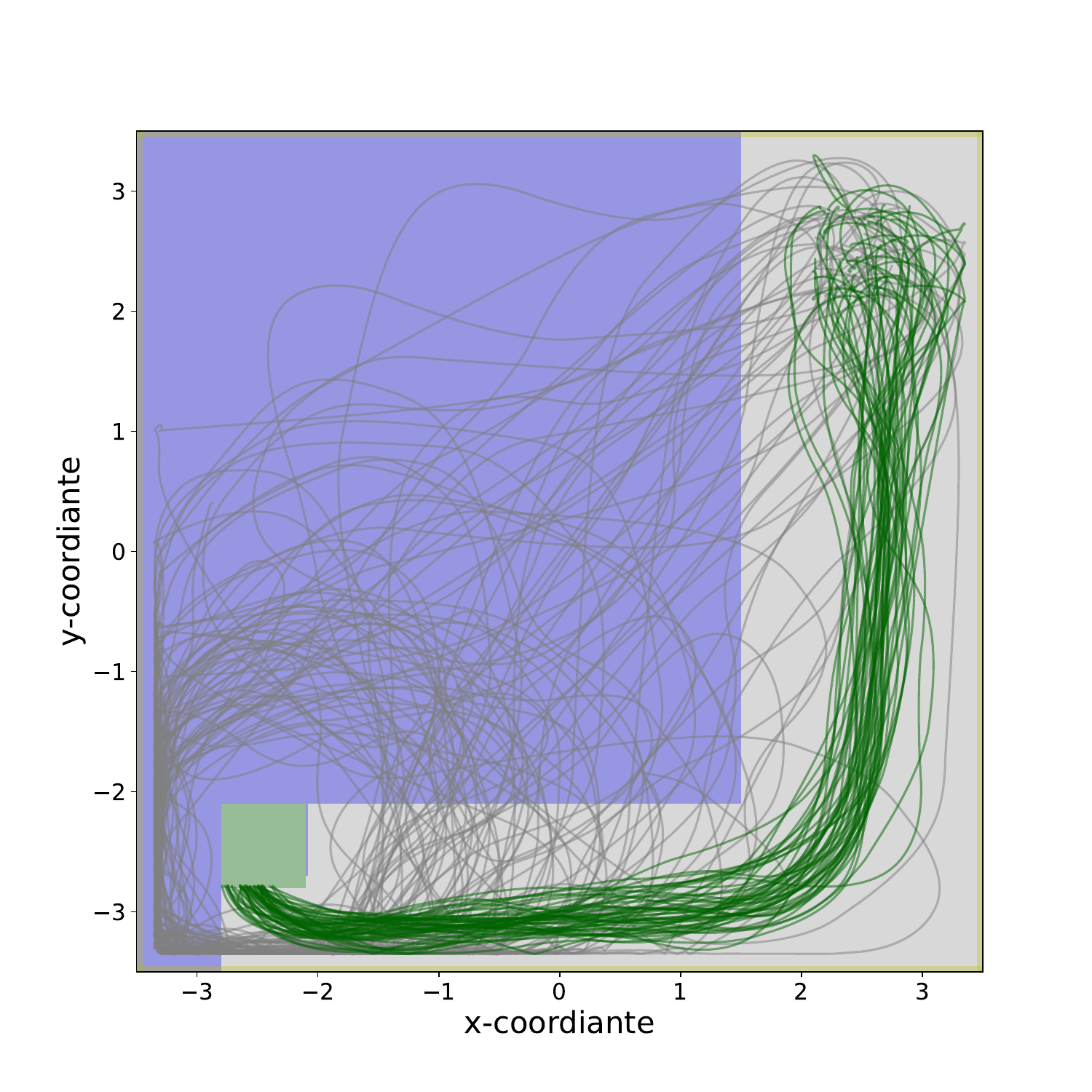}
        \caption{PathBB}
    \end{subfigure}
    \caption{Trajectories for the navigation aligned policies depicted as a 2D slice of the environment. The green lines represent expert trajectories, the gray lines represent imperfect trajectories, and the blue areas represent hazards.}
    \label{fig:nav-trajs}
\end{figure}

\begin{figure}[h]
    \centering
    \begin{subfigure}[b]{0.48\textwidth}
        \centering
        \includegraphics[width=\textwidth]{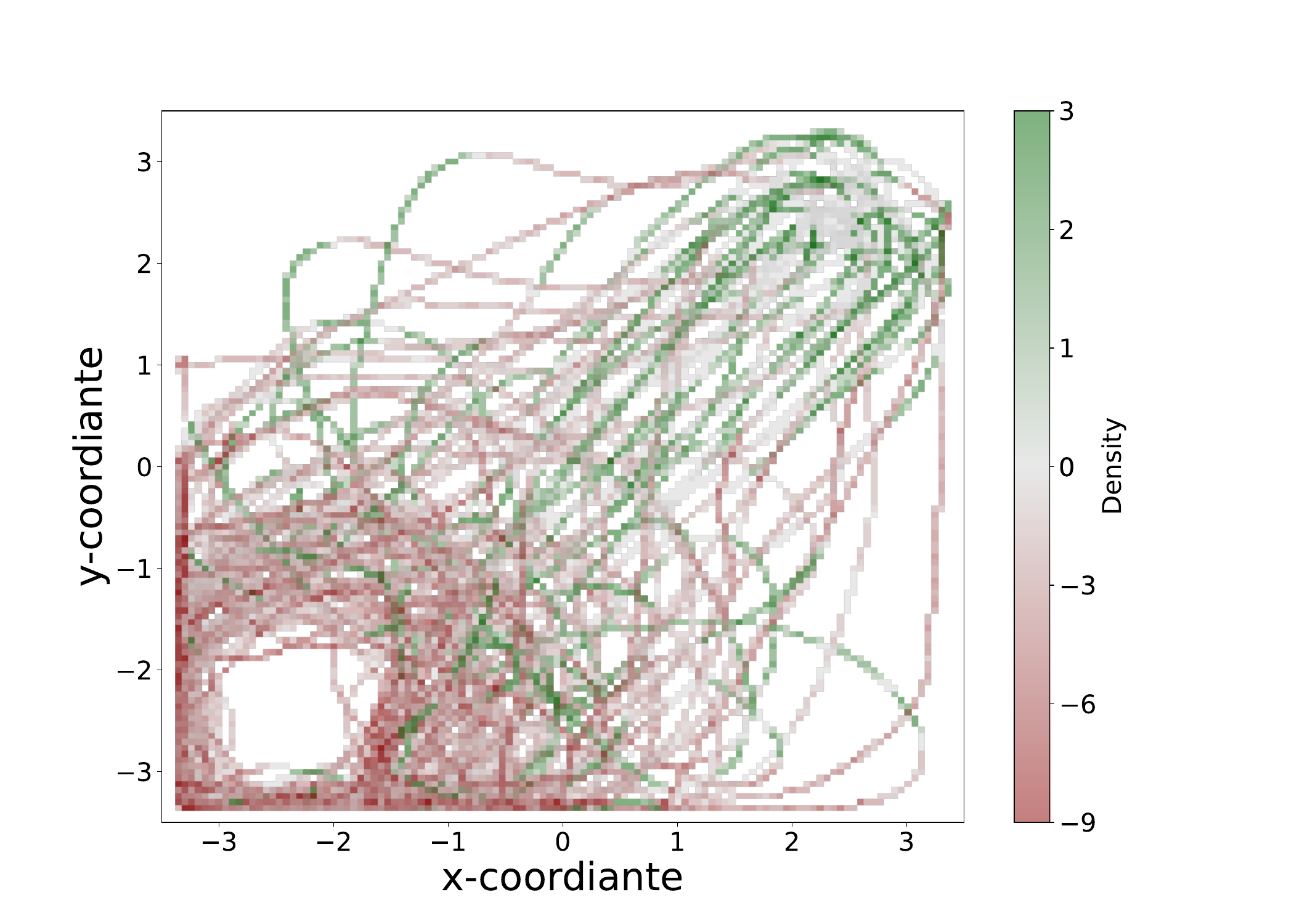}
        \caption{PathM}
        \label{fig:pathm-feedback}
    \end{subfigure}
    \hfill
    \begin{subfigure}[b]{0.48\textwidth}
        \centering
        \includegraphics[width=\textwidth]{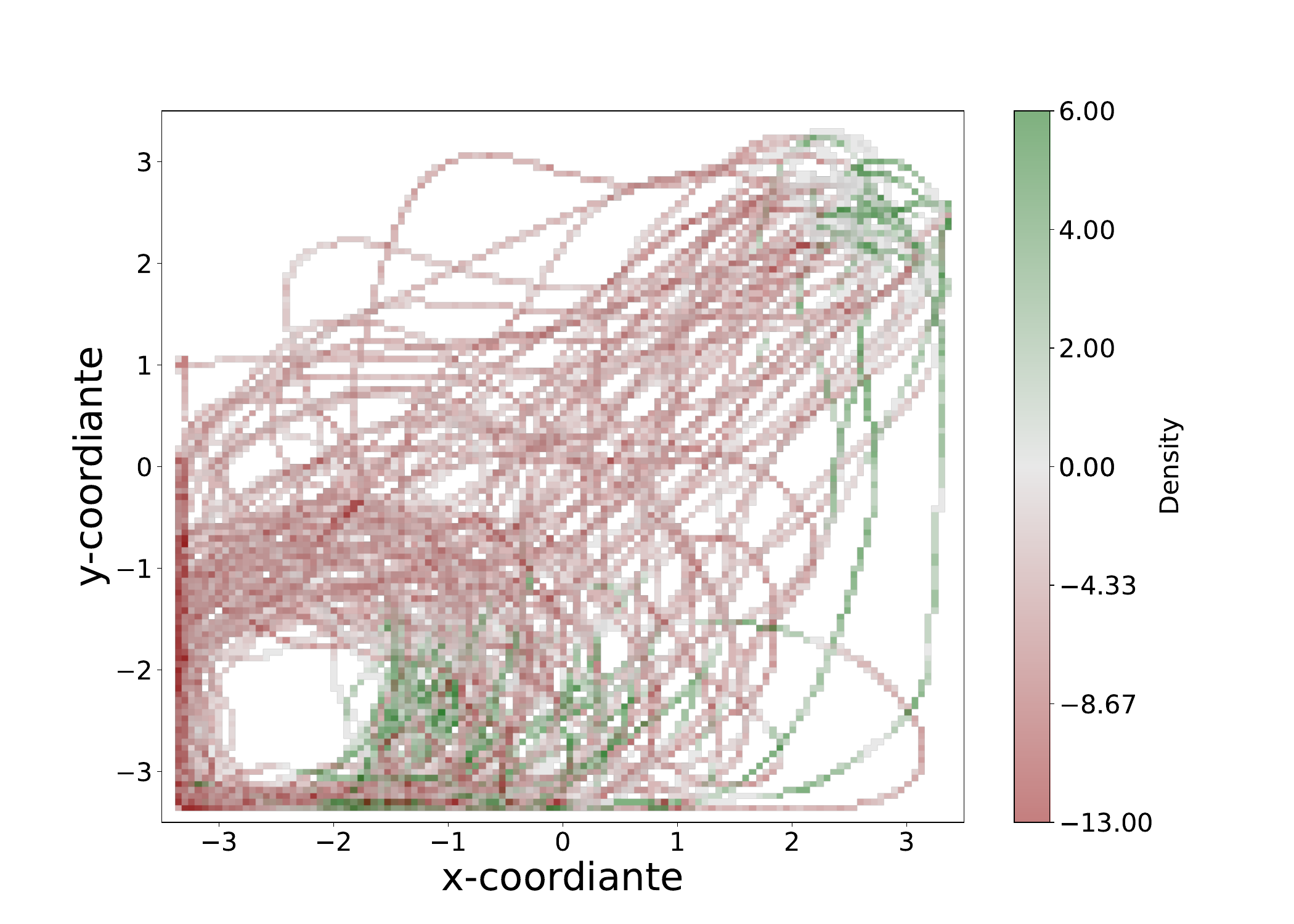}
        \caption{PathBB}
        \label{fig:pathbb-feedback}
    \end{subfigure}
    \caption{Feedback heatmaps of $D^I$ demonstrations for the aligned navigation policies with credit assignment. Color is determined by the feedback density in a particular area in the state space.}
    \label{fig:nav-feedbacks}
\end{figure}

\clearpage
\subsection{Velocity Policies} \label{app:vel-details}
\figref{fig:vel-tasks} depicts the velocity task environment setups for Swimmer, Hopper, and Walker2D. The sphere above each agent represents whether the agent is exceeding the velocity threshold (this is for visualization purposes only). If the sphere is green, the agent is under the threshold. If the sphere is red, the agent is exceeding the threshold. 

\paragraph{Data Collection} For the velocity policies, proxy demonstrations are collected for both $D^E$ and $D^I$ using SAC-Discrete. Our implementation of SAC-Discrete follows \citep{christodoulou_soft_2019}. For all velocity tasks, we used a two-layer neural networks with 256 hidden units and ReLU activation for the actor and critic networks. Additionally, we used a learning rate of $3 \times 10^{-4}$ for the actor and critic and $1 \times 10^{-3}$ for entropy learning. For Swimmer, we used a discount factor of $0.995$ to reach near optimal performance. The SAC-Discrete agent for Swimmer and Hopper is trained for 5 millions steps while Walk2D is trained for 10 million due to the large action space slowly down learning. For Hopper and Swimmer, samples for $D^I$ are collected every 1 million steps while for Walker2D samples are collected at steps 2, 3, 4, 5 and 8 million. The sampling for Walker2D is done to ensure a $D^I$ contains samples that have lower overlap with $D^E$ but are of higher quality.

\figref{fig:swim-trajs} depicts the expert and the imperfect demonstrations for SlowSwim where the blue dotted line indicates the velocity threshold. \figref{fig:hop-trajs} depicts the expert and the imperfect demonstrations for SlowHop where the imperfect demonstrations only cross the threshold. \figref{fig:walk-trajs} depicts the expert and the imperfect demonstrations for SlowWalk which demonstrations act as a middle ground between SlopSwim and SlowHop. 

For each velocity aligned policy, demonstrations in $D^E$ do contain some cost violations. The performance of the expert demonstrations $D^E$ for SlowSwim has a return of $283$ with a misalignment score of $8\times 10^{-5}$. SlowHop has a return of $1451$ with a misalignment score of $0.01$. SlowWalk has a return of $2147$ with a misalignment score of $8 \times 10^{-3}$. Each policy's $D^E$ return is normalized based on these returns.

Feedback is generated using an oracle instead of a human. Oracle feedback is generated by converting the cost into negative feedback, meaning no positive feedback is provided by the oracle. Oracle feedback is used for two reasons: first, to evaluate the viability of synthetic feedback that might be required when human feedback is difficult to provide; second, to evaluate the effectiveness of using only negative feedback. 


\clearpage
\begin{figure}[h]
    \centering
    \begin{subfigure}[b]{0.32\textwidth}
        \centering
        \includegraphics[width=\textwidth]{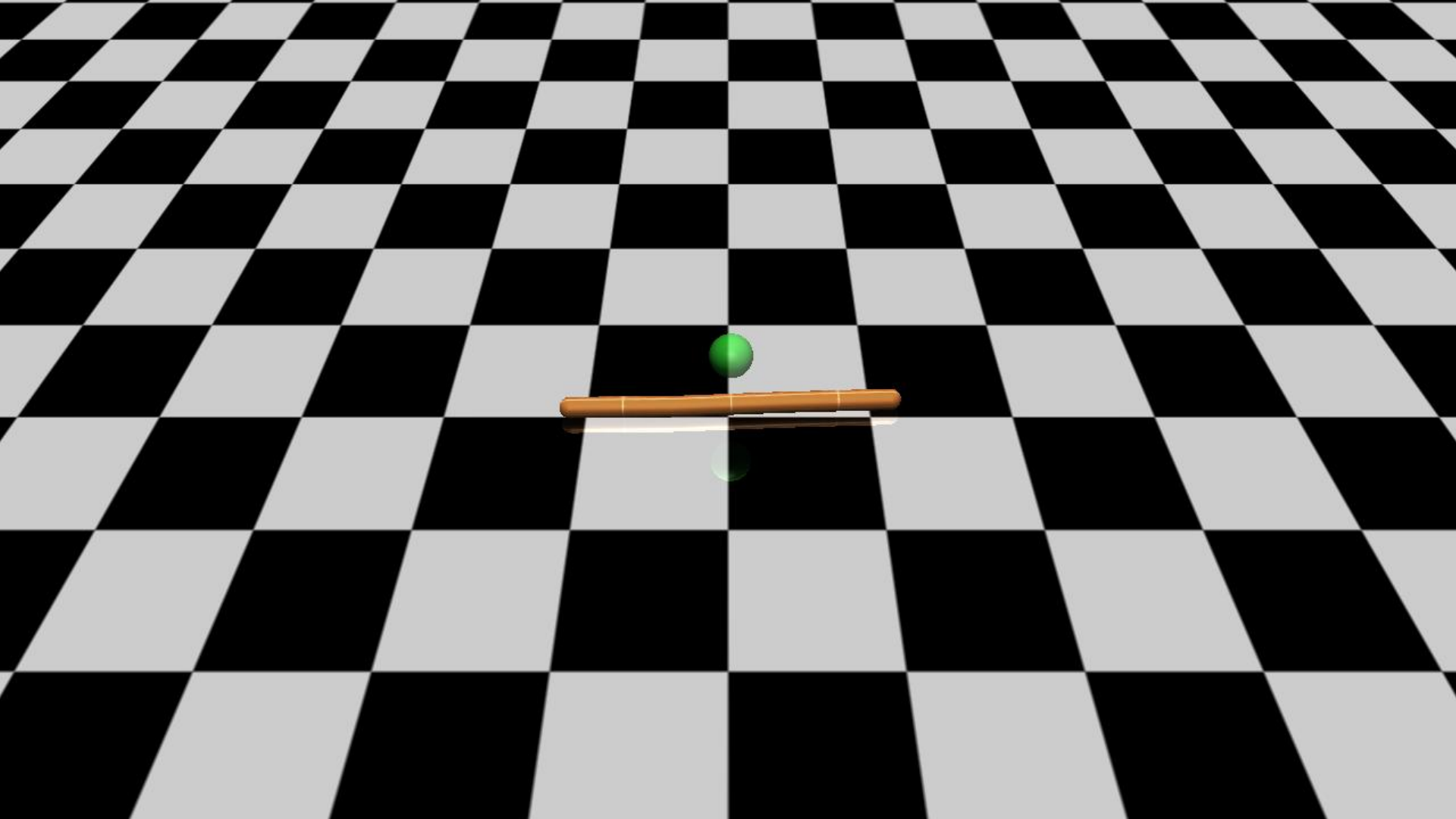}
        \caption{Swimmer}
    \end{subfigure}
    \hfill
    \begin{subfigure}[b]{0.32\textwidth}
        \centering
        \includegraphics[width=\textwidth]{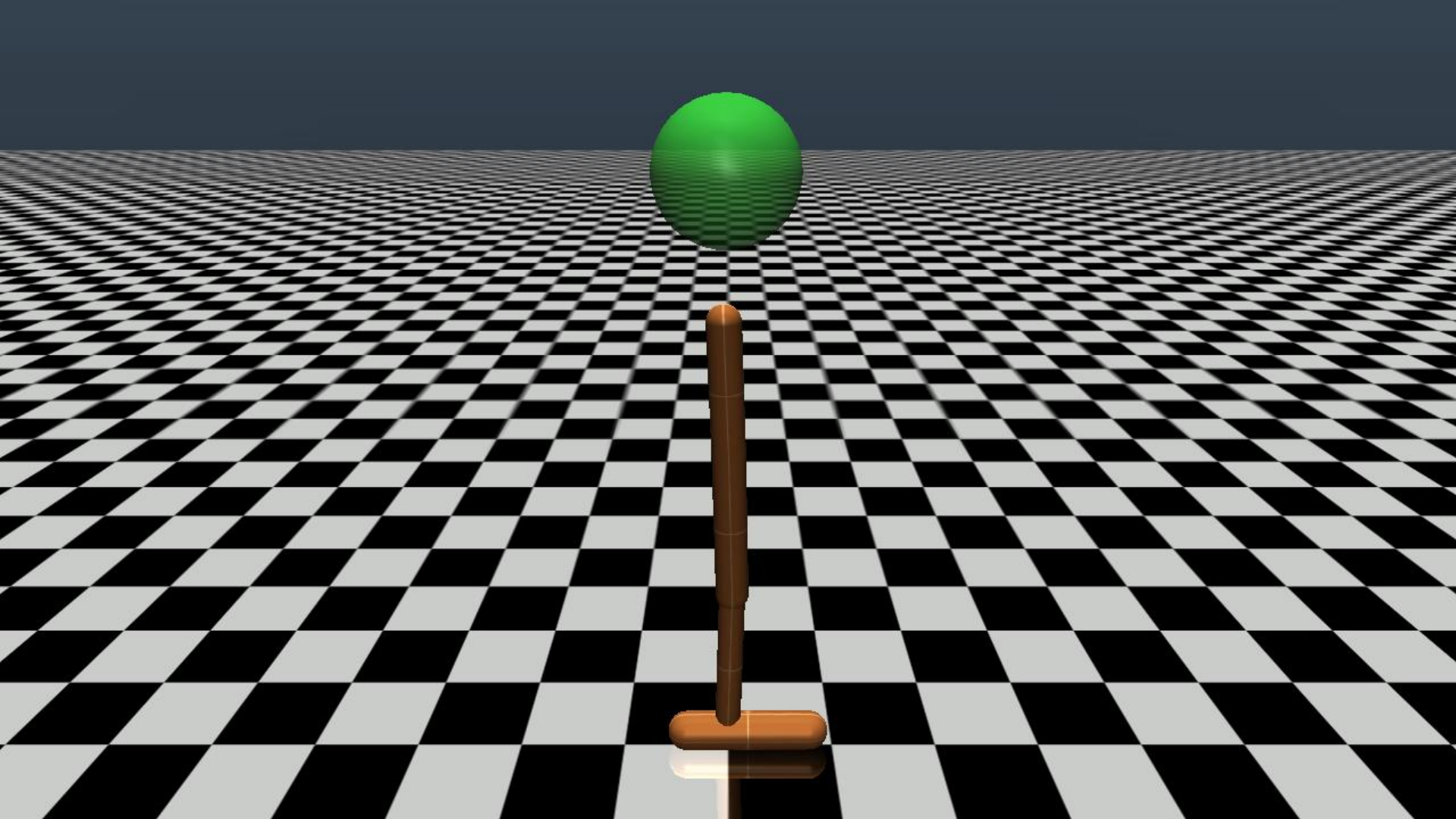}
        \caption{Hopper}
    \end{subfigure}
    \begin{subfigure}[b]{0.32\textwidth}
        \centering
        \includegraphics[width=\textwidth]{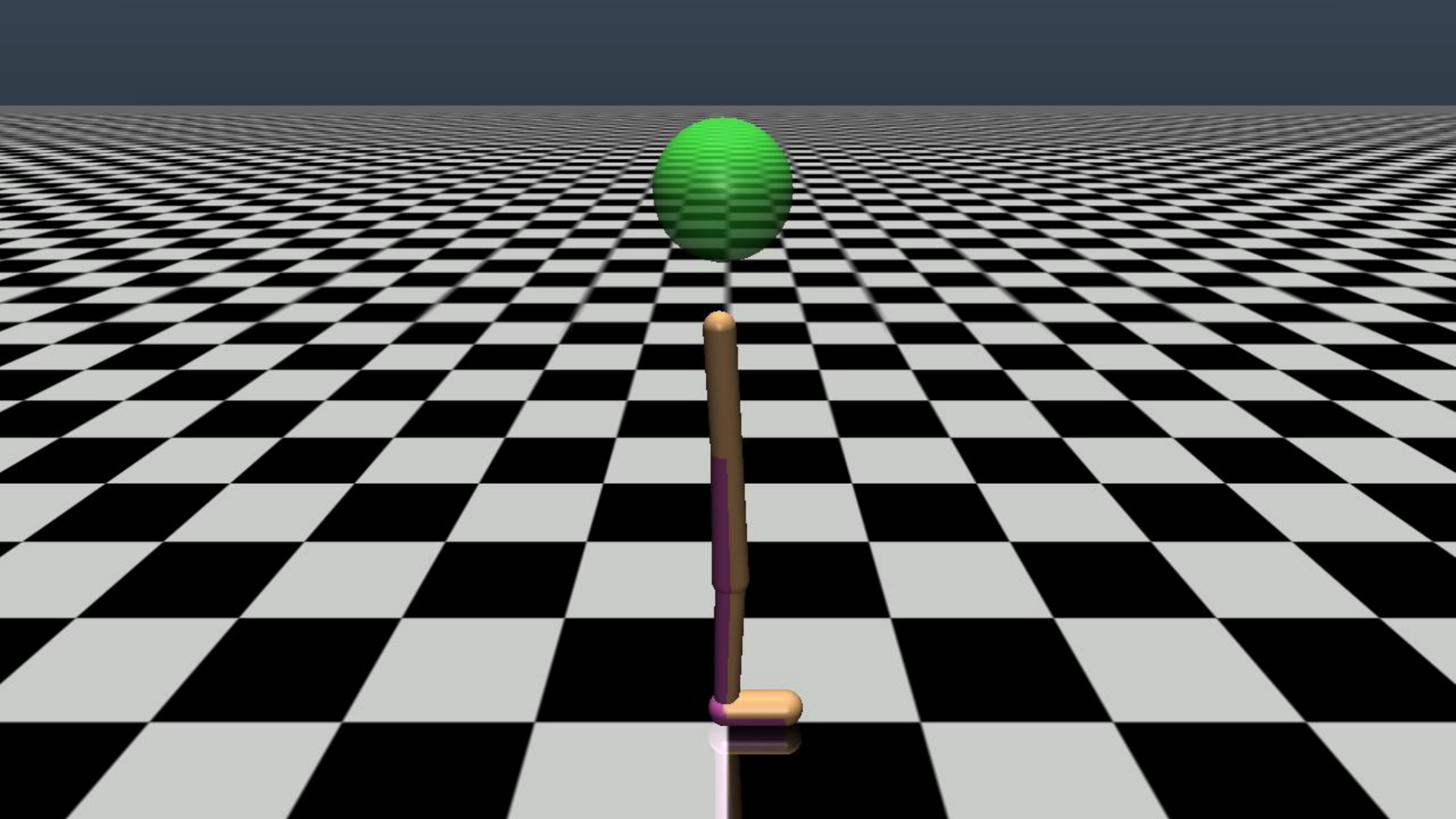}
        \caption{Walker2D}
    \end{subfigure}
    \caption{Example of velocity task environment setups.}
    \label{fig:vel-tasks}
\end{figure}

\begin{figure}[h]
    \centering
    \begin{subfigure}[b]{0.48\textwidth}
        \centering
        \includegraphics[width=\textwidth]{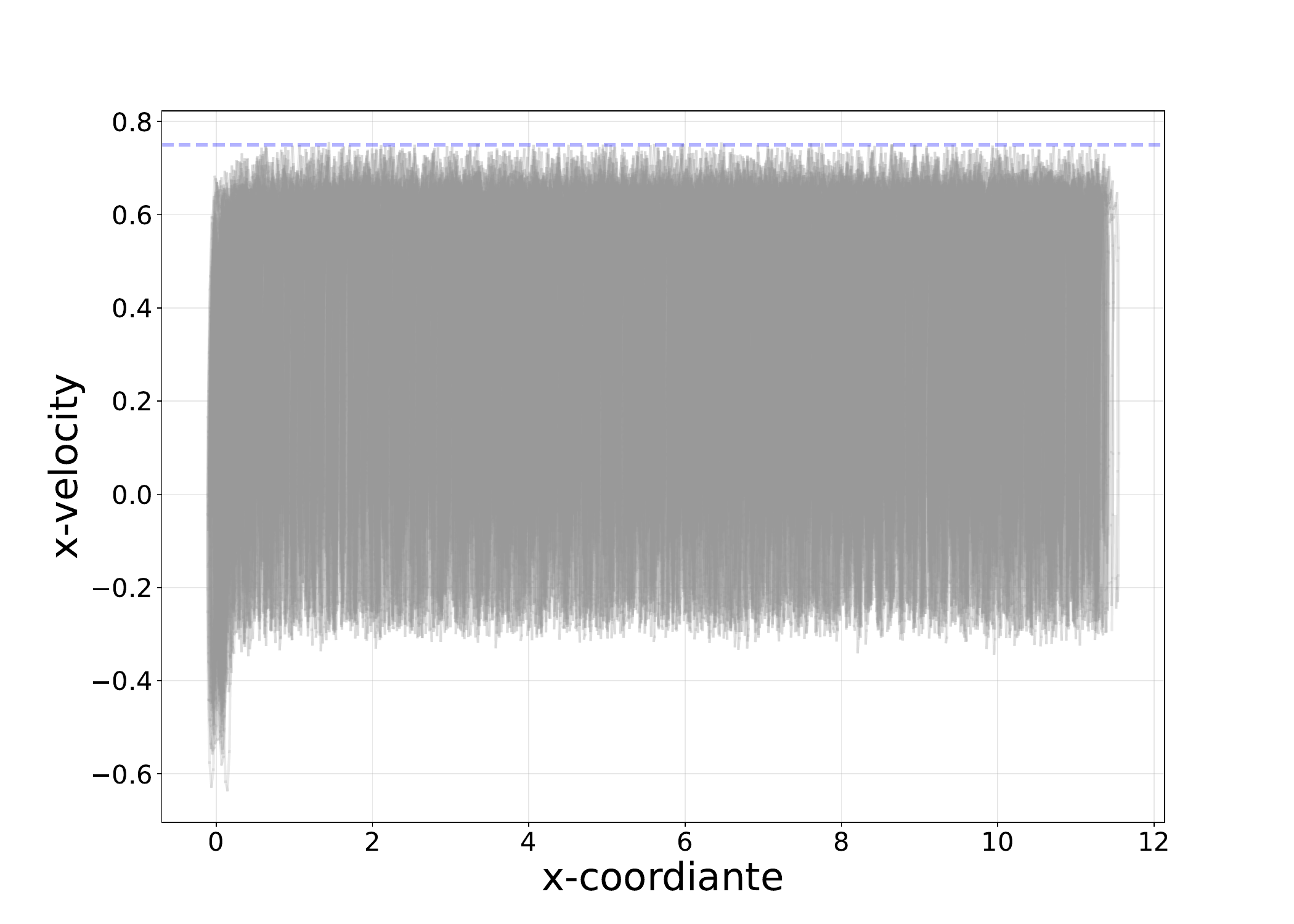}
        \caption{Expert}
    \end{subfigure}
    \hfill
    \begin{subfigure}[b]{0.48\textwidth}
        \centering
        \includegraphics[width=\textwidth]{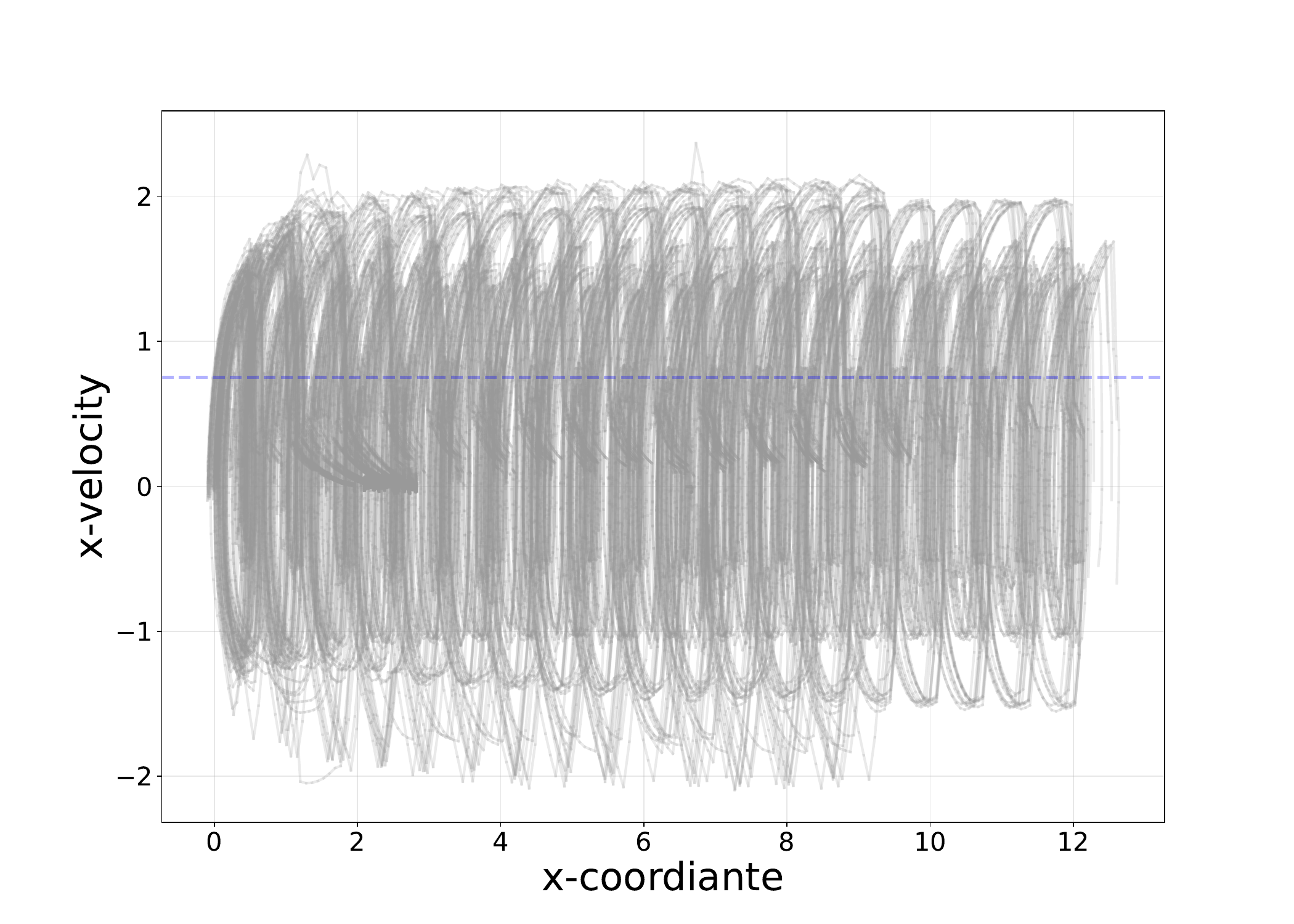}
        \caption{Imperfect}
    \end{subfigure}
    \caption{Trajectories for Swimmer plotted using coordinate and x-velocity. The blue dotted line represents the 0.74 velocity threshold.}
    \label{fig:swim-trajs}
\end{figure}

\begin{figure}[h]
    \centering
    \begin{subfigure}[b]{0.48\textwidth}
        \centering
        \includegraphics[width=\textwidth]{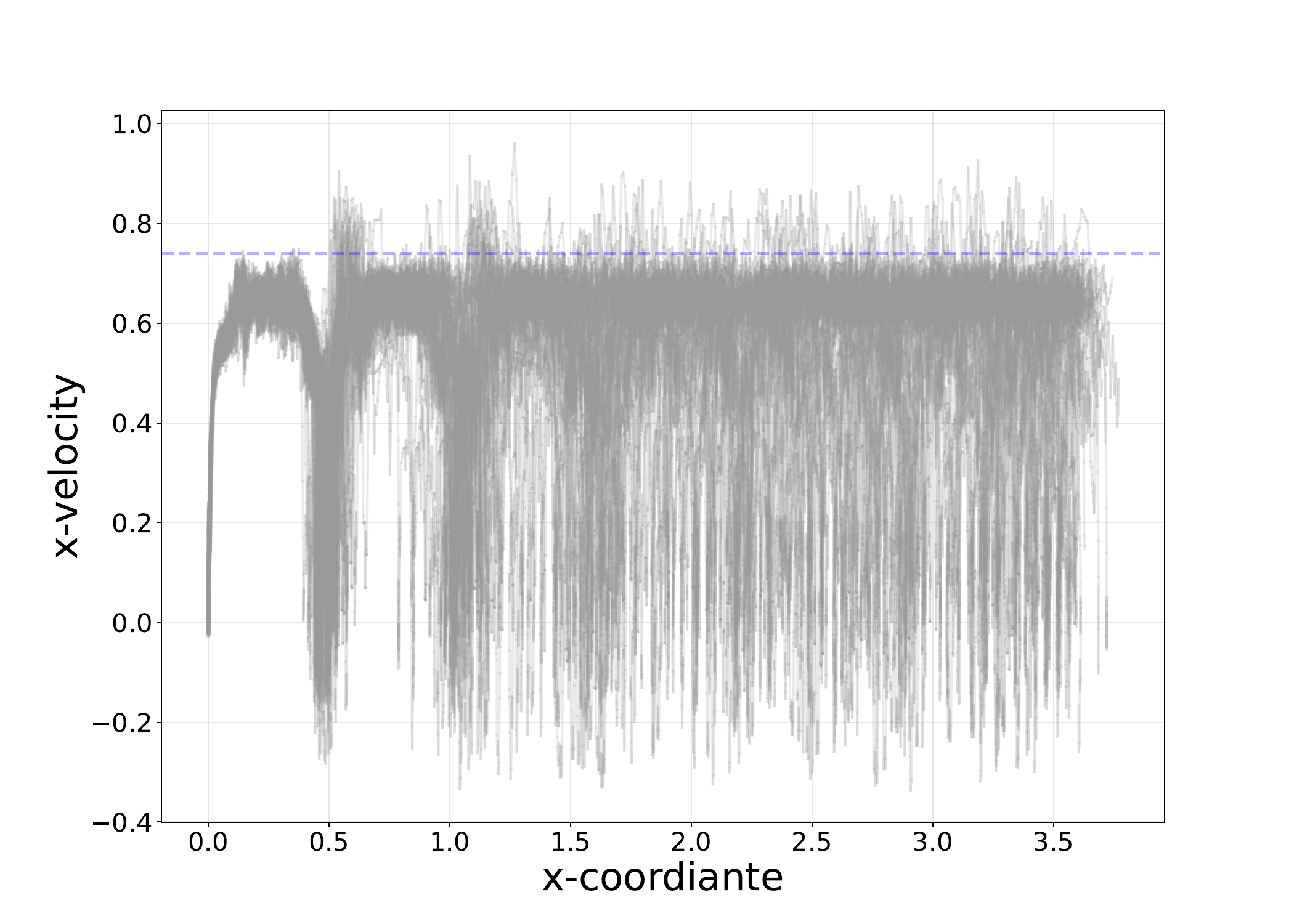}
        \caption{Expert}
    \end{subfigure}
    \hfill
    \begin{subfigure}[b]{0.48\textwidth}
        \centering
        \includegraphics[width=\textwidth]{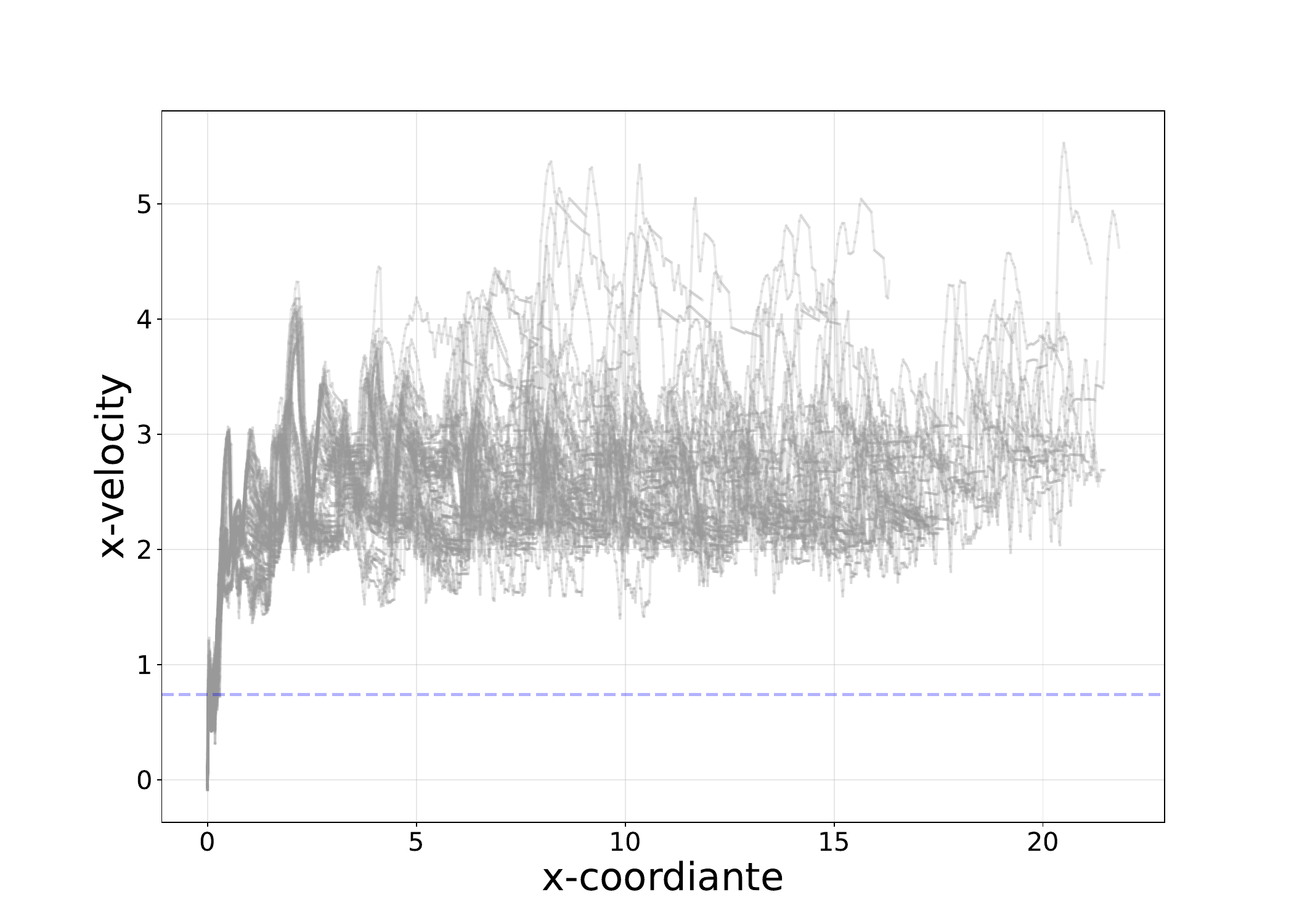}
        \caption{Imperfect}
    \end{subfigure}
    \caption{Trajectories for Hopper plotted using coordinate and x-velocity. The blue dotted line represents the 0.75 velocity threshold.}
    \label{fig:hop-trajs}
\end{figure}

\begin{figure}[h]
    \centering
    \begin{subfigure}[b]{0.48\textwidth}
        \centering
        \includegraphics[width=\textwidth]{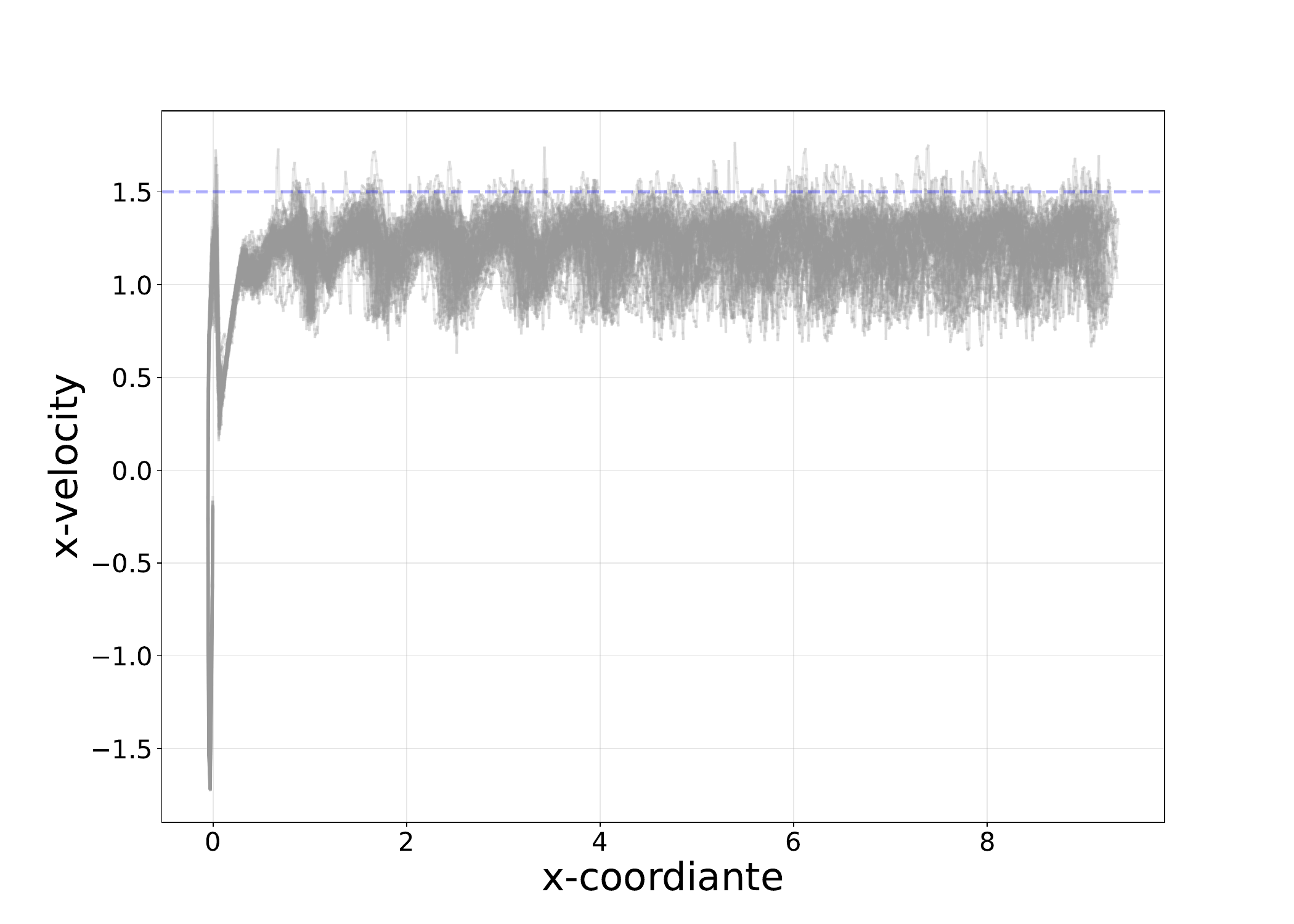}
        \caption{Expert}
    \end{subfigure}
    \hfill
    \begin{subfigure}[b]{0.48\textwidth}
        \centering
        \includegraphics[width=\textwidth]{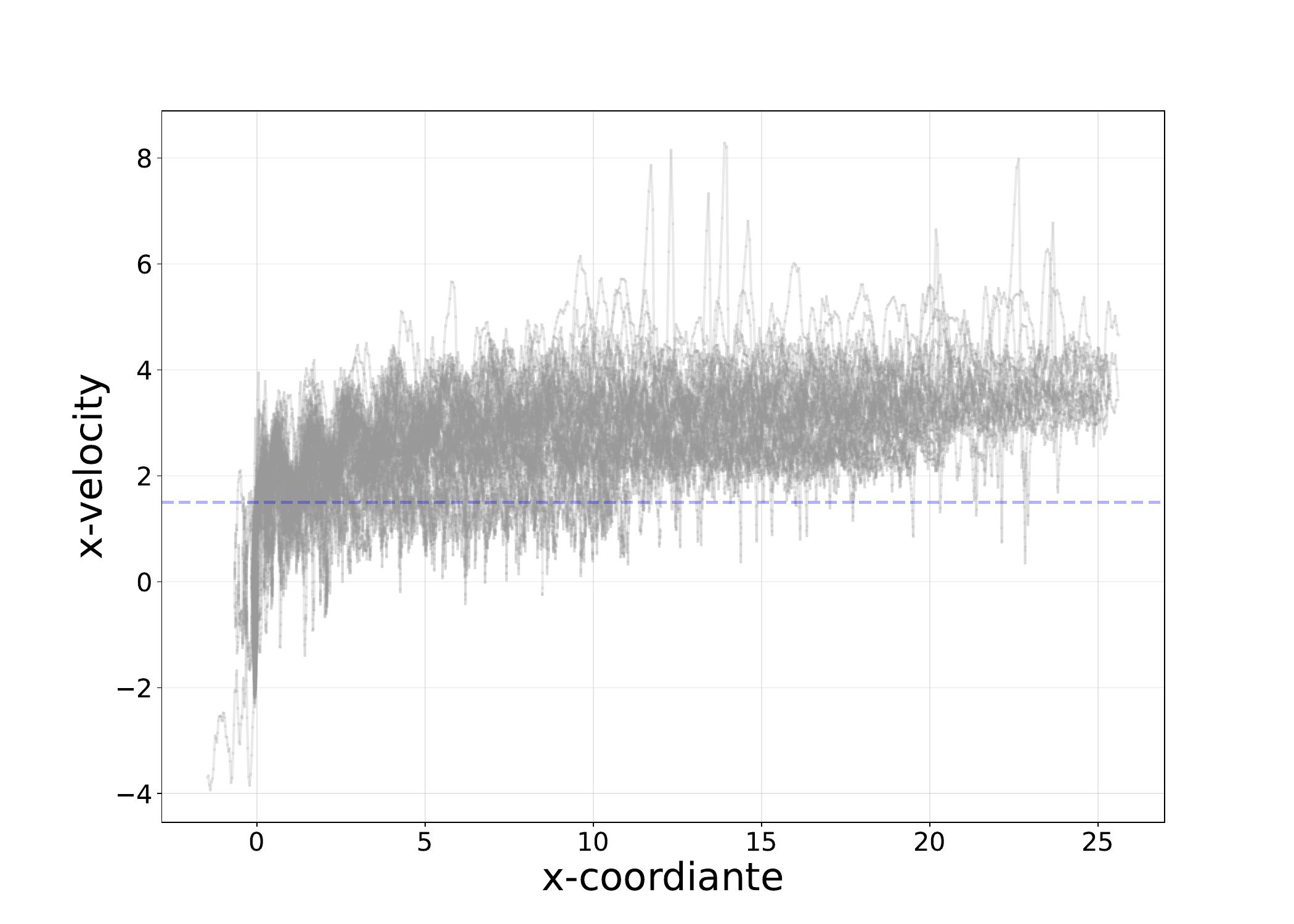}
        \caption{Imperfect}
    \end{subfigure}
    \caption{Trajectories for Walker2D plotted using coordinate and x-velocity. The blue dotted line represents the 1.5 velocity threshold.}
    \label{fig:walk-trajs}
\end{figure}

%% file: appendix/main-nav.tex
The following are the results for the navigation tasks.

\begin{table*}[h]
\centering
\caption{PathM performance comparison across algorithms and data ratios. Results show mean ± std for the last 10 evaluations, over 5 seeds.}
\label{tab:pathm-main}
\footnotesize
\setlength{\tabcolsep}{3.5pt}
\begin{tabular}{lccccccccc}
\toprule
& & \multicolumn{2}{c}{\textbf{BC}} & \multicolumn{2}{c}{\textbf{IQL}} & \multicolumn{2}{c}{\textbf{DemoDICE}} & \multicolumn{2}{c}{\textbf{ReCOIL}} \\
\textbf{Variant} & \textbf{Ratio} & Return & Mis. & Return & Mis. & Return & Mis. & Return & Mis. \\
\midrule
\multirow{3}{*}{Base} & 50-50 & 0.45 ± 0.50 & 0.37 ± 0.31 & 0.46 ± 0.50 & 0.36 ± 0.31 & 0.54 ± 0.50 & 0.33 ± 0.33 & 0.95 ± 0.22 & 0.11 ± 0.21 \\
& 25-50 & 0.37 ± 0.48 & 0.42 ± 0.31 & 0.39 ± 0.49 & 0.42 ± 0.31 & 0.43 ± 0.50 & 0.39 ± 0.33 & 0.92 ± 0.28 & 0.12 ± 0.22 \\
& 10-50 & 0.31 ± 0.46 & 0.46 ± 0.29 & 0.30 ± 0.46 & 0.47 ± 0.29 & 0.33 ± 0.47 & 0.45 ± 0.30 & 0.86 ± 0.35 & 0.15 ± 0.24 \\
\midrule
\multirow{3}{*}{FMR} & 50-50 & 0.87 ± 0.34 & 0.10 ± 0.23 & 0.87 ± 0.33 & 0.09 ± 0.22 & 0.94 ± 0.24 & 0.04 ± 0.15 & 0.99 ± 0.11 & 0.01 ± 0.08 \\
& 25-50 & 0.88 ± 0.32 & 0.10 ± 0.22 & 0.81 ± 0.39 & 0.14 ± 0.26 & 0.87 ± 0.33 & 0.07 ± 0.18 & 0.97 ± 0.16 & 0.01 ± 0.08 \\
& 10-50 & 0.81 ± 0.39 & 0.15 ± 0.25 & 0.70 ± 0.46 & 0.21 ± 0.30 & 0.87 ± 0.34 & 0.09 ± 0.21 & 0.95 ± 0.22 & 0.02 ± 0.09 \\
\bottomrule
\end{tabular}
\end{table*}

\begin{table*}[h]
\centering
\caption{PathBB performance comparison across algorithms and data ratios. Results show mean ± std for the last 10 evaluations, over 5 seeds.}
\label{tab:pathbb-main}
\footnotesize
\setlength{\tabcolsep}{3.5pt}
\begin{tabular}{lccccccccc}
\toprule
& & \multicolumn{2}{c}{\textbf{BC}} & \multicolumn{2}{c}{\textbf{IQL}} & \multicolumn{2}{c}{\textbf{DemoDICE}} & \multicolumn{2}{c}{\textbf{ReCOIL}} \\
\textbf{Variant} & \textbf{Ratio} & Return & Mis. & Return & Mis. & Return & Mis. & Return & Mis. \\
\midrule
\multirow{3}{*}{Base} & 50-50 & 0.46 ± 0.50 & 0.31 ± 0.32 & 0.48 ± 0.50 & 0.30 ± 0.33 & 0.58 ± 0.49 & 0.27 ± 0.35 & 0.78 ± 0.41 & 0.15 ± 0.31 \\
 & 25-50 & 0.34 ± 0.47 & 0.39 ± 0.34 & 0.38 ± 0.48 & 0.36 ± 0.33 & 0.43 ± 0.50 & 0.38 ± 0.39 & 0.72 ± 0.45 & 0.15 ± 0.30 \\
 & 10-50 & 0.25 ± 0.43 & 0.44 ± 0.30 & 0.25 ± 0.43 & 0.43 ± 0.33 & 0.31 ± 0.46 & 0.46 ± 0.36 & 0.59 ± 0.49 & 0.24 ± 0.33 \\
\midrule
\multirow{3}{*}{FMR}  & 50-50 & 0.71 ± 0.45 & 0.16 ± 0.27 & 0.65 ± 0.48 & 0.20 ± 0.29 & 0.84 ± 0.37 & 0.13 ± 0.25 & 0.92 ± 0.27 & 0.05 ± 0.17 \\
 & 25-50 & 0.65 ± 0.48 & 0.20 ± 0.31 & 0.57 ± 0.49 & 0.26 ± 0.33 & 0.76 ± 0.43 & 0.22 ± 0.33 & 0.88 ± 0.32 & 0.07 ± 0.19 \\
 & 10-50 & 0.47 ± 0.50 & 0.33 ± 0.34 & 0.40 ± 0.49 & 0.38 ± 0.36 & 0.67 ± 0.47 & 0.26 ± 0.35 & 0.71 ± 0.45 & 0.10 ± 0.21 \\
\bottomrule
\end{tabular}
\end{table*}

\begin{figure}[H]
    \centering
    
    \begin{subfigure}[b]{\textwidth}
        \centering
        \includegraphics[width=0.7\textwidth,trim=0 665 1 0,clip]{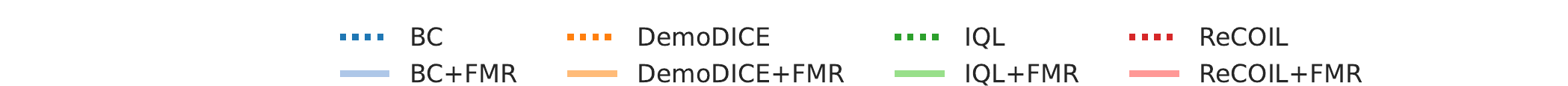}
    \end{subfigure}
    
    \vspace{0.5em}
    
    \begin{subfigure}[b]{0.32\textwidth}
        \centering
        \includegraphics[width=\textwidth,trim=0 44 0 55,clip]{images/main/PathM/success_50-50.pdf}
    \end{subfigure}
    \hfill
    \begin{subfigure}[b]{0.32\textwidth}
        \centering
        \includegraphics[width=\textwidth,trim=0 44 0 55,clip]{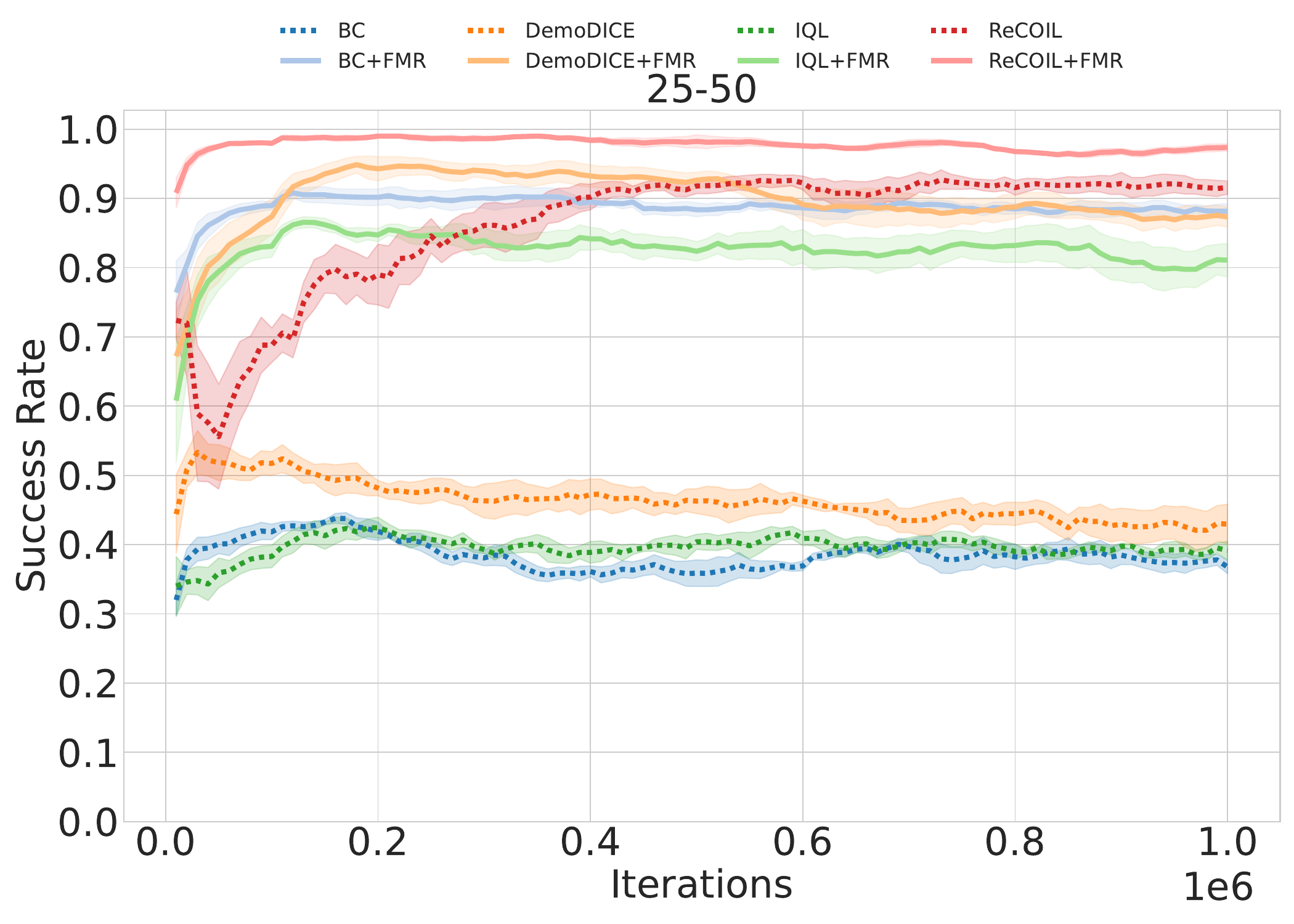}
    \end{subfigure}
    \hfill
    \begin{subfigure}[b]{0.32\textwidth}
        \centering
        \includegraphics[width=\textwidth,trim=0 44 0 55,clip]{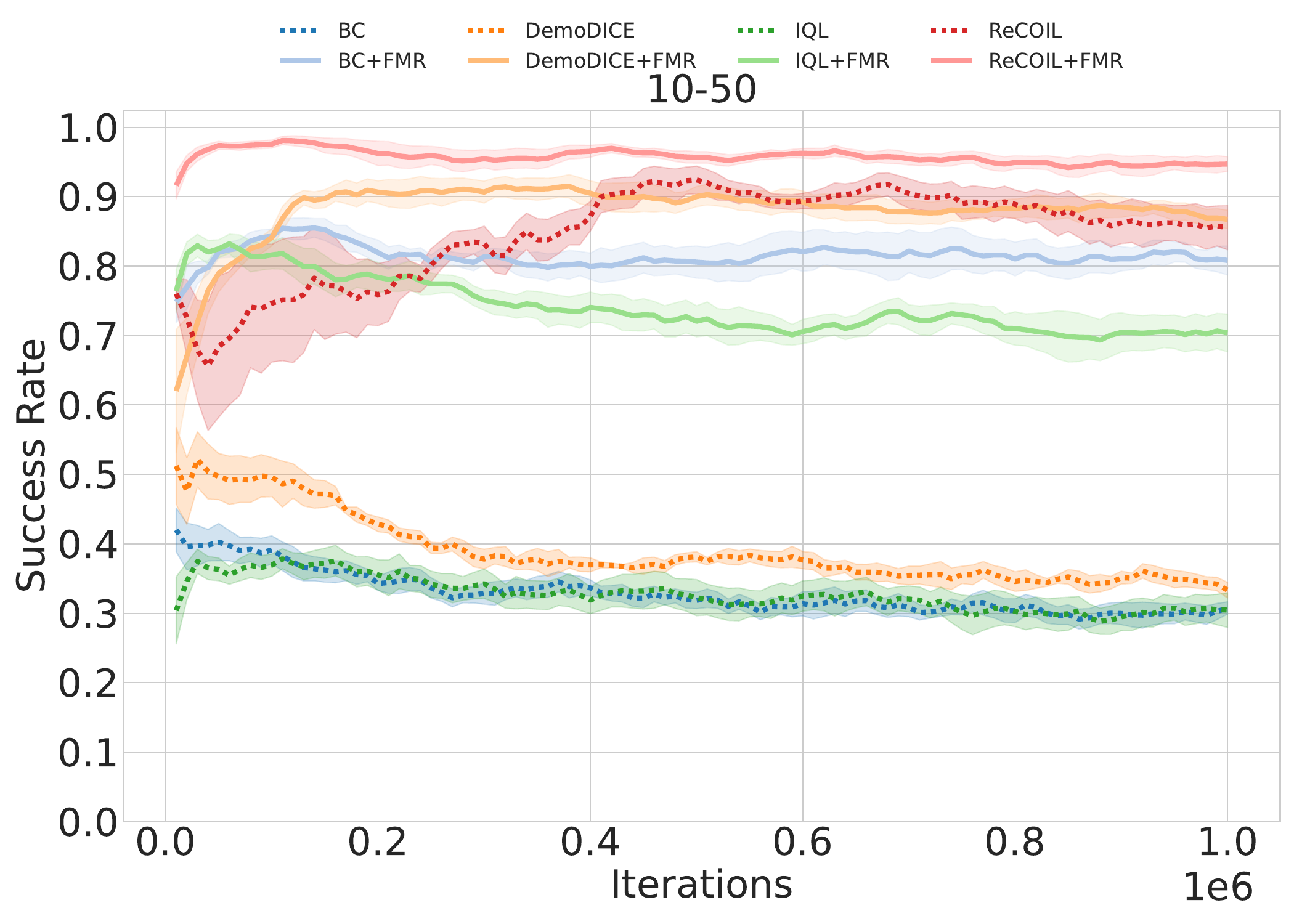}
    \end{subfigure}
    
    \begin{subfigure}[b]{0.32\textwidth}
        \centering
        \includegraphics[width=\textwidth,trim=0 0 0 81,clip]{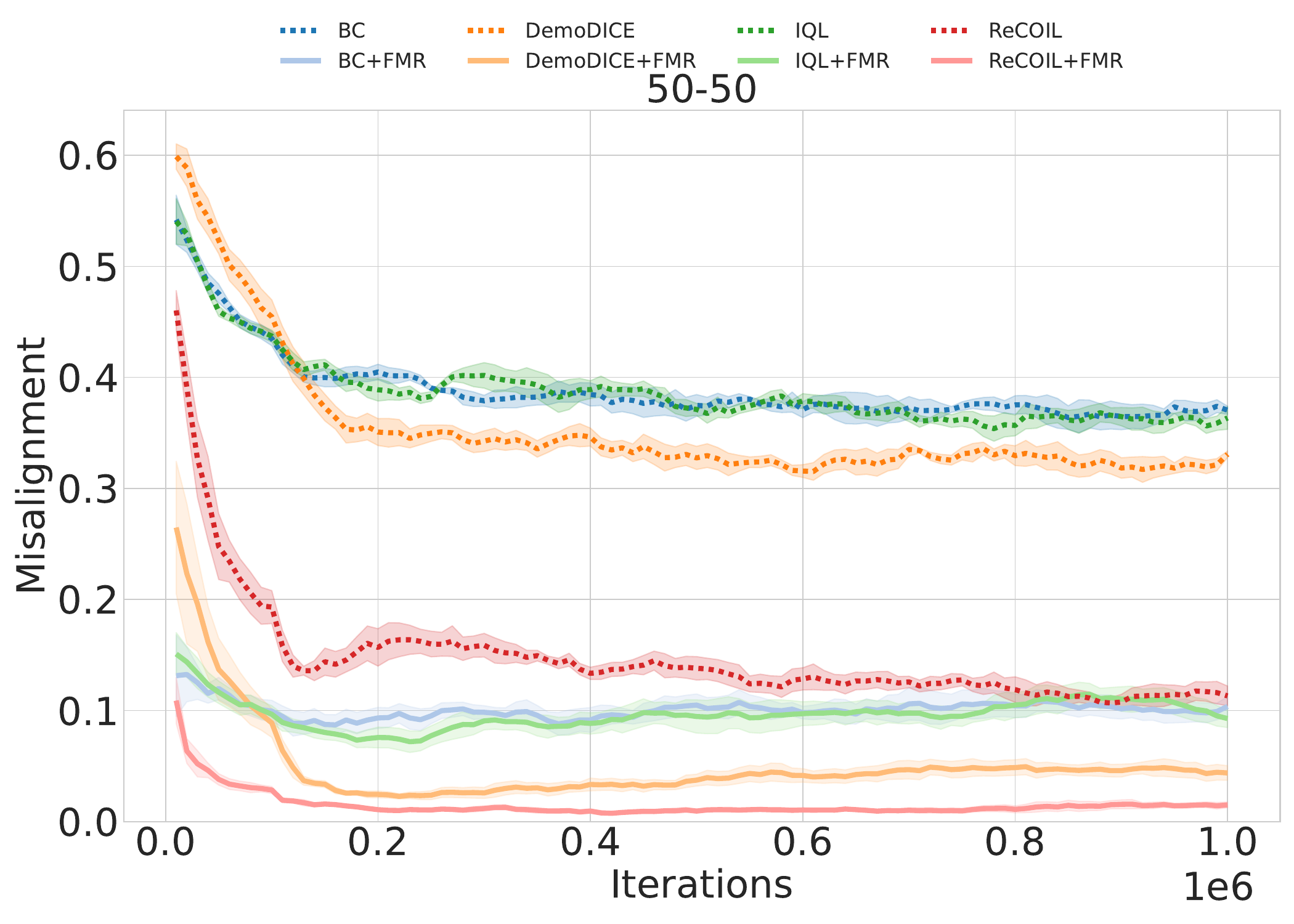}
    \end{subfigure}
    \hfill
    \begin{subfigure}[b]{0.32\textwidth}
        \centering
        \includegraphics[width=\textwidth,trim=0 0 0 81,clip]{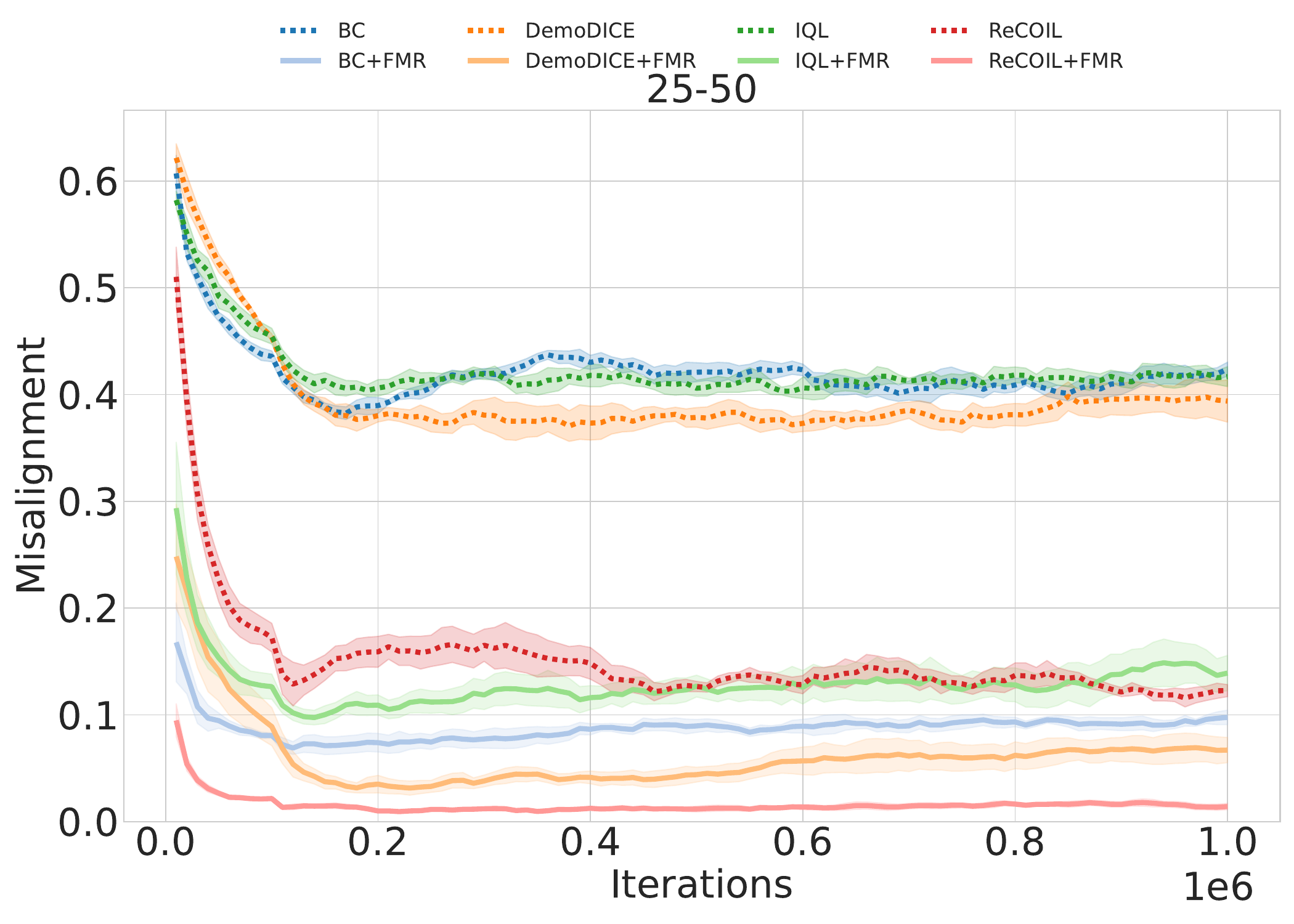}
    \end{subfigure}
    \hfill
    \begin{subfigure}[b]{0.32\textwidth}
        \centering
        \includegraphics[width=\textwidth,trim=0 0 0 81,clip]{images/main/PathM/misalignment_10-50.pdf}
    \end{subfigure}
    
    \caption{PathM learning curves for baselines and FMR. The shaded region represents the standard error.}
    \label{fig:pathm-main}
\end{figure}

\begin{figure}[H]
    \centering
    
    \begin{subfigure}[b]{\textwidth}
        \centering
        \includegraphics[width=0.7\textwidth,trim=0 665 0 0,clip]{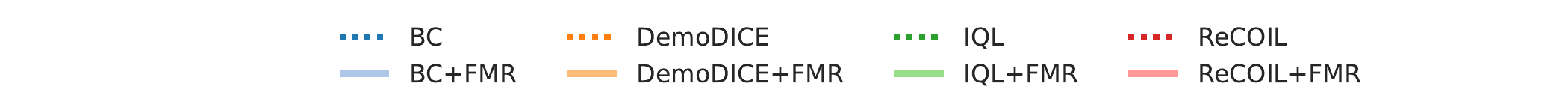}
    \end{subfigure}
    
    \vspace{0.5em}
    
    \begin{subfigure}[b]{0.32\textwidth}
        \centering
        \includegraphics[width=\textwidth,trim=0 44 0 55,clip]{images/main/PathBB/success_50-50.pdf}
    \end{subfigure}
    \hfill
    \begin{subfigure}[b]{0.32\textwidth}
        \centering
        \includegraphics[width=\textwidth,trim=0 44 0 55,clip]{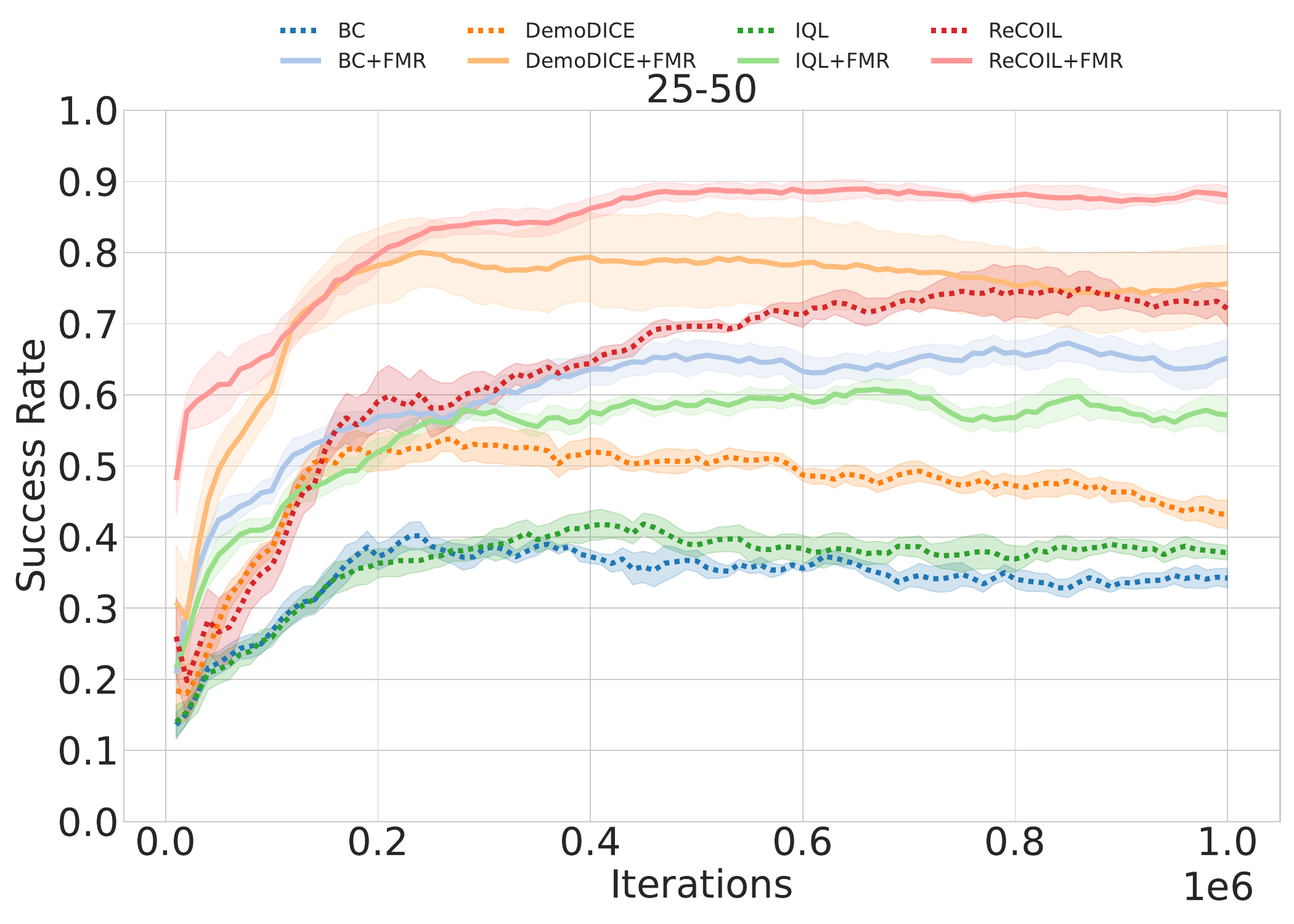}
    \end{subfigure}
    \hfill
    \begin{subfigure}[b]{0.32\textwidth}
        \centering
        \includegraphics[width=\textwidth,trim=0 44 0 55,clip]{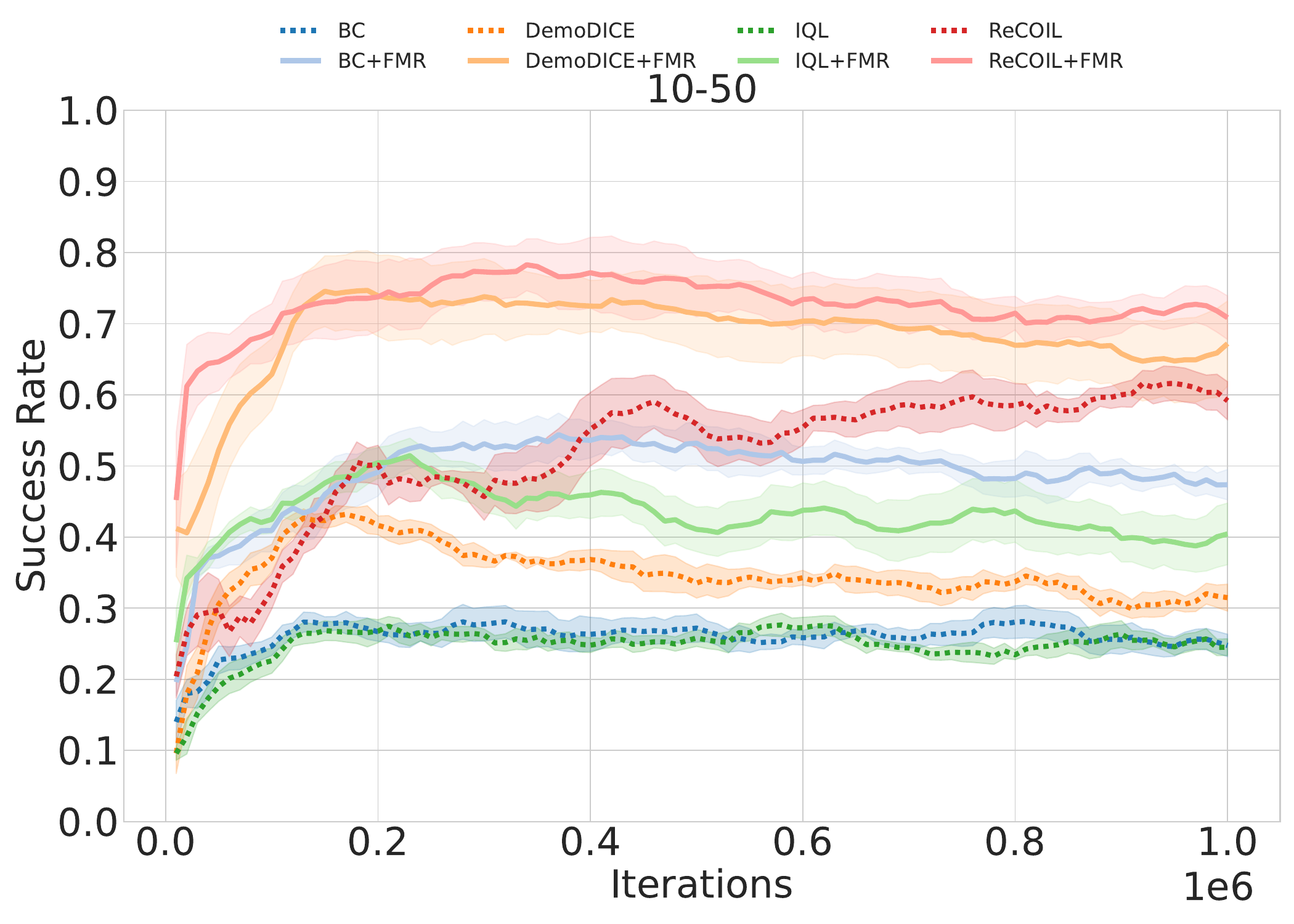}
    \end{subfigure}
    
    \begin{subfigure}[b]{0.32\textwidth}
        \centering
        \includegraphics[width=\textwidth,trim=0 0 0 81,clip]{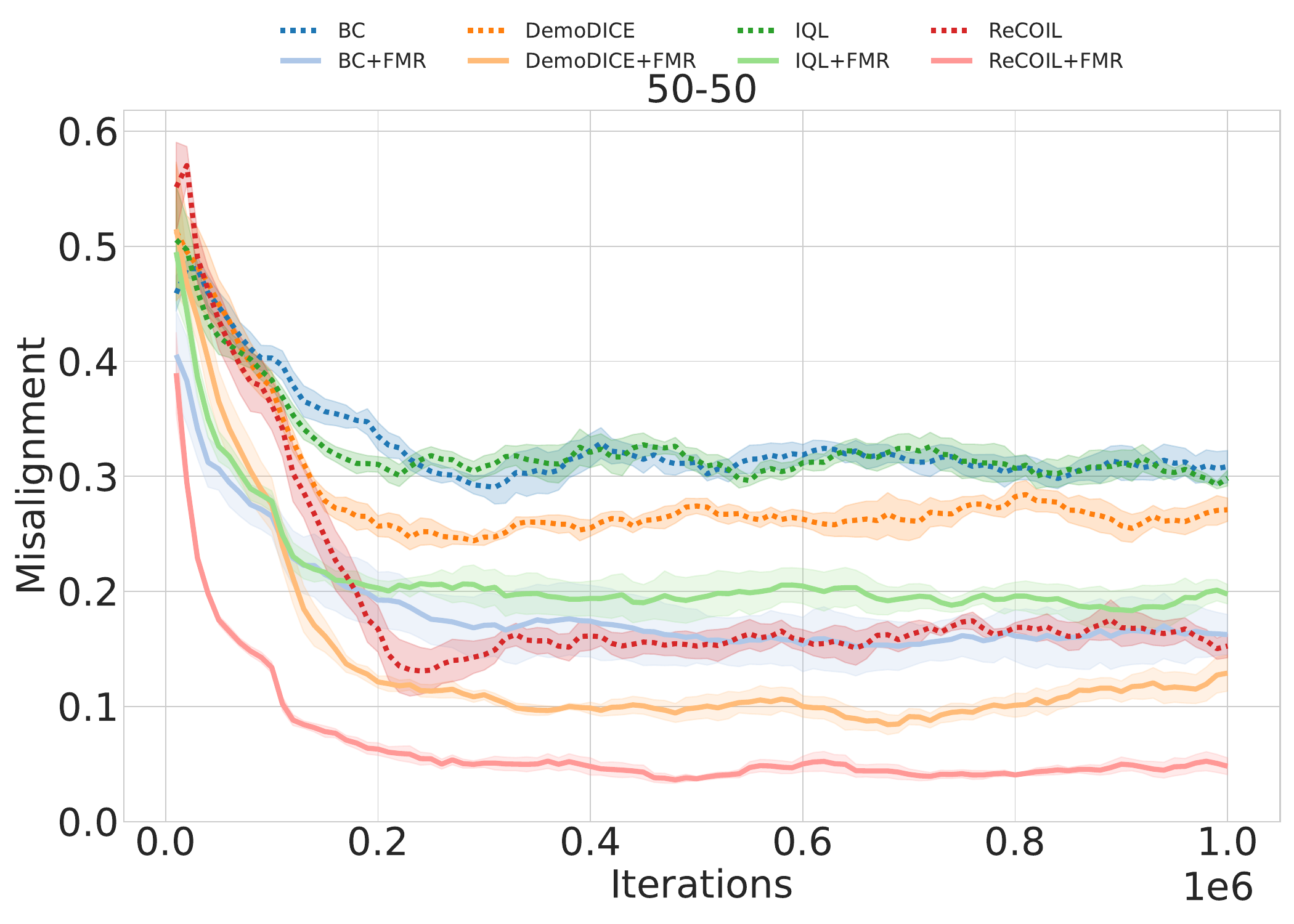}
    \end{subfigure}
    \hfill
    \begin{subfigure}[b]{0.32\textwidth}
        \centering
        \includegraphics[width=\textwidth,trim=0 0 0 81,clip]{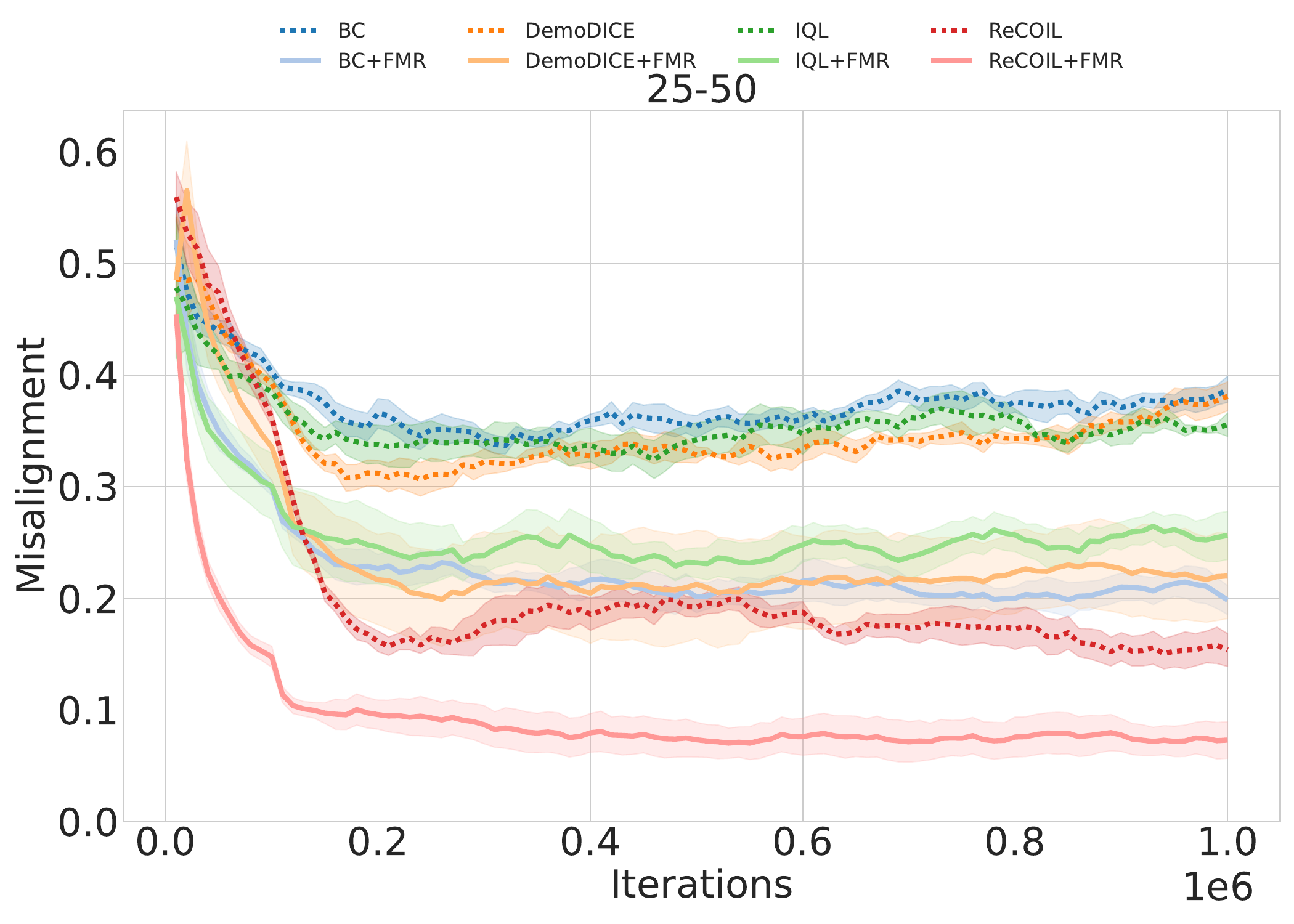}
    \end{subfigure}
    \hfill
    \begin{subfigure}[b]{0.32\textwidth}
        \centering
        \includegraphics[width=\textwidth,trim=0 0 0 81,clip]{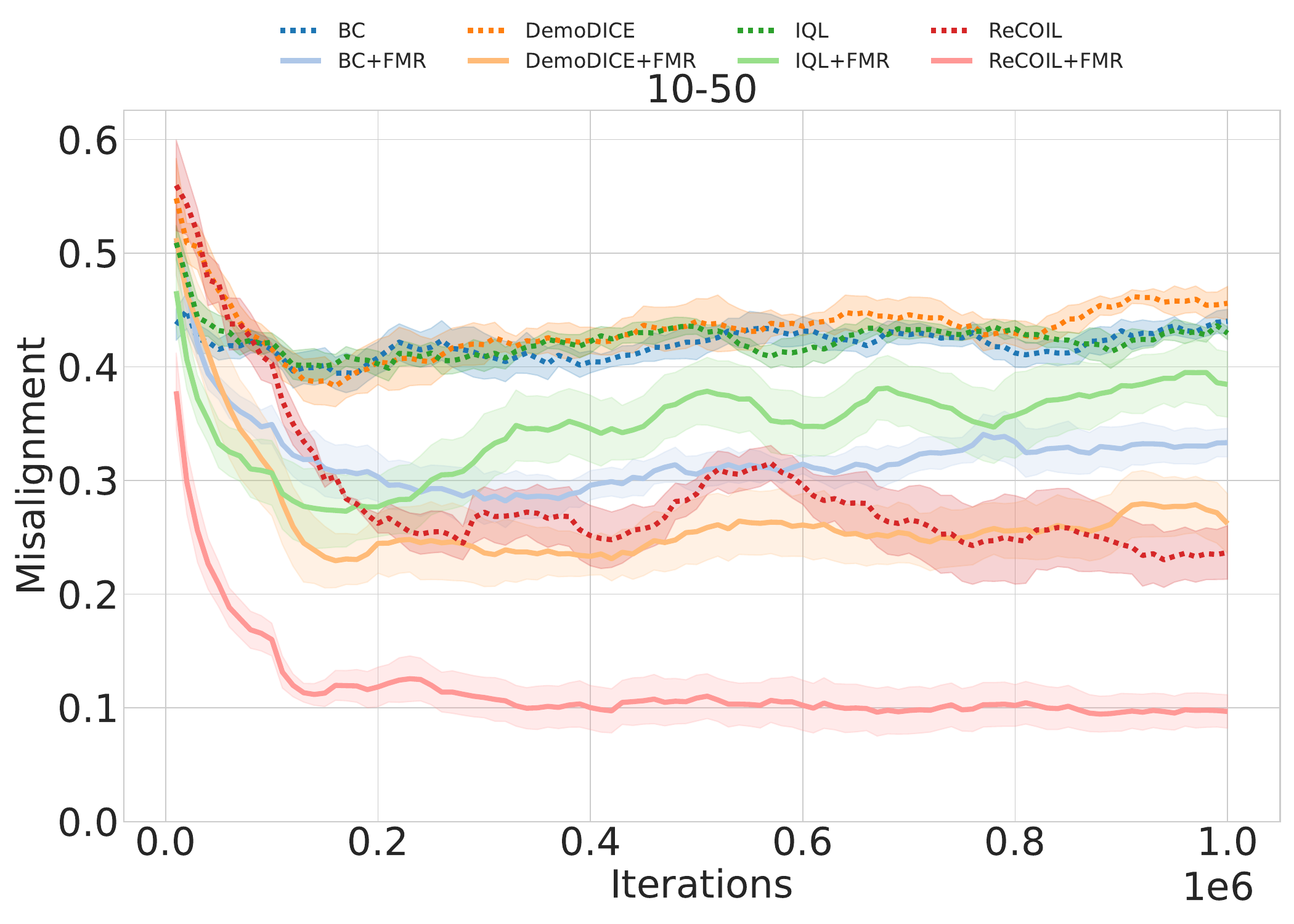}
    \end{subfigure}
    
    \caption{PathBB learning curves for baselines and FMR. The shaded region represents the standard error.}
    \label{fig:pathbb-main}
\end{figure}

%% file: appendix/main-vel.tex
The following are the results for the velocity tasks where return is normalized based on $D^E$ performance.

\begin{figure}[h]
    \centering
    
    \begin{subfigure}[b]{\textwidth}
        \centering
        \includegraphics[width=0.7\textwidth,trim=0 275 1 0,clip]{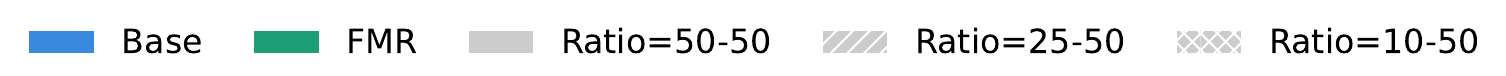}
    \end{subfigure}
    
    \begin{minipage}[t]{0.32\textwidth}
        \centering
        \textbf{Swimmer}
    \end{minipage}
    \hfill
    \begin{minipage}[t]{0.32\textwidth}
        \centering
        \textbf{Hopper}
    \end{minipage}
    \begin{minipage}[t]{0.32\textwidth}
        \centering
        \textbf{Walker2D}
    \end{minipage}
    
    \begin{subfigure}[b]{0.32\textwidth}
        \centering
        \includegraphics[width=\textwidth,trim=0 0 0 35,clip]{images/main/SlowSwim/fmr_return_compare.pdf}
    \end{subfigure}
    \hfill
    \begin{subfigure}[b]{0.32\textwidth}
        \centering
        \includegraphics[width=\textwidth,trim=0 0 0 35,clip]{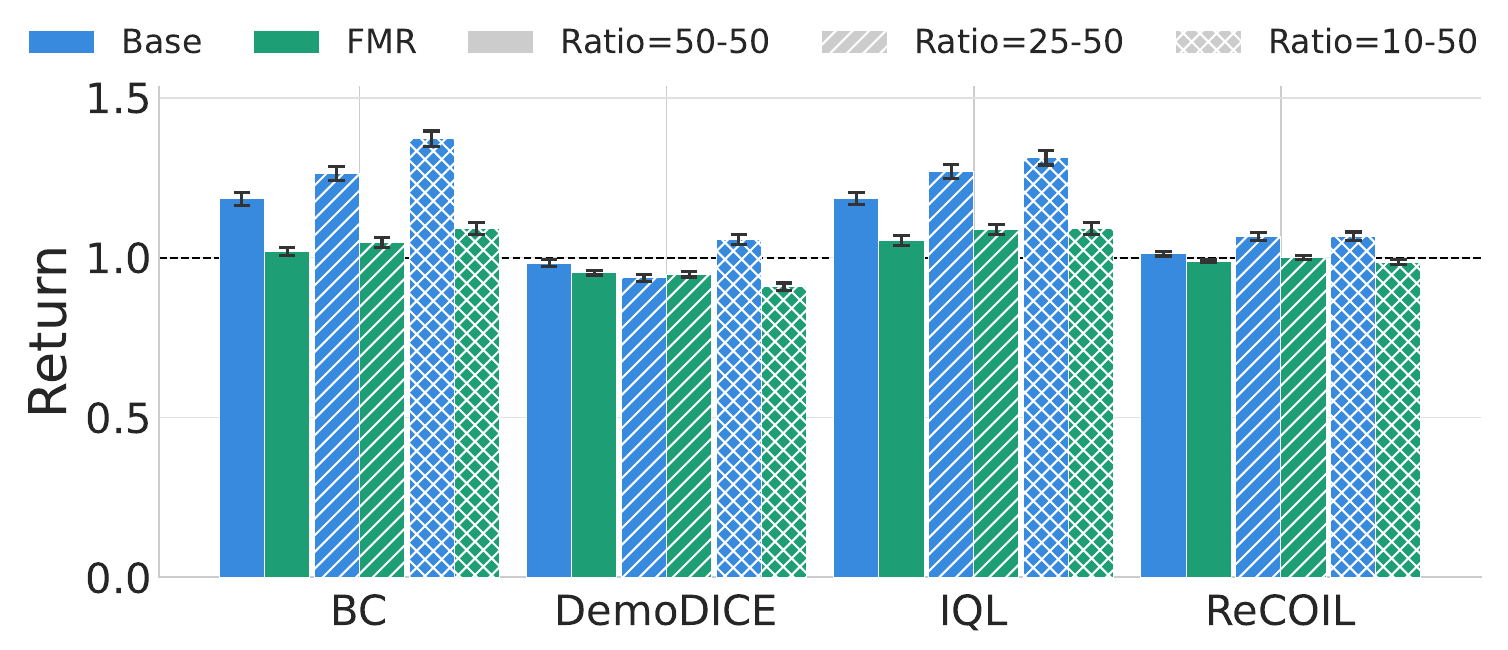}
    \end{subfigure}    
    \hfill
    \begin{subfigure}[b]{0.32\textwidth}
        \centering
        \includegraphics[width=\textwidth,trim=0 0 0 35,clip]{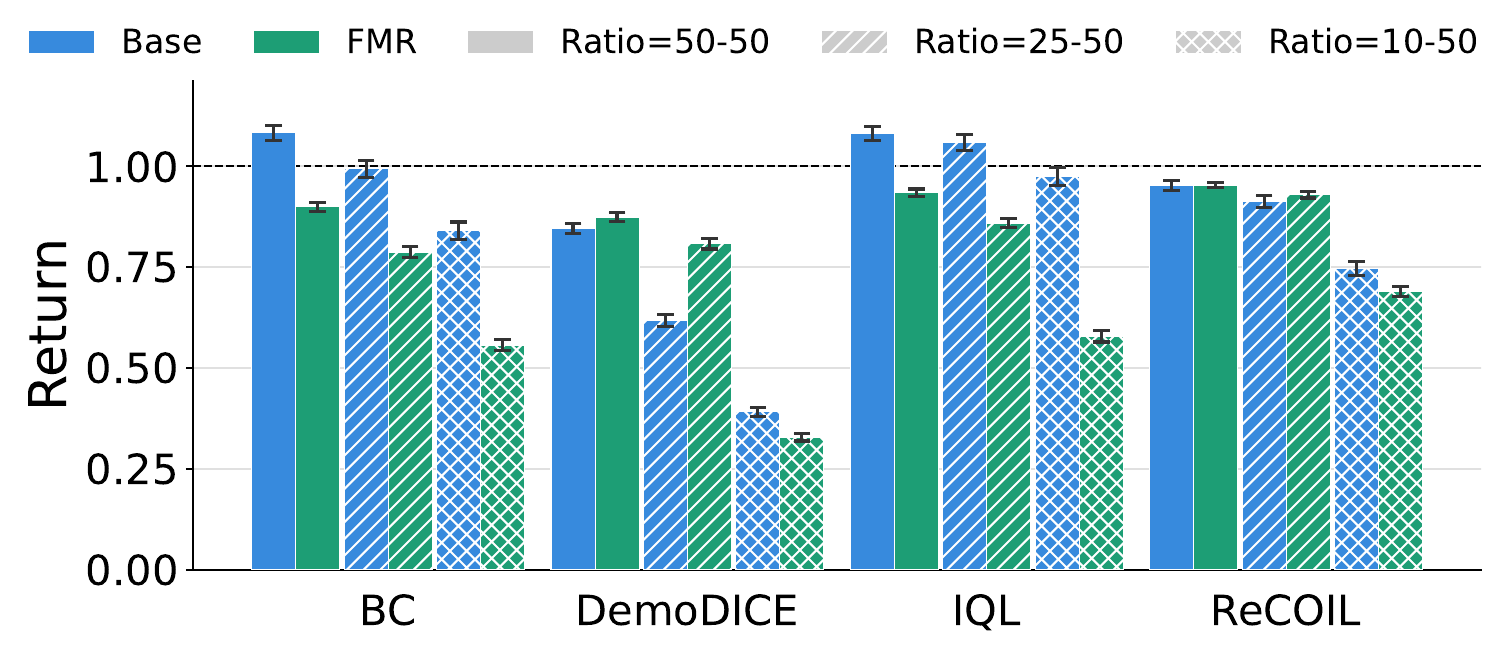}
    \end{subfigure}
    
    \begin{subfigure}[b]{0.32\textwidth}
        \centering
        \includegraphics[width=\textwidth,trim=0 0 0 35,clip]{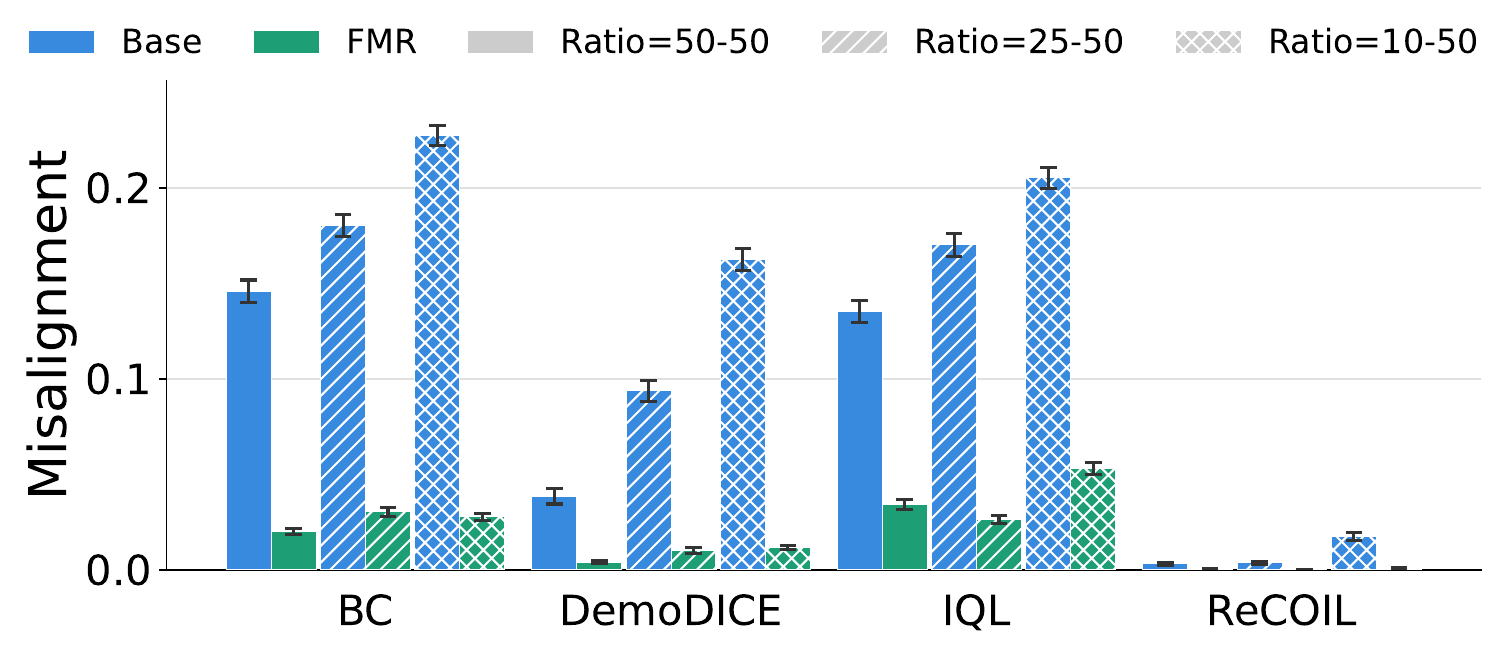}
    \end{subfigure}
    \hfill
    \begin{subfigure}[b]{0.32\textwidth}
        \centering
        \includegraphics[width=\textwidth,trim=0 0 0 35,clip]{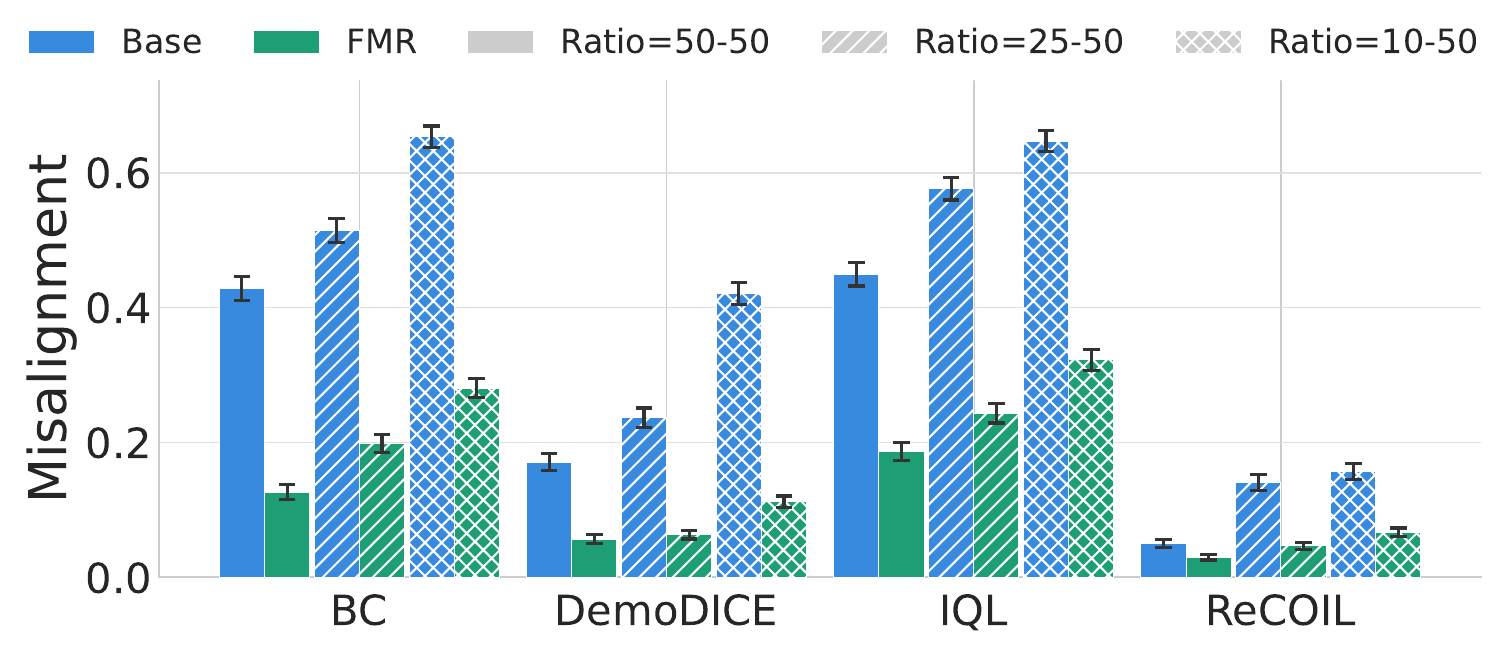}
    \end{subfigure}    
    \hfill
    \begin{subfigure}[b]{0.32\textwidth}
        \centering
        \includegraphics[width=\textwidth,trim=0 0 0 35,clip]{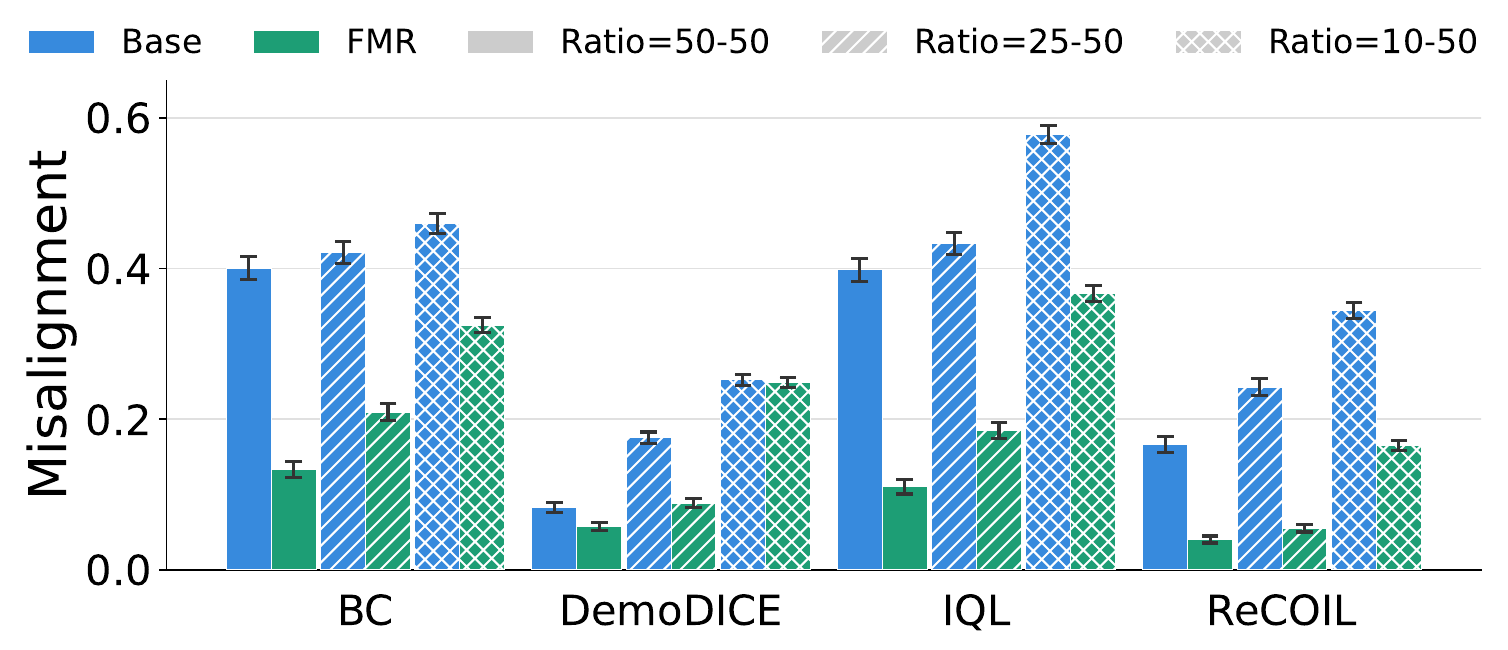}
    \end{subfigure}
    
    \caption{Velocity results across all baseline algorithms and data ratios. Results show mean success rate and misalignment scores with 95\% confidence intervals, averaged over the last 10 evaluations across 5 seeds}
    \label{fig:vel-results}
\end{figure}

\begin{table*}[h]
\centering
\caption{SlowSwim performance comparison across algorithms and data ratios. Results show mean ± std for the last 10 evaluations, over 5 seeds. The $^*$ symbol indicates a very small non-zero number.} 
\label{tab:swim-main}
\footnotesize
\setlength{\tabcolsep}{3.5pt}
\begin{tabular}{lccccccccc}
\toprule
& & \multicolumn{2}{c}{\textbf{BC}} & \multicolumn{2}{c}{\textbf{IQL}} & \multicolumn{2}{c}{\textbf{DemoDICE}} & \multicolumn{2}{c}{\textbf{ReCOIL}} \\
\textbf{Variant} & \textbf{Ratio} & Return & Mis. & Return & Mis. & Return & Mis. & Return & Mis. \\
\midrule
\multirow{3}{*}{Base} & 50-50 & 0.96 ± 0.18 & 0.15 ± 0.15 & 0.97 ± 0.15 & 0.14 ± 0.15 & 1.00 ± 0.05 & 0.04 ± 0.10 & 0.99 ± 0.07 & $0.00^\ast$ ± 0.02 \\
& 25-50 & 0.95 ± 0.20 & 0.18 ± 0.15 & 0.95 ± 0.19 & 0.17 ± 0.15 & 1.00 ± 0.08 & 0.09 ± 0.14 & 0.99 ± 0.08 & $0.00^\ast$ ± 0.02 \\
& 10-50 & 0.93 ± 0.23 & 0.23 ± 0.14 & 0.96 ± 0.17 & 0.21 ± 0.14 & 0.98 ± 0.14 & 0.16 ± 0.15 & 0.97 ± 0.15 & 0.02 ± 0.05 \\
\midrule
\multirow{3}{*}{FMR}  & 50-50 & 0.97 ± 0.13 & 0.02 ± 0.04 & 0.97 ± 0.09 & 0.03 ± 0.07 & 0.99 ± 0.08 & $0.00^\ast$ ± 0.02 & 1.00 ± 0.01 & $0.00^\ast$ ± $0.00^\ast$ \\
 & 25-50 & 0.95 ± 0.14 & 0.03 ± 0.06 & 0.99 ± 0.07 & 0.03 ± 0.05 & 1.00 ± 0.03 & 0.01 ± 0.04 & 1.00 ± 0.01 & $0.00^\ast$ ± $0.00^\ast$ \\
 & 10-50 & 0.99 ± 0.07 & 0.03 ± 0.05 & 0.95 ± 0.13 & 0.05 ± 0.08 & 1.00 ± 0.06 & 0.01 ± 0.03 & 1.00 ± 0.01 & $0.00^\ast$ ± $0.00^\ast$ \\
\bottomrule
\end{tabular}
\end{table*}

\begin{table*}[h]
\centering
\caption{SlowHop performance comparison across algorithms and data ratios. Results show mean ± std for the last 10 evaluations, over 5 seeds.}
\label{tab:hop-main}
\footnotesize
\setlength{\tabcolsep}{3.5pt}
\begin{tabular}{lccccccccc}
\toprule
& & \multicolumn{2}{c}{\textbf{BC}} & \multicolumn{2}{c}{\textbf{IQL}} & \multicolumn{2}{c}{\textbf{DemoDICE}} & \multicolumn{2}{c}{\textbf{ReCOIL}} \\
\textbf{Variant} & \textbf{Ratio} & Return & Mis. & Return & Mis. & Return & Mis. & Return & Mis. \\
\midrule
\multirow{3}{*}{Base}  & 50-50 & 1.18 ± 0.51 & 0.43 ± 0.45 & 1.18 ± 0.49 & 0.45 ± 0.45 & 0.98 ± 0.28 & 0.17 ± 0.32 & 1.01 ± 0.18 & 0.05 ± 0.16 \\
 & 25-50 & 1.26 ± 0.55 & 0.51 ± 0.45 & 1.27 ± 0.55 & 0.58 ± 0.43 & 0.94 ± 0.28 & 0.24 ± 0.37 & 1.07 ± 0.31 & 0.14 ± 0.30 \\
 & 10-50 & 1.37 ± 0.61 & 0.65 ± 0.40 & 1.31 ± 0.59 & 0.65 ± 0.40 & 1.06 ± 0.43 & 0.42 ± 0.41 & 1.07 ± 0.34 & 0.16 ± 0.30 \\
\midrule
\multirow{3}{*}{FMR}  & 50-50 & 1.02 ± 0.31 & 0.13 ± 0.28 & 1.05 ± 0.38 & 0.19 ± 0.34 & 0.95 ± 0.19 & 0.06 ± 0.16 & 0.99 ± 0.13 & 0.03 ± 0.10 \\
& 25-50 & 1.05 ± 0.40 & 0.20 ± 0.34 & 1.09 ± 0.41 & 0.24 ± 0.37 & 0.95 ± 0.21 & 0.06 ± 0.17 & 1.00 ± 0.18 & 0.05 ± 0.13 \\
& 10-50 & 1.09 ± 0.48 & 0.28 ± 0.38 & 1.09 ± 0.48 & 0.32 ± 0.39 & 0.91 ± 0.29 & 0.11 ± 0.22 & 0.99 ± 0.23 & 0.07 ± 0.17 \\
\bottomrule
\end{tabular}
\end{table*}

\begin{table*}[h]
\centering
\caption{SlowWalk performance comparison across algorithms and data ratios. Results show mean ± std for the last 10 evaluations, over 5 seeds.}
\label{tab:walk-main}
\footnotesize
\setlength{\tabcolsep}{3.5pt}
\begin{tabular}{lccccccccc}
\toprule
& & \multicolumn{2}{c}{\textbf{BC}} & \multicolumn{2}{c}{\textbf{IQL}} & \multicolumn{2}{c}{\textbf{DemoDICE}} & \multicolumn{2}{c}{\textbf{ReCOIL}} \\
\textbf{Variant} & \textbf{Ratio} & Return & Mis. & Return & Mis. & Return & Mis. & Return & Mis. \\
\midrule
\multirow{3}{*}{Base} & 50-50 & 1.08 ± 0.47 & 0.40 ± 0.39 & 1.08 ± 0.45 & 0.40 ± 0.39 & 0.85 ± 0.32 & 0.08 ± 0.17 & 0.95 ± 0.32 & 0.17 ± 0.27 \\
 & 25-50 & 0.99 ± 0.53 & 0.42 ± 0.38 & 1.06 ± 0.49 & 0.43 ± 0.39 & 0.62 ± 0.38 & 0.18 ± 0.20 & 0.91 ± 0.38 & 0.24 ± 0.30 \\
 & 10-50 & 0.84 ± 0.57 & 0.46 ± 0.33 & 0.98 ± 0.57 & 0.58 ± 0.32 & 0.39 ± 0.29 & 0.25 ± 0.18 & 0.75 ± 0.46 & 0.34 ± 0.27 \\
\midrule
\multirow{3}{*}{FMR} & 50-50 & 0.90 ± 0.27 & 0.13 ± 0.26 & 0.93 ± 0.23 & 0.11 ± 0.25 & 0.87 ± 0.29 & 0.06 ± 0.13 & 0.95 ± 0.17 & 0.04 ± 0.12 \\
 & 25-50 & 0.79 ± 0.36 & 0.21 ± 0.29 & 0.86 ± 0.30 & 0.18 ± 0.28 & 0.81 ± 0.32 & 0.09 ± 0.15 & 0.93 ± 0.21 & 0.06 ± 0.12 \\
 & 10-50 & 0.56 ± 0.35 & 0.33 ± 0.27 & 0.58 ± 0.36 & 0.37 ± 0.27 & 0.33 ± 0.25 & 0.25 ± 0.17 & 0.69 ± 0.32 & 0.17 ± 0.16 \\
\bottomrule
\end{tabular}
\end{table*}

\begin{figure}[h]
    \centering
    
    \begin{subfigure}[b]{\textwidth}
        \centering
        \includegraphics[width=0.7\textwidth,trim=0 665 0 0,clip]{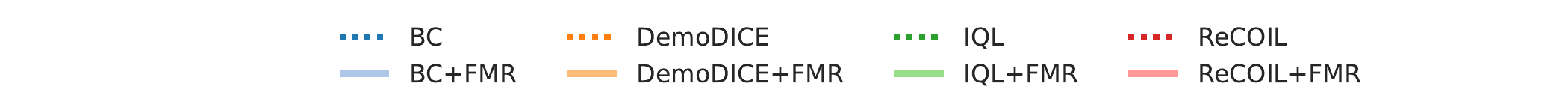}
    \end{subfigure}
    
    \vspace{0.5em}
    
    \begin{subfigure}[b]{0.32\textwidth}
        \centering
        \includegraphics[width=\textwidth,trim=0 44 0 55,clip]{images/main/SlowSwim/return_50-50.pdf}
    \end{subfigure}
    \hfill
    \begin{subfigure}[b]{0.32\textwidth}
        \centering
        \includegraphics[width=\textwidth,trim=0 44 0 55,clip]{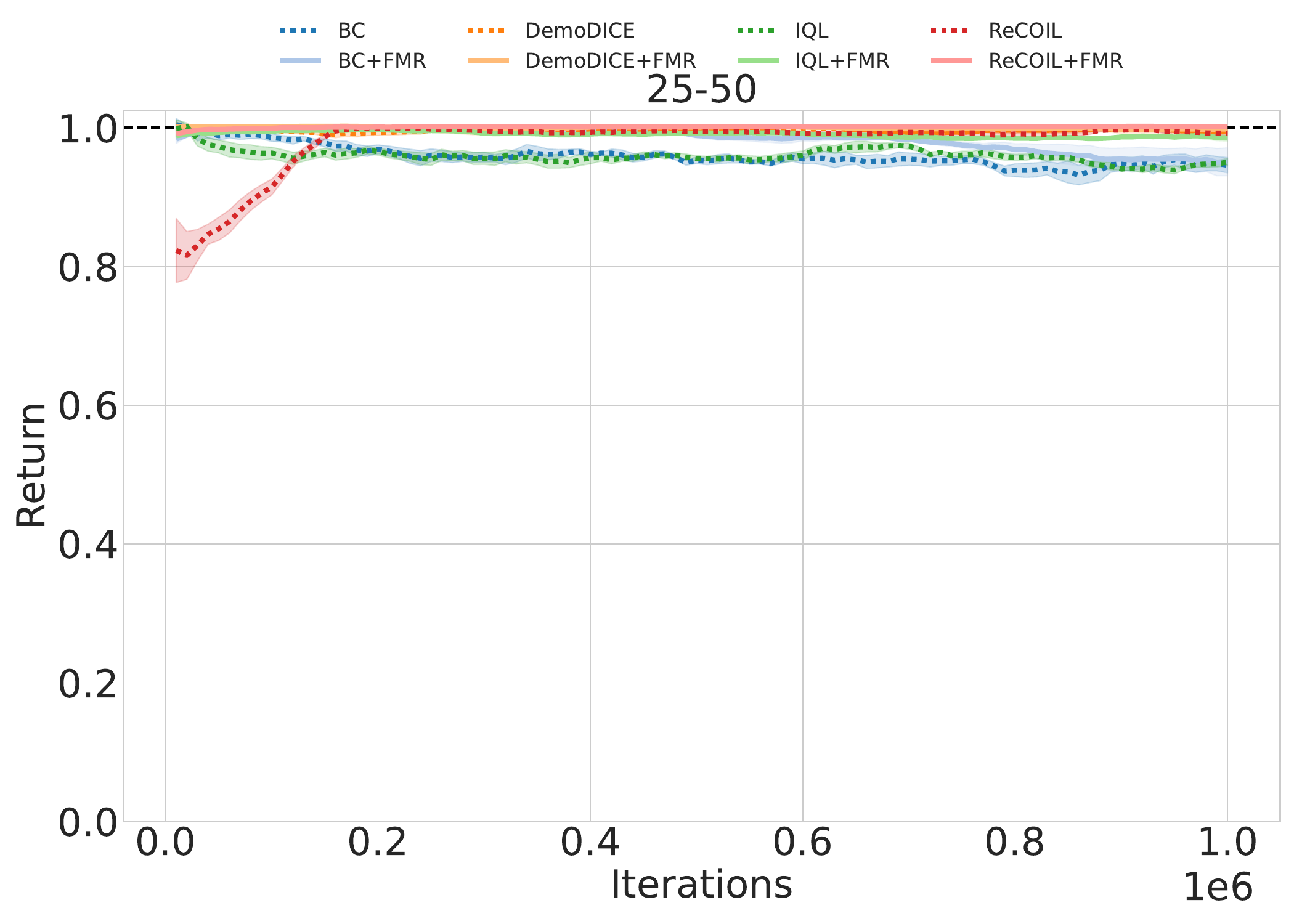}
    \end{subfigure}
    \hfill
    \begin{subfigure}[b]{0.32\textwidth}
        \centering
        \includegraphics[width=\textwidth,trim=0 44 0 55,clip]{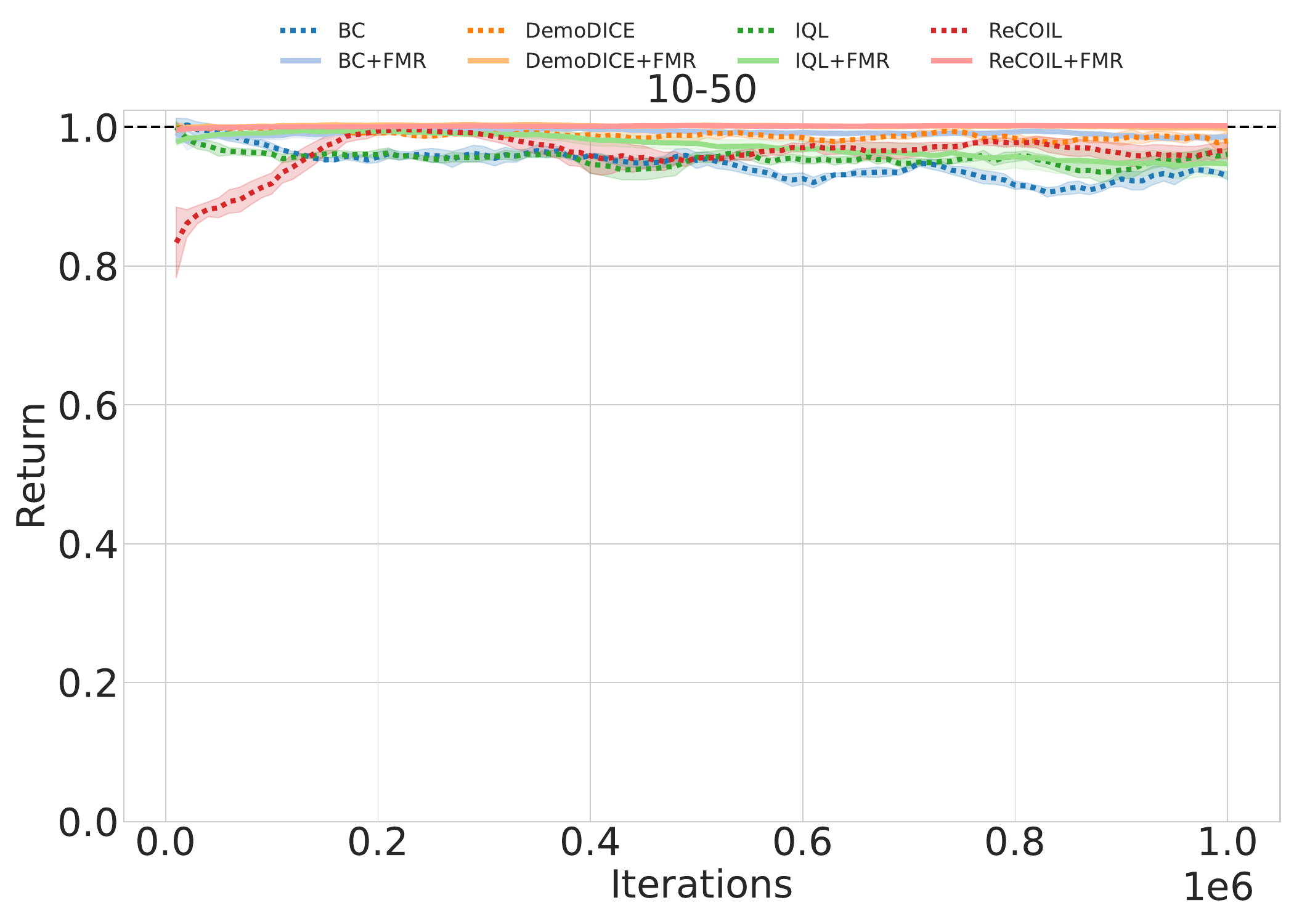}
    \end{subfigure}
    
    \begin{subfigure}[b]{0.32\textwidth}
        \centering
        \includegraphics[width=\textwidth,trim=0 0 0 81,clip]{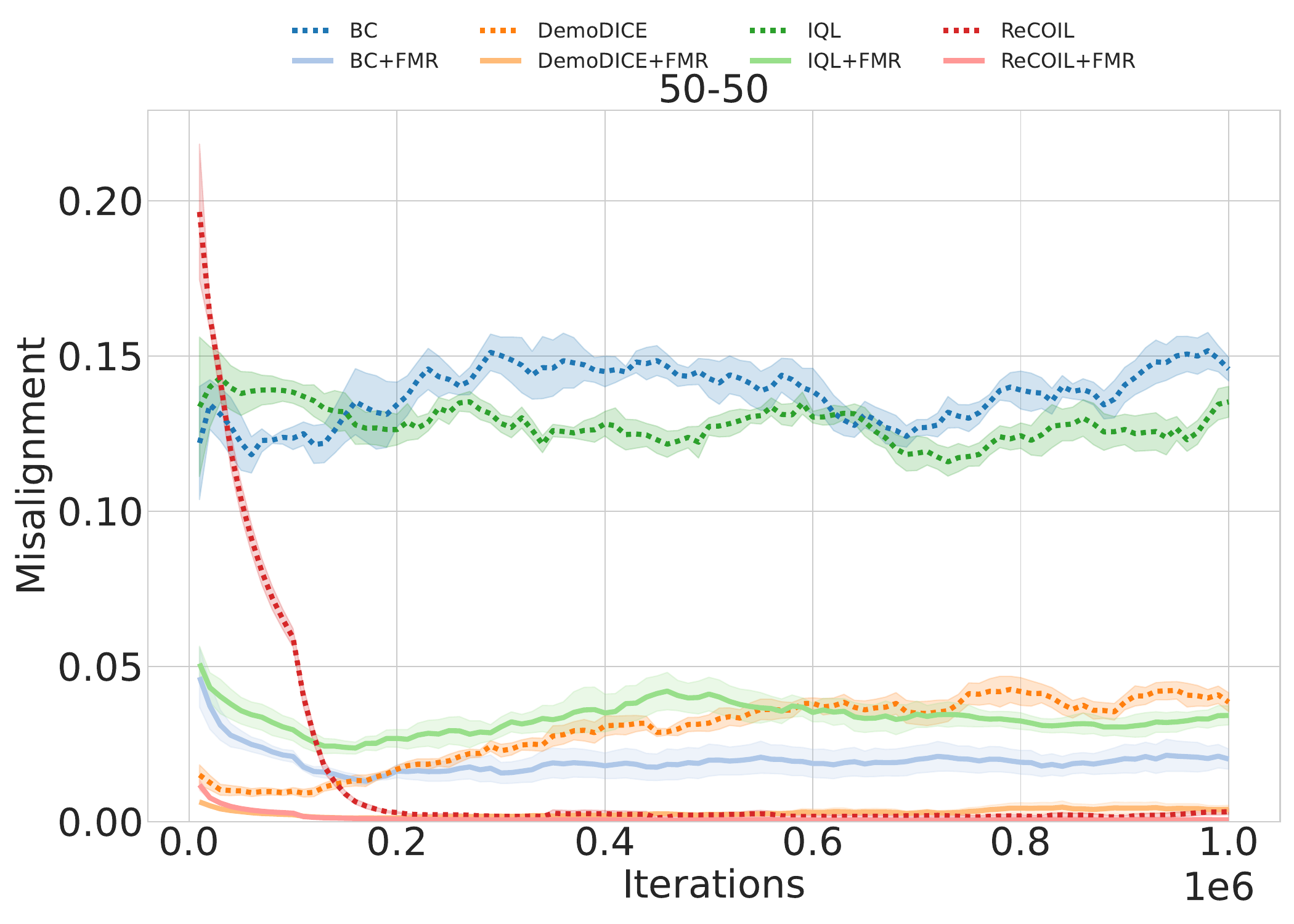}
    \end{subfigure}
    \hfill
    \begin{subfigure}[b]{0.32\textwidth}
        \centering
        \includegraphics[width=\textwidth,trim=0 0 0 81,clip]{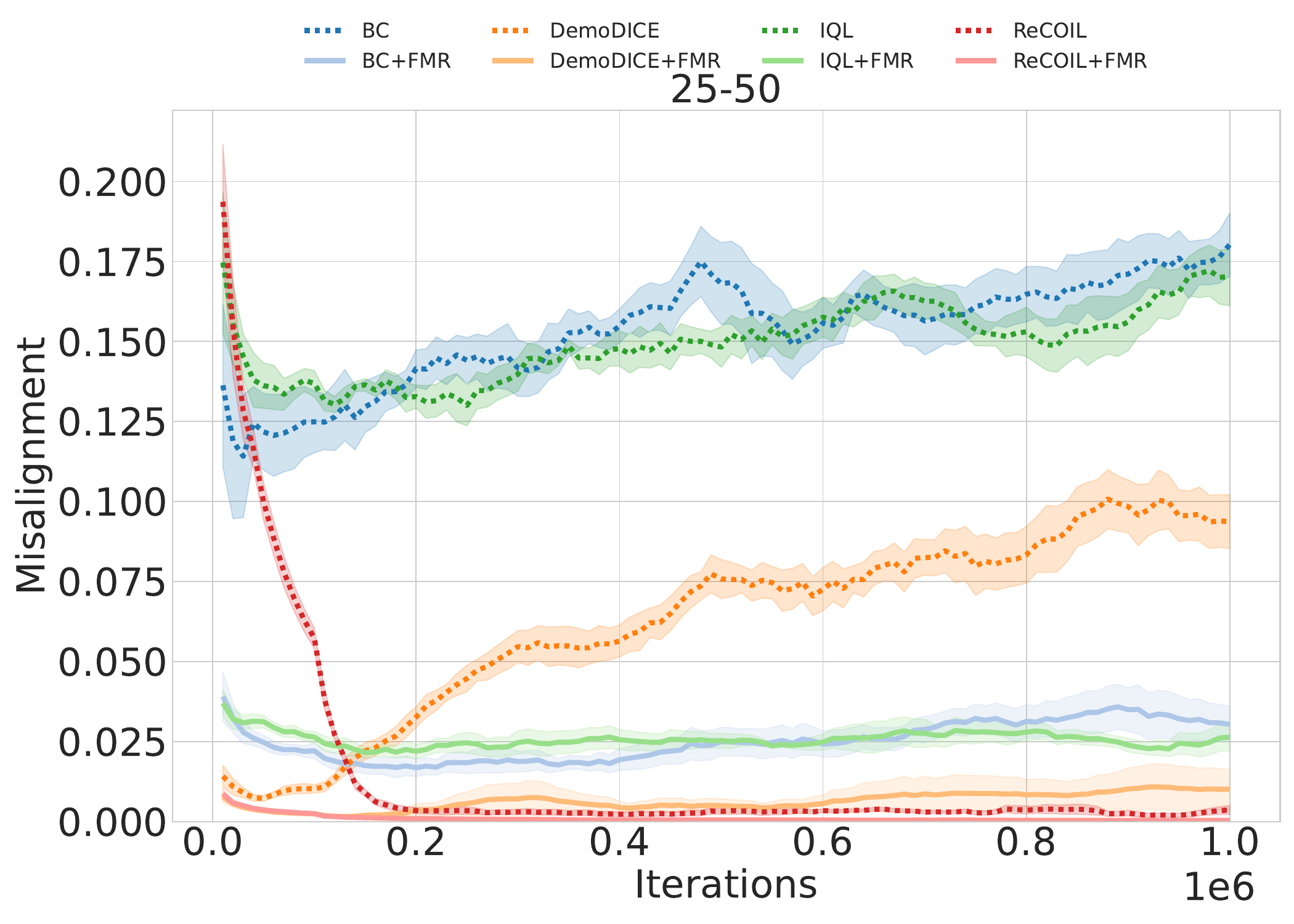}
    \end{subfigure}
    \hfill
    \begin{subfigure}[b]{0.32\textwidth}
        \centering
        \includegraphics[width=\textwidth,trim=0 0 0 81,clip]{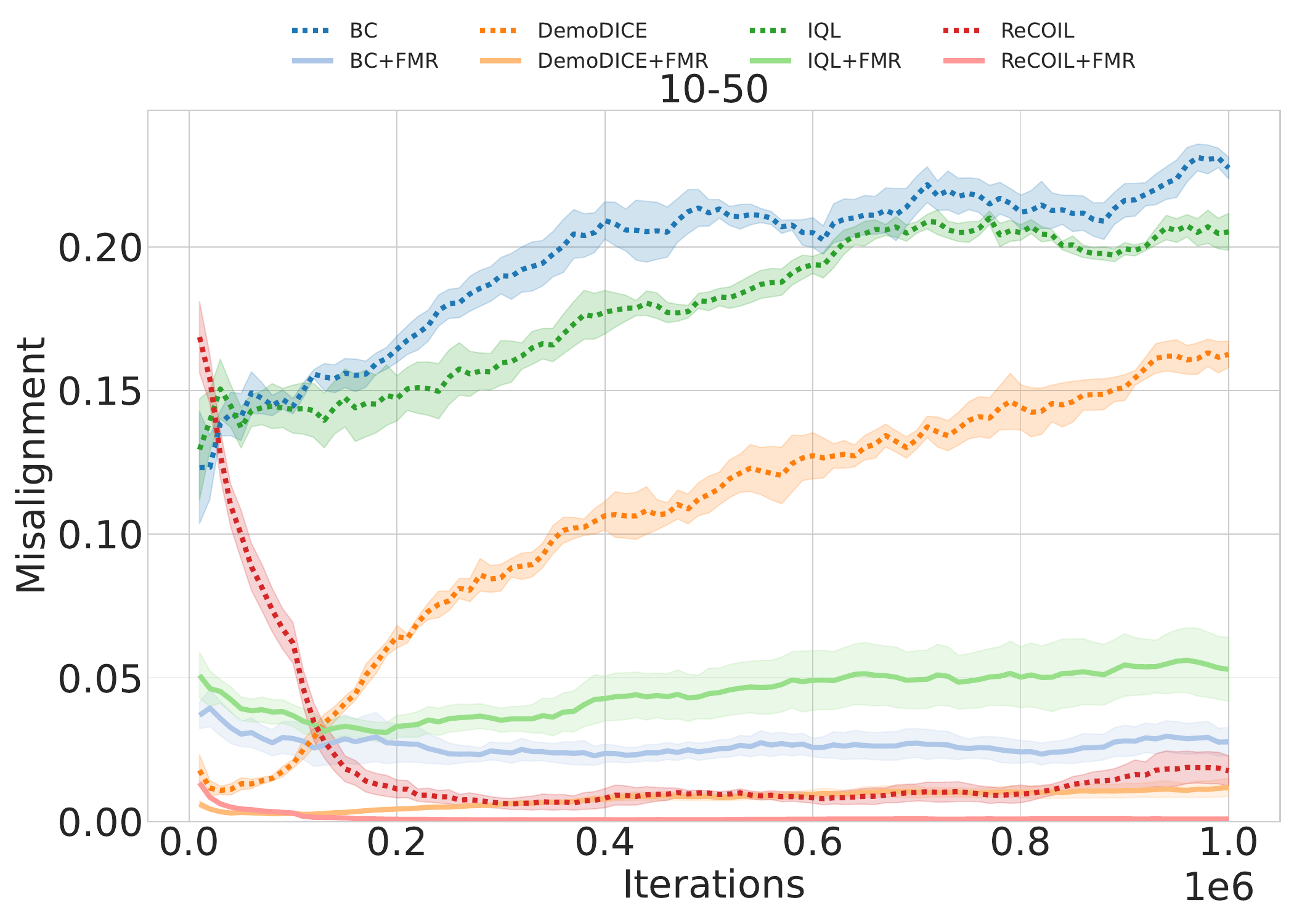}
    \end{subfigure}
    
    \caption{SlowSwim learning curves for baselines and FMR. The shaded region represents the standard error.}
    \label{fig:swim-main}
\end{figure}

\begin{figure}[h]
    \centering
    
    \begin{subfigure}[b]{\textwidth}
        \centering
        \includegraphics[width=0.7\textwidth,trim=0 665 0 0,clip]{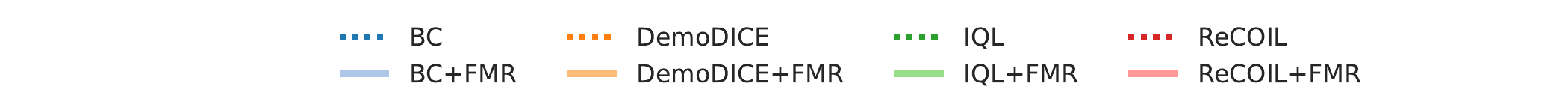}
    \end{subfigure}
    
    \vspace{0.5em}
    
    \begin{subfigure}[b]{0.32\textwidth}
        \centering
        \includegraphics[width=\textwidth,trim=0 44 0 55,clip]{images/main/SlowHop/return_50-50.pdf}
    \end{subfigure}
    \hfill
    \begin{subfigure}[b]{0.32\textwidth}
        \centering
        \includegraphics[width=\textwidth,trim=0 44 0 55,clip]{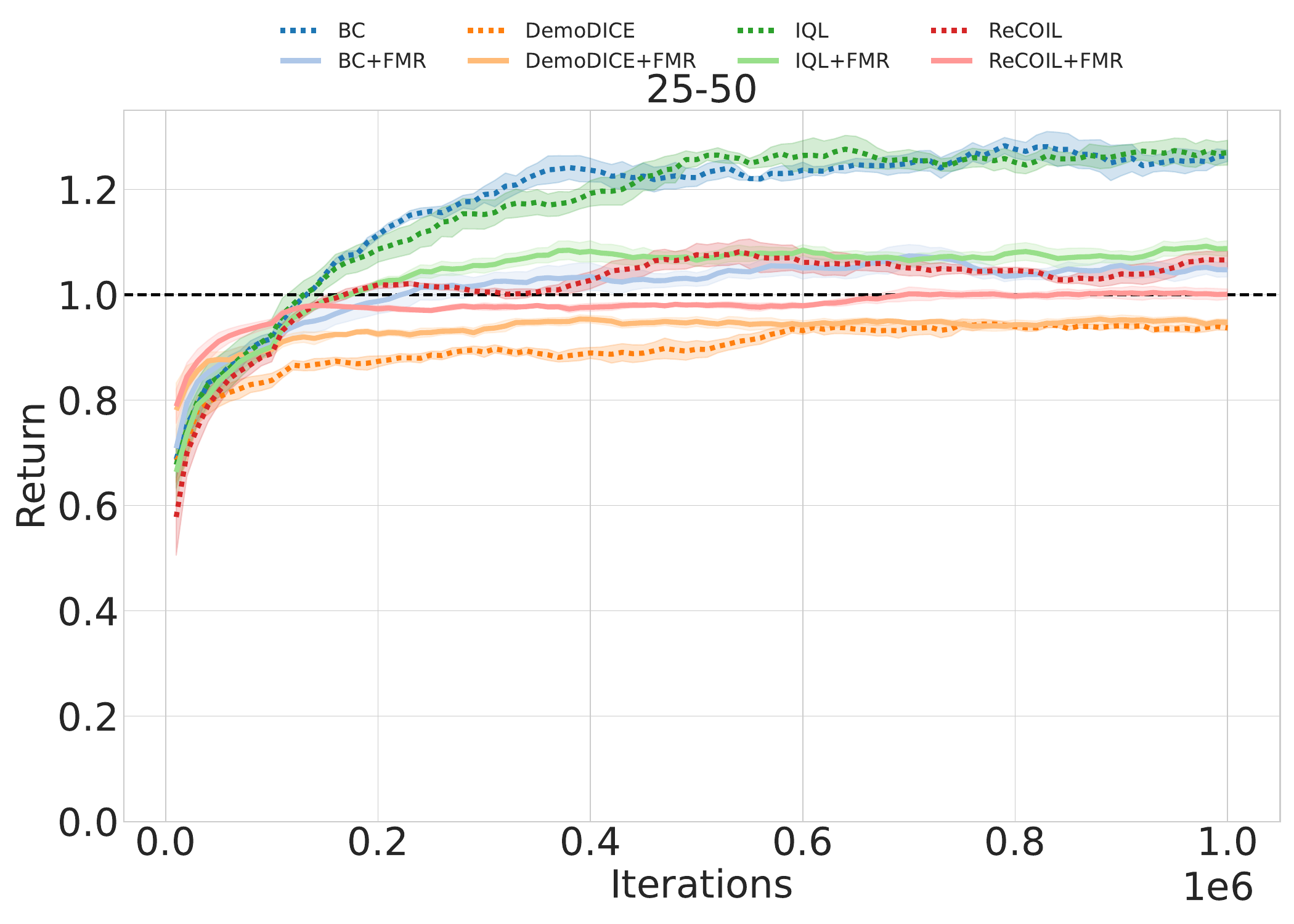}
    \end{subfigure}
    \hfill
    \begin{subfigure}[b]{0.32\textwidth}
        \centering
        \includegraphics[width=\textwidth,trim=0 44 0 55,clip]{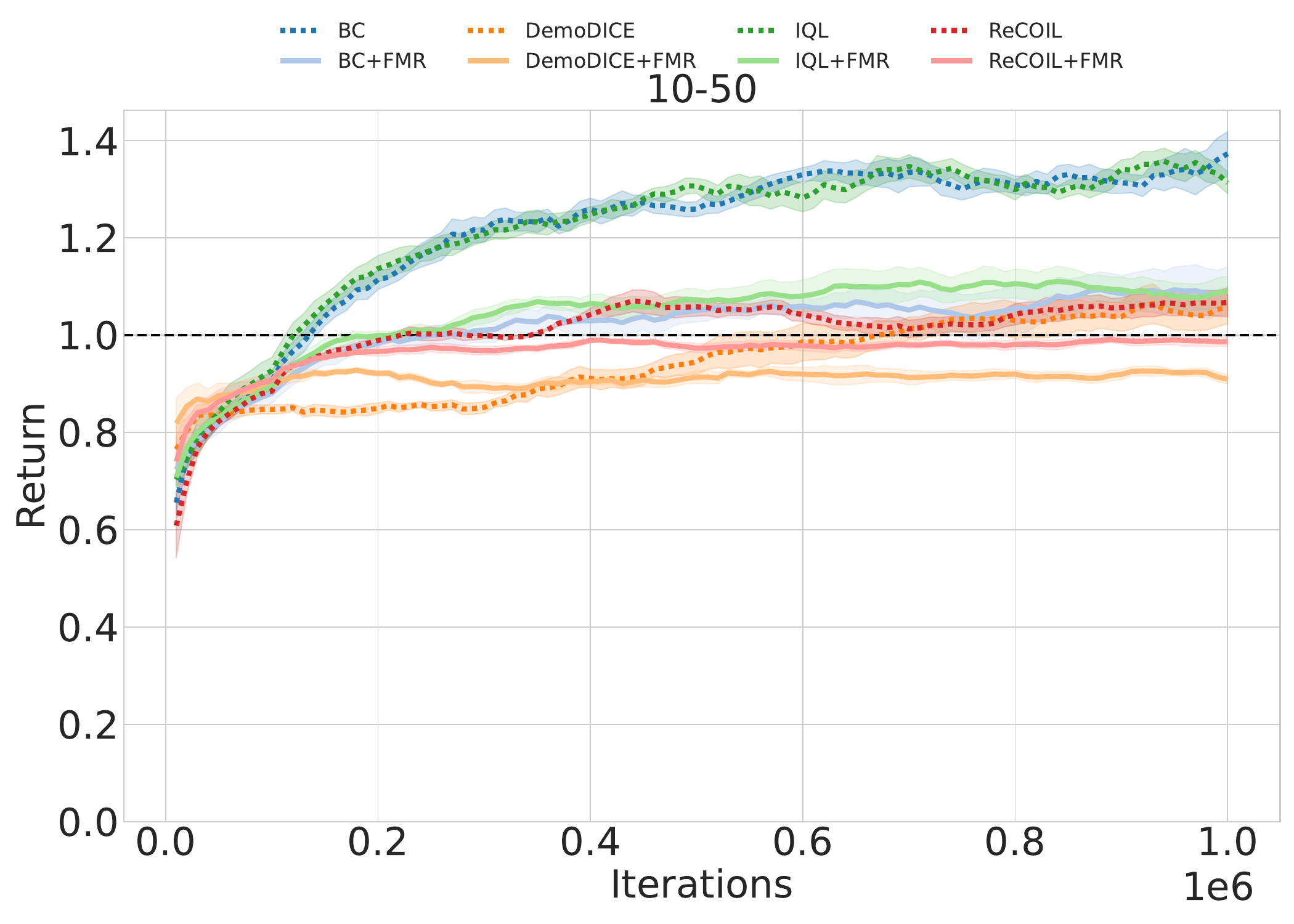}
    \end{subfigure}
    
    \begin{subfigure}[b]{0.32\textwidth}
        \centering
        \includegraphics[width=\textwidth,trim=0 0 0 81,clip]{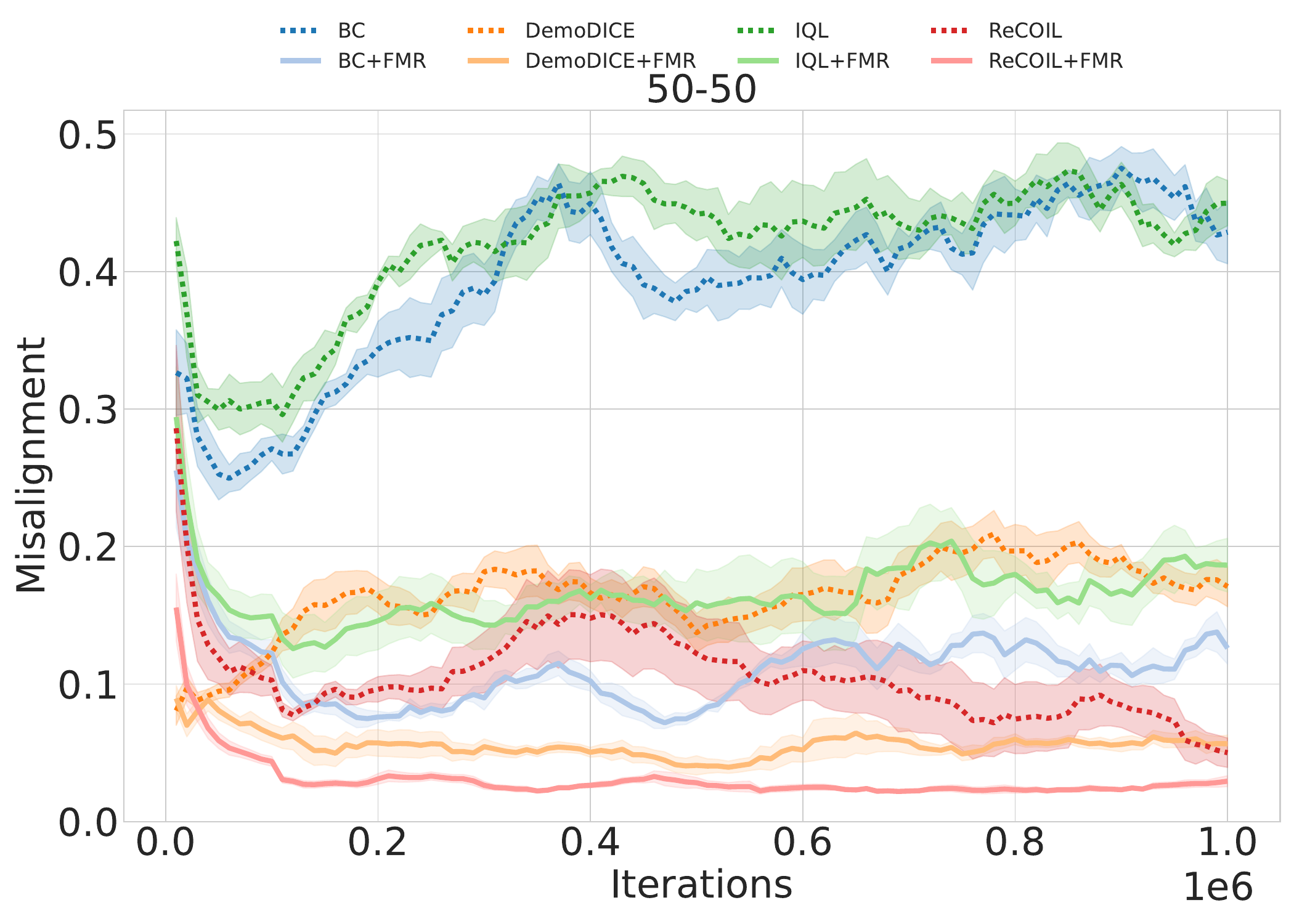}
    \end{subfigure}
    \hfill
    \begin{subfigure}[b]{0.32\textwidth}
        \centering
        \includegraphics[width=\textwidth,trim=0 0 0 81,clip]{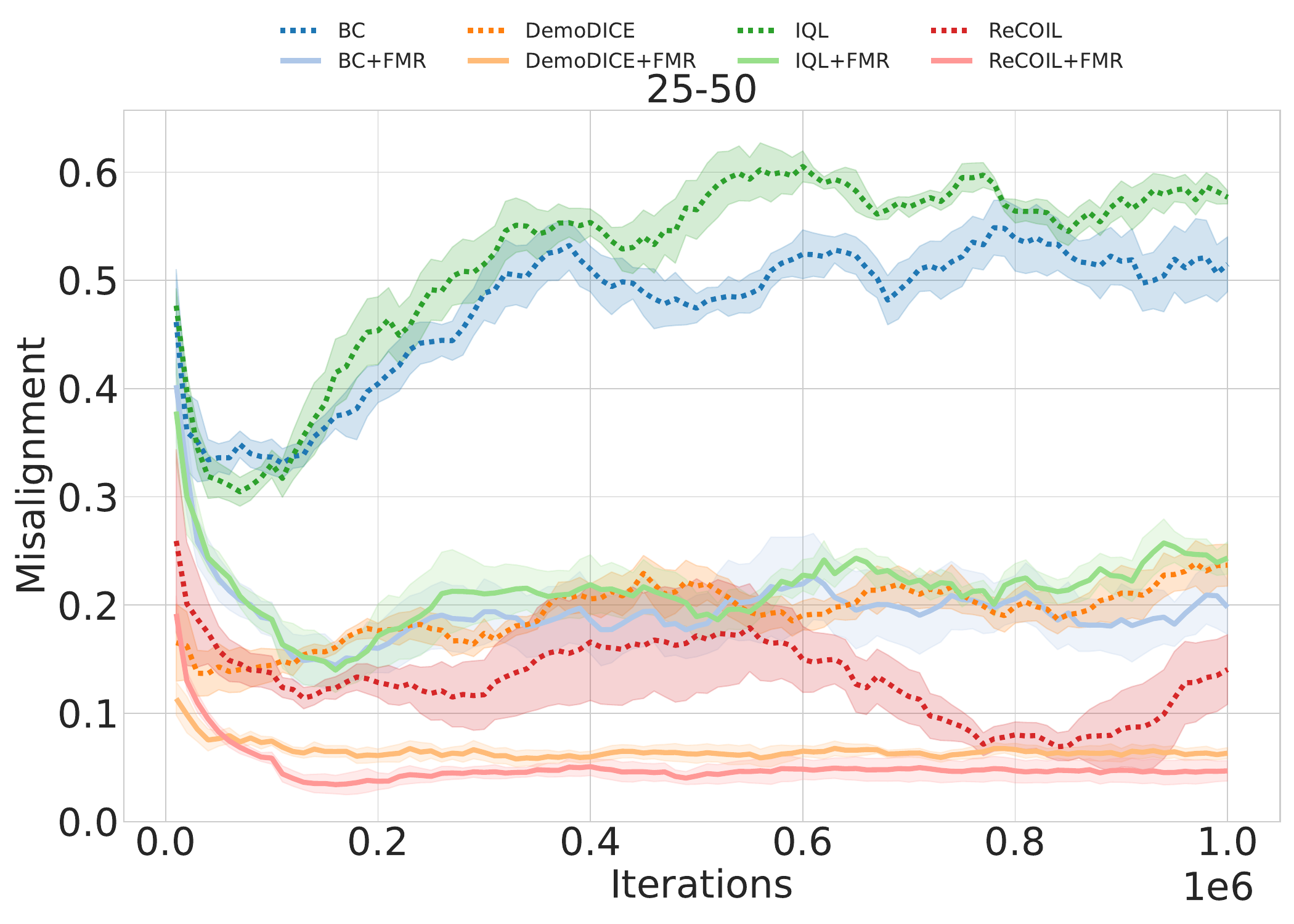}
    \end{subfigure}
    \hfill
    \begin{subfigure}[b]{0.32\textwidth}
        \centering
        \includegraphics[width=\textwidth,trim=0 0 0 81,clip]{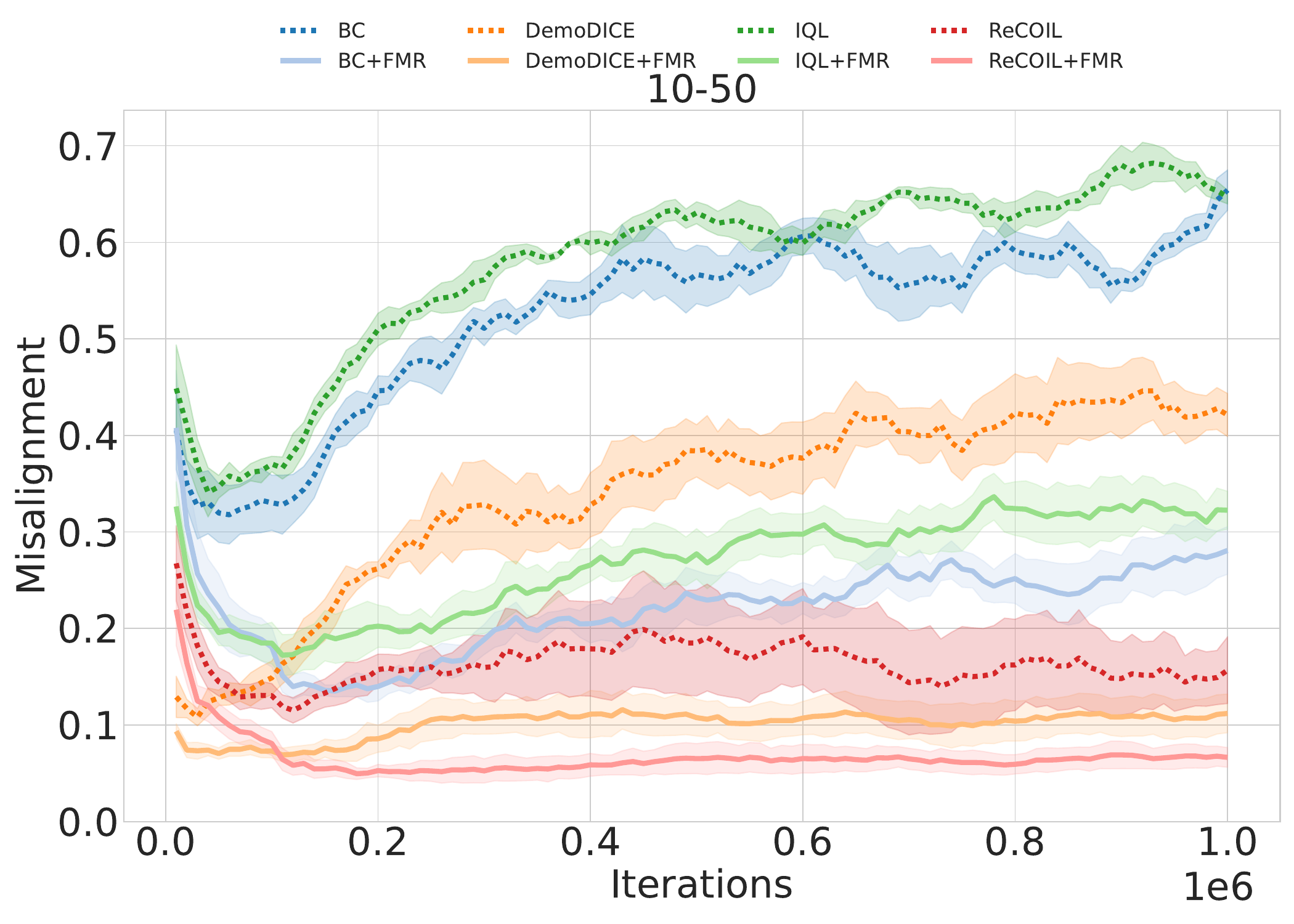}
    \end{subfigure}
    
    \caption{SlowHop learning curves for baselines and FMR. The shaded region represents the standard error.}
    \label{fig:hop-main}
\end{figure}

\begin{figure}[h]
    \centering
    
    \begin{subfigure}[b]{\textwidth}
        \centering
        \includegraphics[width=0.7\textwidth,trim=0 665 0 0,clip]{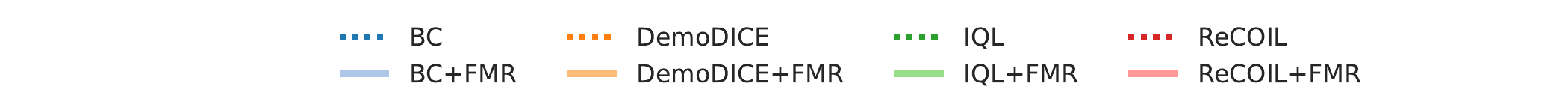}
    \end{subfigure}
    
    \vspace{0.5em}
    
    \begin{subfigure}[b]{0.32\textwidth}
        \centering
        \includegraphics[width=\textwidth,trim=0 44 0 55,clip]{images/main/SlowWalk/return_50-50.pdf}
    \end{subfigure}
    \hfill
    \begin{subfigure}[b]{0.32\textwidth}
        \centering
        \includegraphics[width=\textwidth,trim=0 44 0 55,clip]{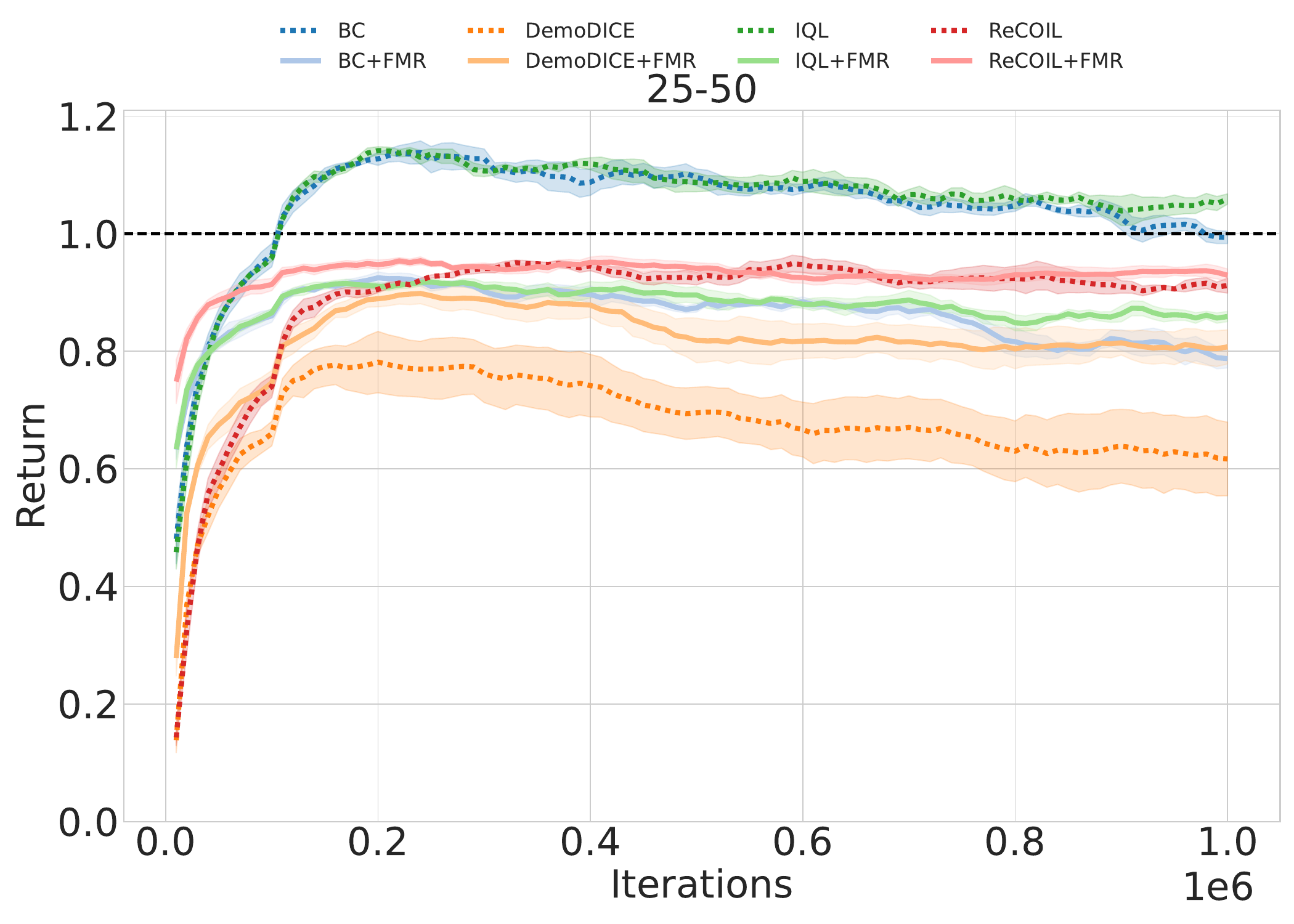}
    \end{subfigure}
    \hfill
    \begin{subfigure}[b]{0.32\textwidth}
        \centering
        \includegraphics[width=\textwidth,trim=0 44 0 55,clip]{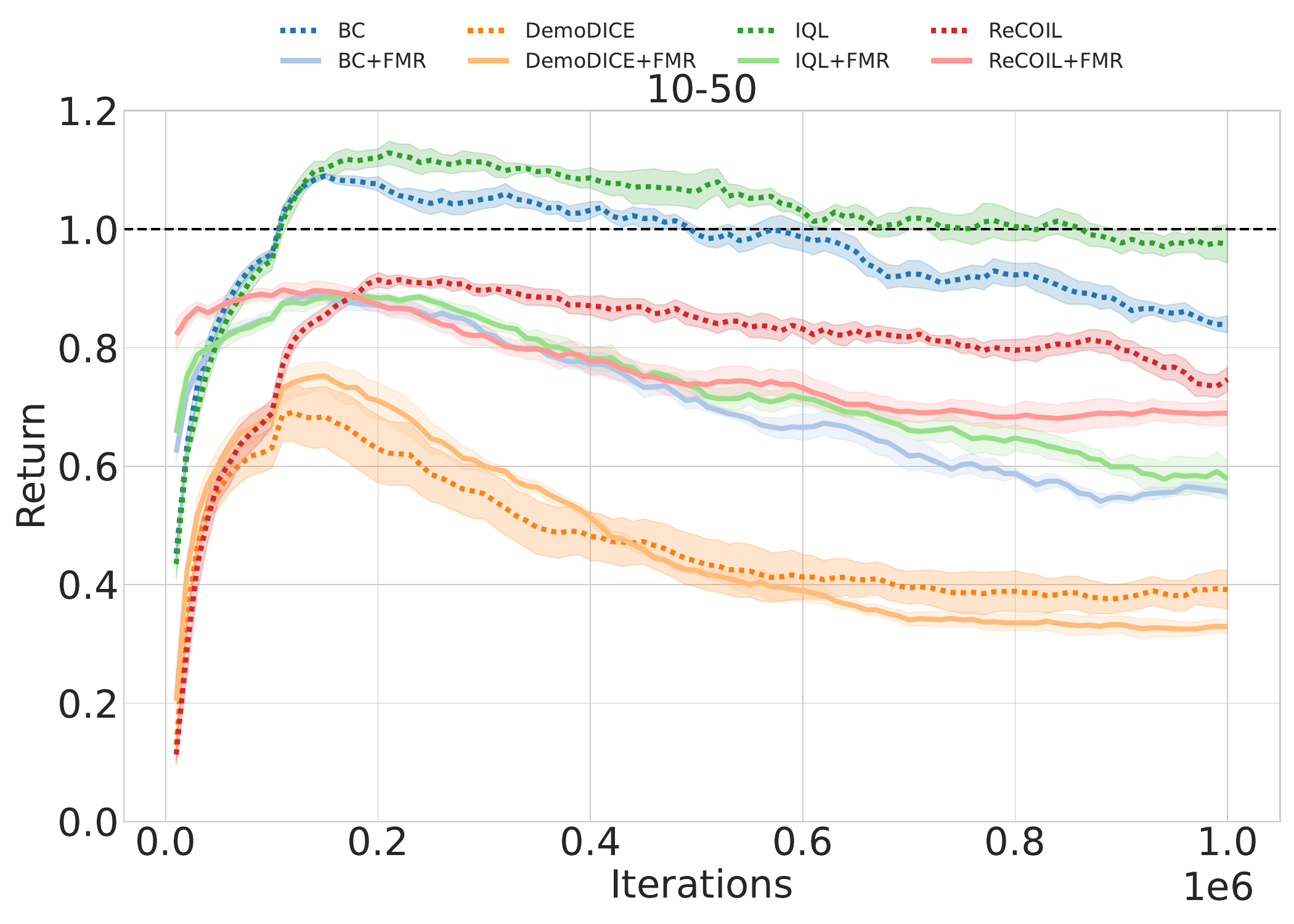}
    \end{subfigure}
    
    \begin{subfigure}[b]{0.32\textwidth}
        \centering
        \includegraphics[width=\textwidth,trim=0 0 0 81,clip]{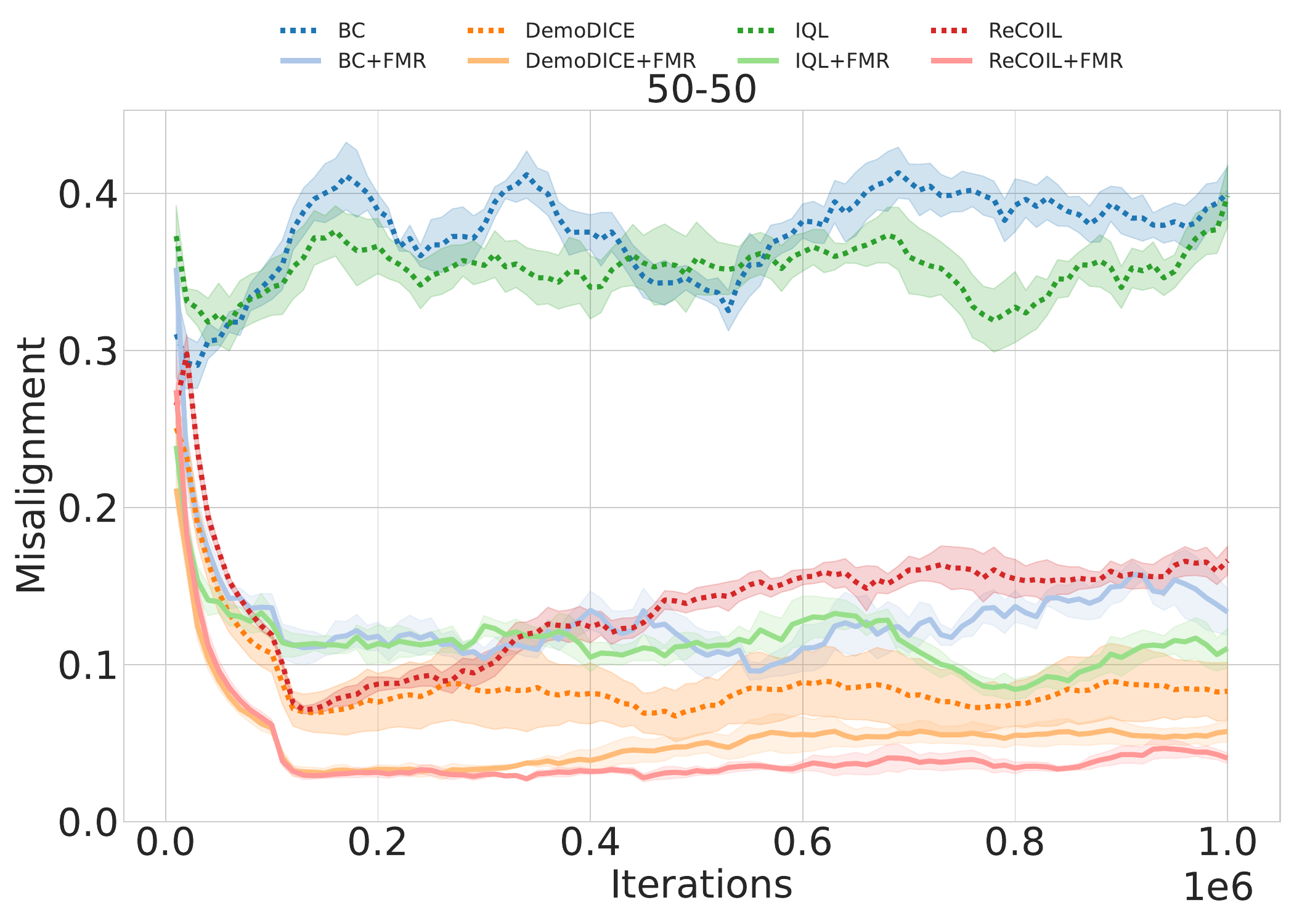}
    \end{subfigure}
    \hfill
    \begin{subfigure}[b]{0.32\textwidth}
        \centering
        \includegraphics[width=\textwidth,trim=0 0 0 81,clip]{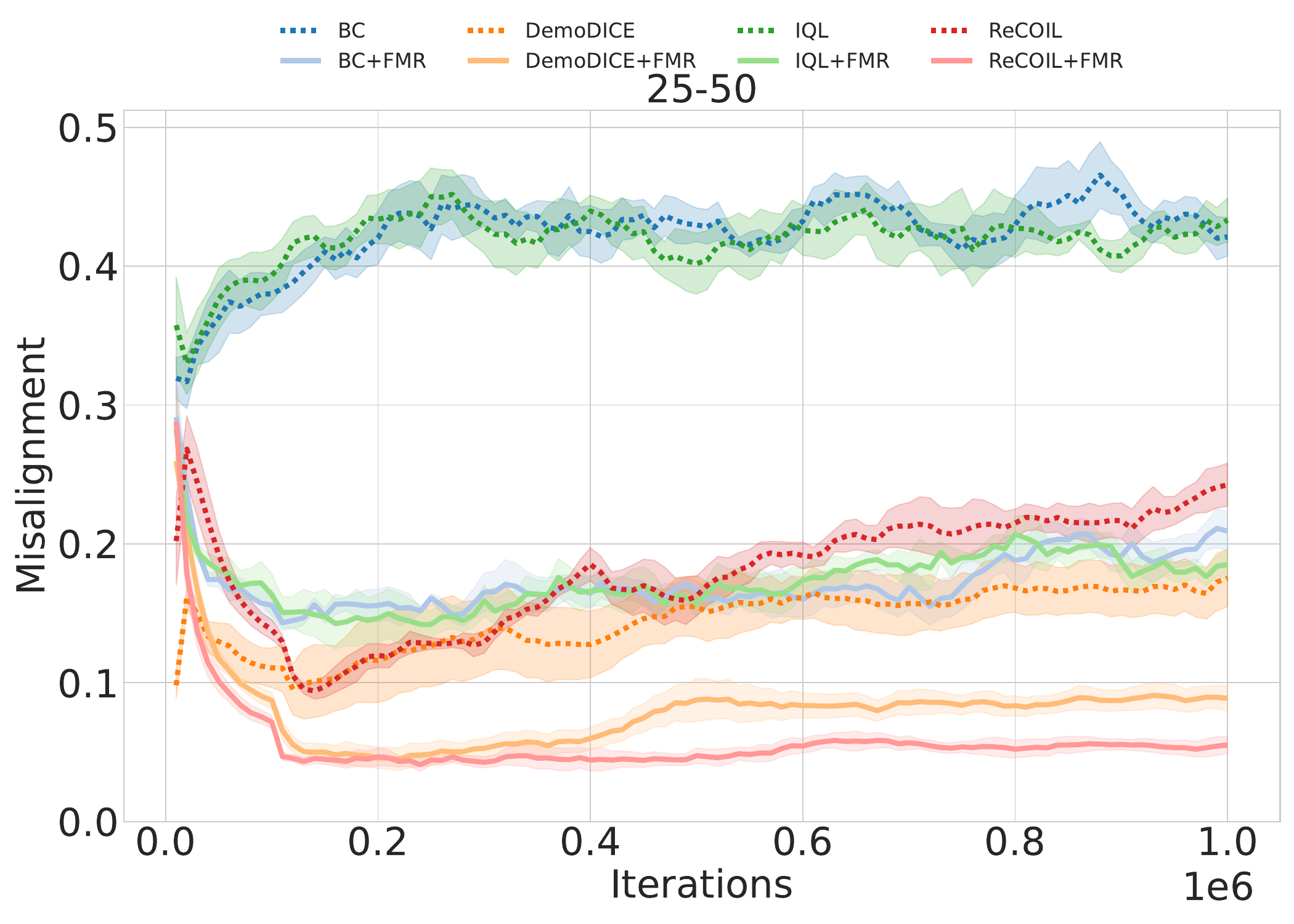}
    \end{subfigure}
    \hfill
    \begin{subfigure}[b]{0.32\textwidth}
        \centering
        \includegraphics[width=\textwidth,trim=0 0 0 81,clip]{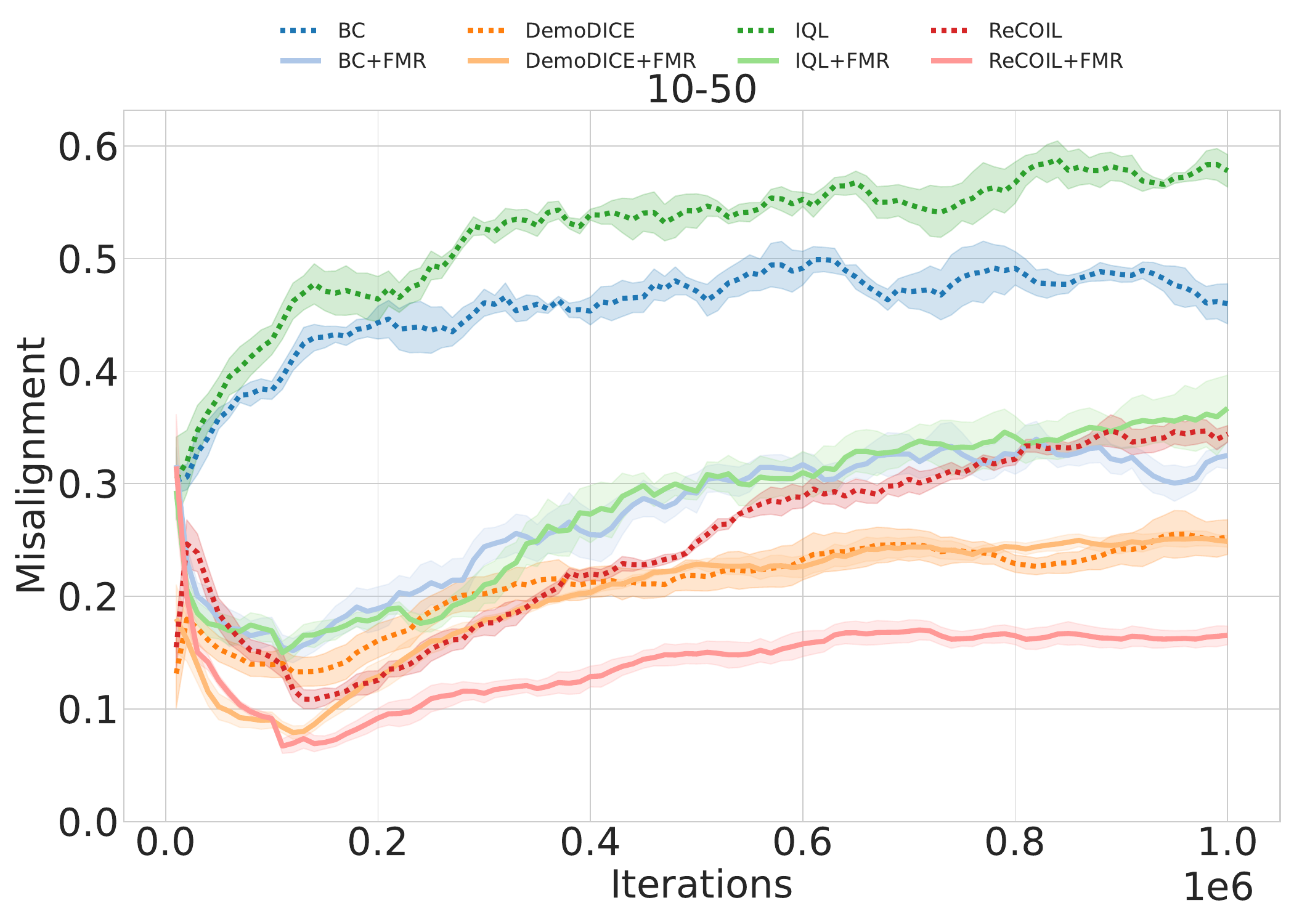}
    \end{subfigure}
    
    \caption{SlowWalk learning curves for baselines and FMR. The shaded region represents the standard error.}
    \label{fig:walker-main}
\end{figure}

%% file: appendix/cpl-dvl.tex
The following results compare FMR with CPL when using the sum feedback to score sub-trajectory preference and  DVL when using feedback to replace reward. Results for velocity tasks and additional learning curves are provided.

\begin{figure}[h]
    \centering
    
    \begin{subfigure}[b]{\textwidth}
        \centering
        \includegraphics[width=0.7\textwidth,trim=0 275 1 0,clip]{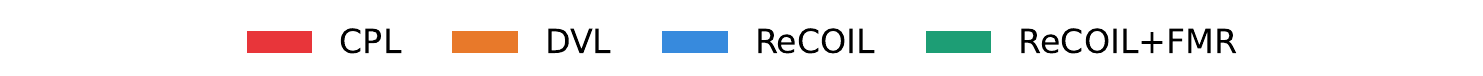}
    \end{subfigure}
    
    \begin{minipage}[t]{0.32\textwidth}
        \centering
        \textbf{Swimmer}
    \end{minipage}
    \hfill
    \begin{minipage}[t]{0.32\textwidth}
        \centering
        \textbf{Hopper}
    \end{minipage}
    \begin{minipage}[t]{0.32\textwidth}
        \centering
        \textbf{Walker2D}
    \end{minipage}
    
    \begin{subfigure}[b]{0.32\textwidth}
        \centering
        \includegraphics[width=\textwidth,trim=0 0 0 35,clip]{images/cpl-dvl/SlowSwim/return_compare.pdf}
    \end{subfigure}
    \hfill
    \begin{subfigure}[b]{0.32\textwidth}
        \centering
        \includegraphics[width=\textwidth,trim=0 0 0 35,clip]{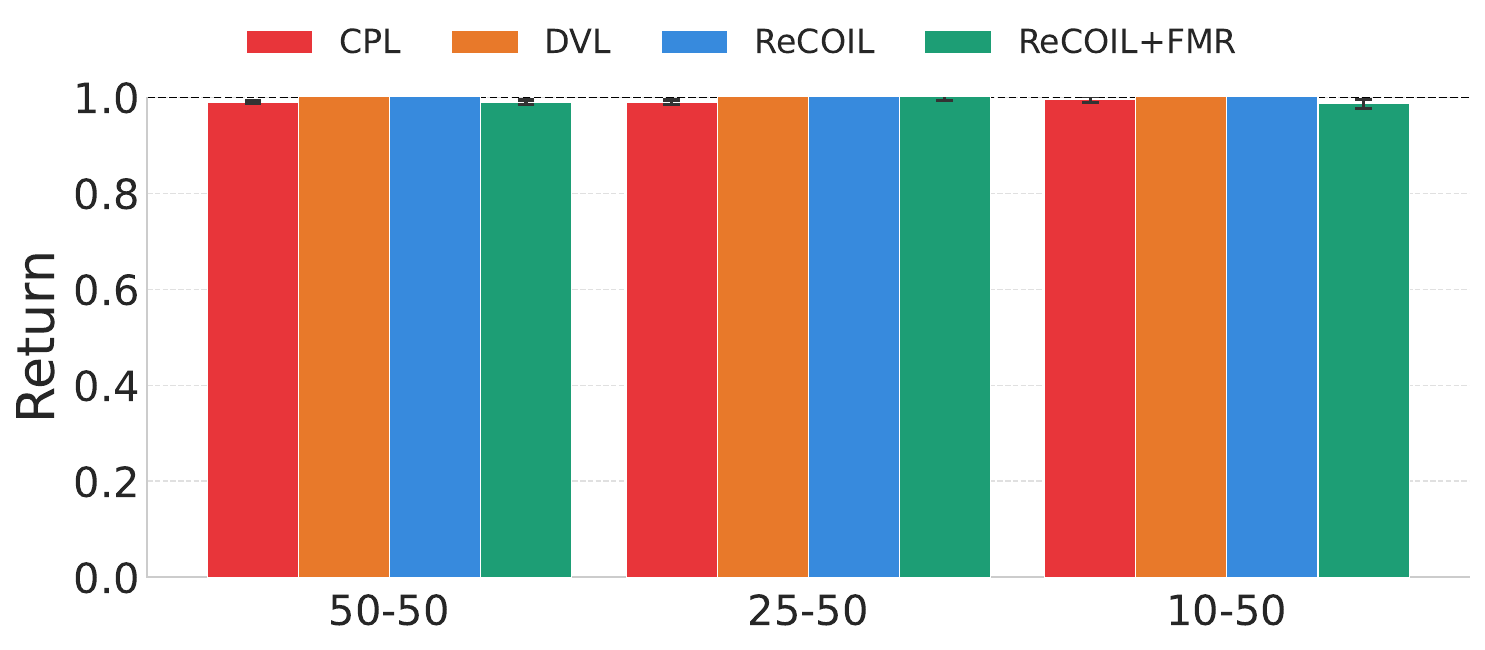}
    \end{subfigure}    
    \hfill
    \begin{subfigure}[b]{0.32\textwidth}
        \centering
        \includegraphics[width=\textwidth,trim=0 0 0 35,clip]{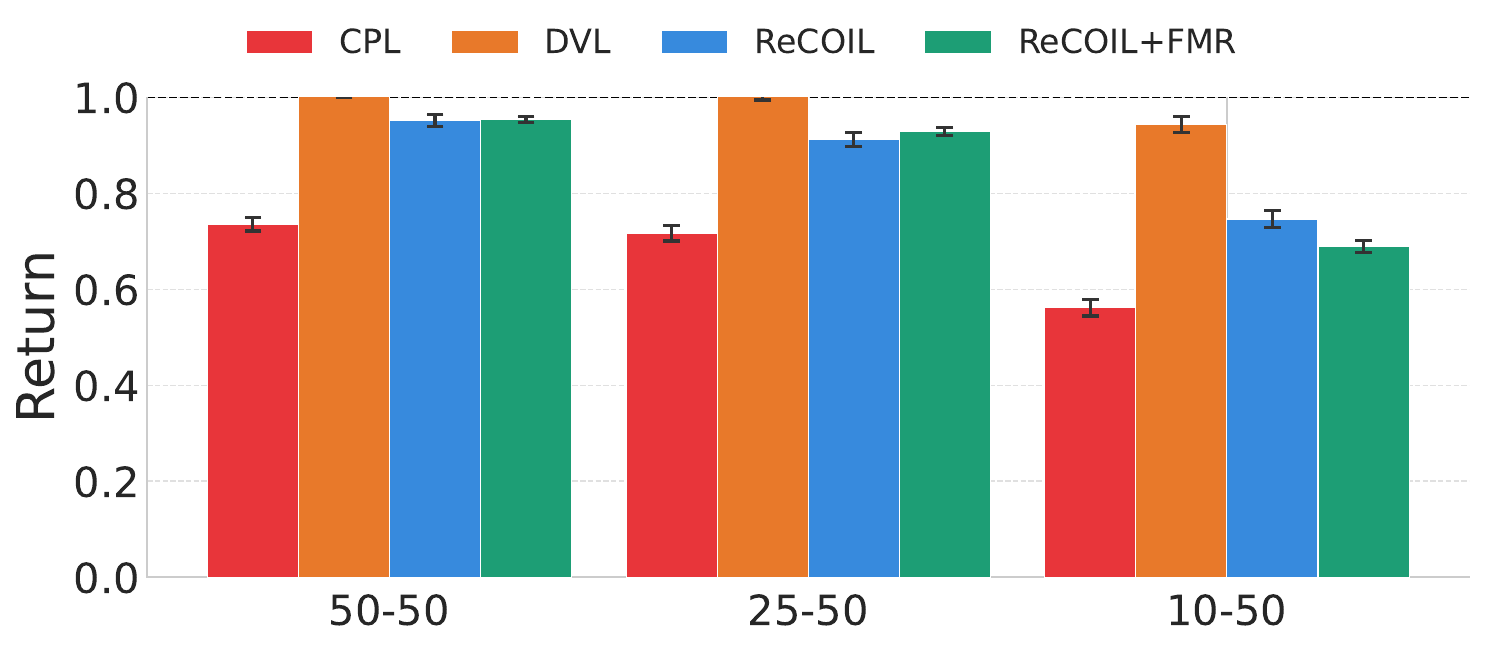}
    \end{subfigure}
    
    \begin{subfigure}[b]{0.32\textwidth}
        \centering
        \includegraphics[width=\textwidth,trim=0 0 0 35,clip]{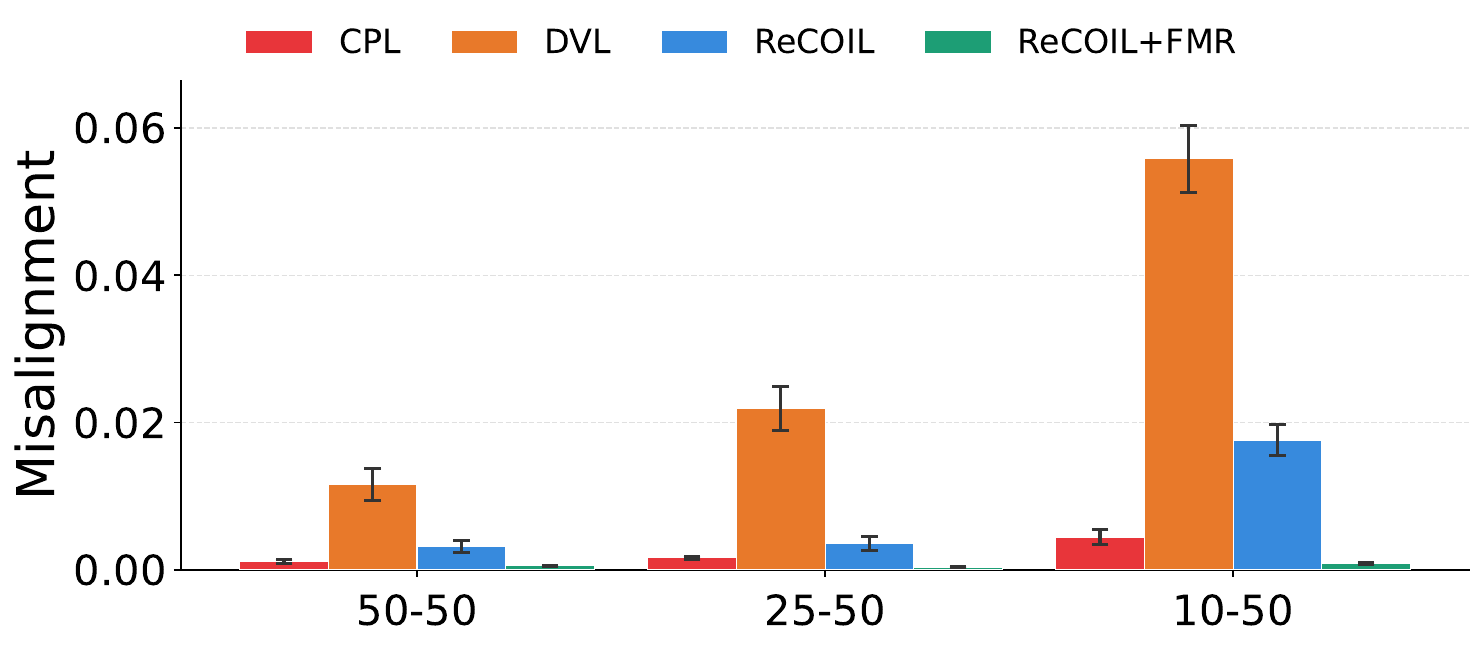}
    \end{subfigure}
    \hfill
    \begin{subfigure}[b]{0.32\textwidth}
        \centering
        \includegraphics[width=\textwidth,trim=0 0 0 35,clip]{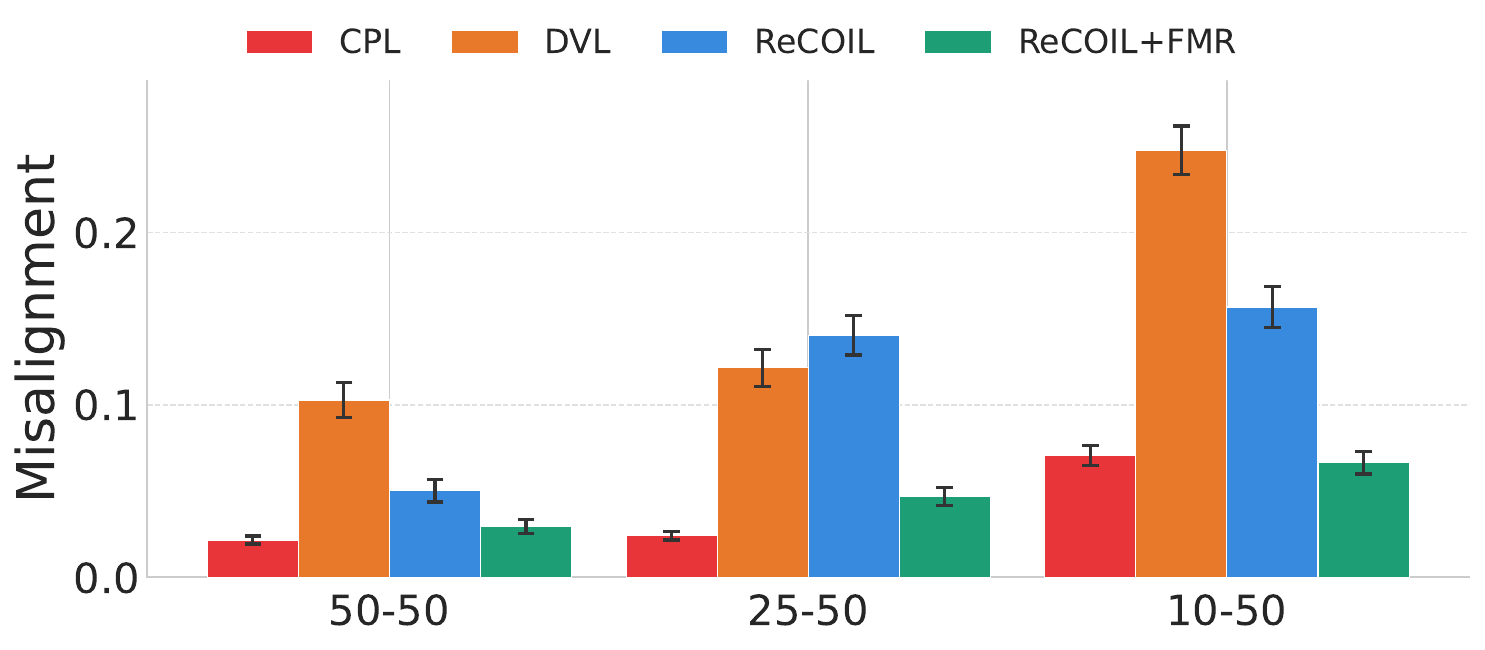}
    \end{subfigure}    
    \hfill
    \begin{subfigure}[b]{0.32\textwidth}
        \centering
        \includegraphics[width=\textwidth,trim=0 0 0 35,clip]{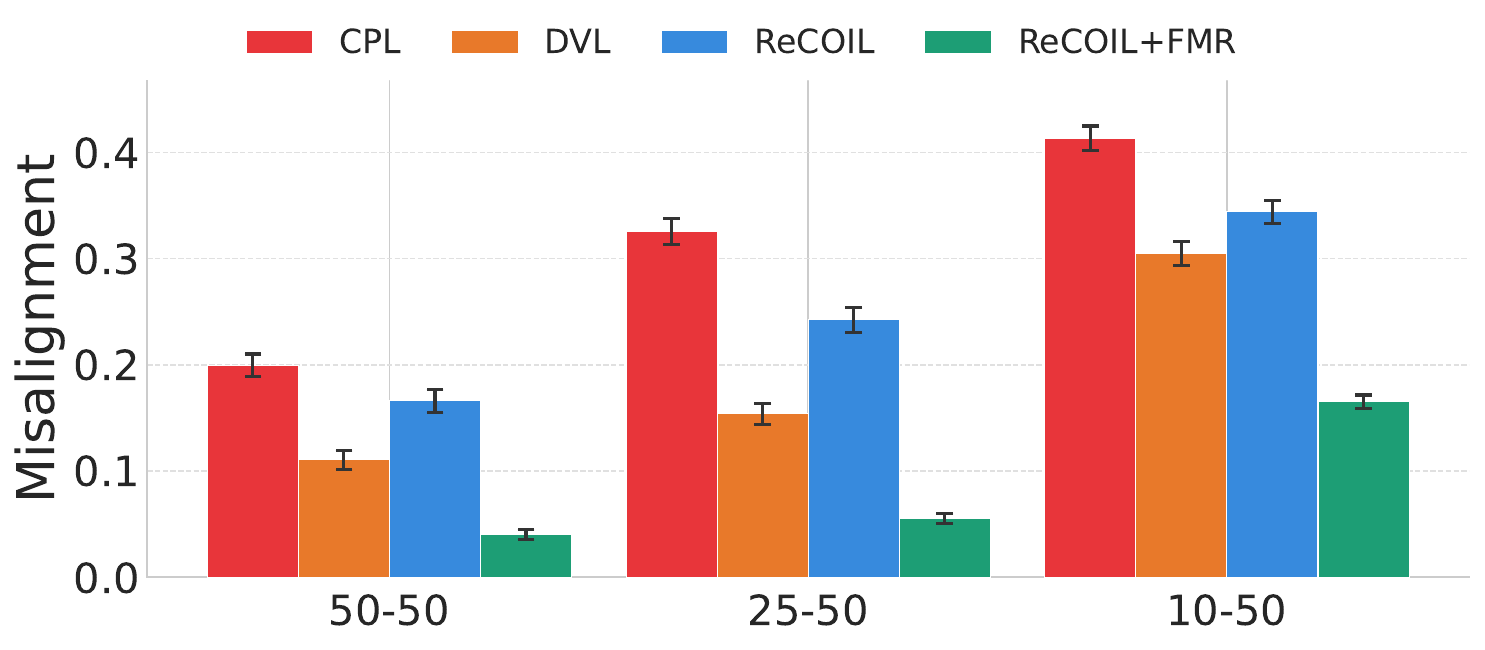}
    \end{subfigure}
    
    \caption{Velocity policies data ratio comparison between FMR, DVL, and CPL with ReCOIL baseline. Results show mean success rate and misalignment scores with 95\% confidence intervals, averaged over the last 10 evaluations across 5 seeds.}
    \label{fig:vel-dvl-cpl-results}
\end{figure}

\begin{table}[h]
\centering
\caption{PathM comparison of FMR against alternative methods for utilizing evaluative feedback. Results show mean ± std for the last 10 evaluations, over 5 seeds.}
\label{tab:pathm-dvl}
\small
\begin{tabular}{llrr}
\toprule
Algorithm & Ratio & Suc. & Mis. \\
\midrule
\multirow[t]{3}{*}{ReCOIL} 
& 50-50 & 0.951 ± 0.215 & 0.113 ± 0.208 \\
& 25-50 & 0.916 ± 0.278 & 0.123 ± 0.222 \\
& 10-50 & 0.855 ± 0.352 & 0.150 ± 0.239 \\
\cline{1-4}
\multirow[t]{3}{*}{CPL} 
& 50-50 & 0.872 ± 0.334 & 0.098 ± 0.237 \\
& 25-50 & 0.796 ± 0.403 & 0.160 ± 0.282 \\
& 10-50 & 0.565 ± 0.496 & 0.302 ± 0.328 \\
\cline{1-4}
\multirow[t]{3}{*}{DVL} 
& 50-50 & 0.773 ± 0.419 & 0.148 ± 0.260 \\
& 25-50 & 0.655 ± 0.476 & 0.230 ± 0.291 \\
& 10-50 & 0.520 ± 0.500 & 0.308 ± 0.302 \\
\cline{1-4}
\multirow[t]{3}{*}{ReCOIL+FMR} 
& 50-50 & 0.988 ± 0.107 & 0.015 ± 0.084 \\
& 25-50 & 0.974 ± 0.160 & 0.014 ± 0.076 \\
& 10-50 & 0.947 ± 0.224 & 0.020 ± 0.092 \\
\bottomrule
\end{tabular}
\end{table}

\begin{table}[h]
\centering
\caption{PathBB comparison of FMR against alternative methods for utilizing evaluative feedback. Results show mean ± std for the last 10 evaluations, over 5 seeds.}
\label{tab:pathbb-dvl}
\small
\begin{tabular}{llrr}
\toprule
Algorithm & Ratio & Suc. & Mis. \\
\midrule
\multirow[t]{3}{*}{ReCOIL} 
& 10-50 & 0.592 ± 0.492 & 0.237 ± 0.334 \\
& 25-50 & 0.721 ± 0.449 & 0.154 ± 0.296 \\
& 50-50 & 0.780 ± 0.415 & 0.153 ± 0.305 \\
\cline{1-4}
\multirow[t]{3}{*}{CPL} 
& 10-50 & 0.573 ± 0.495 & 0.273 ± 0.359 \\
& 25-50 & 0.685 ± 0.465 & 0.203 ± 0.341 \\
& 50-50 & 0.781 ± 0.414 & 0.124 ± 0.267 \\
\cline{1-4}
\multirow[t]{3}{*}{DVL}
& 10-50 & 0.400 ± 0.490 & 0.287 ± 0.286 \\
& 25-50 & 0.652 ± 0.476 & 0.175 ± 0.274 \\
& 50-50 & 0.771 ± 0.420 & 0.108 ± 0.222 \\
\cline{1-4}
\multirow[t]{3}{*}{ReCOIL+FMR} 
& 10-50 & 0.708 ± 0.455 & 0.097 ± 0.211 \\
& 25-50 & 0.880 ± 0.325 & 0.073 ± 0.192 \\
& 50-50 & 0.919 ± 0.273 & 0.048 ± 0.167 \\
\bottomrule
\end{tabular}
\end{table}

\begin{table}[h]
\centering
\caption{SlowSwim comparison of FMR against alternative methods for utilizing evaluative feedback. Results show mean ± std for the last 10 evaluations, over 5 seeds.}
\label{tab:swim-dvl}
\begin{tabular}{ll|rr}
\toprule
Algorithm & Ratio & Return & Mis. \\
\midrule
\multirow[t]{3}{*}{ReCOIL}
& 50-50 & 0.994 ± 0.068 & 0.003 ± 0.022 \\
& 25-50 & 0.992 ± 0.076 & 0.004 ± 0.024 \\
& 10-50 & 0.967 ± 0.145 & 0.018 ± 0.055 \\
\cline{1-4}
\multirow[t]{3}{*}{CPL} 
& 50-50 & 1.001 ± 0.026 & 0.001 ± 0.007 \\
& 25-50 & 1.002 ± 0.010 & 0.002 ± 0.005 \\
& 10-50 & 1.000 ± 0.023 & 0.004 ± 0.026 \\
\cline{1-4}
\multirow[t]{3}{*}{DVL}
& 50-50 & 0.997 ± 0.048 & 0.012 ± 0.055 \\
& 25-50 & 0.998 ± 0.055 & 0.022 ± 0.077 \\
& 10-50 & 0.993 ± 0.083 & 0.056 ± 0.116 \\
\cline{1-4}
\multirow[t]{3}{*}{ReCOIL+FMR} 
& 50-50 & 1.001 ± 0.008 & 0.001 ± 0.003 \\
& 25-50 & 1.002 ± 0.008 & 0.000 ± 0.002 \\
& 10-50 & 1.002 ± 0.008 & 0.001 ± 0.003 \\
\bottomrule
\end{tabular}
\end{table}

\begin{table}[h]
\centering
\caption{SlowHop comparison of FMR against alternative methods for utilizing evaluative feedback. Results show mean ± std for the last 10 evaluations, over 5 seeds.}
\label{tab:hop-dvl}
\begin{tabular}{llrr}
\toprule
Algorithm & Ratio & Return & Mis. \\
\midrule
\multirow[t]{3}{*}{CPL} 
& 50-50 & 0.990 ± 0.089 & 0.022 ± 0.059 \\
& 25-50 & 0.990 ± 0.111 & 0.024 ± 0.059 \\
& 10-50 & 0.995 ± 0.173 & 0.071 ± 0.152 \\
\cline{1-4}
\multirow[t]{3}{*}{DVL} 
& 50-50 & 1.046 ± 0.273 & 0.103 ± 0.262 \\
& 25-50 & 1.076 ± 0.302 & 0.121 ± 0.275 \\
& 10-50 & 1.135 ± 0.418 & 0.248 ± 0.357 \\
\cline{1-4}
\multirow[t]{3}{*}{ReCOIL} 
& 50-50 & 1.012 ± 0.181 & 0.050 ± 0.164 \\
& 25-50 & 1.066 ± 0.310 & 0.140 ± 0.296 \\
& 10-50 & 1.067 ± 0.340 & 0.157 ± 0.304 \\
\cline{1-4}
\multirow[t]{3}{*}{ReCOIL+FMR} 
& 50-50 & 0.990 ± 0.127 & 0.029 ± 0.099 \\
& 25-50 & 1.001 ± 0.179 & 0.047 ± 0.135 \\
& 10-50 & 0.986 ± 0.225 & 0.066 ± 0.168 \\
\bottomrule
\end{tabular}
\end{table}

\begin{table}[h]
\centering
\caption{SlowWalk comparison of FMR against alternative methods for utilizing evaluative feedback. Results show mean ± std for the last 10 evaluations, over 5 seeds.}
\label{tab:walk-dvl}
\begin{tabular}{llrr}
\toprule
Algorithm & Ratio & Return & Mis. \\
\midrule
\multirow[t]{3}{*}{ReCOIL} 
& 50-50 & 0.951 ± 0.317 & 0.166 ± 0.273 \\
& 25-50 & 0.912 ± 0.379 & 0.243 ± 0.297 \\
& 10-50 & 0.746 ± 0.460 & 0.344 ± 0.275 \\
\cline{1-4}
\multirow[t]{3}{*}{CPL} &
50-50 & 0.735 ± 0.352 & 0.200 ± 0.268 \\
& 25-50 & 0.717 ± 0.411 & 0.326 ± 0.313 \\
& 10-50 & 0.562 ± 0.449 & 0.414 ± 0.294 \\
\cline{1-4}
\multirow[t]{3}{*}{DVL} 
& 50-50 & 1.009 ± 0.221 & 0.111 ± 0.225 \\
& 25-50 & 1.005 ± 0.267 & 0.154 ± 0.248 \\
& 10-50 & 0.943 ± 0.436 & 0.305 ± 0.282 \\
\cline{1-4}
\multirow[t]{3}{*}{ReCOIL+FMR} 
& 50-50 & 0.954 ± 0.169 & 0.040 ± 0.119 \\
& 25-50 & 0.929 ± 0.206 & 0.055 ± 0.122 \\
& 10-50 & 0.690 ± 0.323 & 0.165 ± 0.162 \\
\bottomrule
\end{tabular}
\end{table}

\begin{figure}[h]
    \centering
    
    \begin{subfigure}[b]{\textwidth}
        \centering
        \includegraphics[width=0.8\textwidth,trim=0 665 0 0,clip]{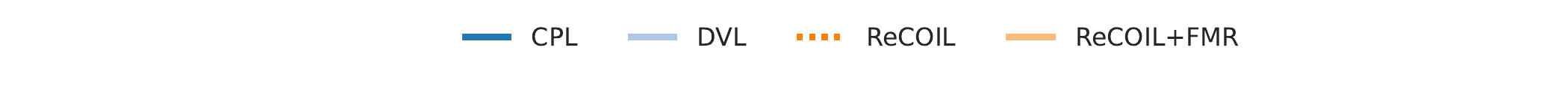}
    \end{subfigure}
    
    \vspace{0.5em}
    
    \begin{subfigure}[b]{0.32\textwidth}
        \centering
        \includegraphics[width=\textwidth,trim=0 44 0 55,clip]{images/cpl-dvl/PathM/success_50-50.pdf}
    \end{subfigure}
    \hfill
    \begin{subfigure}[b]{0.32\textwidth}
        \centering
        \includegraphics[width=\textwidth,trim=0 44 0 55,clip]{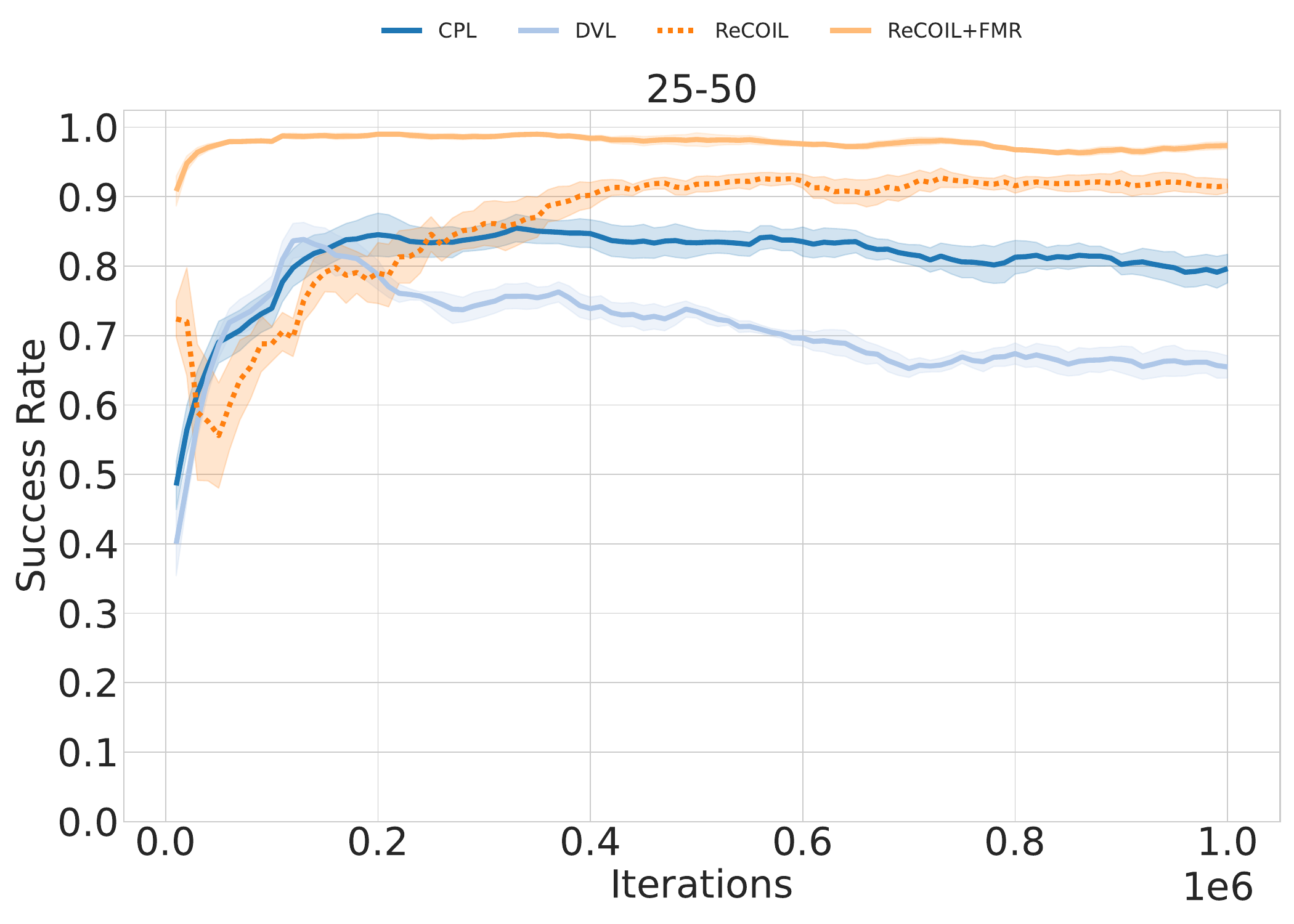}
    \end{subfigure}
    \hfill
    \begin{subfigure}[b]{0.32\textwidth}
        \centering
        \includegraphics[width=\textwidth,trim=0 44 0 55,clip]{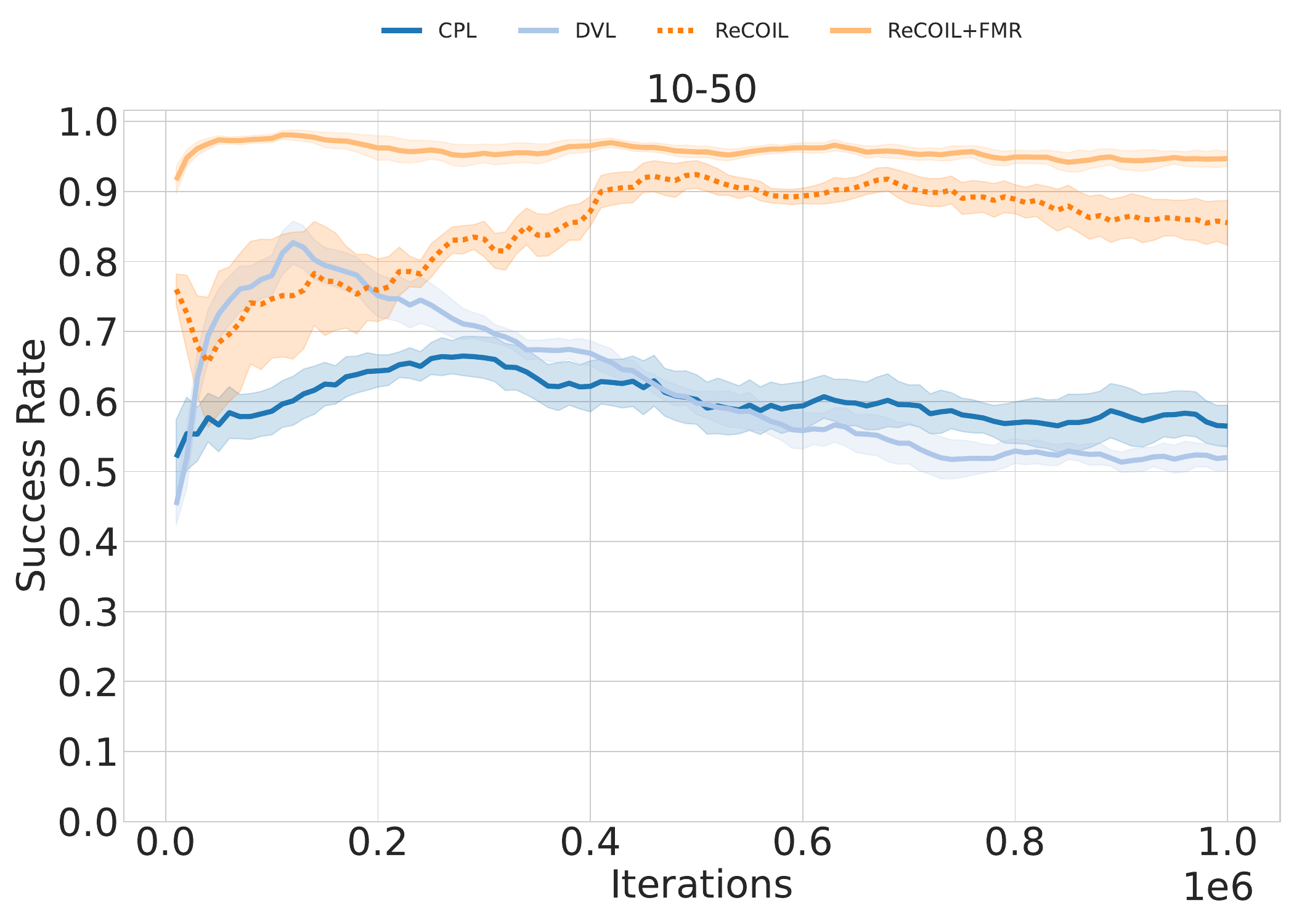}
    \end{subfigure}
    
    \begin{subfigure}[b]{0.32\textwidth}
        \centering
        \includegraphics[width=\textwidth,trim=0 0 0 81,clip]{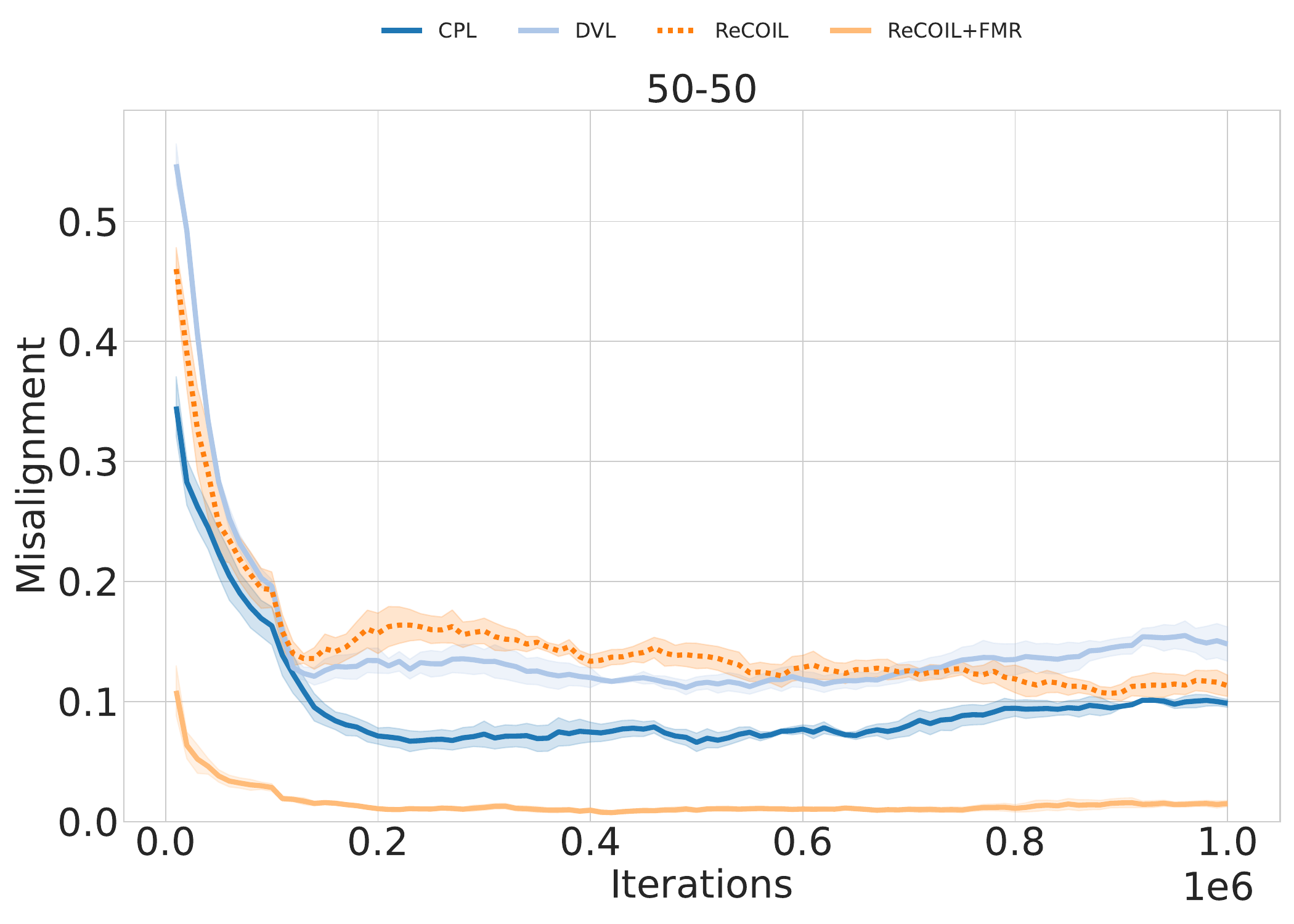}
    \end{subfigure}
    \hfill
    \begin{subfigure}[b]{0.32\textwidth}
        \centering
        \includegraphics[width=\textwidth,trim=0 0 0 81,clip]{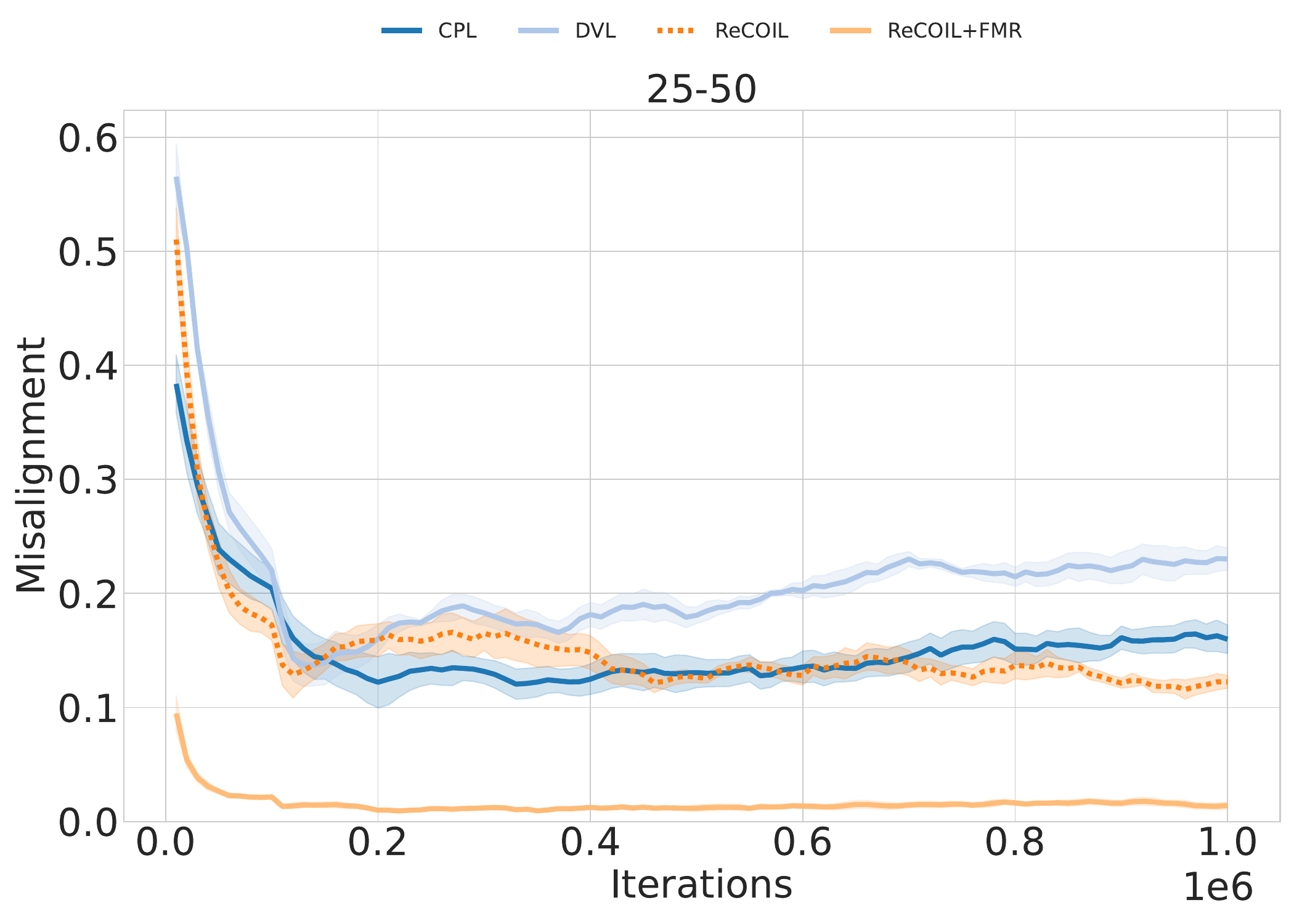}
    \end{subfigure}
    \hfill
    \begin{subfigure}[b]{0.32\textwidth}
        \centering
        \includegraphics[width=\textwidth,trim=0 0 0 81,clip]{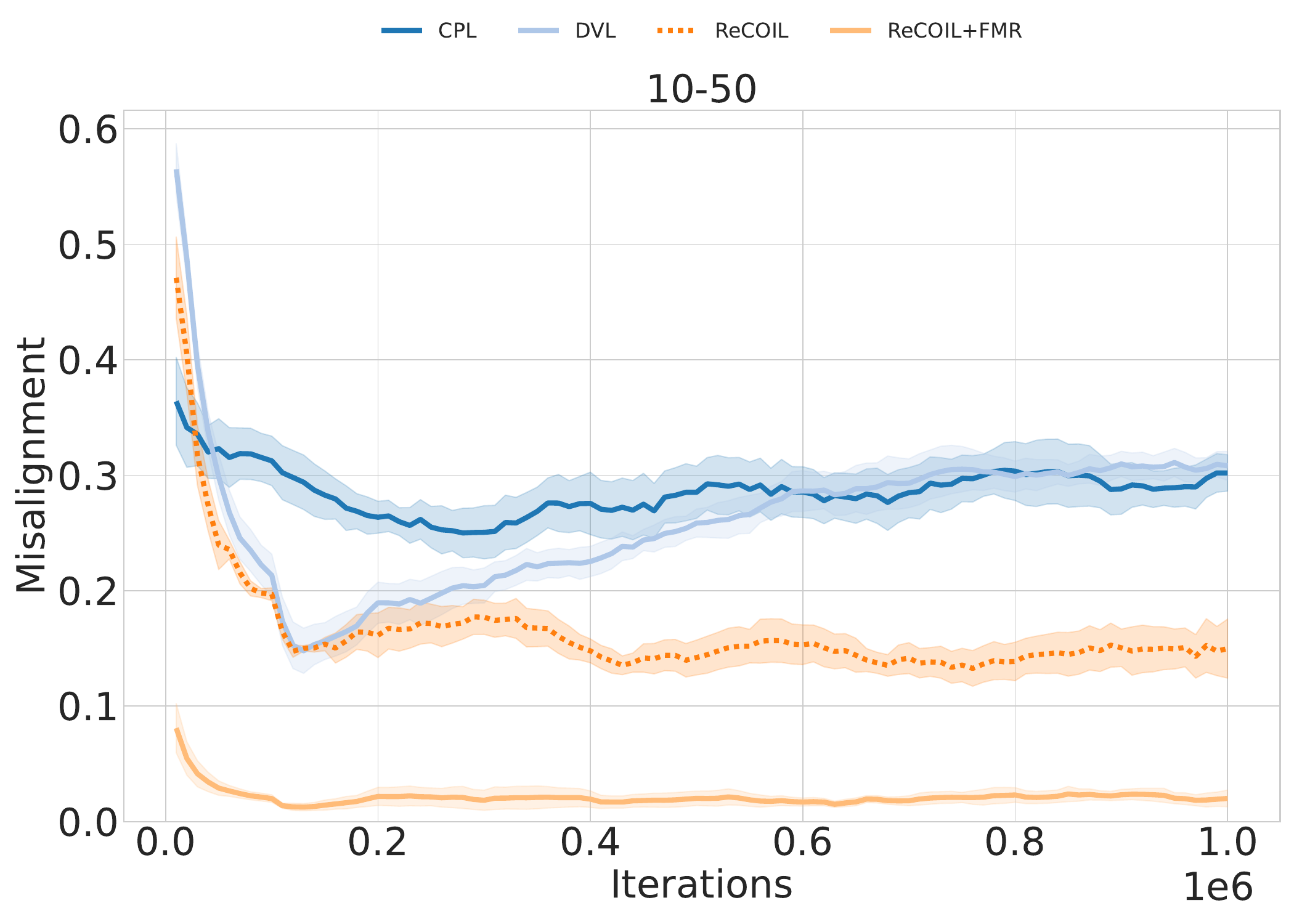}
    \end{subfigure}
    
    \caption{PathM learning curves comparing FMR to alternative methods for utilizing evaluative feedback. The shaded region represents the standard error.}
    \label{fig:pathm-cpl-dvl}
\end{figure}

\begin{figure}[h]
    \centering
    
    \begin{subfigure}[b]{\textwidth}
        \centering
        \includegraphics[width=0.8\textwidth,trim=0 665 0 0,clip]{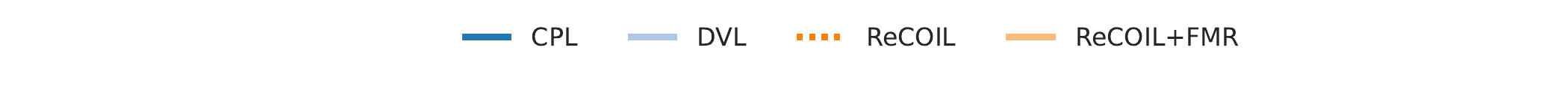}
    \end{subfigure}
    
    \vspace{0.5em}
    
    \begin{subfigure}[b]{0.32\textwidth}
        \centering
        \includegraphics[width=\textwidth,trim=0 44 0 55,clip]{images/cpl-dvl/PathBB/success_50-50.pdf}
    \end{subfigure}
    \hfill
    \begin{subfigure}[b]{0.32\textwidth}
        \centering
        \includegraphics[width=\textwidth,trim=0 44 0 55,clip]{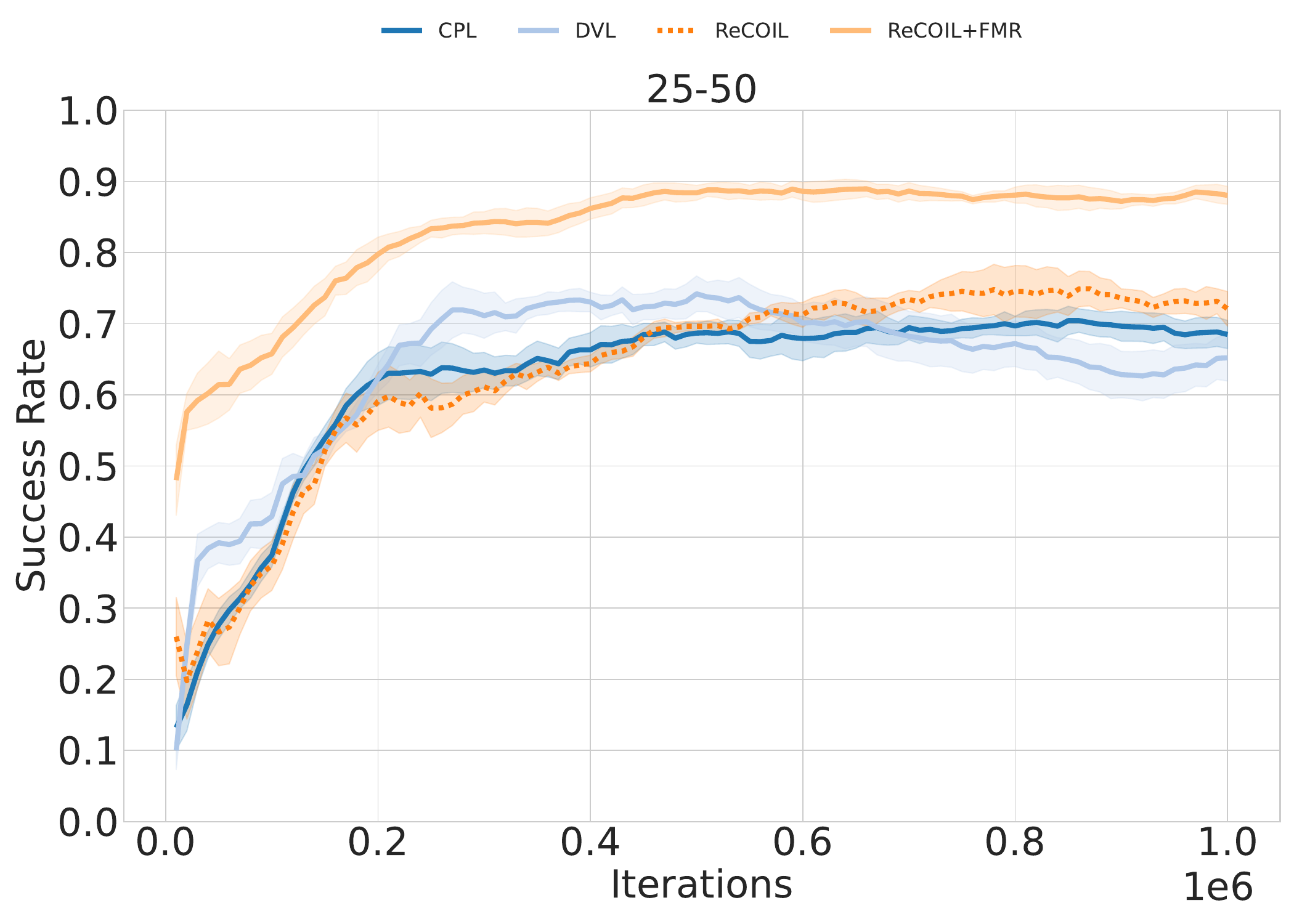}
    \end{subfigure}
    \hfill
    \begin{subfigure}[b]{0.32\textwidth}
        \centering
        \includegraphics[width=\textwidth,trim=0 44 0 55,clip]{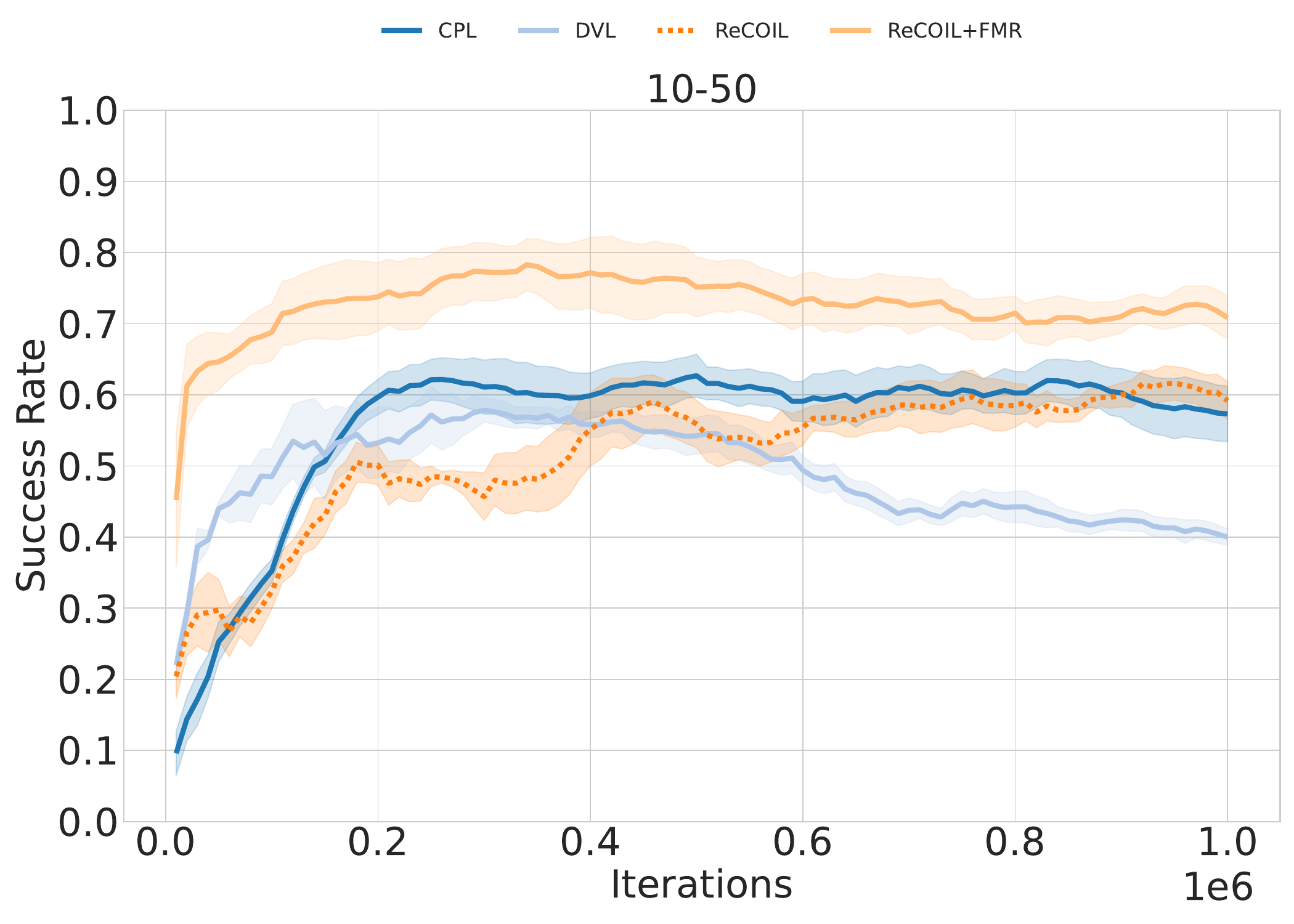}
    \end{subfigure}
    
    \begin{subfigure}[b]{0.32\textwidth}
        \centering
        \includegraphics[width=\textwidth,trim=0 0 0 81,clip]{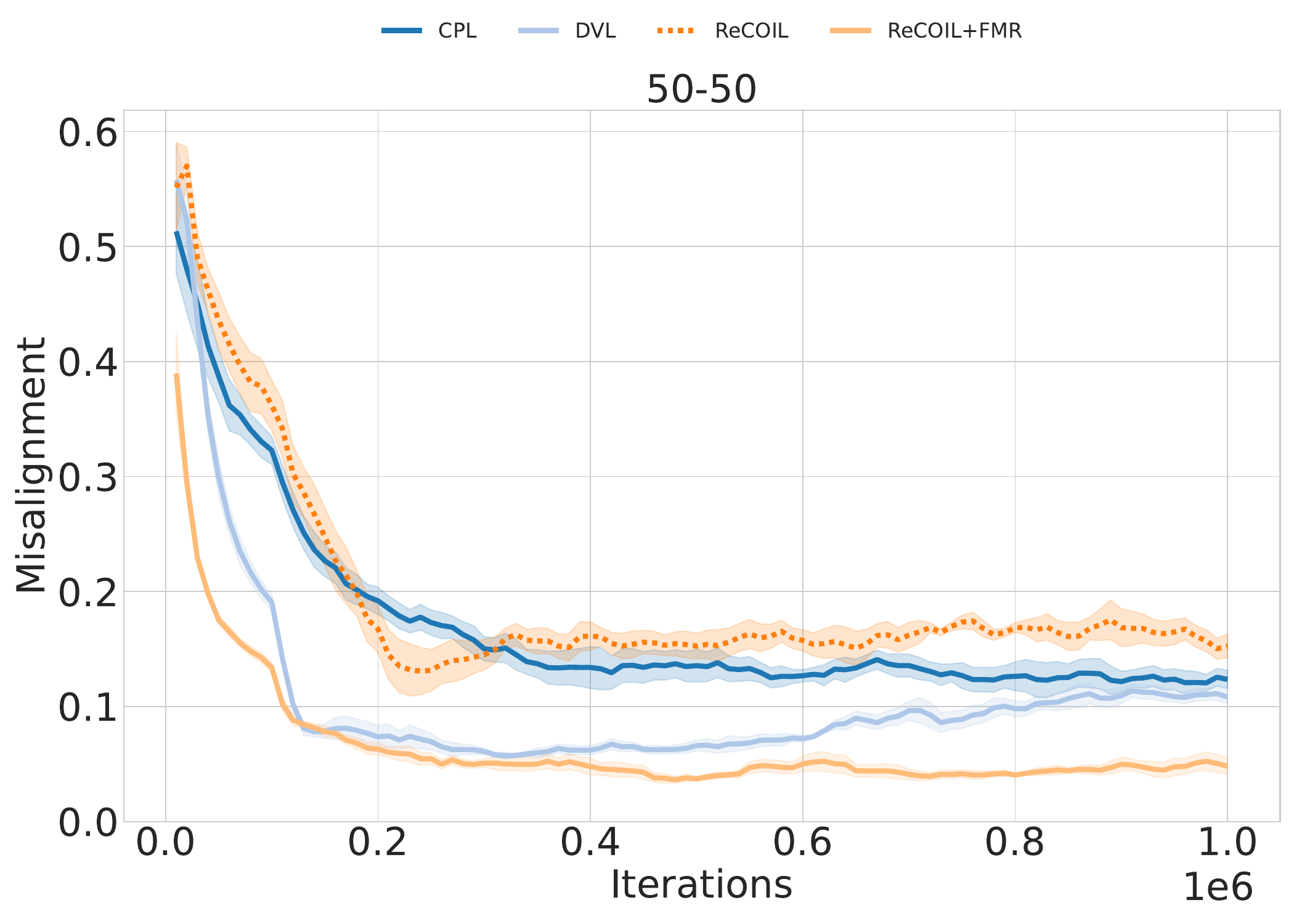}
    \end{subfigure}
    \hfill
    \begin{subfigure}[b]{0.32\textwidth}
        \centering
        \includegraphics[width=\textwidth,trim=0 0 0 81,clip]{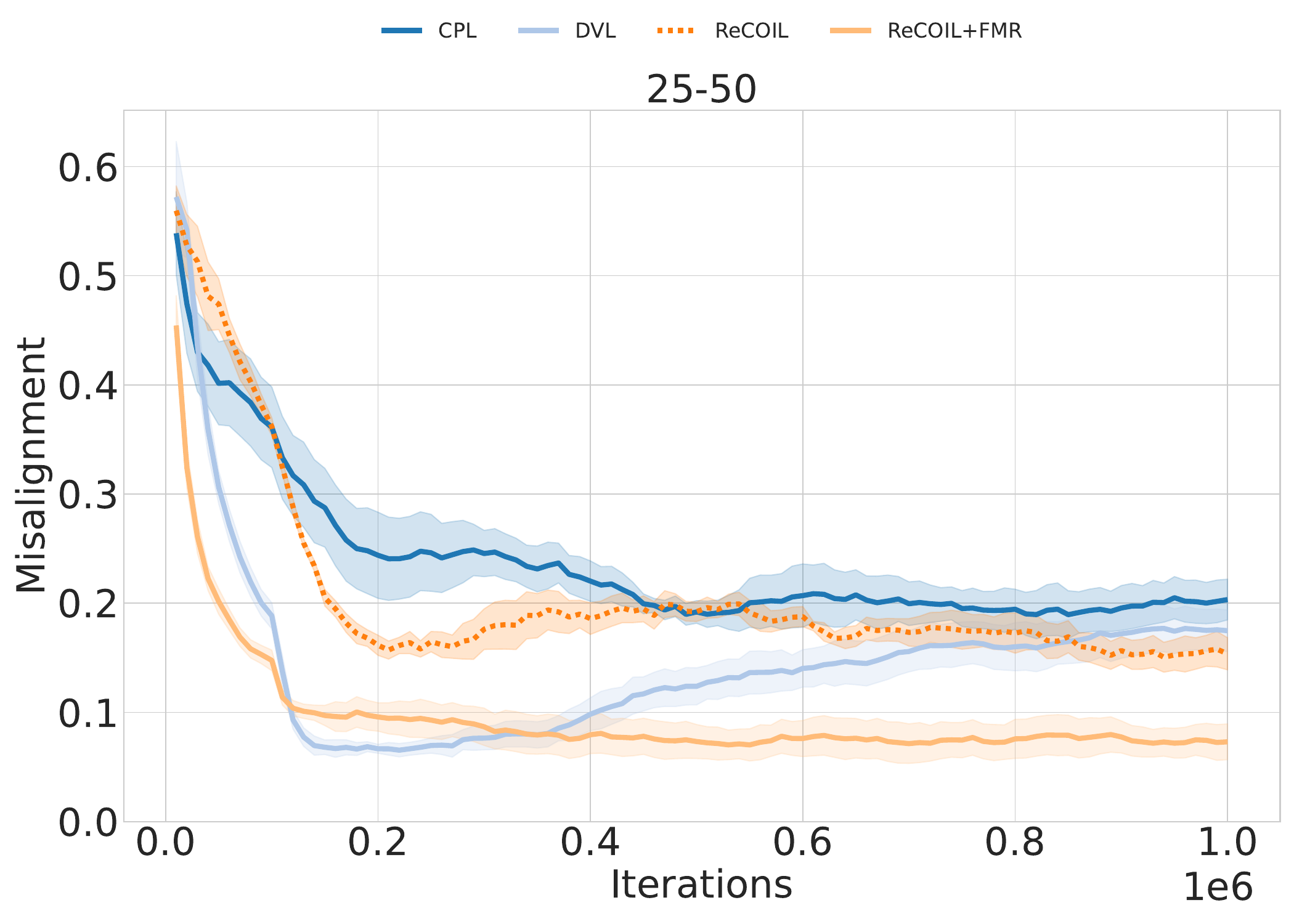}
    \end{subfigure}
    \hfill
    \begin{subfigure}[b]{0.32\textwidth}
        \centering
        \includegraphics[width=\textwidth,trim=0 0 0 81,clip]{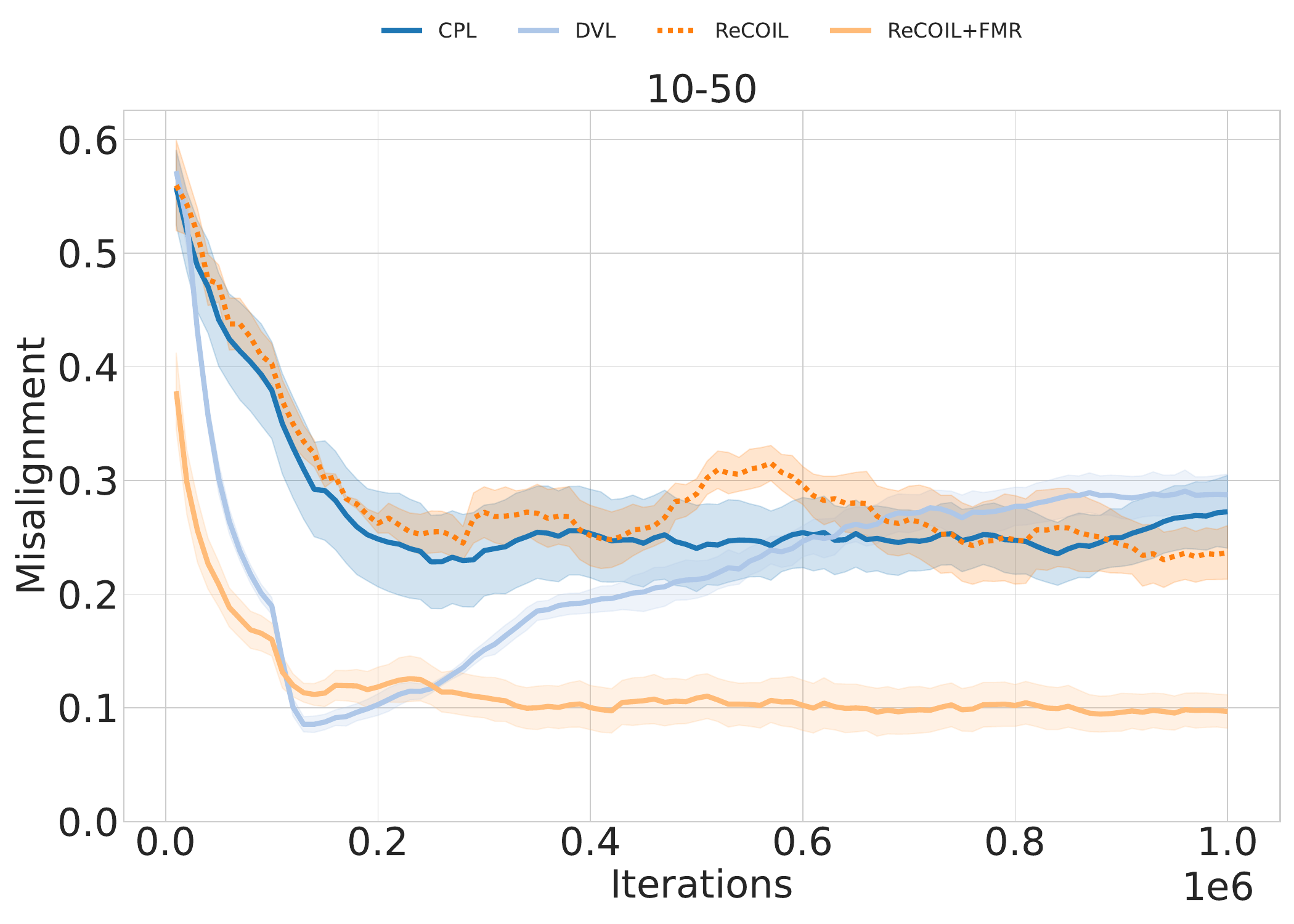}
    \end{subfigure}
    
    \caption{PathBB learning curves comparing FMR to alternative methods for utilizing evaluative feedback. The shaded region represents the standard error.}
    \label{fig:pathbb-cpl-dvl}
\end{figure}

\begin{figure}[h]
    \centering
    
    \begin{subfigure}[b]{\textwidth}
        \centering
        \includegraphics[width=0.8\textwidth,trim=0 665 0 0,clip]{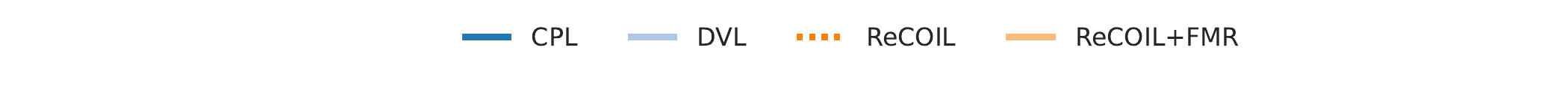}
    \end{subfigure}
    
    \vspace{0.5em}
    
    \begin{subfigure}[b]{0.32\textwidth}
        \centering
        \includegraphics[width=\textwidth,trim=0 44 0 55,clip]{images/cpl-dvl/SlowSwim/return_50-50.pdf}
    \end{subfigure}
    \hfill
    \begin{subfigure}[b]{0.32\textwidth}
        \centering
        \includegraphics[width=\textwidth,trim=0 44 0 55,clip]{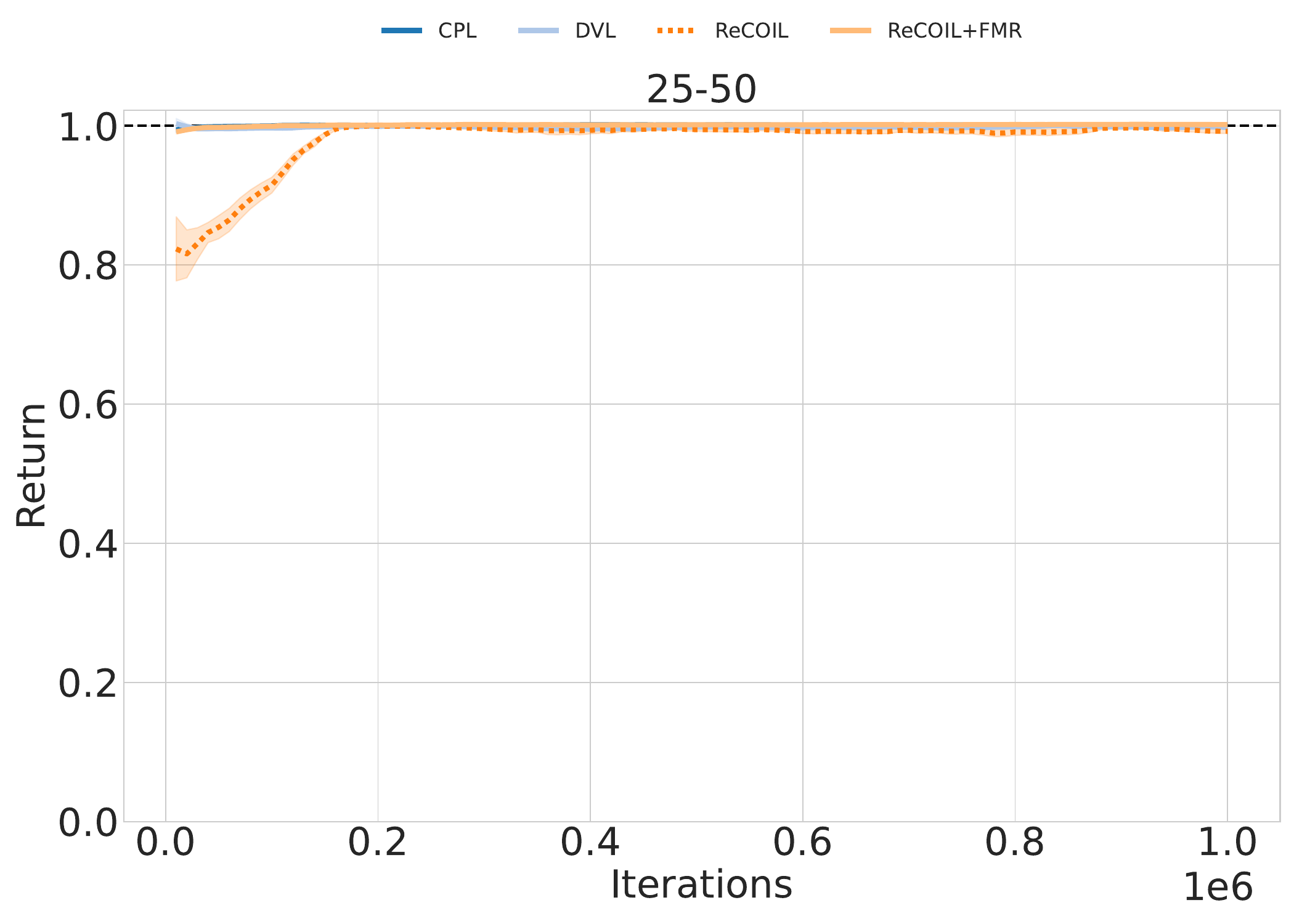}
    \end{subfigure}
    \hfill
    \begin{subfigure}[b]{0.32\textwidth}
        \centering
        \includegraphics[width=\textwidth,trim=0 44 0 55,clip]{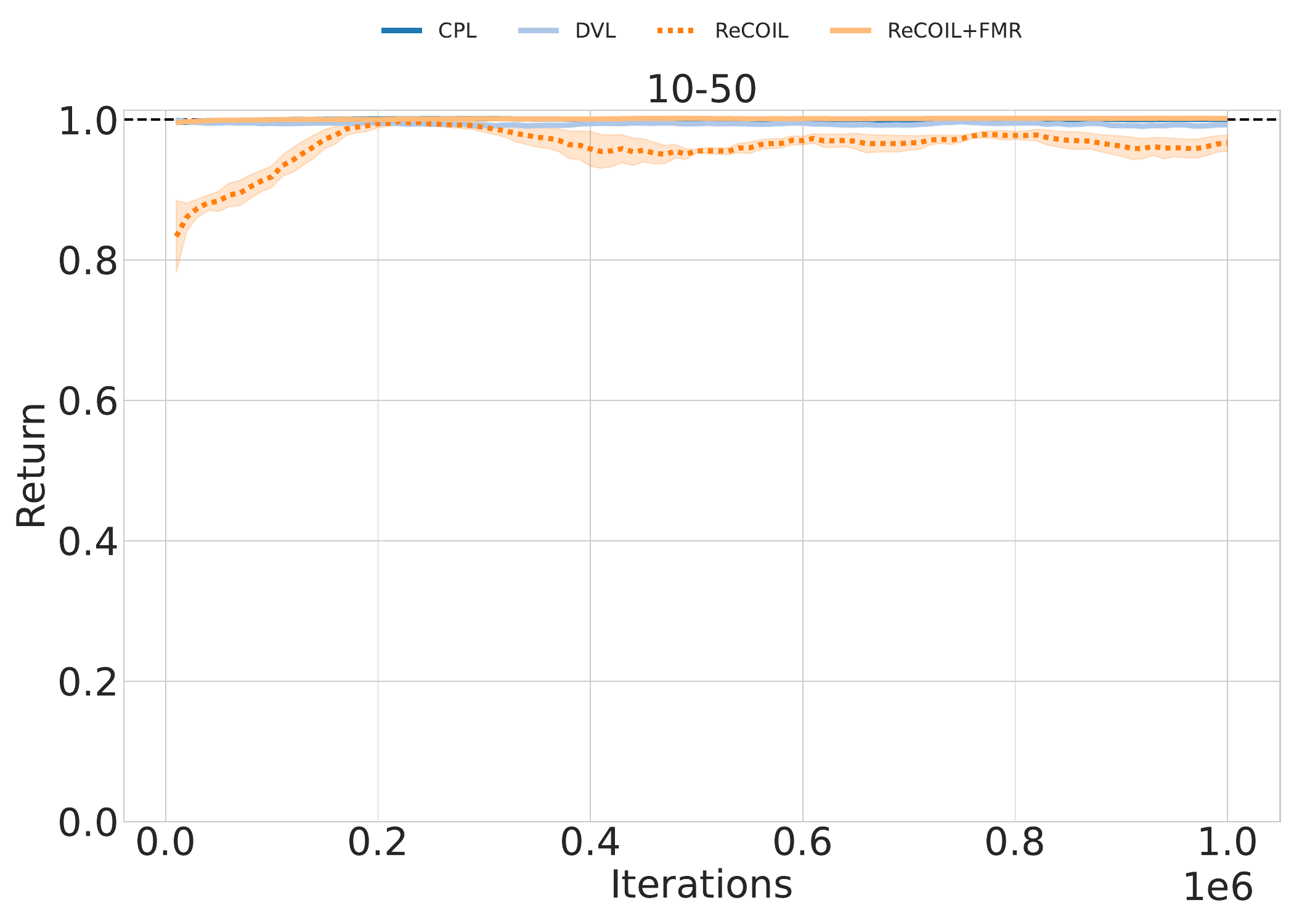}
    \end{subfigure}
    
    \begin{subfigure}[b]{0.32\textwidth}
        \centering
        \includegraphics[width=\textwidth,trim=0 0 0 81,clip]{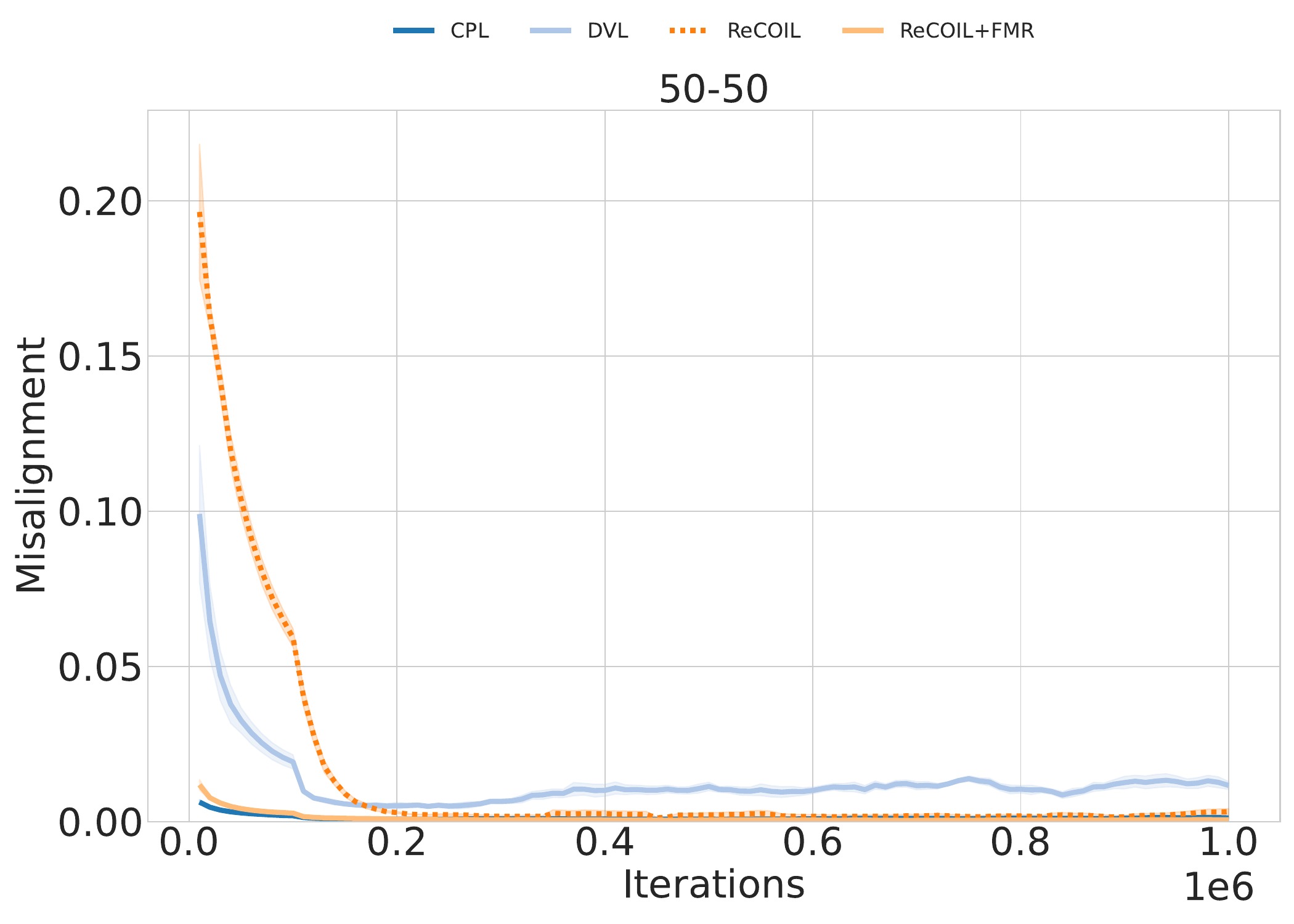}
    \end{subfigure}
    \hfill
    \begin{subfigure}[b]{0.32\textwidth}
        \centering
        \includegraphics[width=\textwidth,trim=0 0 0 81,clip]{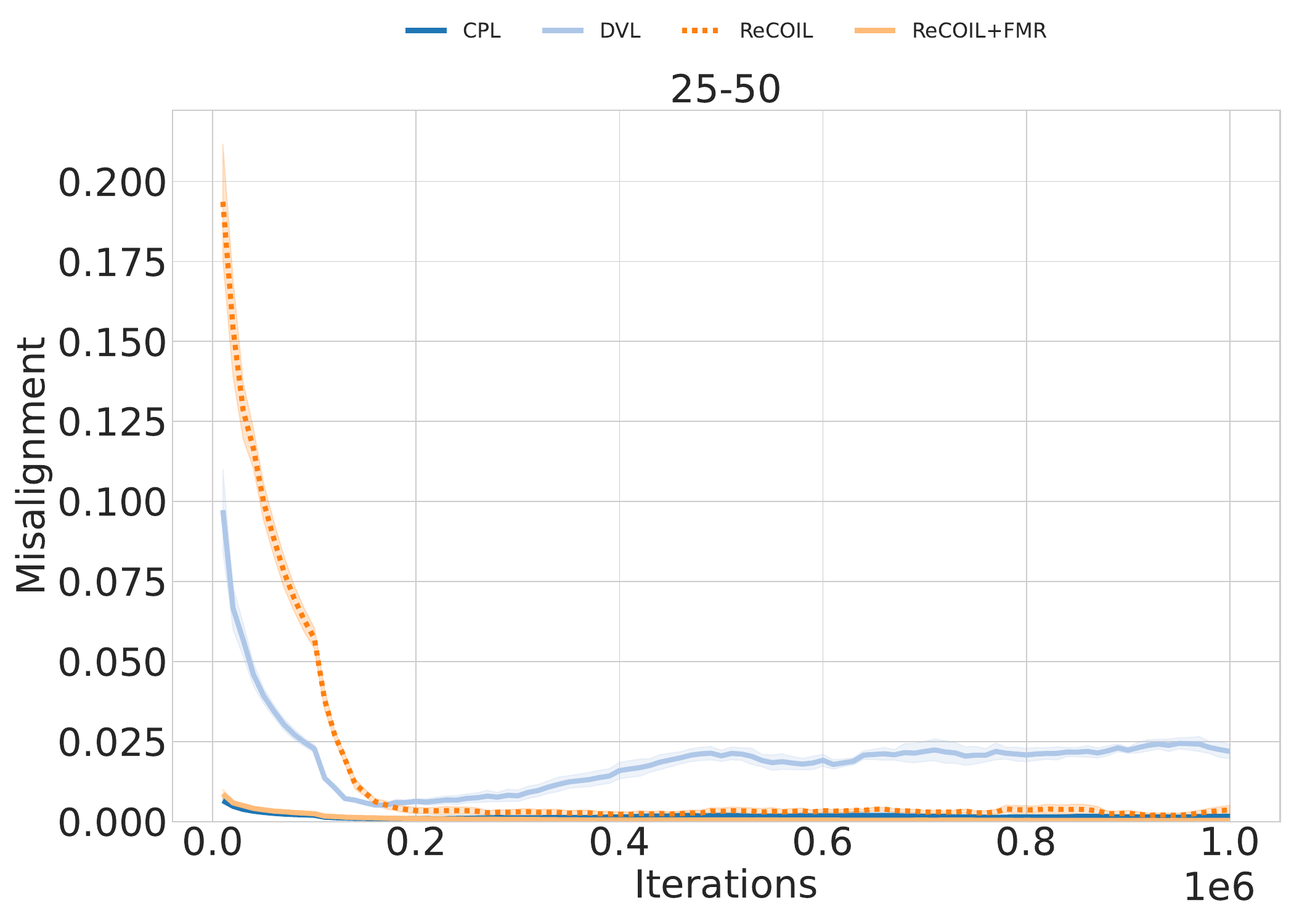}
    \end{subfigure}
    \hfill
    \begin{subfigure}[b]{0.32\textwidth}
        \centering
        \includegraphics[width=\textwidth,trim=0 0 0 81,clip]{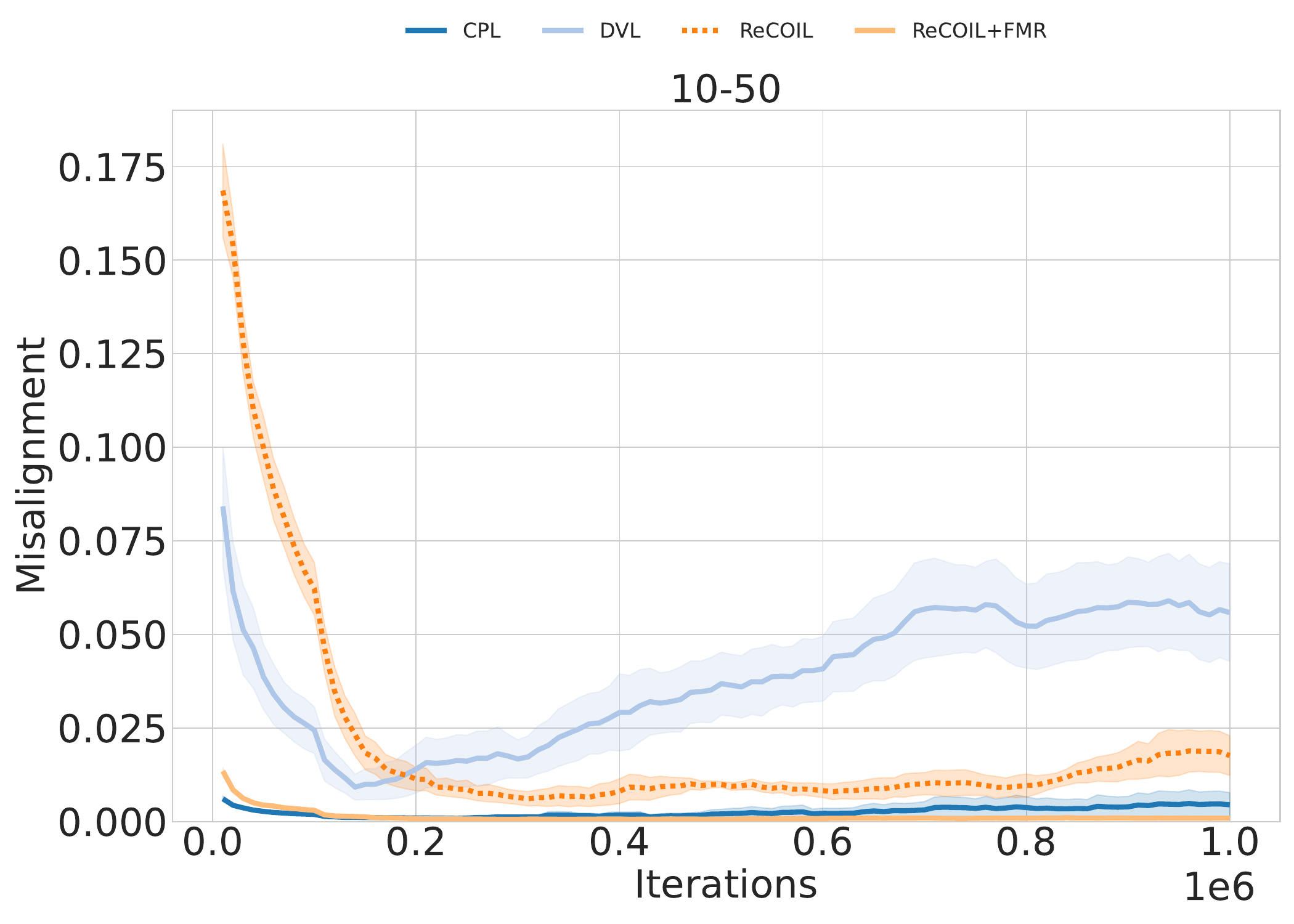}
    \end{subfigure}
    
    \caption{SlowSwim learning curves comparing FMR to alternative methods for utilizing evaluative feedback. The shaded region represents the standard error.}
    \label{fig:swim-cpl-dvl}
\end{figure}

\begin{figure}[h]
    \centering
    
    \begin{subfigure}[b]{\textwidth}
        \centering
        \includegraphics[width=0.8\textwidth,trim=0 665 0 0,clip]{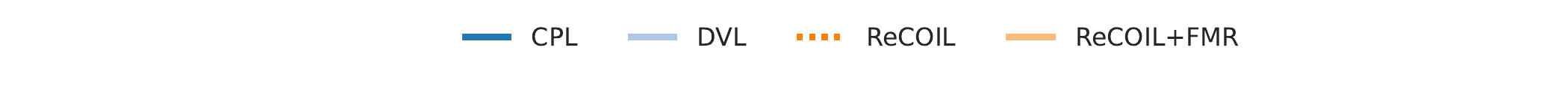}
    \end{subfigure}
    
    \vspace{0.5em}
    
    \begin{subfigure}[b]{0.32\textwidth}
        \centering
        \includegraphics[width=\textwidth,trim=0 44 0 55,clip]{images/cpl-dvl/SlowHop/return_50-50.pdf}
    \end{subfigure}
    \hfill
    \begin{subfigure}[b]{0.32\textwidth}
        \centering
        \includegraphics[width=\textwidth,trim=0 44 0 55,clip]{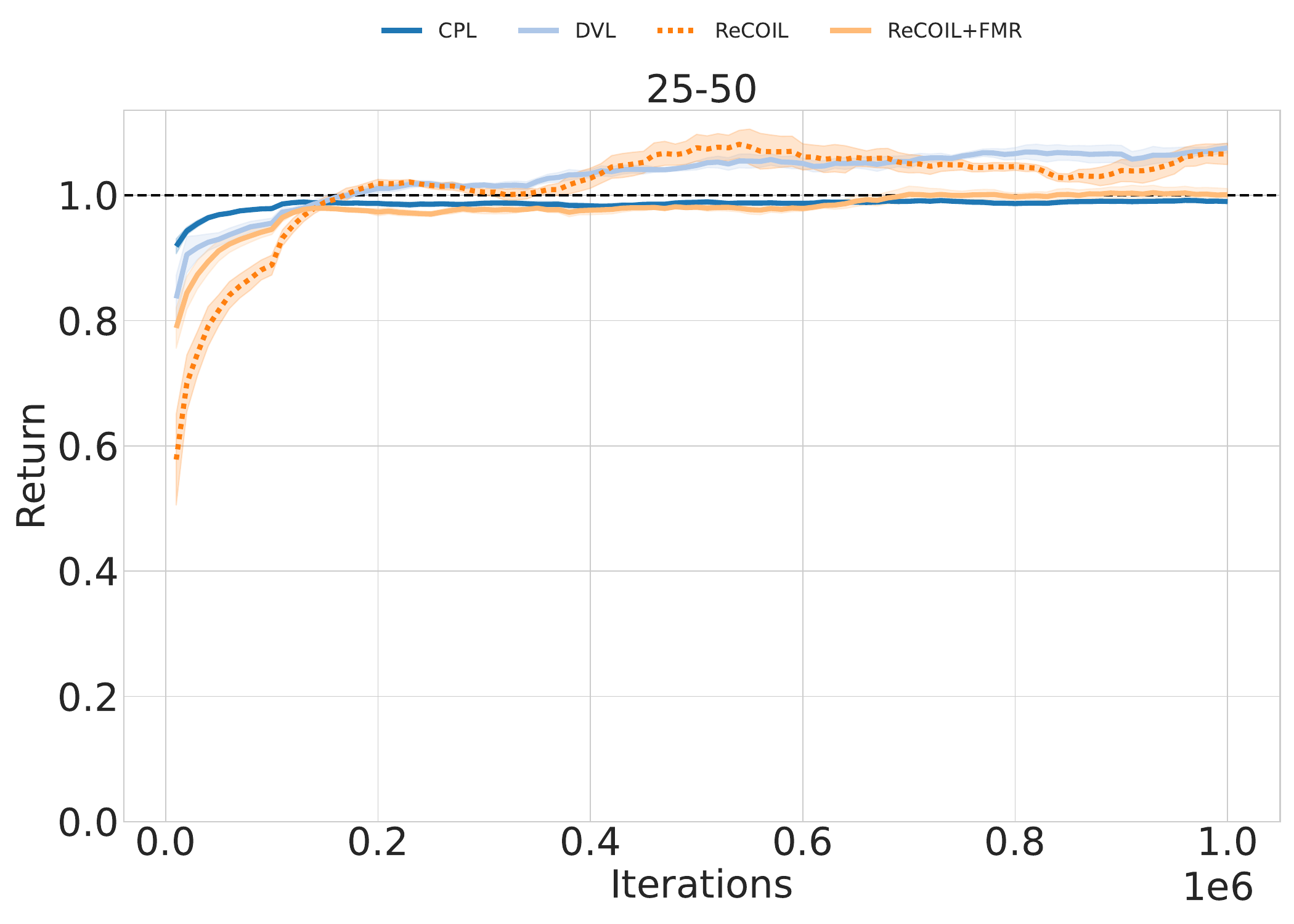}
    \end{subfigure}
    \hfill
    \begin{subfigure}[b]{0.32\textwidth}
        \centering
        \includegraphics[width=\textwidth,trim=0 44 0 55,clip]{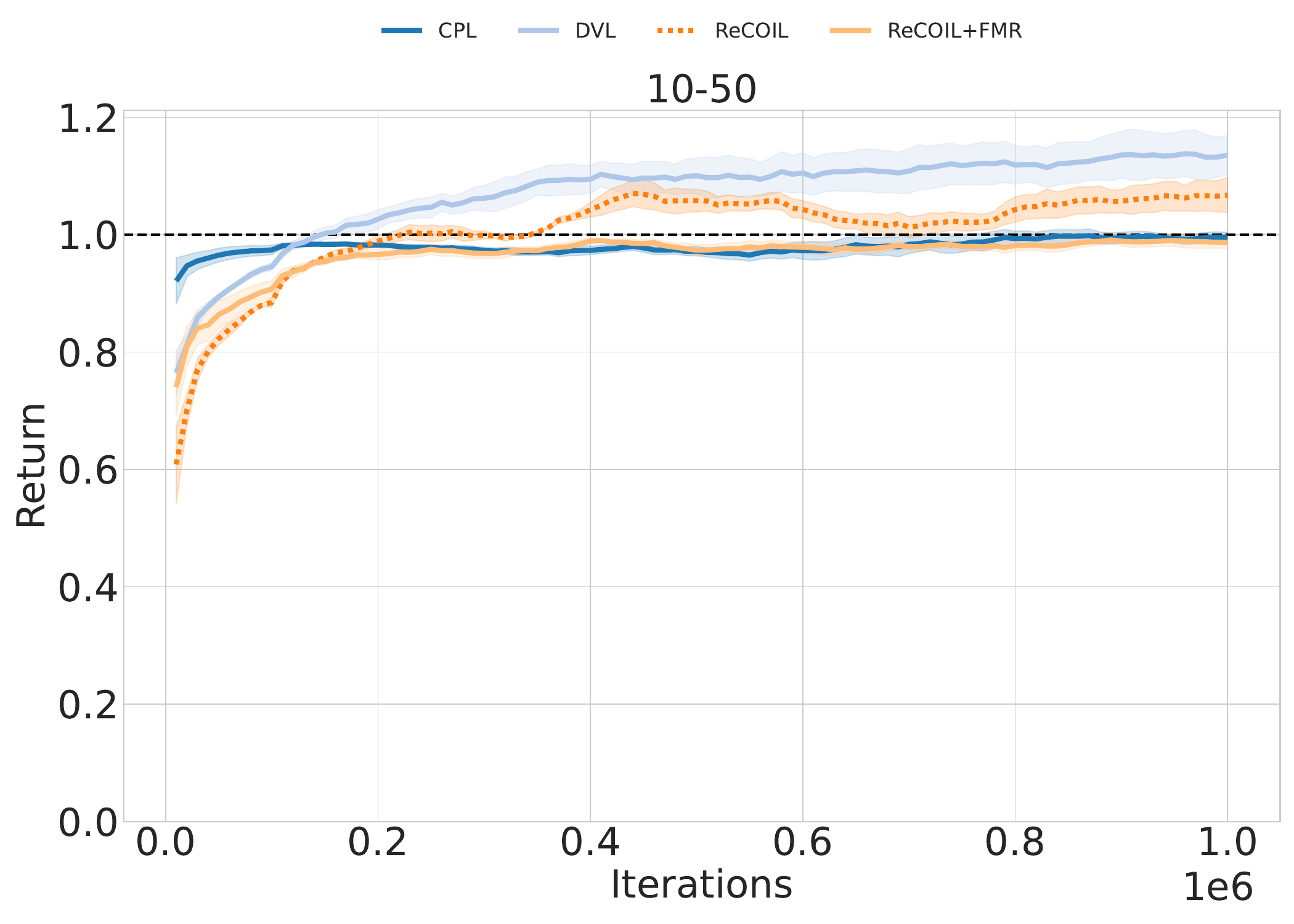}
    \end{subfigure}
    
    \vspace{-1em}
    \begin{subfigure}[b]{0.32\textwidth}
        \centering
        \includegraphics[width=\textwidth,trim=0 0 0 81,clip]{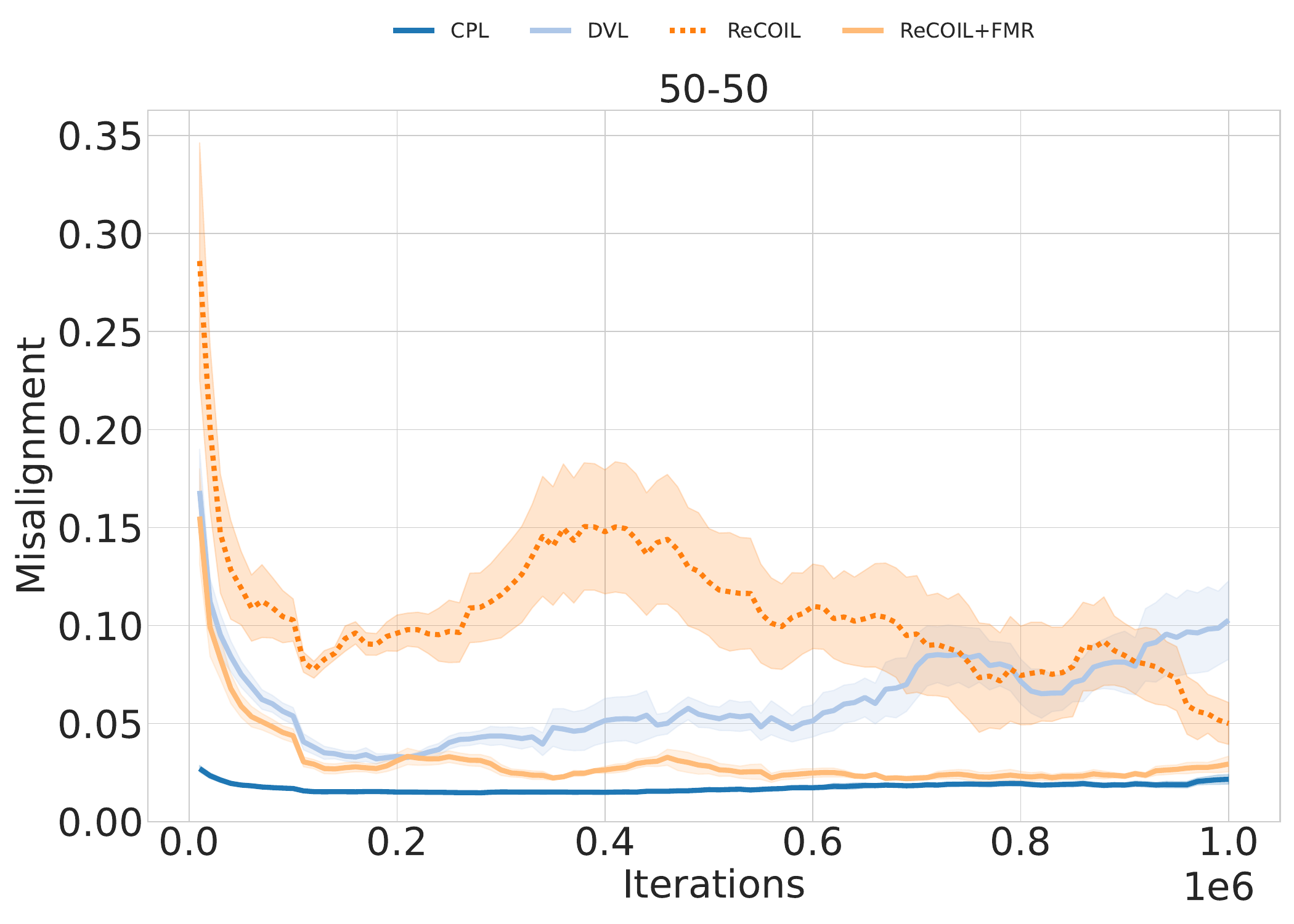}
    \end{subfigure}
    \hfill
    \begin{subfigure}[b]{0.32\textwidth}
        \centering
        \includegraphics[width=\textwidth,trim=0 0 0 81,clip]{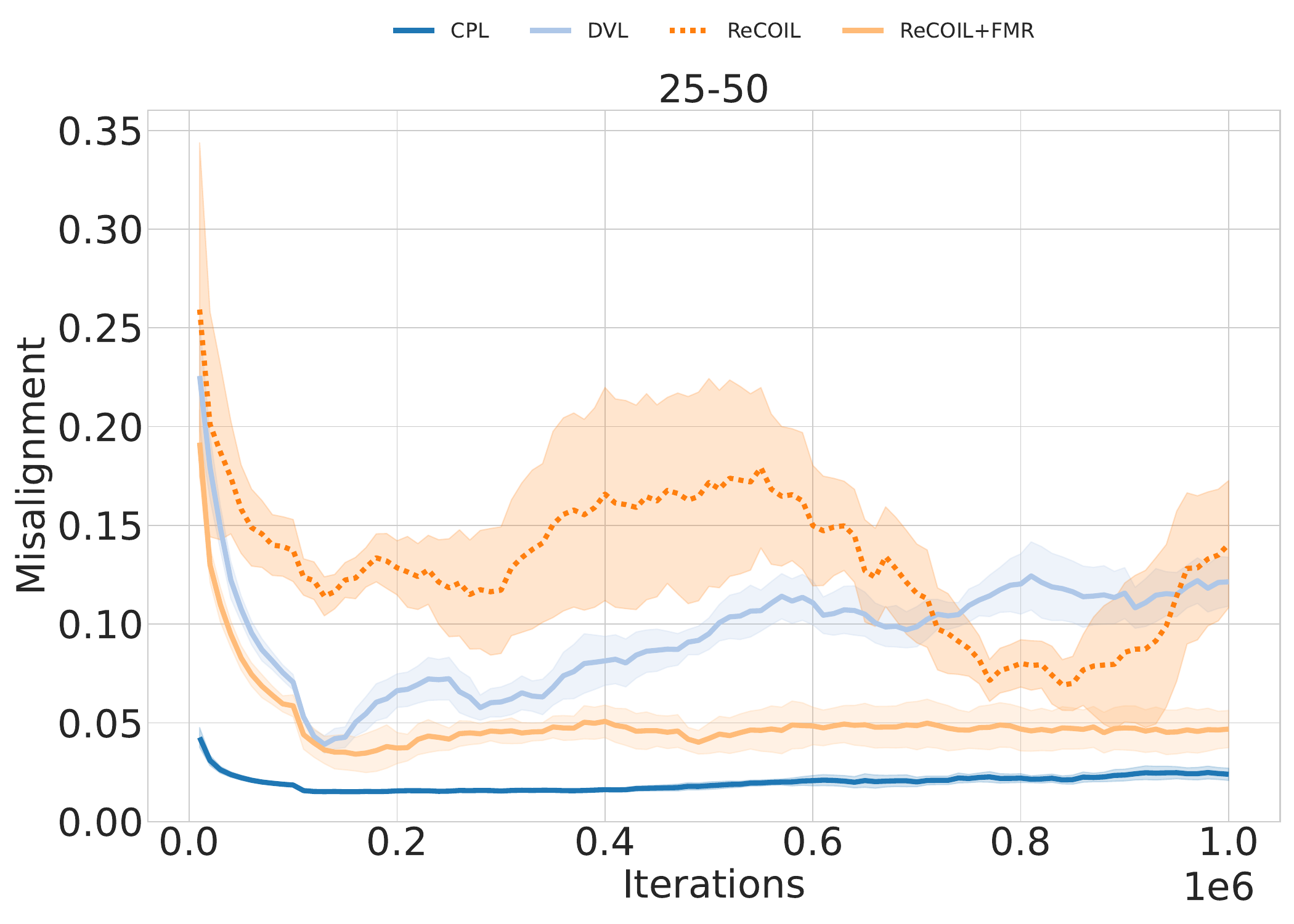}
    \end{subfigure}
    \hfill
    \begin{subfigure}[b]{0.32\textwidth}
        \centering
        \includegraphics[width=\textwidth,trim=0 0 0 81,clip]{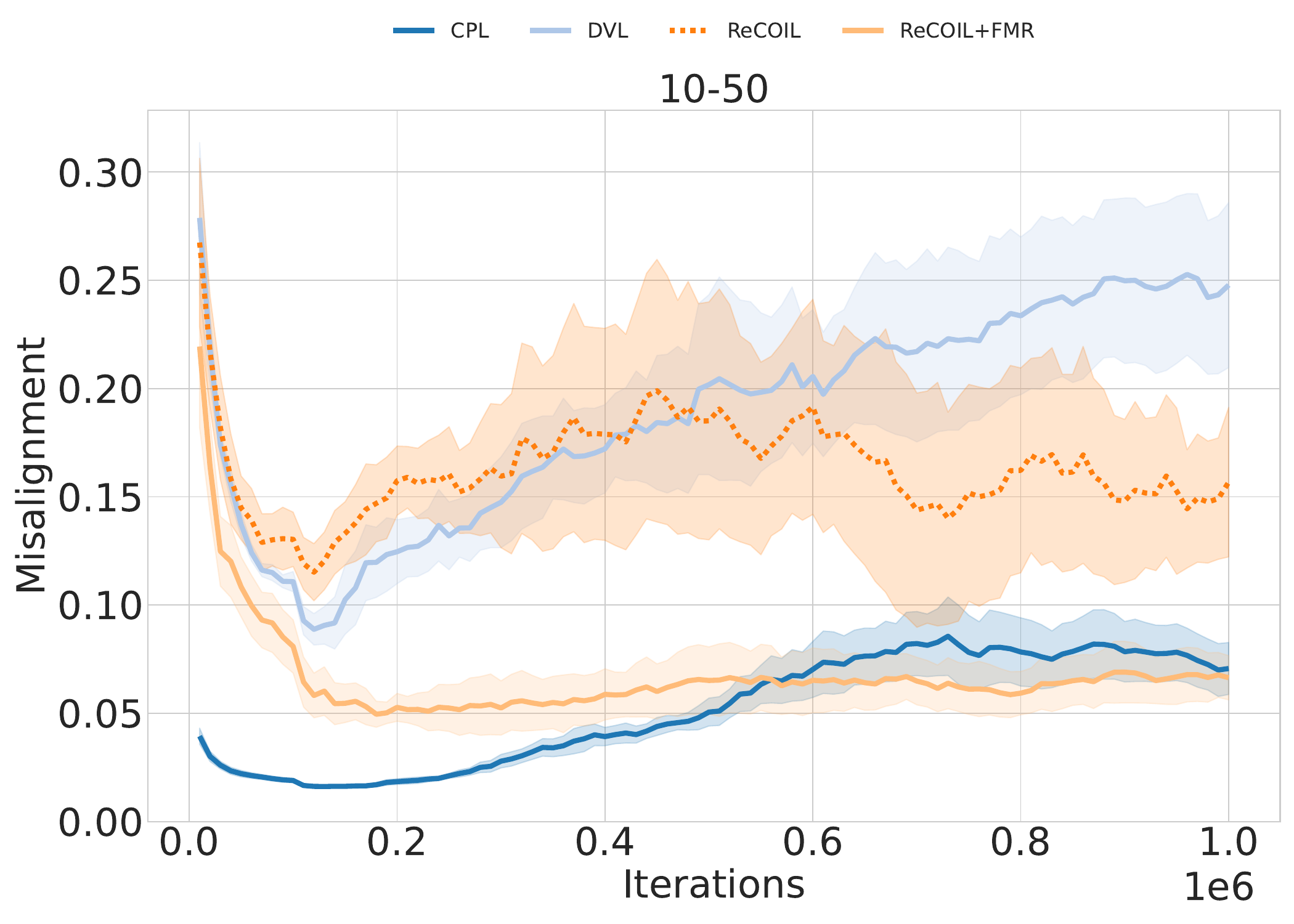}
    \end{subfigure}
    
    \caption{SlowHop learning curves comparing FMR to alternative methods for utilizing evaluative feedback. The shaded region represents the standard error.}
    \label{fig:hop-cpl-dvl}
\end{figure}

\begin{figure}[h]
    \centering
    
    \begin{subfigure}[b]{\textwidth}
        \centering
        \includegraphics[width=0.8\textwidth,trim=0 665 0 0,clip]{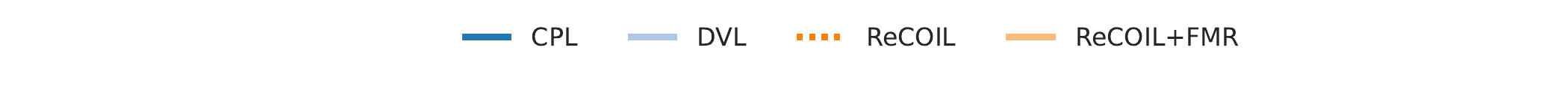}
    \end{subfigure}
    
    \vspace{0.5em}
    
    \begin{subfigure}[b]{0.32\textwidth}
        \centering
        \includegraphics[width=\textwidth,trim=0 44 0 55,clip]{images/cpl-dvl/SlowWalk/return_50-50.pdf}
    \end{subfigure}
    \hfill
    \begin{subfigure}[b]{0.32\textwidth}
        \centering
        \includegraphics[width=\textwidth,trim=0 44 0 55,clip]{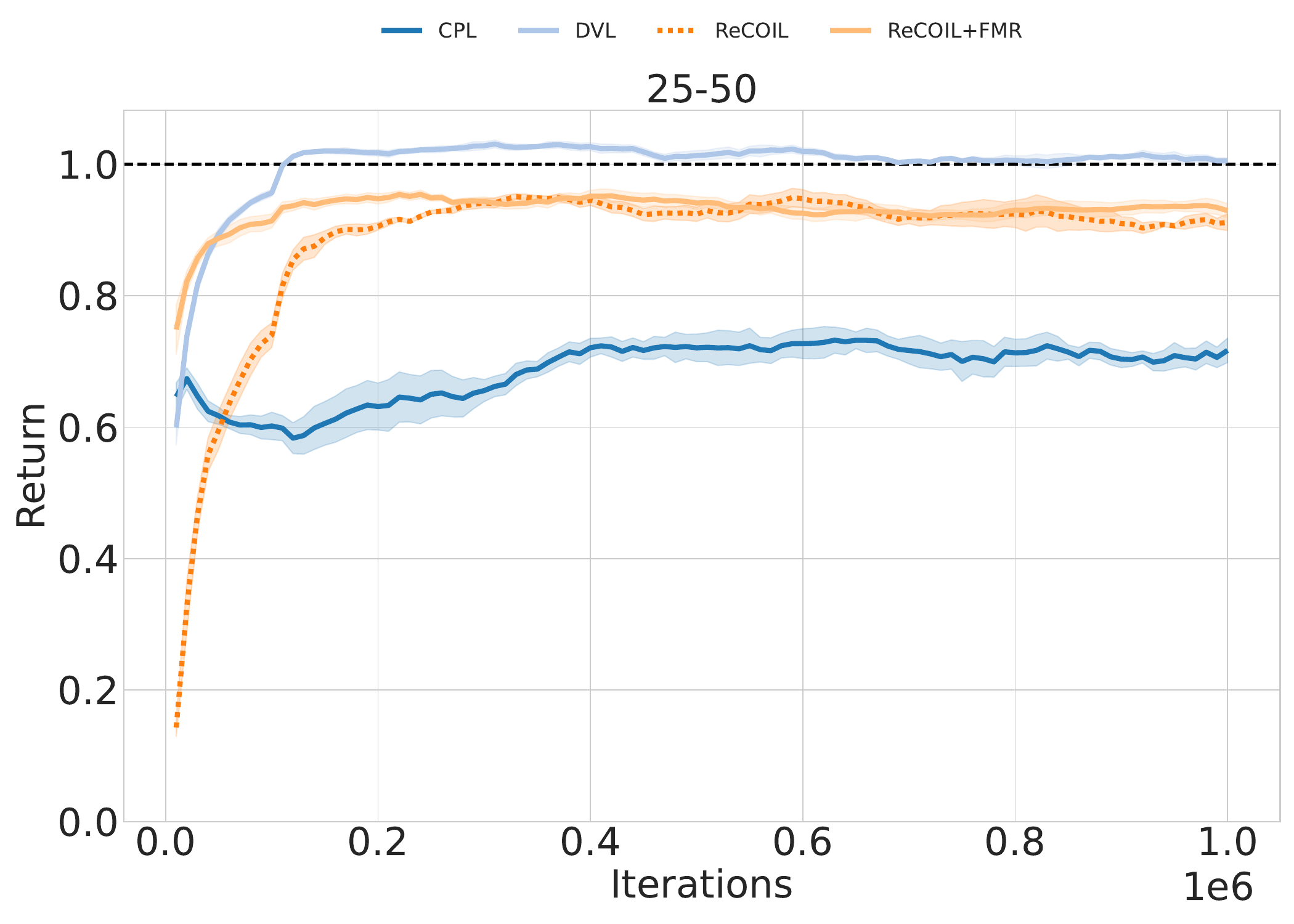}
    \end{subfigure}
    \hfill
    \begin{subfigure}[b]{0.32\textwidth}
        \centering
        \includegraphics[width=\textwidth,trim=0 44 0 55,clip]{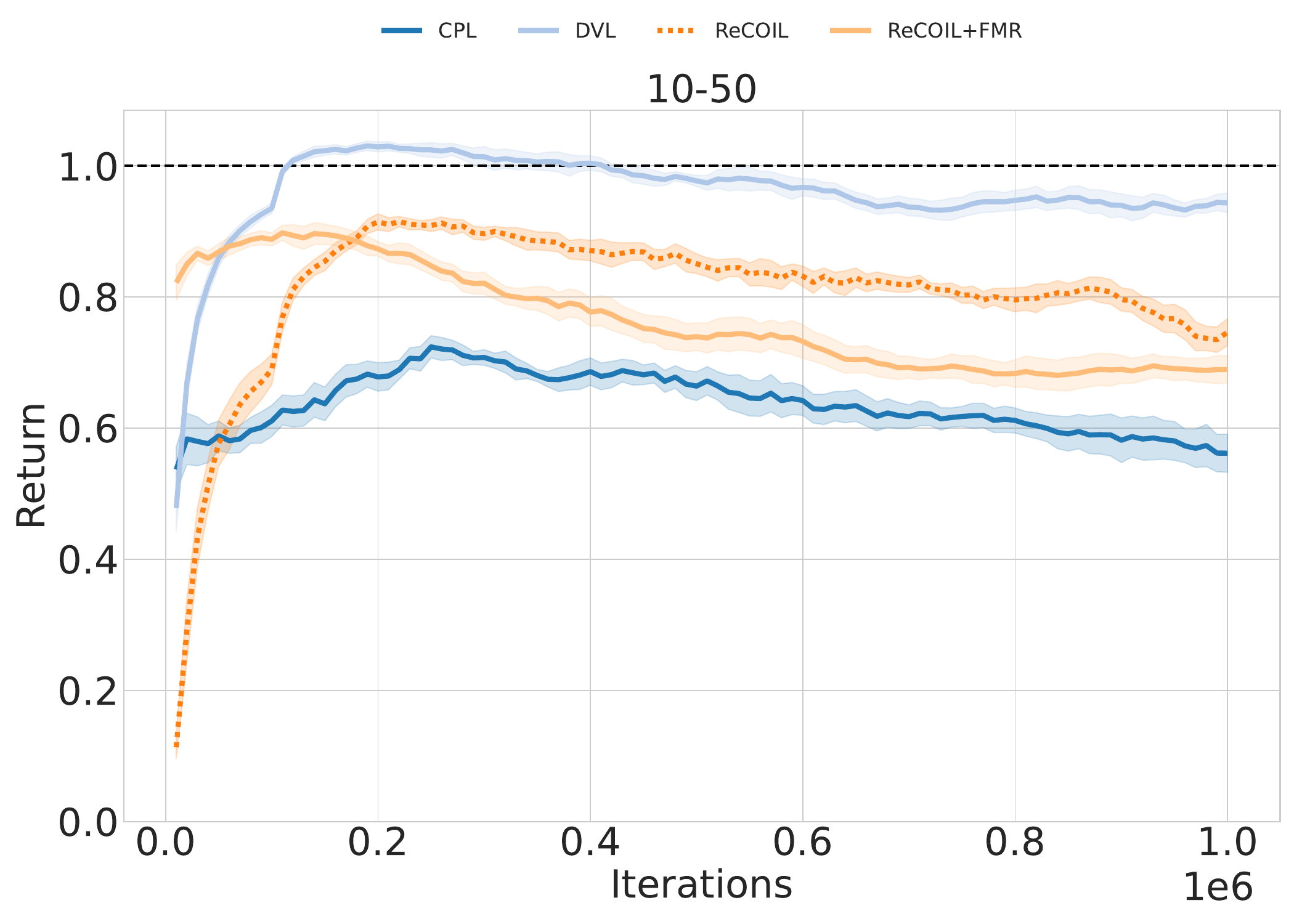}
    \end{subfigure}
    
    \vspace{-1em}
    \begin{subfigure}[b]{0.32\textwidth}
        \centering
        \includegraphics[width=\textwidth,trim=0 0 0 81,clip]{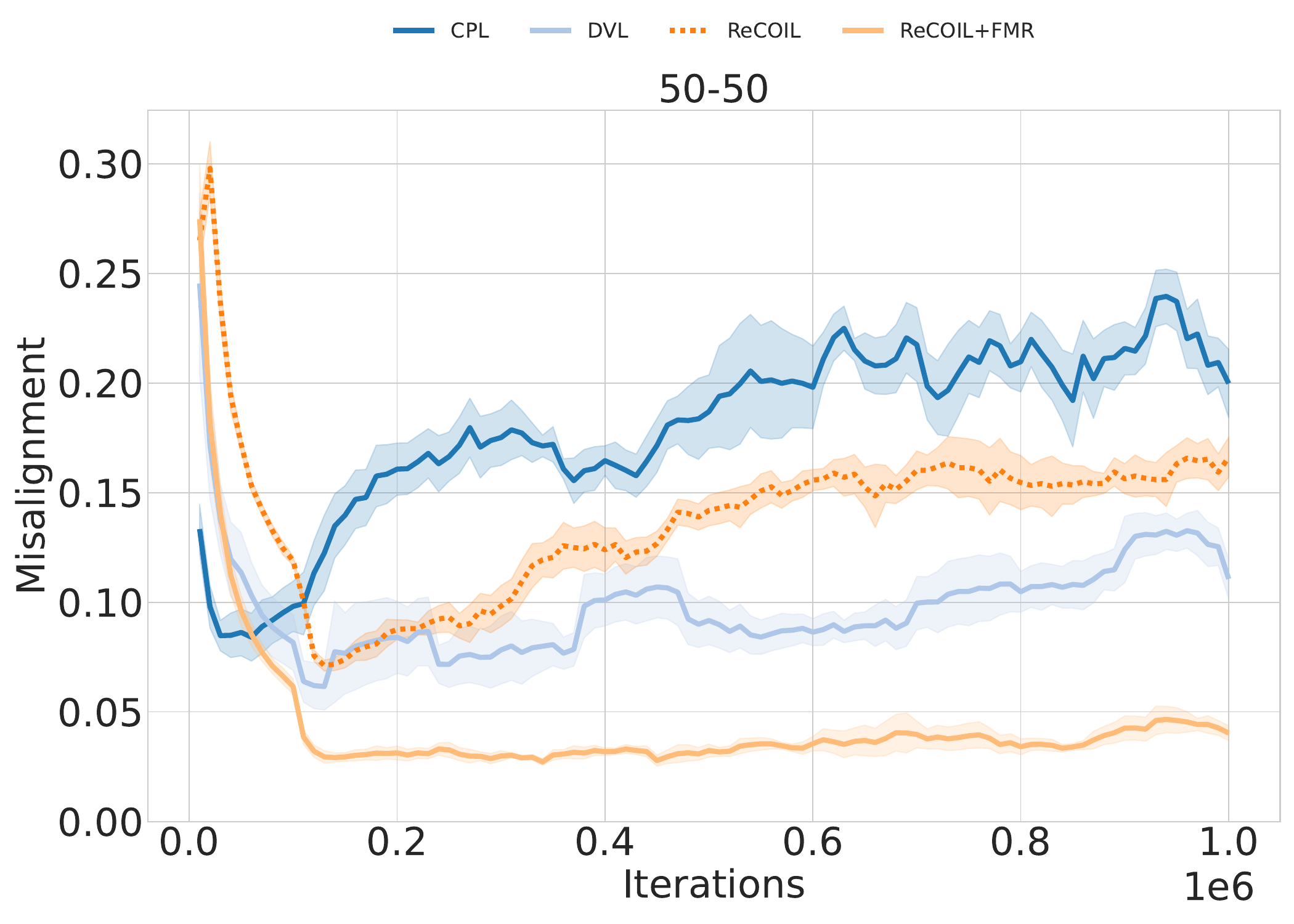}
    \end{subfigure}
    \hfill
    \begin{subfigure}[b]{0.32\textwidth}
        \centering
        \includegraphics[width=\textwidth,trim=0 0 0 81,clip]{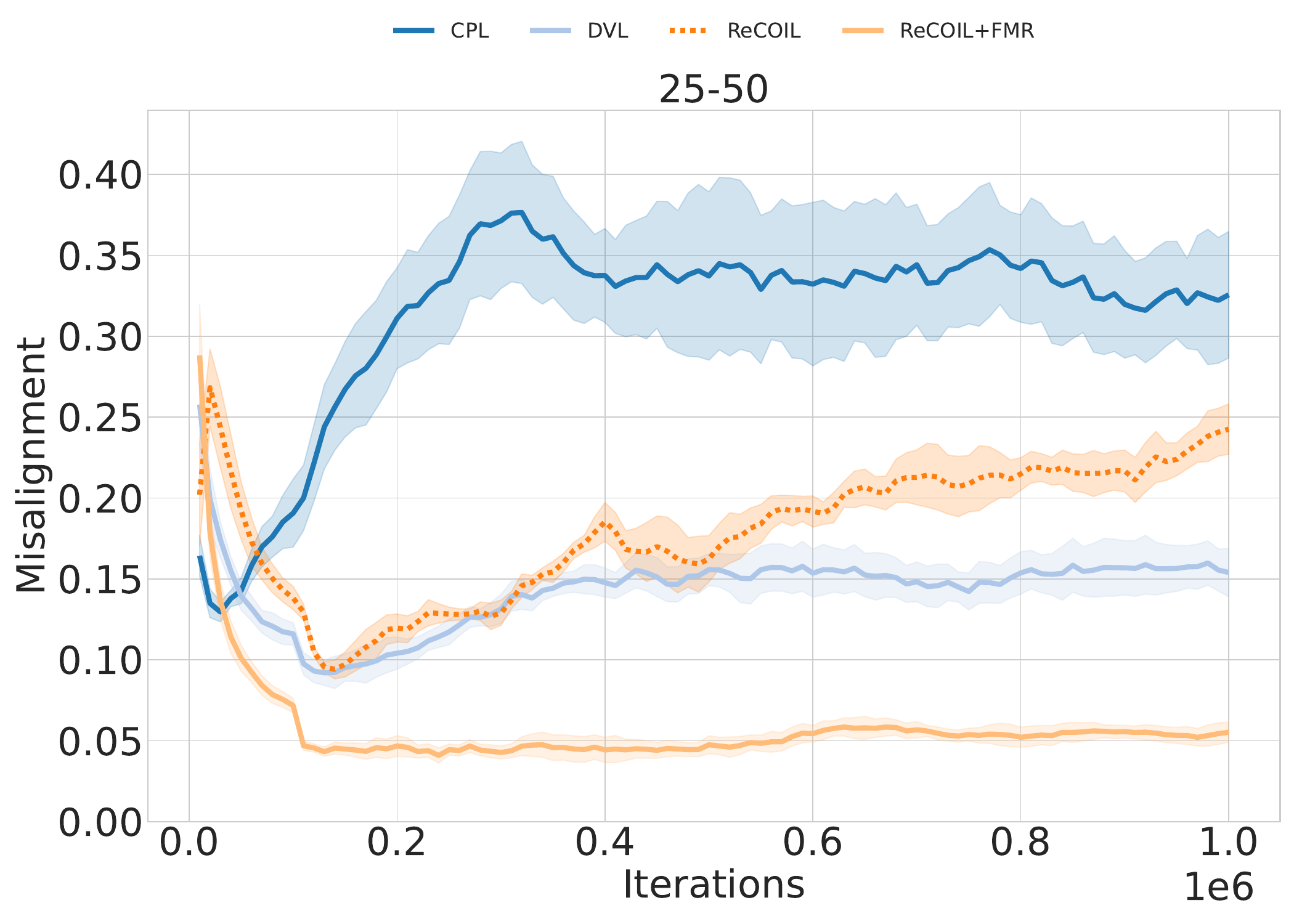}
    \end{subfigure}
    \hfill
    \begin{subfigure}[b]{0.32\textwidth}
        \centering
        \includegraphics[width=\textwidth,trim=0 0 0 81,clip]{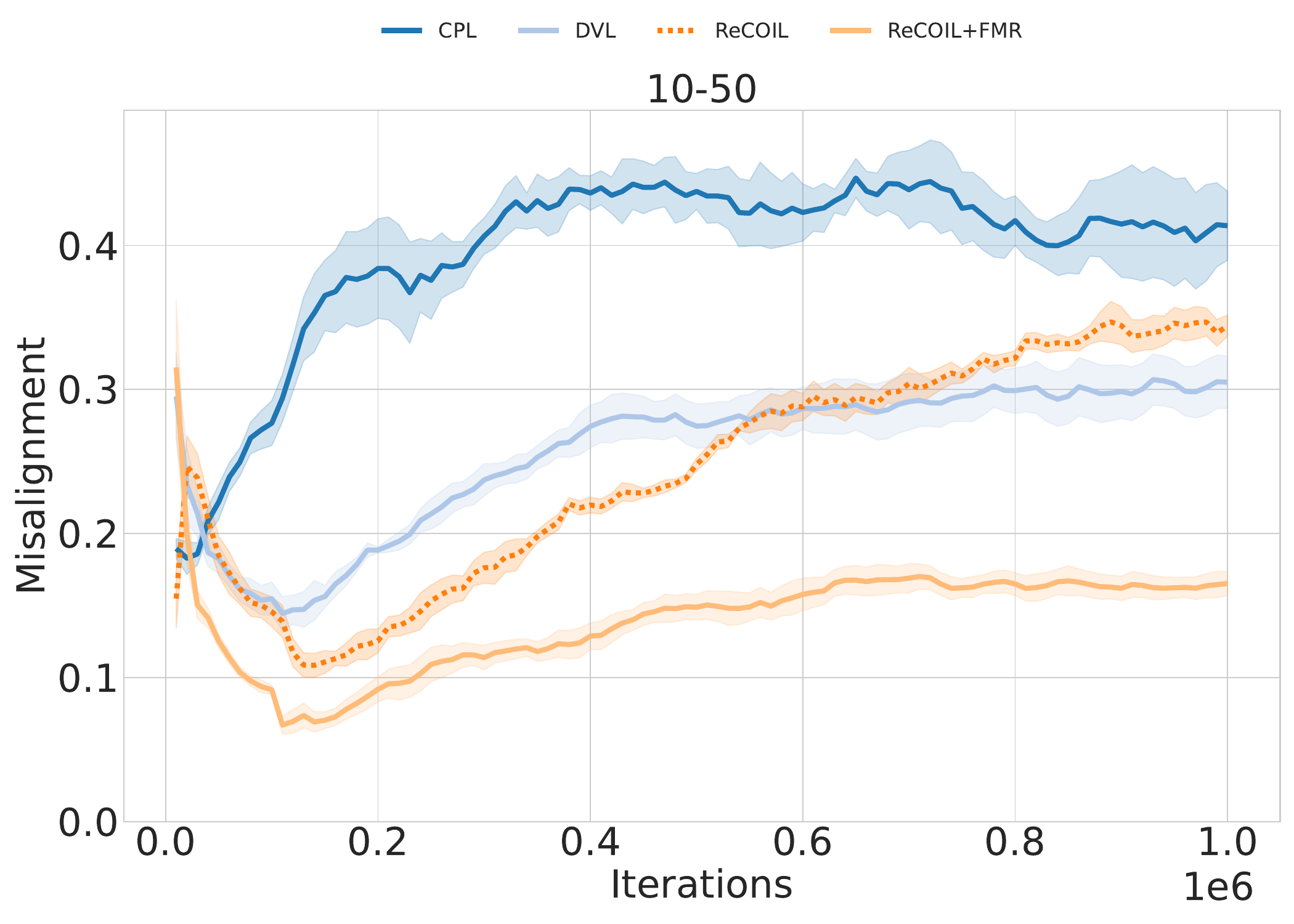}
    \end{subfigure}
    
    \caption{SlowWalk learning curves comparing FMR to alternative methods for utilizing evaluative feedback. The shaded region represents the standard error.}
    \label{fig:walk-cpl-dvl}
\end{figure}

%% file: appendix/no-pos.tex
Below are the results for the navigation aligned policies, comparing with and without positive feedback for $D^I$ using ReCOIL+FMR.  

\begin{table}[h]
\centering
\caption{PathM performance comparison for ReCOIL+FMR with and without positive feedback. Results show mean ± std for the last 10 evaluations, over 5 seeds.}
\label{tab:pathm-no-pos}
\begin{tabular}{llrr}
\toprule
Algorithm & Ratio & Suc. & Mis. \\
\midrule
\multirow[t]{3}{*}{w/ pos} 
& 50-50 & 0.988 ± 0.11 & 0.015 ± 0.08 \\
& 25-50 & 0.974 ± 0.16 & 0.014 ± 0.08 \\
& 10-50 & 0.947 ± 0.22 & 0.020 ± 0.09 \\
\cline{1-4}
\multirow[t]{3}{*}{w/o pos} 
& 50-50 & 0.993 ± 0.08 & 0.008 ± 0.06 \\
& 25-50 & 0.982 ± 0.13 & 0.009 ± 0.06 \\
& 10-50 & 0.933 ± 0.25 & 0.007 ± 0.05 \\
\bottomrule
\end{tabular}
\end{table}

\begin{table}[h]
\centering
\caption{PathBB performance comparison for ReCOIL+FMR with and without positive feedback. Results show mean ± std for the last 10 evaluations, over 5 seeds.}
\label{tab:pathbb-no-pos}
\begin{tabular}{llrr}
\toprule
Algorithm & Ratio & Suc. & Mis. \\
\midrule
\multirow[t]{3}{*}{w/ pos} 
& 50-50 & 0.919 ± 0.27 & 0.048 ± 0.17 \\
& 25-50 & 0.880 ± 0.32 & 0.073 ± 0.19 \\
& 10-50 & 0.708 ± 0.45 & 0.097 ± 0.21 \\
\cline{1-4}
\multirow[t]{3}{*}{w/o pos} 
& 50-50 & 0.938 ± 0.24 & 0.036 ± 0.17 \\
& 25-50 & 0.892 ± 0.31 & 0.055 ± 0.20 \\
& 10-50 & 0.746 ± 0.44 & 0.134 ± 0.29 \\
\bottomrule
\end{tabular}
\end{table}

\begin{figure}[h]
    \centering
    
    \begin{subfigure}[b]{\textwidth}
        \centering
        \includegraphics[width=0.8\textwidth,trim=0 665 0 0,clip]{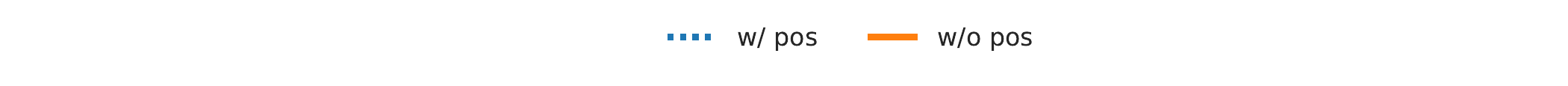}
    \end{subfigure}
    
    \vspace{0.5em}
    
    \begin{subfigure}[b]{0.32\textwidth}
        \centering
        \includegraphics[width=\textwidth,trim=0 44 0 55,clip]{images/no-pos/PathM/success_50-50.pdf}
    \end{subfigure}
    \hfill
    \begin{subfigure}[b]{0.32\textwidth}
        \centering
        \includegraphics[width=\textwidth,trim=0 44 0 55,clip]{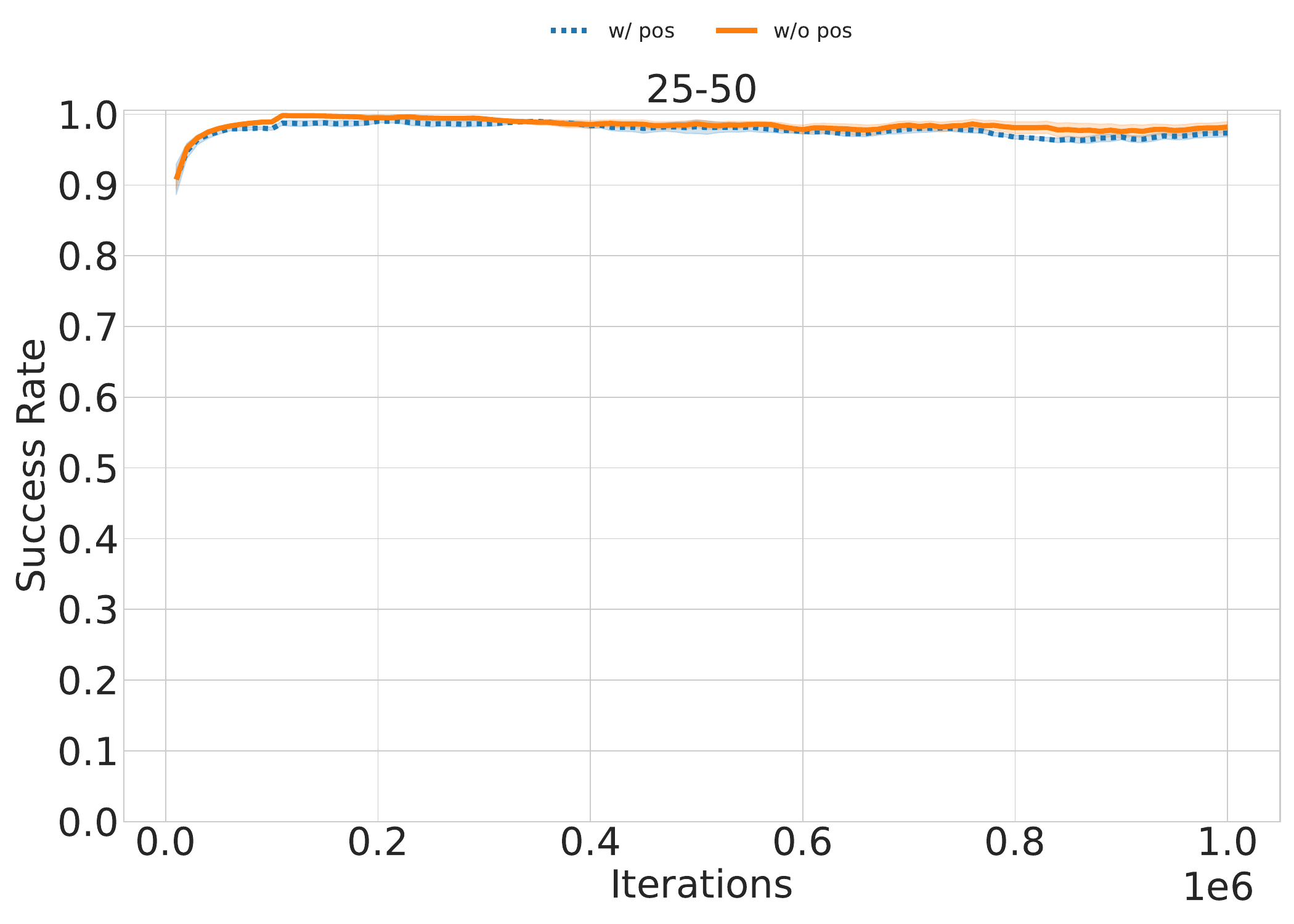}
    \end{subfigure}
    \hfill
    \begin{subfigure}[b]{0.32\textwidth}
        \centering
        \includegraphics[width=\textwidth,trim=0 44 0 55,clip]{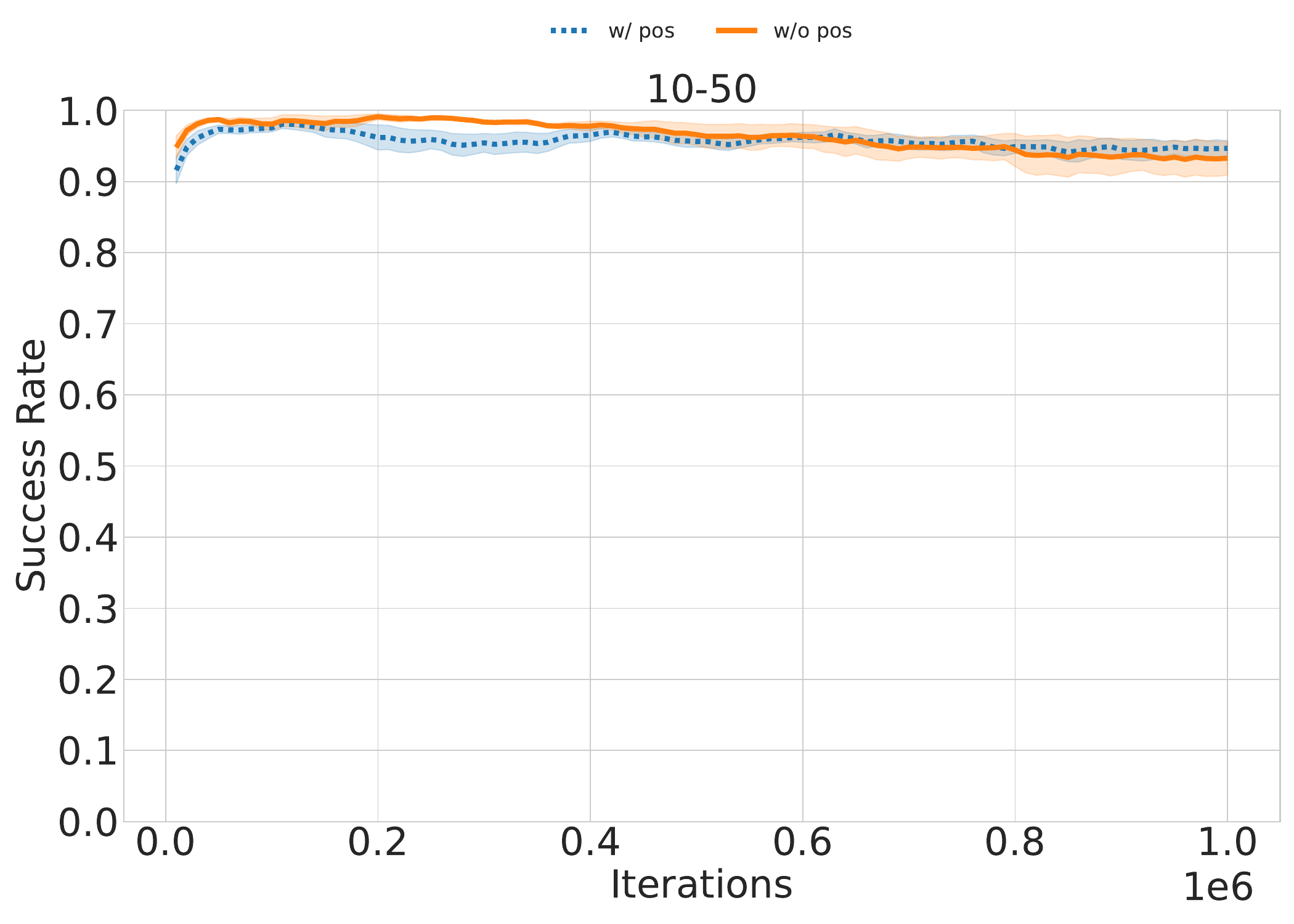}
    \end{subfigure}
    
    \begin{subfigure}[b]{0.32\textwidth}
        \centering
        \includegraphics[width=\textwidth,trim=0 0 0 81,clip]{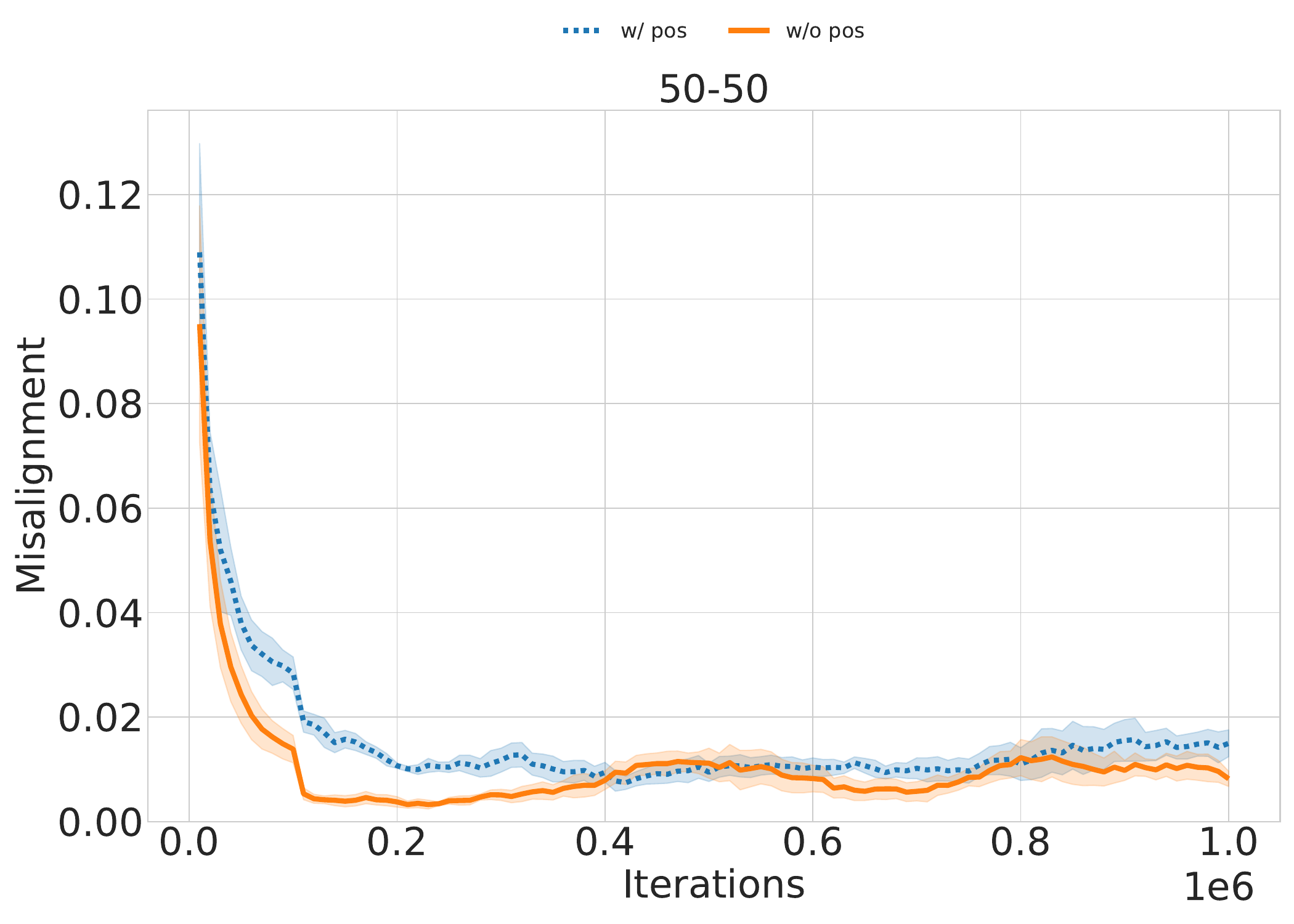}
    \end{subfigure}
    \hfill
    \begin{subfigure}[b]{0.32\textwidth}
        \centering
        \includegraphics[width=\textwidth,trim=0 0 0 81,clip]{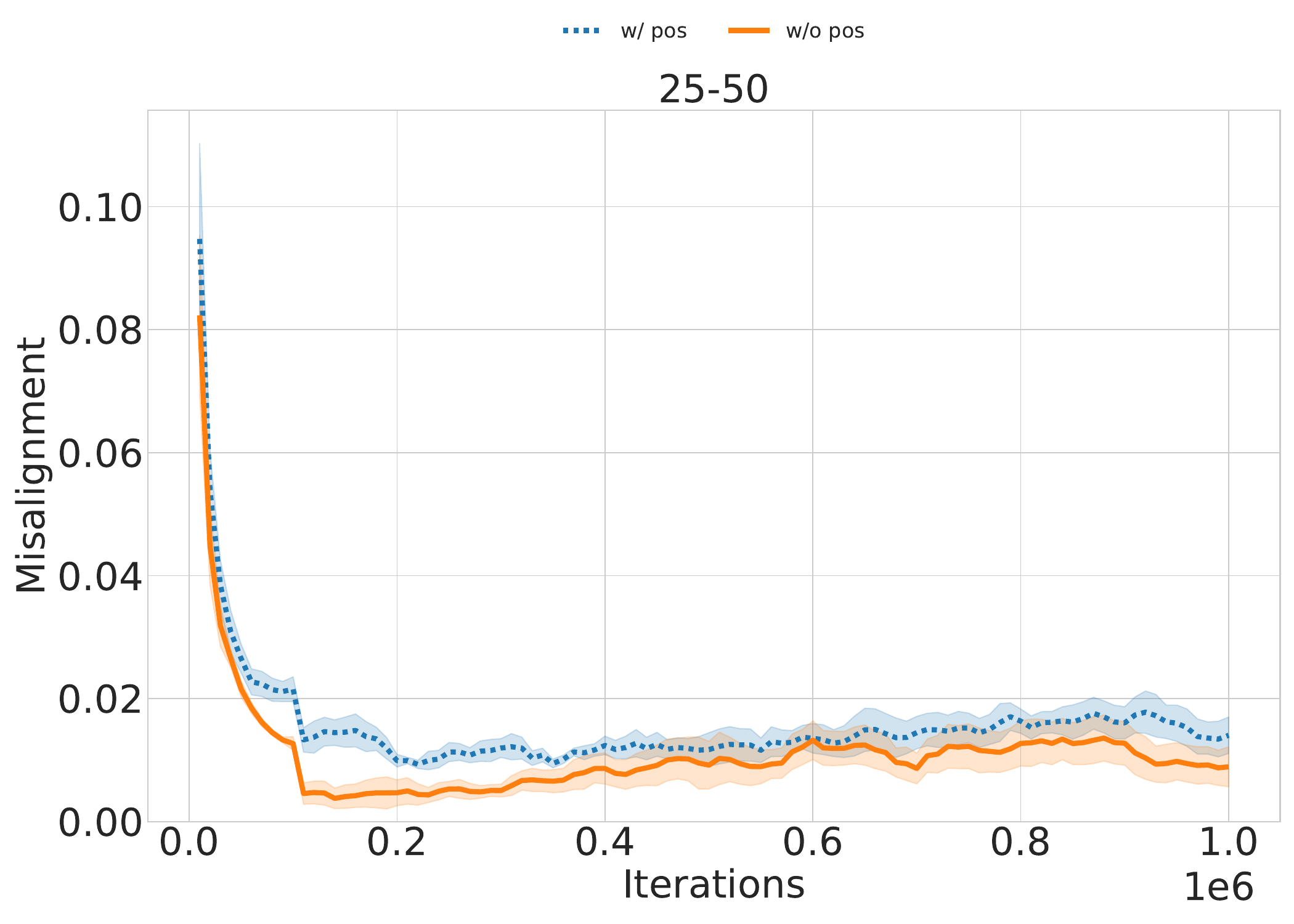}
    \end{subfigure}
    \hfill
    \begin{subfigure}[b]{0.32\textwidth}
        \centering
        \includegraphics[width=\textwidth,trim=0 0 0 81,clip]{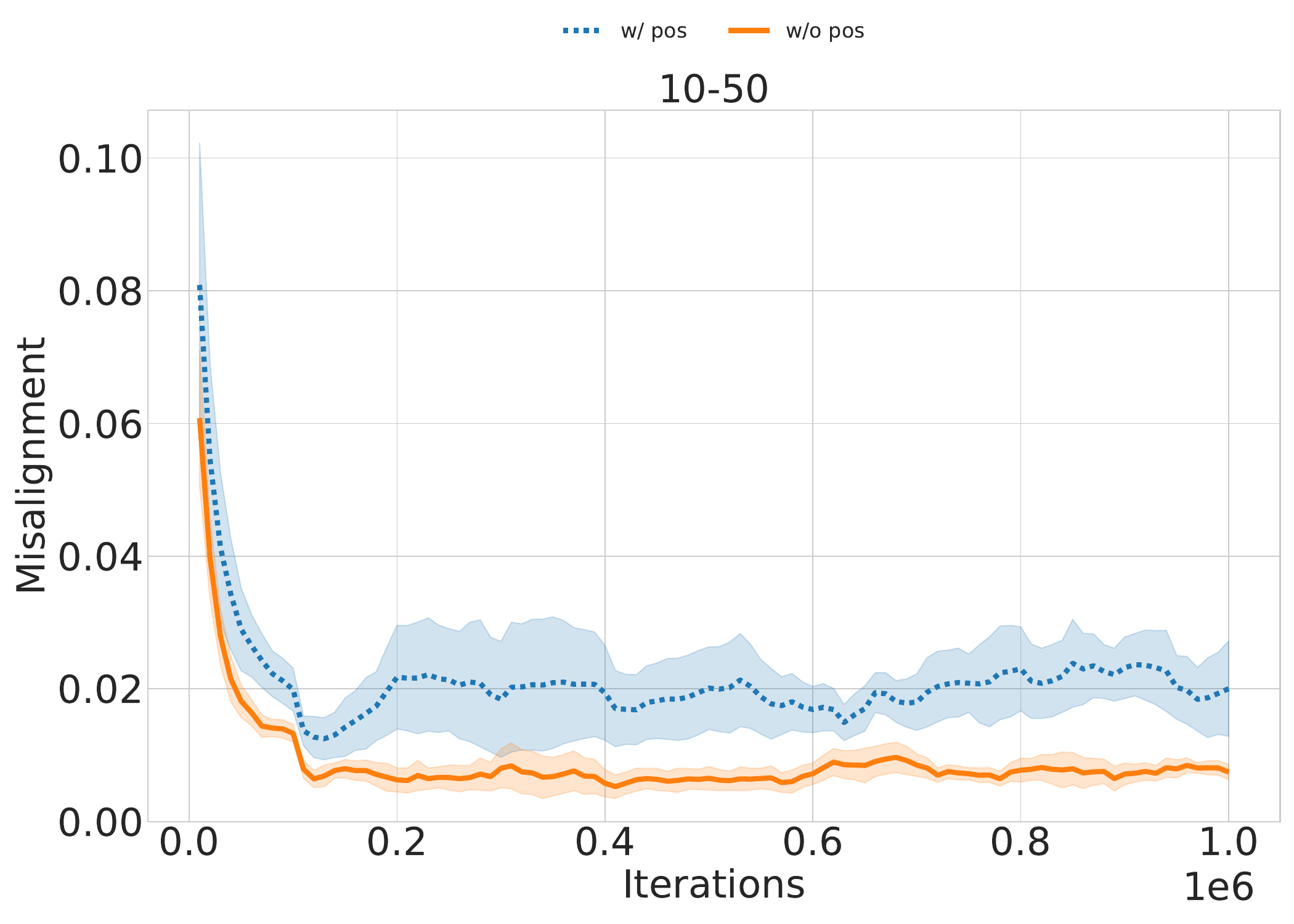}
    \end{subfigure}
    
    \caption{PathM learning curves for ReCOIL+FMR with and without positive feedback. The shaded region represents the standard error.}
    \label{fig:pathm-no-pos}
\end{figure}

\begin{figure}[h]
    \centering
    
    \begin{subfigure}[b]{\textwidth}
        \centering
        \includegraphics[width=0.8\textwidth,trim=0 665 0 0,clip]{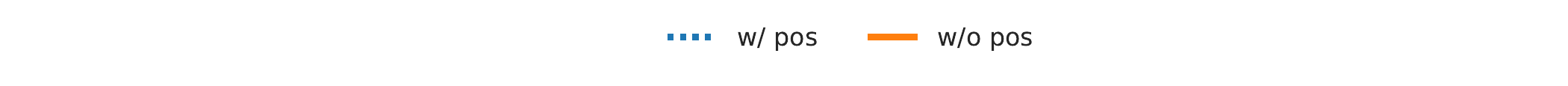}
    \end{subfigure}
    
    \vspace{0.5em}
    
    \begin{subfigure}[b]{0.32\textwidth}
        \centering
        \includegraphics[width=\textwidth,trim=0 44 0 55,clip]{images/no-pos/PathBB/success_50-50.pdf}
    \end{subfigure}
    \hfill
    \begin{subfigure}[b]{0.32\textwidth}
        \centering
        \includegraphics[width=\textwidth,trim=0 44 0 55,clip]{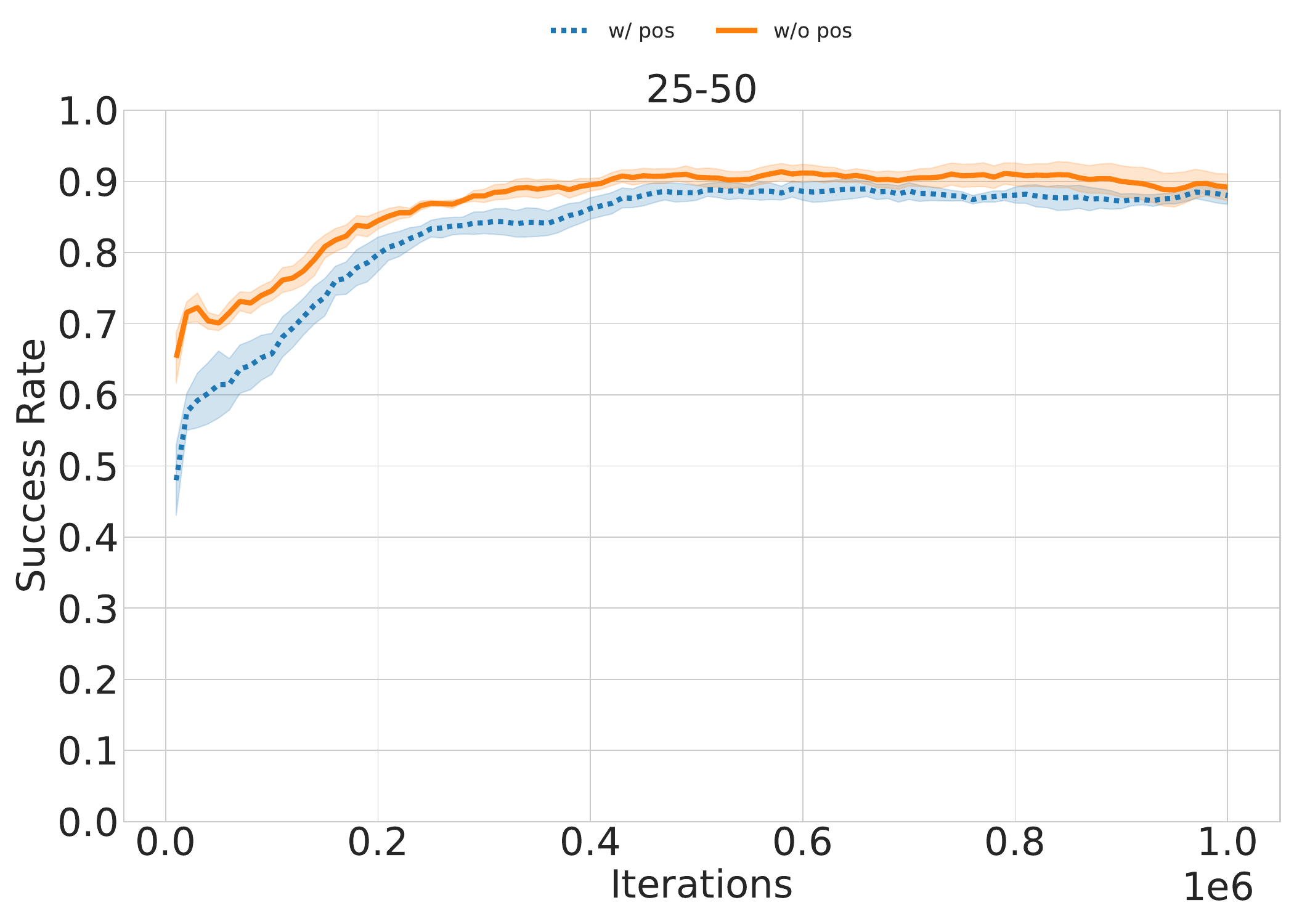}
    \end{subfigure}
    \hfill
    \begin{subfigure}[b]{0.32\textwidth}
        \centering
        \includegraphics[width=\textwidth,trim=0 44 0 55,clip]{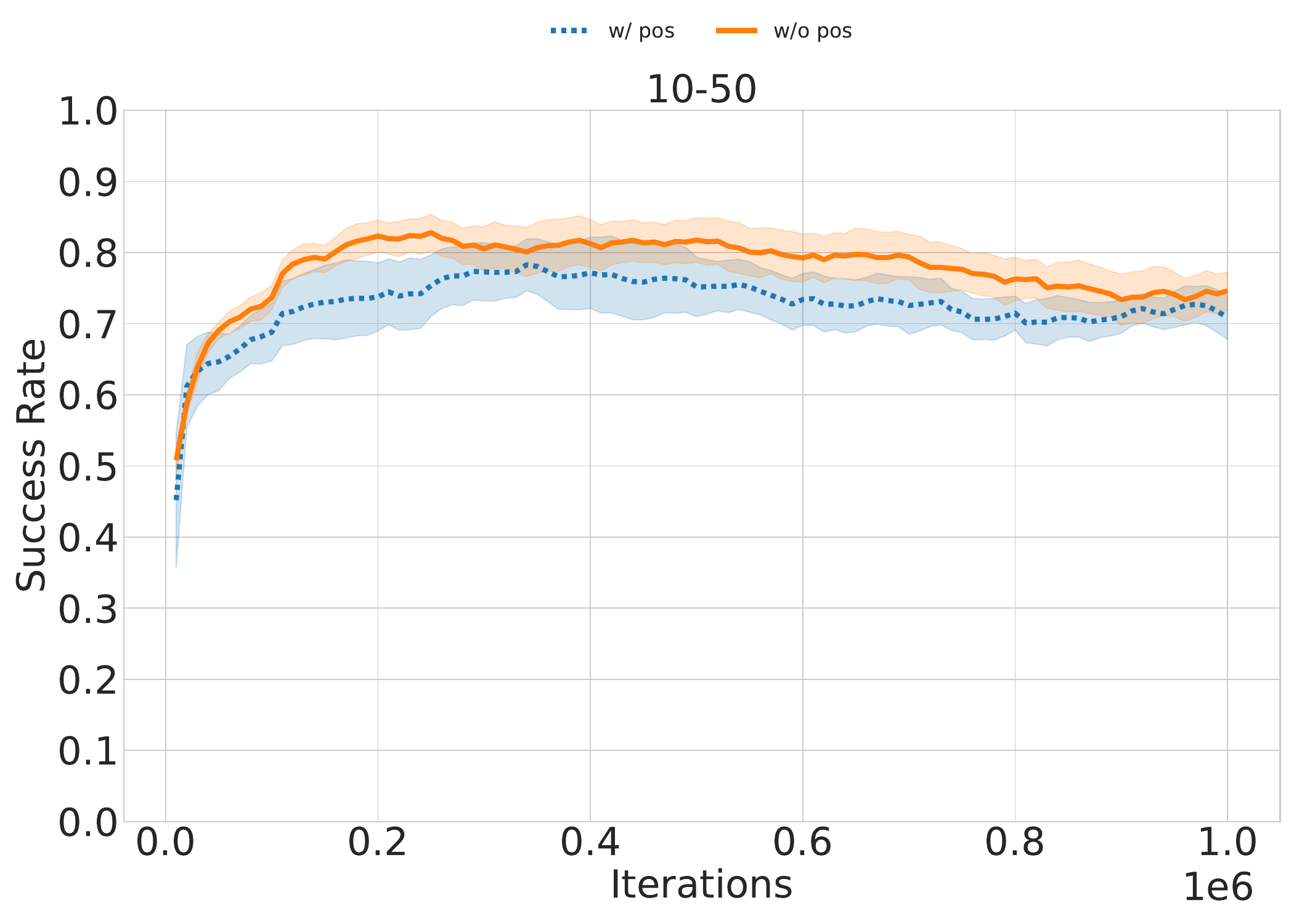}
    \end{subfigure}
    
    \begin{subfigure}[b]{0.32\textwidth}
        \centering
        \includegraphics[width=\textwidth,trim=0 0 0 81,clip]{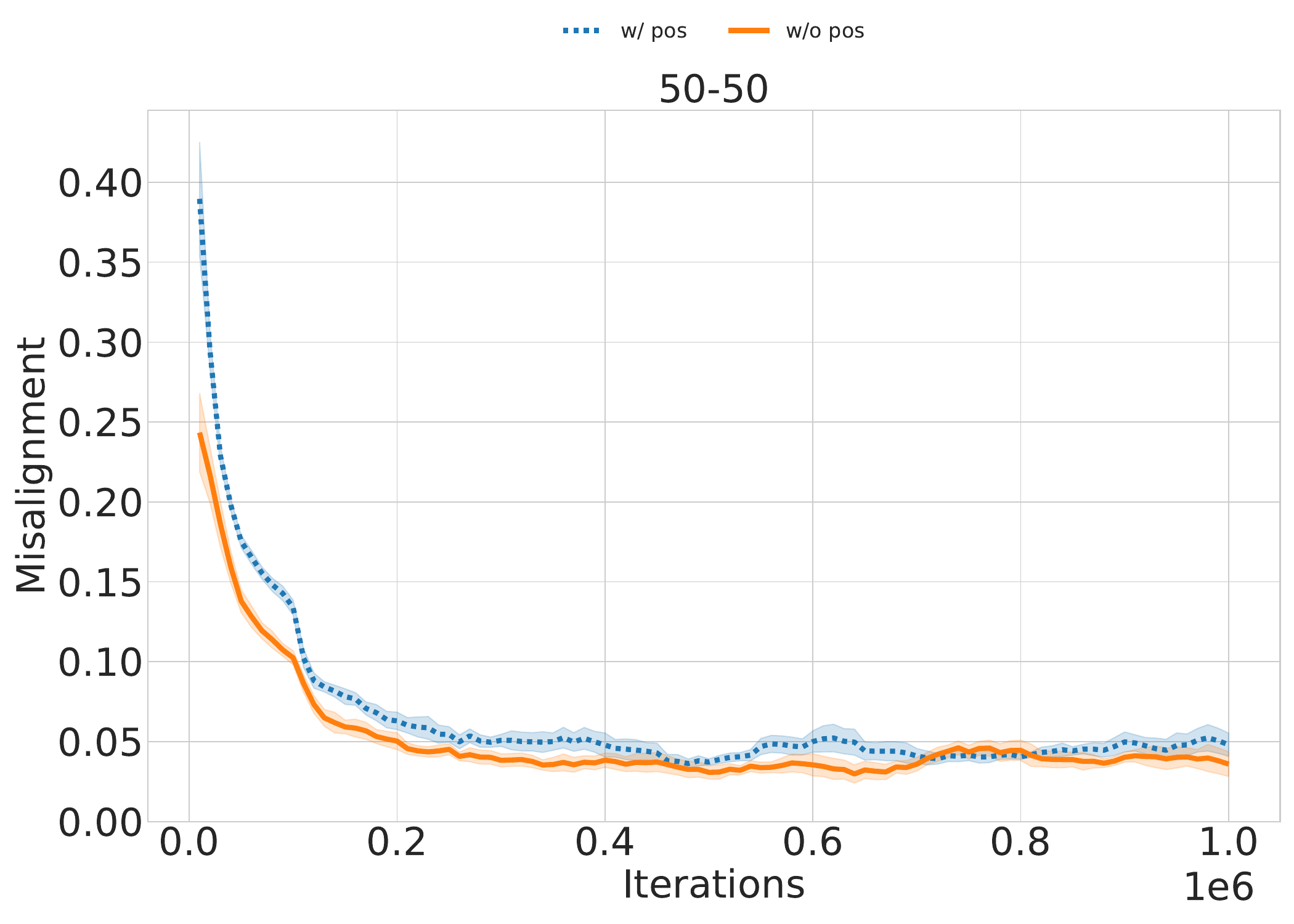}
    \end{subfigure}
    \hfill
    \begin{subfigure}[b]{0.32\textwidth}
        \centering
        \includegraphics[width=\textwidth,trim=0 0 0 81,clip]{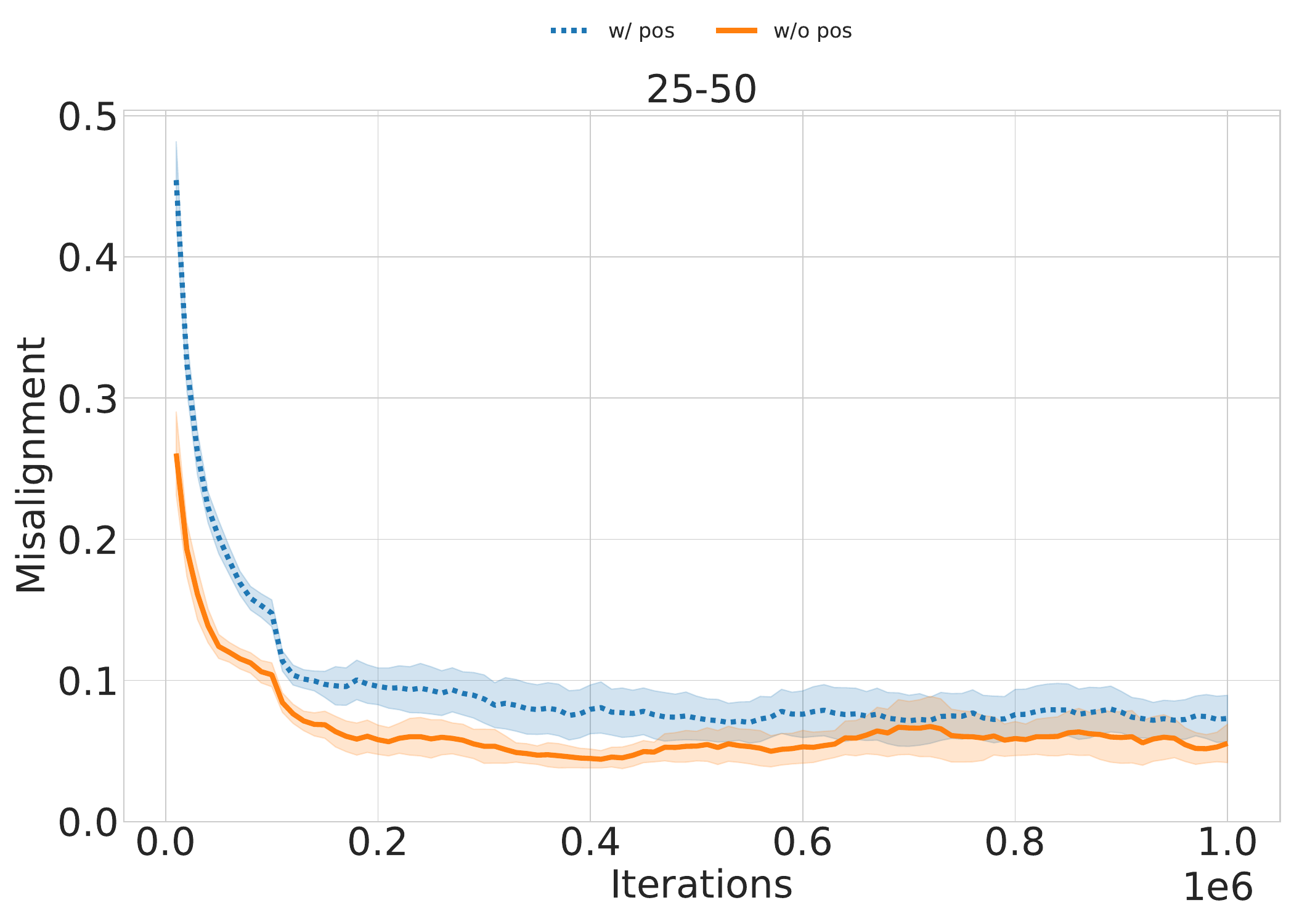}
    \end{subfigure}
    \hfill
    \begin{subfigure}[b]{0.32\textwidth}
        \centering
        \includegraphics[width=\textwidth,trim=0 0 0 81,clip]{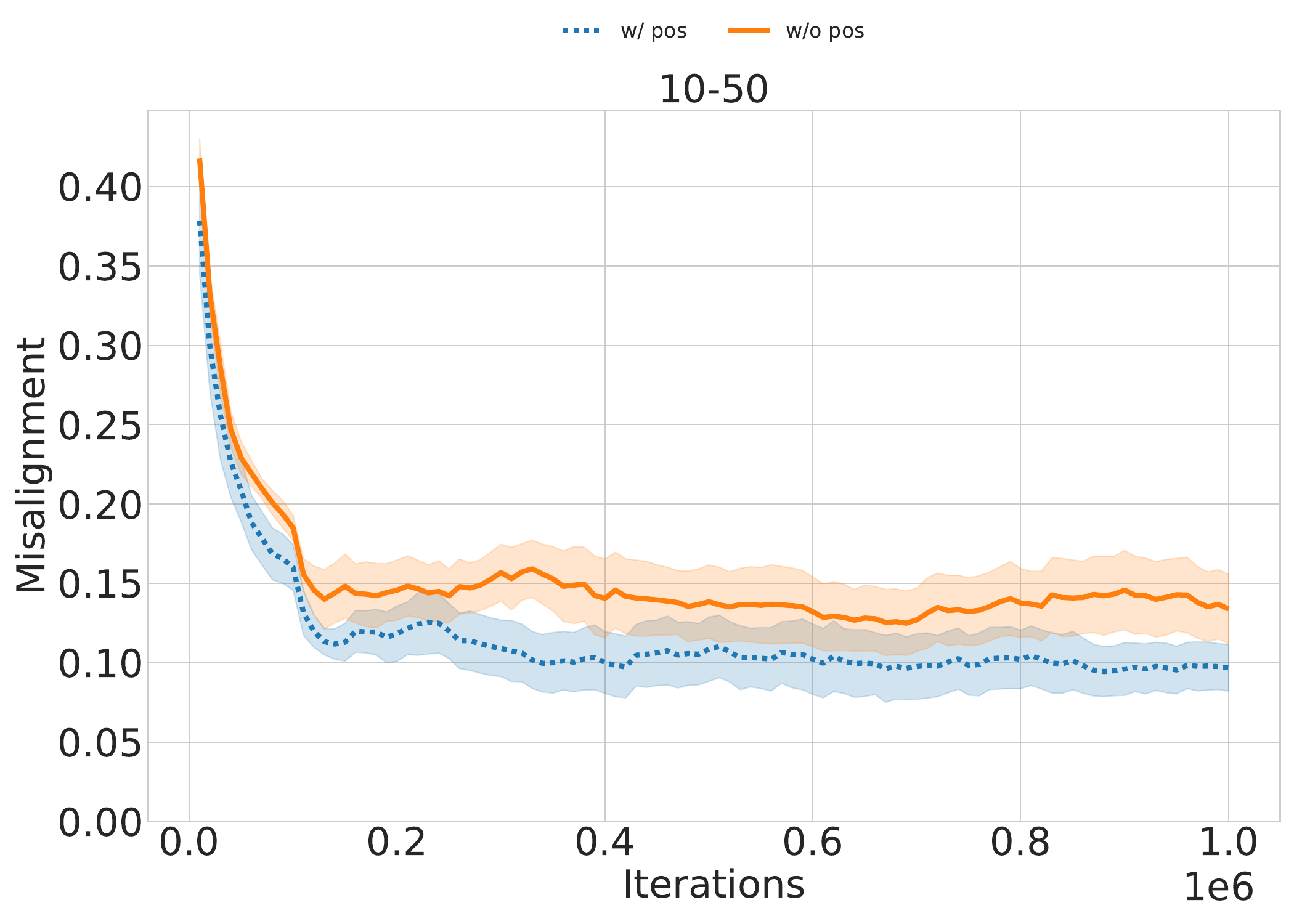}
    \end{subfigure}
    
    \caption{PathBB learning curves for ReCOIL+FMR with and without positive feedback. The shaded region represents the standard error.}
    \label{fig:pathbb-no-pos}
\end{figure}

%% file: appendix/imp-demos.tex
Subsequent results are presented for testing the usefulness of imperfect demonstrations across all tasks. Results compare ReCOIL+FMR trained using a 10–50 data ratio against BC and IQL when trained on 10 randomly selected expert demonstrations with oversampling.

\begin{table}[h]
\centering
\caption{PathM performance comparison between ReCOIL+FMR and BC/IQL trained using only expert demonstrations. Results show mean ± std for the last 10 evaluations, over 5 seeds.}
\label{tab:pathm-imp-demos}
\begin{tabular}{c|cc}
\toprule
Algorithm & Suc. & Mis. \\
\midrule
\multirow[t]{1}{*}{BC Expert} & 0.710 ± 0.45 & 0.167 ± 0.31 \\
\multirow[t]{1}{*}{IQL Expert} & 0.801 ± 0.40 & 0.088 ± 0.24 \\
\multirow[t]{1}{*}{ReCOIL+FMR} & 0.947 ± 0.22 & 0.020 ± 0.09 \\
\bottomrule
\end{tabular}
\end{table}

\begin{table}[h]
\centering
\caption{PathBB performance comparison between ReCOIL+FMR and BC/IQL trained using only expert demonstrations. Results show mean ± std for the last 10 evaluations, over 5 seeds.}
\label{tab:pathbb-imp-demos}
\begin{tabular}{c|cc}
\toprule
Algorithm & Suc. & Mis. \\
\midrule
\multirow[t]{1}{*}{BC Expert} & 0.854 ± 0.35 & 0.068 ± 0.20 \\
\multirow[t]{1}{*}{IQL Expert} & 0.883 ± 0.32 & 0.076 ± 0.22 \\
\multirow[t]{1}{*}{ReCOIL+FMR} & 0.708 ± 0.45 & 0.097 ± 0.21 \\
\bottomrule
\end{tabular}
\end{table}

\begin{table}[h]
\centering
\caption{SlowSwim performance comparison between ReCOIL+FMR and BC/IQL trained using only expert demonstrations. Results show mean ± std for the last 10 evaluations, over 5 seeds.}
\label{tab:swim-imp-demos}
\begin{tabular}{ccc}
\toprule
Algorithm & Return & Mis. \\
\midrule
\multirow[t]{1}{*}{BC Expert} &  0.9991 ± 0.034 & 0.0004 ± 0.001 \\
\multirow[t]{1}{*}{IQL Expert} &  1.0017 ± 0.008 & 0.0003 ± 0.001 \\
\multirow[t]{1}{*}{ReCOIL+FMR} &  1.0018 ± 0.008 & 0.0009 ± 0.003 \\
\bottomrule
\end{tabular}
\end{table}
\begin{table}[h]
\centering
\caption{SlowHop performance comparison between ReCOIL+FMR and BC/IQL trained using only expert demonstrations. Results show mean ± std for the last 10 evaluations, over 5 seeds.}
\label{tab:hop-imp-demos}
\begin{tabular}{ccc}
\toprule
Algorithm & Return & Mis. \\
\midrule
\multirow[t]{1}{*}{BC Expert} & 0.974 ± 0.125 & 0.014 ± 0.006 \\
\multirow[t]{1}{*}{IQL Expert} & 0.973 ± 0.130 & 0.016 ± 0.007 \\
\multirow[t]{1}{*}{ReCOIL+FMR} & 0.986 ± 0.225 & 0.066 ± 0.168 \\
\bottomrule
\end{tabular}
\end{table}

\begin{table}[h]
\centering
\caption{SlowWalk performance comparison between ReCOIL+FMR and BC/IQL trained using only expert demonstrations. Results show mean ± std for the last 10 evaluations, over 5 seeds.}
\label{tab:walk-imp-demos}
\begin{tabular}{ccc}
\toprule
Algorithm  & Return & Mis. \\
\midrule
\multirow[t]{1}{*}{BC Expert} & 0.944 ± 0.182 & 0.071 ± 0.086 \\
\multirow[t]{1}{*}{IQL Expert} & 0.769 ± 0.315 & 0.136 ± 0.110 \\
\multirow[t]{1}{*}{ReCOIL+FMR} & 0.690 ± 0.323 & 0.165 ± 0.162 \\
\bottomrule
\end{tabular}
\end{table}

\begin{figure}[h]
    \centering
    
    \begin{subfigure}[b]{\textwidth}
        \centering
        \includegraphics[width=0.8\textwidth,trim=0 680 0 0,clip]{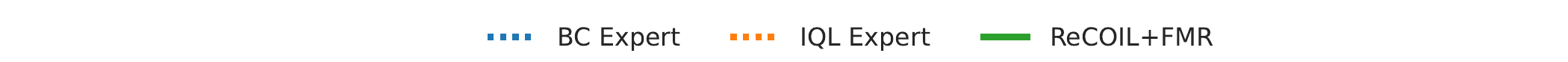}
    \end{subfigure}
    
    \vspace{0.5em}
    
    \begin{subfigure}[b]{0.48\textwidth}
        \centering
        \includegraphics[width=\textwidth,trim=0 0 0 55,clip]{images/imp-demos/PathM/success.pdf}
    \end{subfigure}
    \hfill
    \begin{subfigure}[b]{0.48\textwidth}
        \centering
        \includegraphics[width=\textwidth,trim=0 0 0 55,clip]{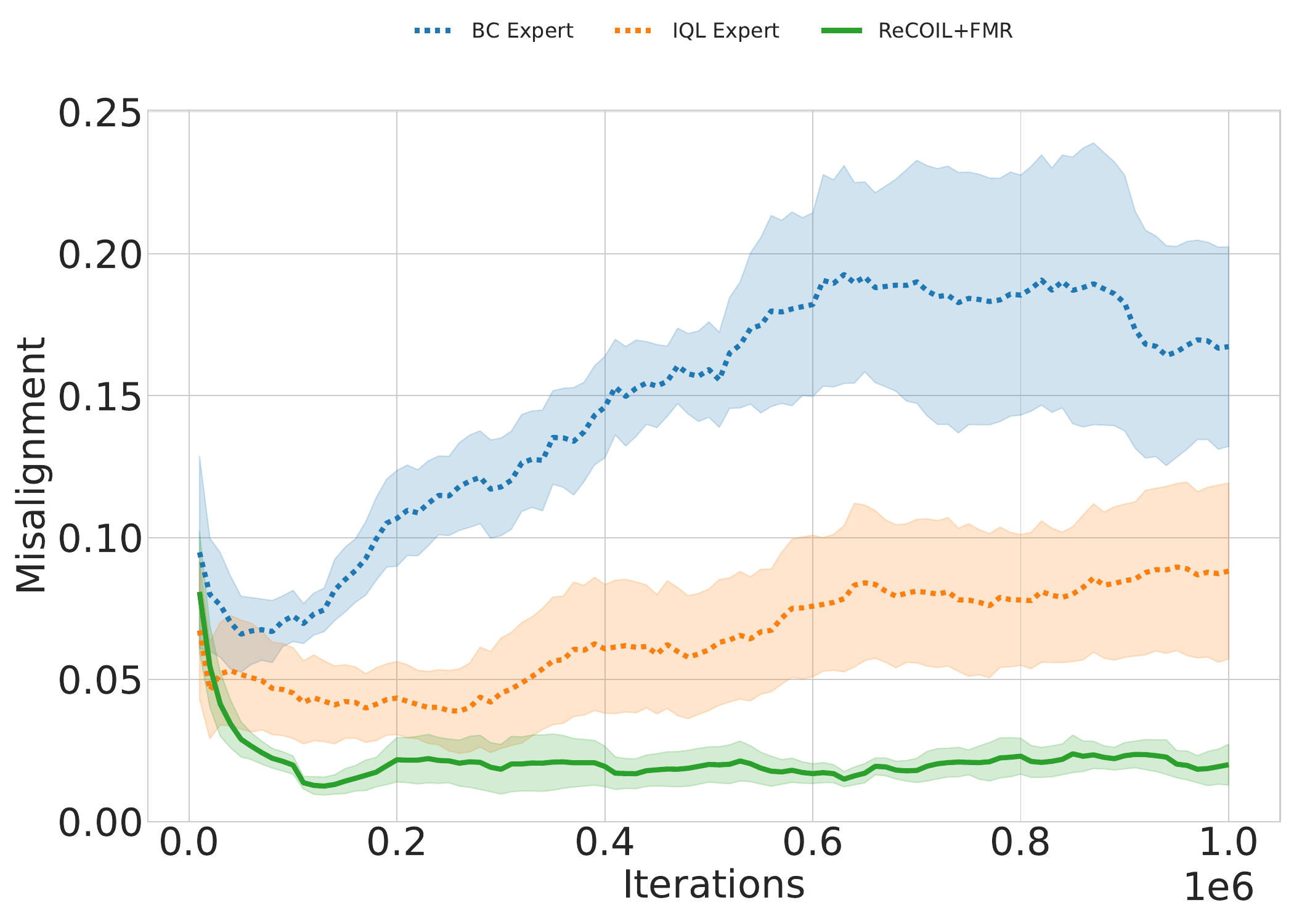}
    \end{subfigure}
    
    \caption{PathM learning curves for ReCOIL+FMR and BC/IQL trained using only expert demonstrations. The shaded region represents the standard error.}
    \label{fig:pathm-imp-demos}
\end{figure}

\begin{figure}[h]
    \centering
    
    \begin{subfigure}[b]{\textwidth}
        \centering
        \includegraphics[width=0.8\textwidth,trim=0 680 0 0,clip]{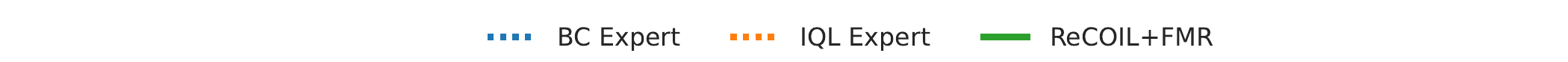}
    \end{subfigure}
    
    \vspace{0.5em}
    
    \begin{subfigure}[b]{0.48\textwidth}
        \centering
        \includegraphics[width=\textwidth,trim=0 0 0 55,clip]{images/imp-demos/PathBB/success.pdf}
    \end{subfigure}
    \hfill
    \begin{subfigure}[b]{0.48\textwidth}
        \centering
        \includegraphics[width=\textwidth,trim=0 0 0 55,clip]{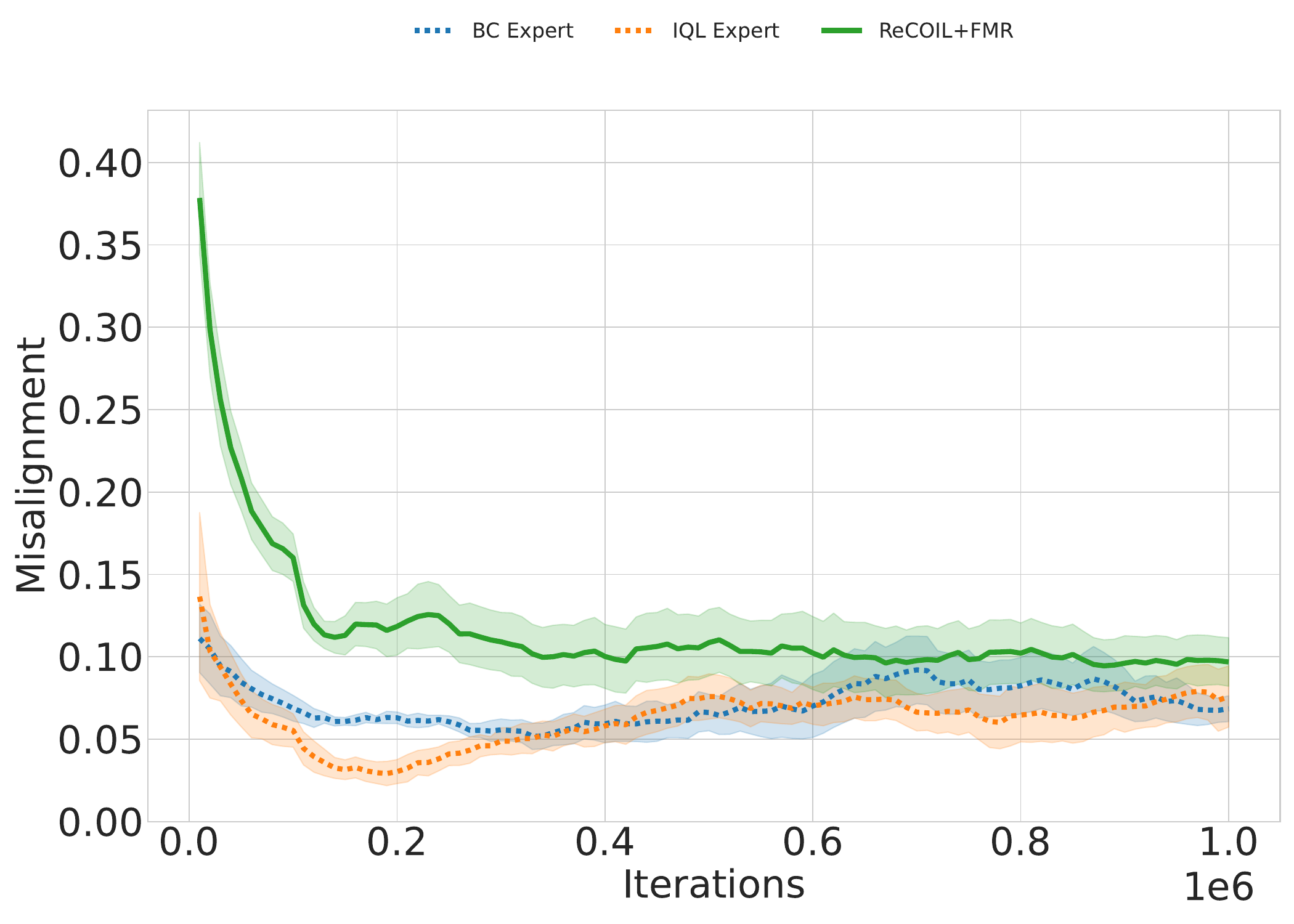}
    \end{subfigure}
    
    \caption{PathBB learning curves for ReCOIL+FMR and BC/IQL trained using only expert demonstrations. The shaded region represents the standard error.}
    \label{fig:pathbb-imp-demos}
\end{figure}

\begin{figure}[h]
    \centering
    
    \begin{subfigure}[b]{\textwidth}
        \centering
        \includegraphics[width=0.8\textwidth,trim=0 680 0 0,clip]{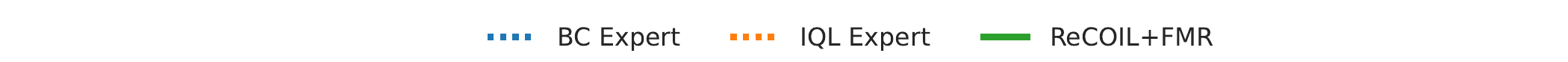}
    \end{subfigure}
    
    \vspace{0.5em}
    
    \begin{subfigure}[b]{0.48\textwidth}
        \centering
        \includegraphics[width=\textwidth,trim=0 0 0 55,clip]{images/imp-demos/SlowSwim/return.pdf}
    \end{subfigure}
    \hfill
    \begin{subfigure}[b]{0.48\textwidth}
        \centering
        \includegraphics[width=\textwidth,trim=0 0 0 55,clip]{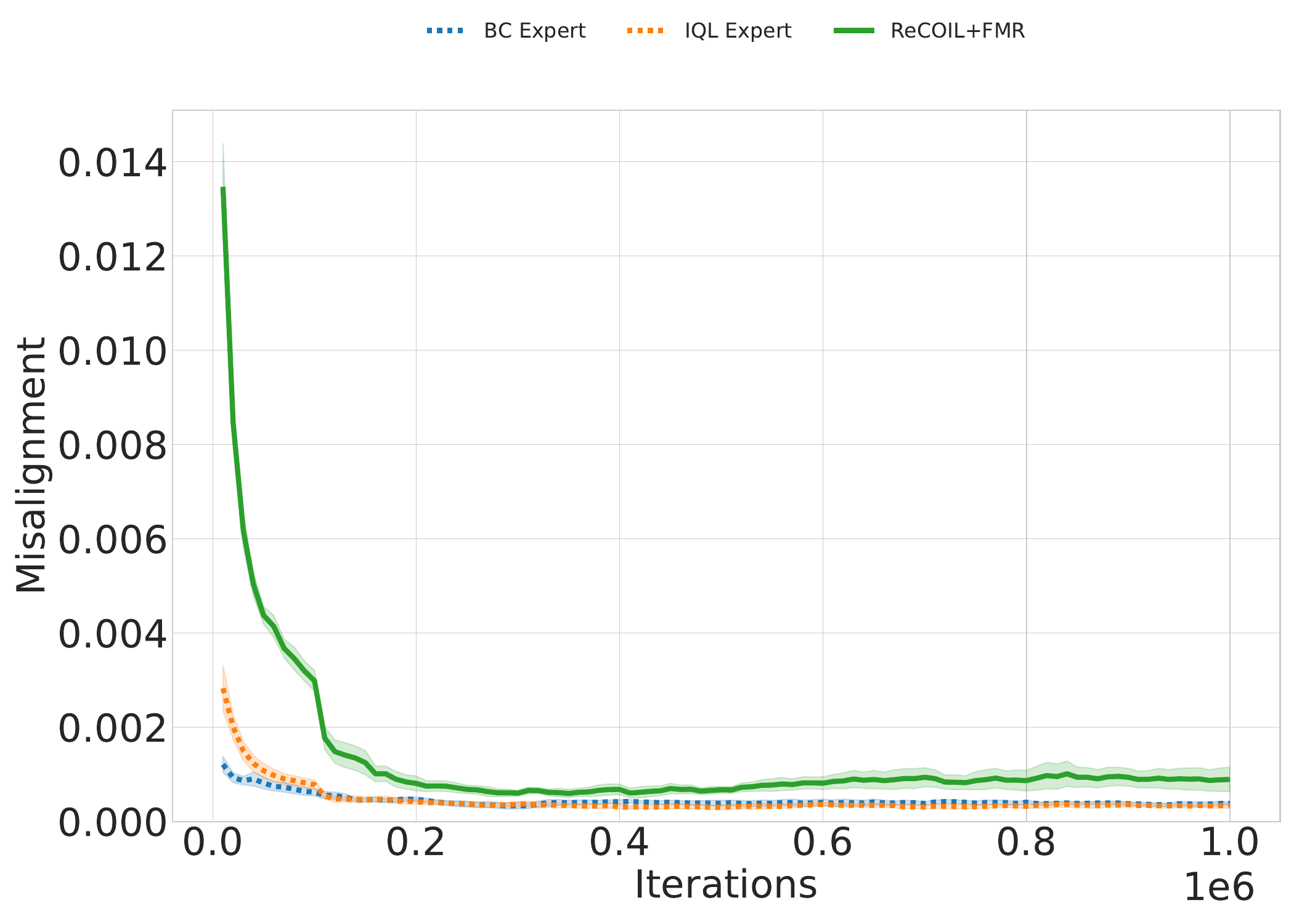}
    \end{subfigure}
    
    \caption{SlowSwim learning curves for ReCOIL+FMR and BC/IQL trained using only expert demonstrations. The shaded region represents the standard error.}
    \label{fig:swim-imp-demos}
\end{figure}

\begin{figure}[h]
    \centering
    
    \begin{subfigure}[b]{\textwidth}
        \centering
        \includegraphics[width=0.8\textwidth,trim=0 680 0 0,clip]{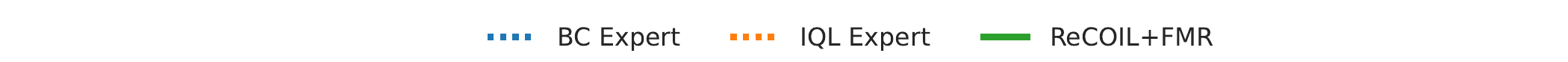}
    \end{subfigure}
    
    \vspace{0.5em}
    
    \begin{subfigure}[b]{0.48\textwidth}
        \centering
        \includegraphics[width=\textwidth,trim=0 0 0 55,clip]{images/imp-demos/SlowHop/return.pdf}
    \end{subfigure}
    \hfill
    \begin{subfigure}[b]{0.48\textwidth}
        \centering
        \includegraphics[width=\textwidth,trim=0 0 0 55,clip]{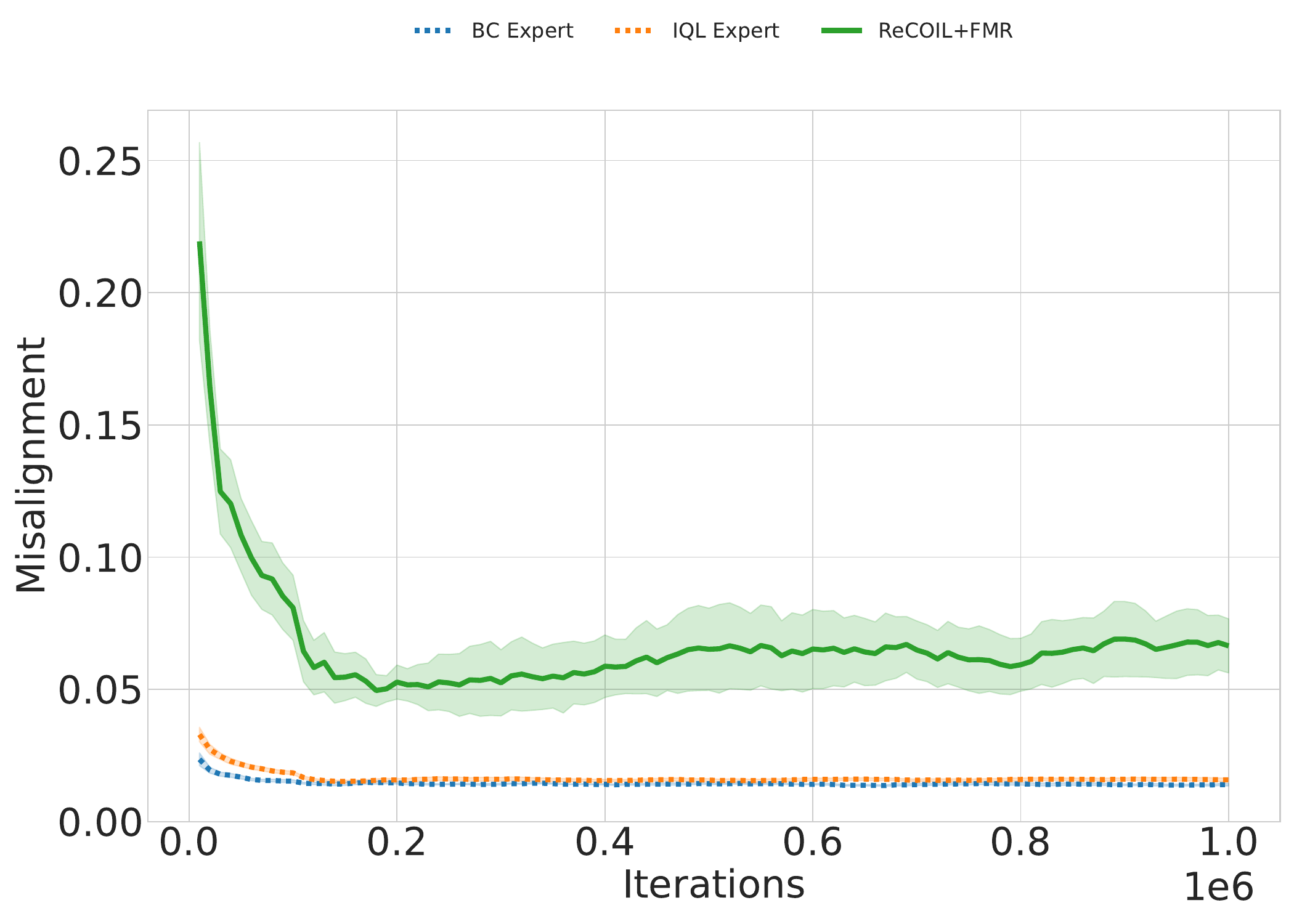}
    \end{subfigure}
    
    \caption{SlowHop learning curves for ReCOIL+FMR and BC/IQL trained using only expert demonstrations. The shaded region represents the standard error.}
    \label{fig:hop-imp-demos}
\end{figure}

\begin{figure}[h]
    \centering
    
    \begin{subfigure}[b]{\textwidth}
        \centering
        \includegraphics[width=0.8\textwidth,trim=0 680 0 0,clip]{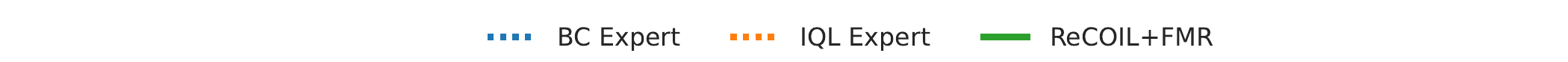}
    \end{subfigure}
    
    \vspace{0.5em}
    
    \begin{subfigure}[b]{0.48\textwidth}
        \centering
        \includegraphics[width=\textwidth,trim=0 0 0 55,clip]{images/imp-demos/SlowWalk/return.pdf}
    \end{subfigure}
    \hfill
    \begin{subfigure}[b]{0.48\textwidth}
        \centering
        \includegraphics[width=\textwidth,trim=0 0 0 55,clip]{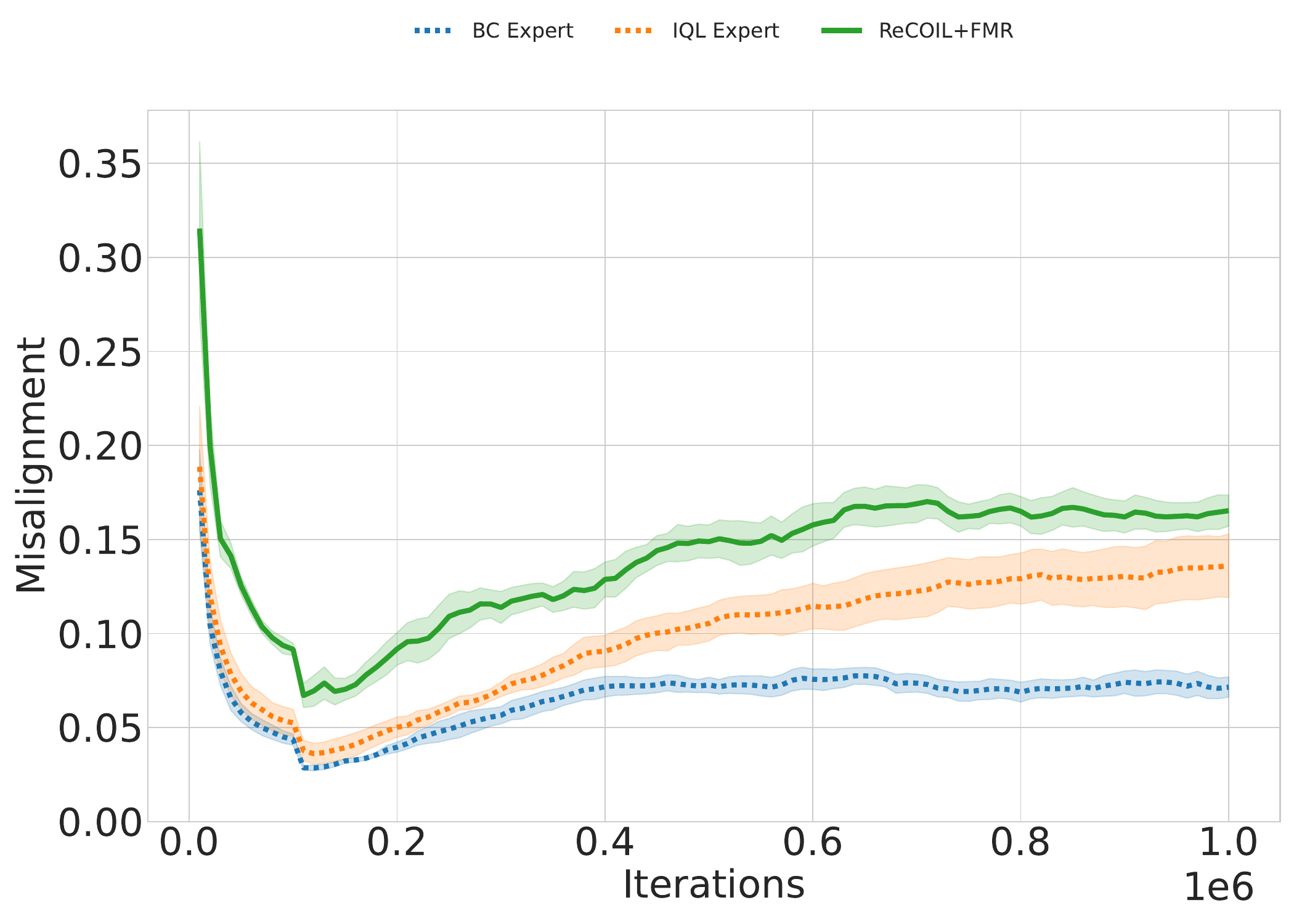}
    \end{subfigure}
    
    \caption{SlowWalk learning curves for ReCOIL+FMR and BC/IQL trained using only expert demonstrations. The shaded region represents the standard error.}
    \label{fig:walk-imp-demos}
\end{figure}

%% file: appendix/fb-scale.tex
The following results test the feedback scale using a 50-50 data ratio. Results are reported for ReCOIL+FMR where 0\% corresponds to ReCOIL, 20\% corresponds to 10 demonstrations with feedback, 50\% corresponds to 25 demonstrations with feedback, and 100\% corresponds to all demonstrations having feedback. Demonstrations are randomly selected to receive feedback. When a demonstration is selected, feedback for the entire demonstration is used. All remaining demonstrations do not have feedback.

\begin{figure}[h]
    \centering
    
    \begin{subfigure}[b]{\textwidth}
        \centering
        \includegraphics[width=0.8\textwidth,trim=0 680 0 0,clip]{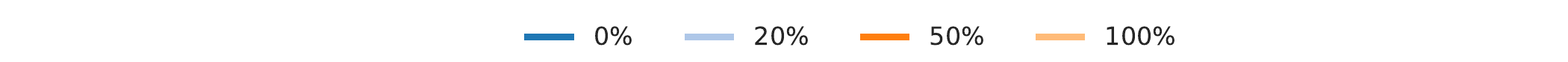}
    \end{subfigure}
    
    \vspace{0.5em}
    
    \begin{subfigure}[b]{0.48\textwidth}
        \centering
        \includegraphics[width=\textwidth,trim=0 0 0 55,clip]{images/fb-reduce/PathM/success_50-50.pdf}
    \end{subfigure}
    \hfill
    \begin{subfigure}[b]{0.48\textwidth}
        \centering
        \includegraphics[width=\textwidth,trim=0 0 0 55,clip]{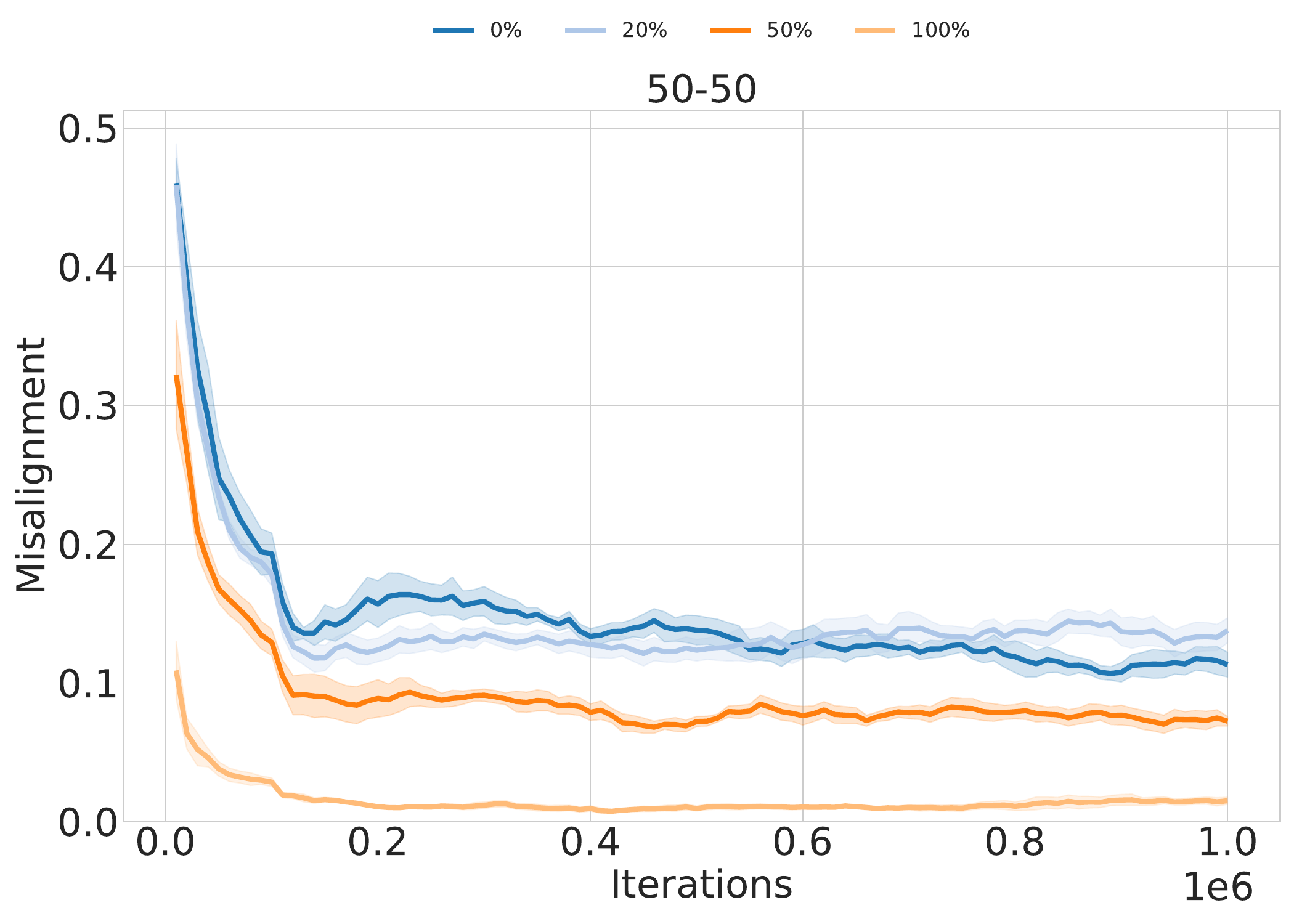}
    \end{subfigure}
    
    \caption{PathM learning curves for ReCOIL+FMR with reduced feedback. The shaded region represents the standard error.}
    \label{fig:pathm-scale}
\end{figure}

\begin{figure}[h]
    \centering
    
    \begin{subfigure}[b]{\textwidth}
        \centering
        \includegraphics[width=0.8\textwidth,trim=0 680 0 0,clip]{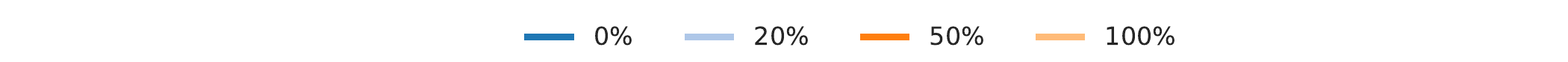}
    \end{subfigure}
    
    \vspace{0.5em}
    
    \begin{subfigure}[b]{0.48\textwidth}
        \centering
        \includegraphics[width=\textwidth,trim=0 0 0 55,clip]{images/fb-reduce/PathBB/success_50-50.pdf}
    \end{subfigure}
    \hfill
    \begin{subfigure}[b]{0.48\textwidth}
        \centering
        \includegraphics[width=\textwidth,trim=0 0 0 55,clip]{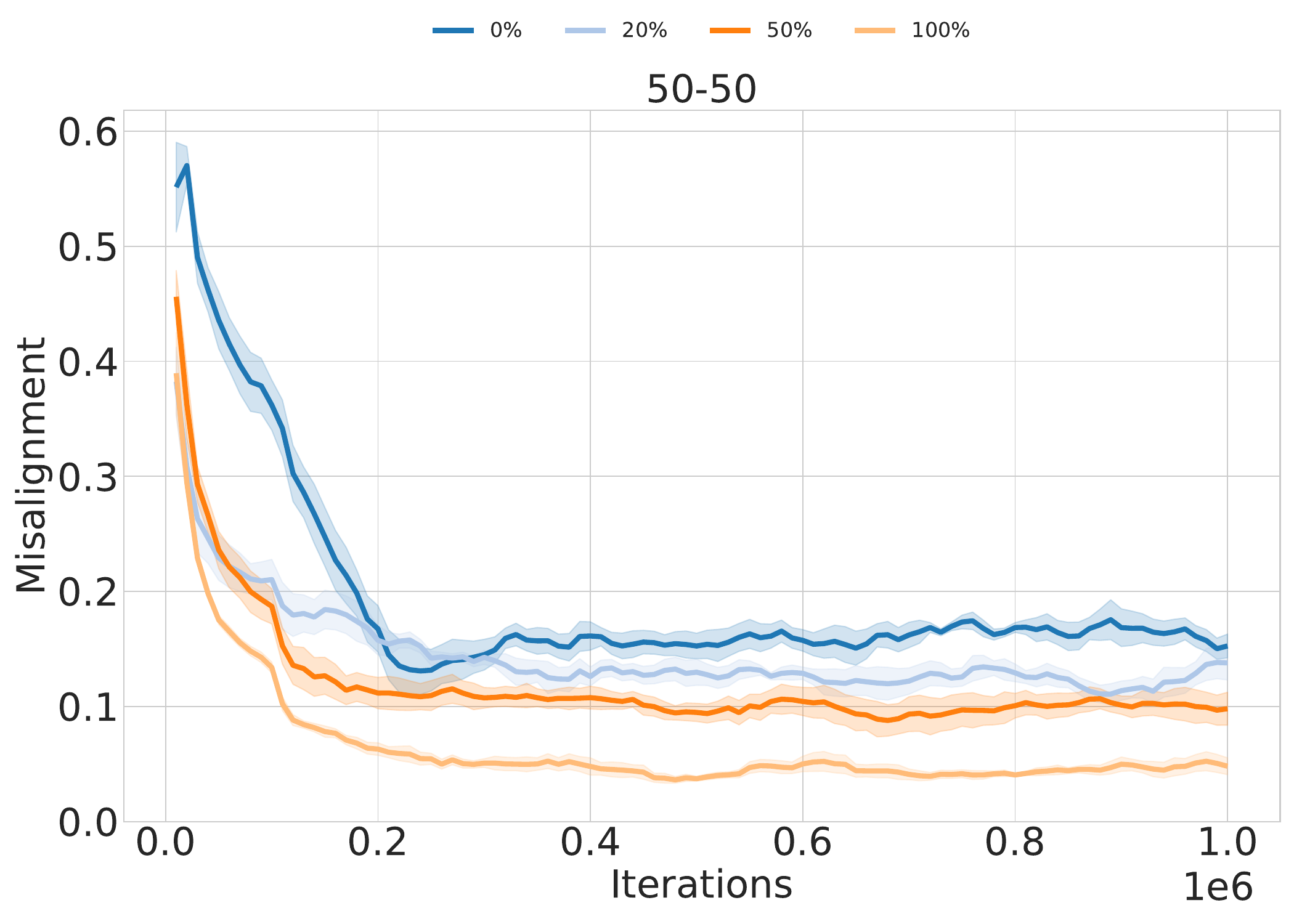}
    \end{subfigure}
    
    \caption{PathBB learning curves for ReCOIL+FMR with reduced feedback.The shaded region represents the standard error.}
    \label{fig:pathbb-scale}
\end{figure}

\begin{figure}[h]
    \centering
    
    \begin{subfigure}[b]{\textwidth}
        \centering
        \includegraphics[width=0.8\textwidth,trim=0 680 0 0,clip]{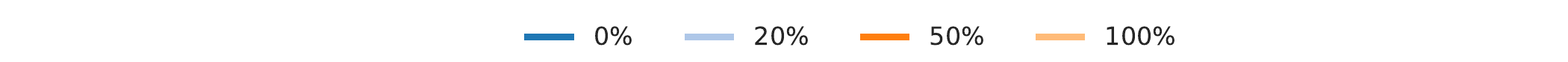}
    \end{subfigure}
    
    \vspace{0.5em}
    
    \begin{subfigure}[b]{0.48\textwidth}
        \centering
        \includegraphics[width=\textwidth,trim=0 0 0 55,clip]{images/fb-reduce/SlowSwim/return_50-50.pdf}
    \end{subfigure}
    \hfill
    \begin{subfigure}[b]{0.48\textwidth}
        \centering
        \includegraphics[width=\textwidth,trim=0 0 0 55,clip]{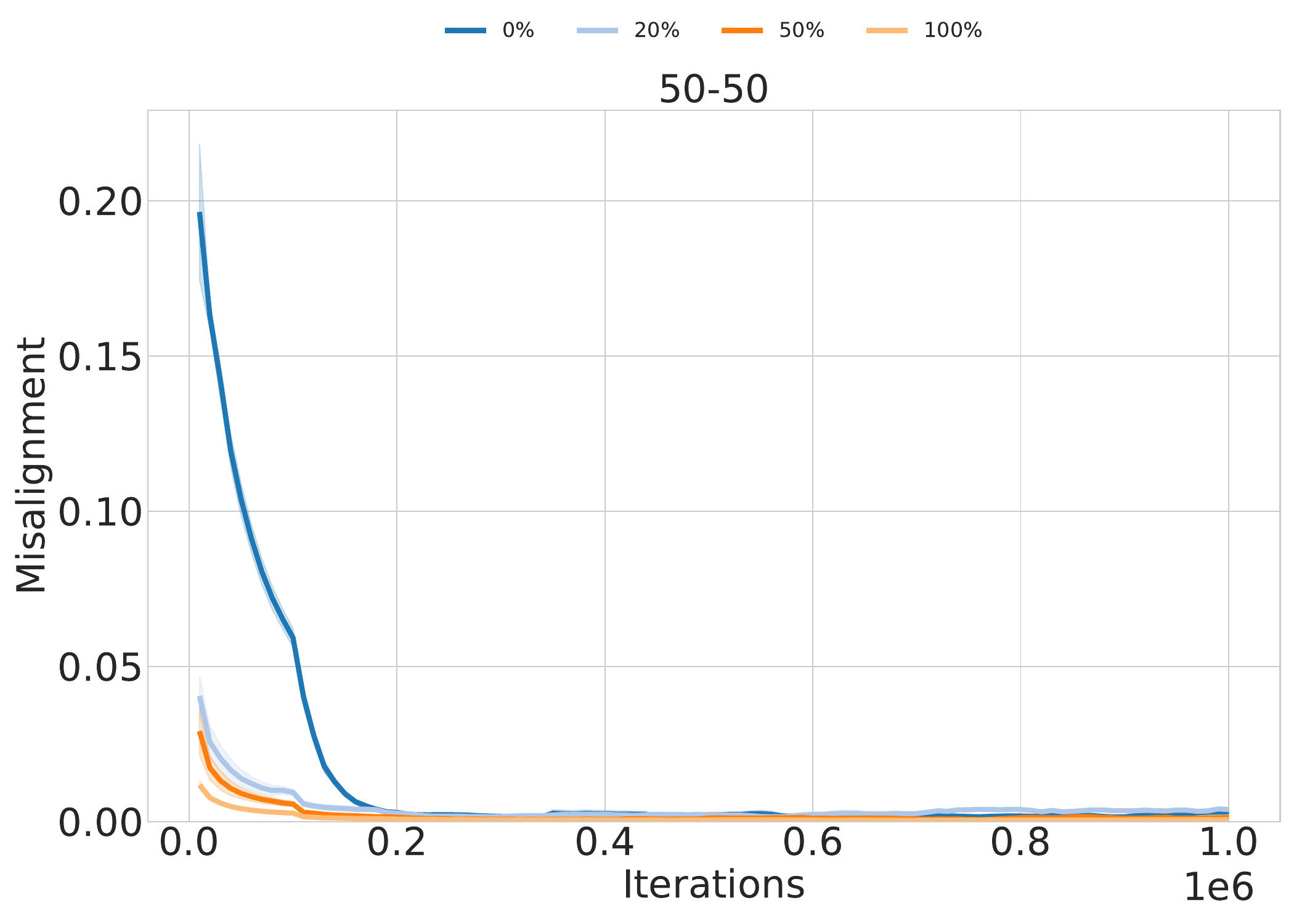}
    \end{subfigure}
    
    \caption{SlowSwim learning curves for ReCOIL+FMR with reduced feedback. The shaded region represents the standard error.}
    \label{fig:swim-scale}
\end{figure}

\begin{figure}[h]
    \centering
    
    \begin{subfigure}[b]{\textwidth}
        \centering
        \includegraphics[width=0.8\textwidth,trim=0 680 0 0,clip]{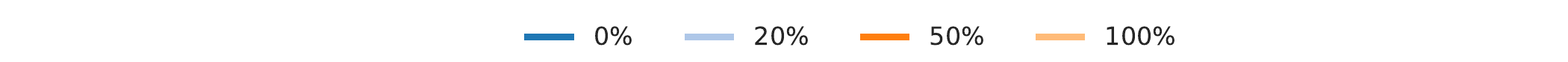}
    \end{subfigure}
    
    \vspace{0.5em}
    
    \begin{subfigure}[b]{0.48\textwidth}
        \centering
        \includegraphics[width=\textwidth,trim=0 0 0 55,clip]{images/fb-reduce/SlowHop/return_50-50.pdf}
    \end{subfigure}
    \hfill
    \begin{subfigure}[b]{0.48\textwidth}
        \centering
        \includegraphics[width=\textwidth,trim=0 0 0 55,clip]{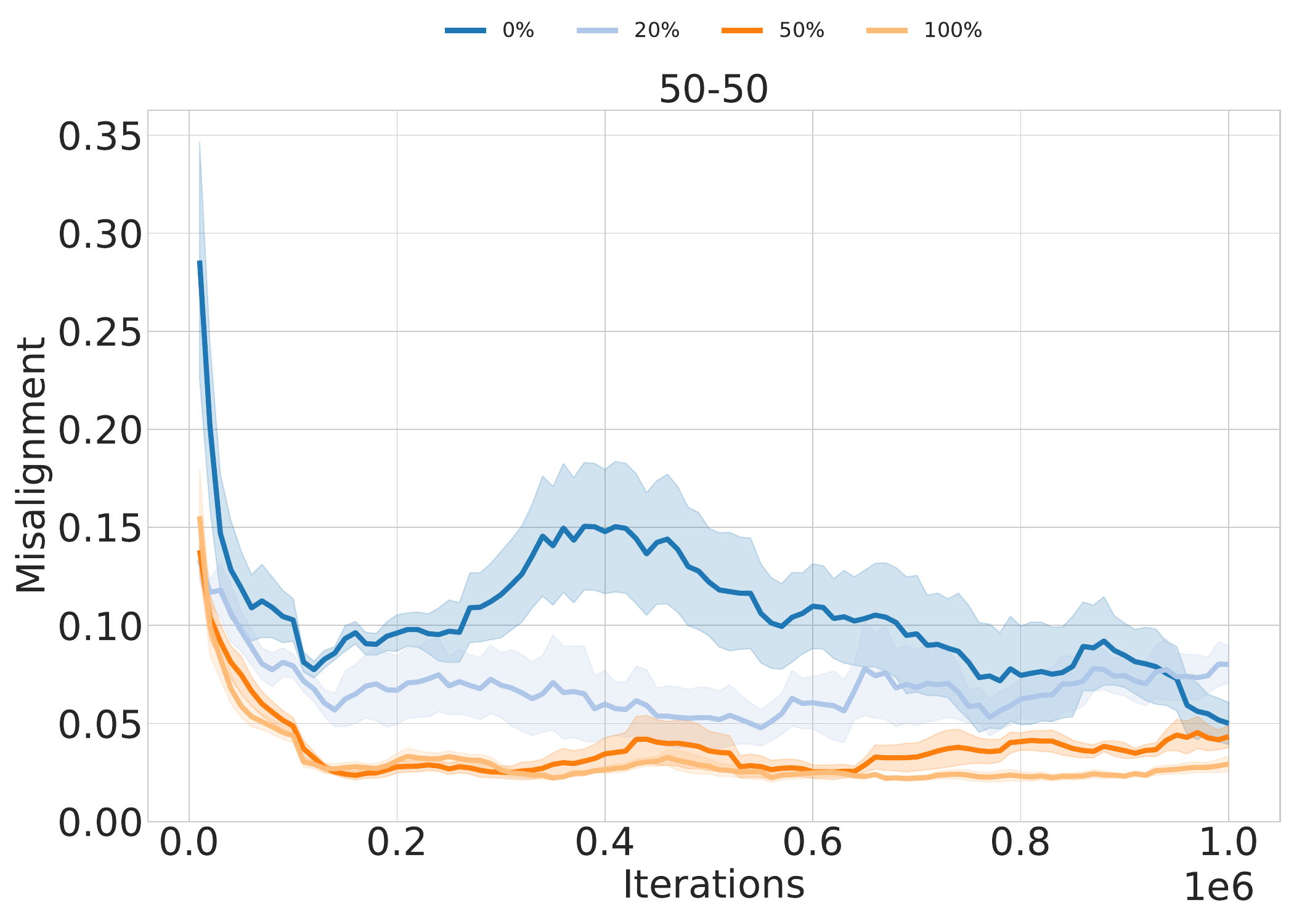}
    \end{subfigure}
    
    \caption{SlowHop learning curves for ReCOIL+FMR with reduced feedback. The shaded region represents the standard error.}
    \label{fig:hop-scale}
\end{figure}

\begin{figure}[h]
    \centering
    
    \begin{subfigure}[b]{\textwidth}
        \centering
        \includegraphics[width=0.8\textwidth,trim=0 680 0 0,clip]{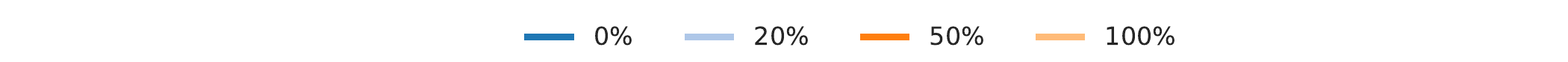}
    \end{subfigure}
    
    \vspace{0.5em}
    
    \begin{subfigure}[b]{0.48\textwidth}
        \centering
        \includegraphics[width=\textwidth,trim=0 0 0 55,clip]{images/fb-reduce/SlowWalk/return_50-50.pdf}
    \end{subfigure}
    \hfill
    \begin{subfigure}[b]{0.48\textwidth}
        \centering
        \includegraphics[width=\textwidth,trim=0 0 0 55,clip]{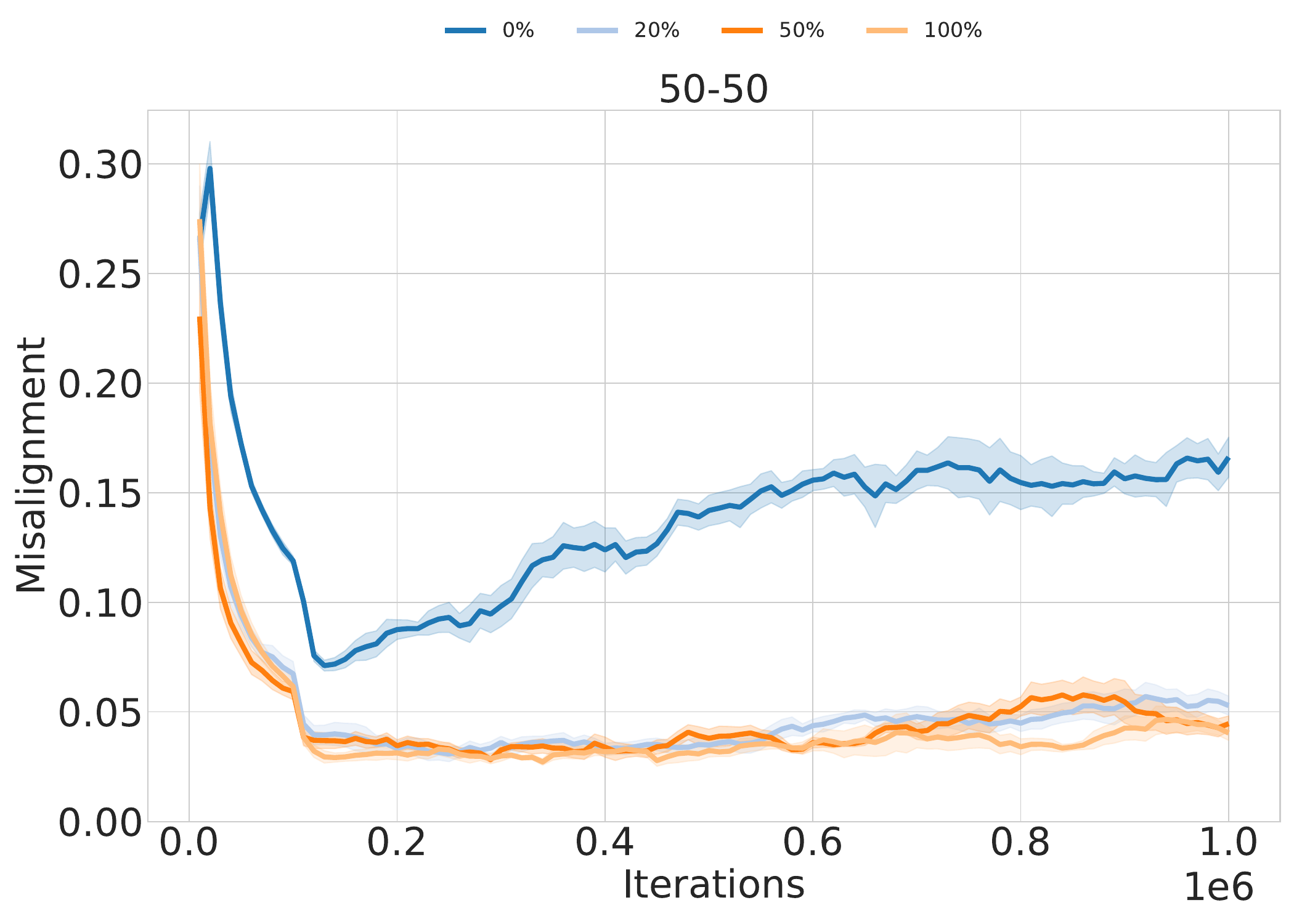}
    \end{subfigure}
    
    \caption{SlowWalk learning curves for ReCOIL+FMR with reduced feedback. The shaded region represents the standard error.}
    \label{fig:walk-scale}
\end{figure}

%% file: appendix/gen-tau.tex
The following are the results when comparing feedback-adaptive temperature $\tau_j$ with \textit{generalized} feedback-adaptive temperature $\tau^\text{gen}$. The comparison is performed across all tasks using ReCOIL+FMR in environments with a discrete action space, demonstrating negligible performance differences. This indicates that either formulation is valid, where $\tau^\text{gen}$ generalizes $\tau_j$ beyond discrete action spaces to support continuous action spaces.

\begin{table}[h]
\centering
\caption{PathM performance comparison for ReCOIL+FMR when using $\tau_j$ and $\tau^\text{gen}$. Results show mean ± std for the last 10 evaluations, over 5 seeds.}
\label{tab:pathm-taus}
\begin{tabular}{llrr}
\toprule
$\tau$ Type & Ratio & Suc. & Mis. \\
\midrule
\multirow[t]{3}{*}{$\tau^\text{gen}$} 
& 50-50 & 0.986 ± 0.12 & 0.014 ± 0.08 \\
& 25-50 & 0.974 ± 0.16 & 0.019 ± 0.09 \\
& 10-50 & 0.941 ± 0.24 & 0.020 ± 0.08 \\
\cline{1-4}
\multirow[t]{3}{*}{$\tau_j$} 
& 50-50 & 0.988 ± 0.11 & 0.015 ± 0.08 \\
& 25-50 & 0.974 ± 0.16 & 0.014 ± 0.08 \\
& 10-50 & 0.947 ± 0.22 & 0.020 ± 0.09 \\
\bottomrule
\end{tabular}
\end{table}

\begin{table}[h]
\centering
\caption{PathBB performance comparison for ReCOIL+FMR when using $\tau_j$ and $\tau^\text{gen}$. Results show mean ± std for the last 10 evaluations, over 5 seeds.}
\label{tab:pathbb-ataus}
\begin{tabular}{llrr}
\toprule
$\tau$ Type & Ratio & Suc. & Mis. \\
\midrule
\multirow[t]{3}{*}{$\tau^\text{gen}$} 
& 50-50 & 0.919 ± 0.27 & 0.058 ± 0.19 \\
& 25-50 & 0.880 ± 0.33 & 0.053 ± 0.15 \\
& 10-50 & 0.706 ± 0.46 & 0.109 ± 0.24 \\
\cline{1-4}
\multirow[t]{3}{*}{$\tau_j$} 
& 50-50 & 0.919 ± 0.27 & 0.048 ± 0.17 \\
& 25-50 & 0.880 ± 0.32 & 0.073 ± 0.19 \\
& 10-50 & 0.708 ± 0.45 & 0.097 ± 0.21 \\
\bottomrule
\end{tabular}
\end{table}

\begin{table}[h]
\centering
\caption{SlowSwim performance comparison for ReCOIL+FMR when using $\tau_j$ and $\tau^\text{gen}$. Results show mean ± std for the last 10 evaluations, over 5 seeds.}
\label{tab:swim-ataus}
\begin{tabular}{llrr}
\toprule
$\tau$ Type & Ratio & Return & Mis. \\
\midrule
\multirow[t]{3}{*}{$\tau^\text{gen}$} 
& 50-50 & 1.0014 ± 0.014 & 0.0007 ± 0.005 \\
& 25-50 & 1.0019 ± 0.008 & 0.0003 ± 0.001 \\
& 10-50 & 1.0021 ± 0.015 & 0.0007 ± 0.003 \\
\cline{1-4}
\multirow[t]{3}{*}{$\tau_j$}
& 50-50 & 1.0011 ± 0.008 & 0.0005 ± 0.003 \\
& 25-50 & 1.0017 ± 0.008 & 0.0004 ± 0.002 \\
& 10-50 & 1.0018 ± 0.008 & 0.0009 ± 0.003 \\
\bottomrule
\end{tabular}
\end{table}

\begin{table}[h]
\centering
\caption{SlowHop performance comparison for ReCOIL+FMR when using $\tau_j$ and $\tau^\text{gen}$. Results show mean ± std for the last 10 evaluations, over 5 seeds.}
\label{tab:gop-taus}
\begin{tabular}{llrr}
\toprule
$\tau$ Type & Ratio & Return & Mis. \\
\midrule
\multirow[t]{3}{*}{$\tau^\text{gen}$}
& 50-50 & 0.988 ± 0.124 & 0.033 ± 0.114 \\
& 25-50 & 0.993 ± 0.166 & 0.042 ± 0.133 \\
& 10-50 & 0.984 ± 0.217 & 0.061 ± 0.157 \\
\cline{1-4}
\multirow[t]{3}{*}{$\tau_j$} 
& 50-50 & 0.990 ± 0.127 & 0.029 ± 0.099 \\
& 25-50 & 1.001 ± 0.179 & 0.047 ± 0.135 \\
& 10-50 & 0.986 ± 0.225 & 0.066 ± 0.168 \\
\bottomrule
\end{tabular}
\end{table}

\begin{table}[h]
\centering
\caption{SlowWalk performance comparison for ReCOIL+FMR when using $\tau_j$ and $\tau^\text{gen}$. Results show mean ± std for the last 10 evaluations, over 5 seeds.}
\label{tab:walk-ataus}
\begin{tabular}{llrr}
\toprule
$\tau$ Type & Ratio & Return & Mis. \\
\midrule
\multirow[t]{3}{*}{$\tau^\text{gen}$} 
& 50-50 & 0.952 ± 0.179 & 0.035 ± 0.105 \\
& 25-50 & 0.936 ± 0.197 & 0.046 ± 0.107 \\
& 10-50 & 0.679 ± 0.328 & 0.152 ± 0.150 \\
\cline{1-4}
\multirow[t]{3}{*}{$\tau_j$} 
& 50-50 & 0.954 ± 0.169 & 0.040 ± 0.119 \\
& 25-50 & 0.929 ± 0.206 & 0.055 ± 0.122 \\
& 10-50 & 0.690 ± 0.323 & 0.165 ± 0.162 \\
\bottomrule
\end{tabular}
\end{table}

\begin{figure}[h]
    \centering
    
    \begin{subfigure}[b]{\textwidth}
        \centering
        \includegraphics[width=0.8\textwidth,trim=0 665 0 0,clip]{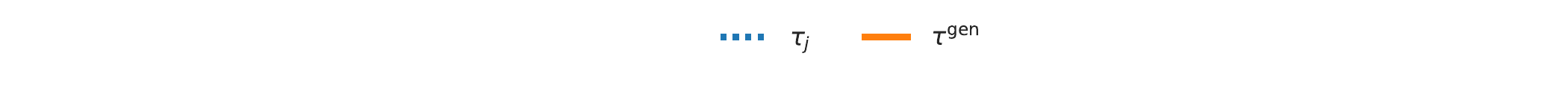}
    \end{subfigure}
    
    \vspace{0.5em}
    
    \begin{subfigure}[b]{0.32\textwidth}
        \centering
        \includegraphics[width=\textwidth,trim=0 44 0 55,clip]{images/gen-tau/PathM/success_50-50.pdf}
    \end{subfigure}
    \hfill
    \begin{subfigure}[b]{0.32\textwidth}
        \centering
        \includegraphics[width=\textwidth,trim=0 44 0 55,clip]{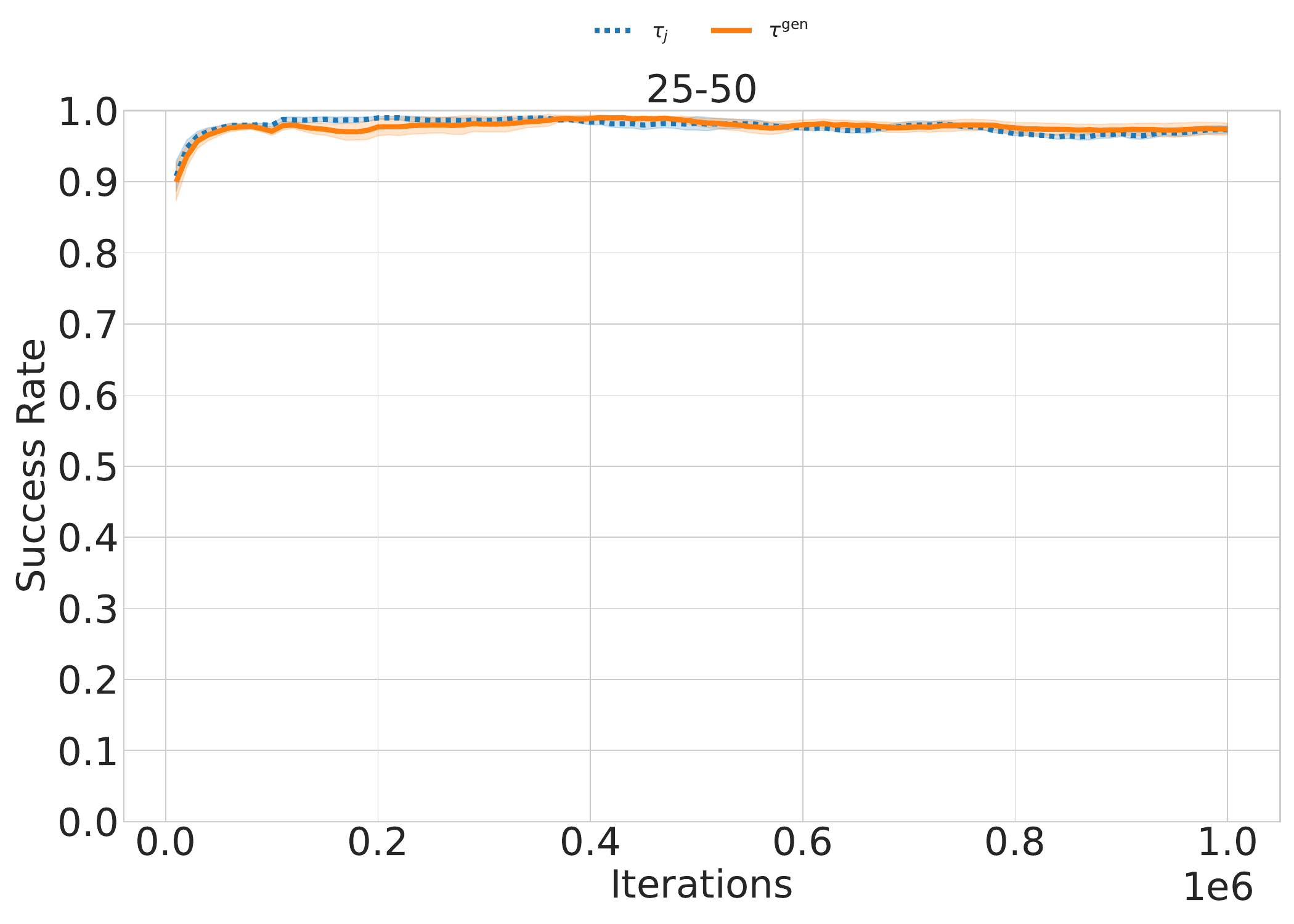}
    \end{subfigure}
    \hfill
    \begin{subfigure}[b]{0.32\textwidth}
        \centering
        \includegraphics[width=\textwidth,trim=0 44 0 55,clip]{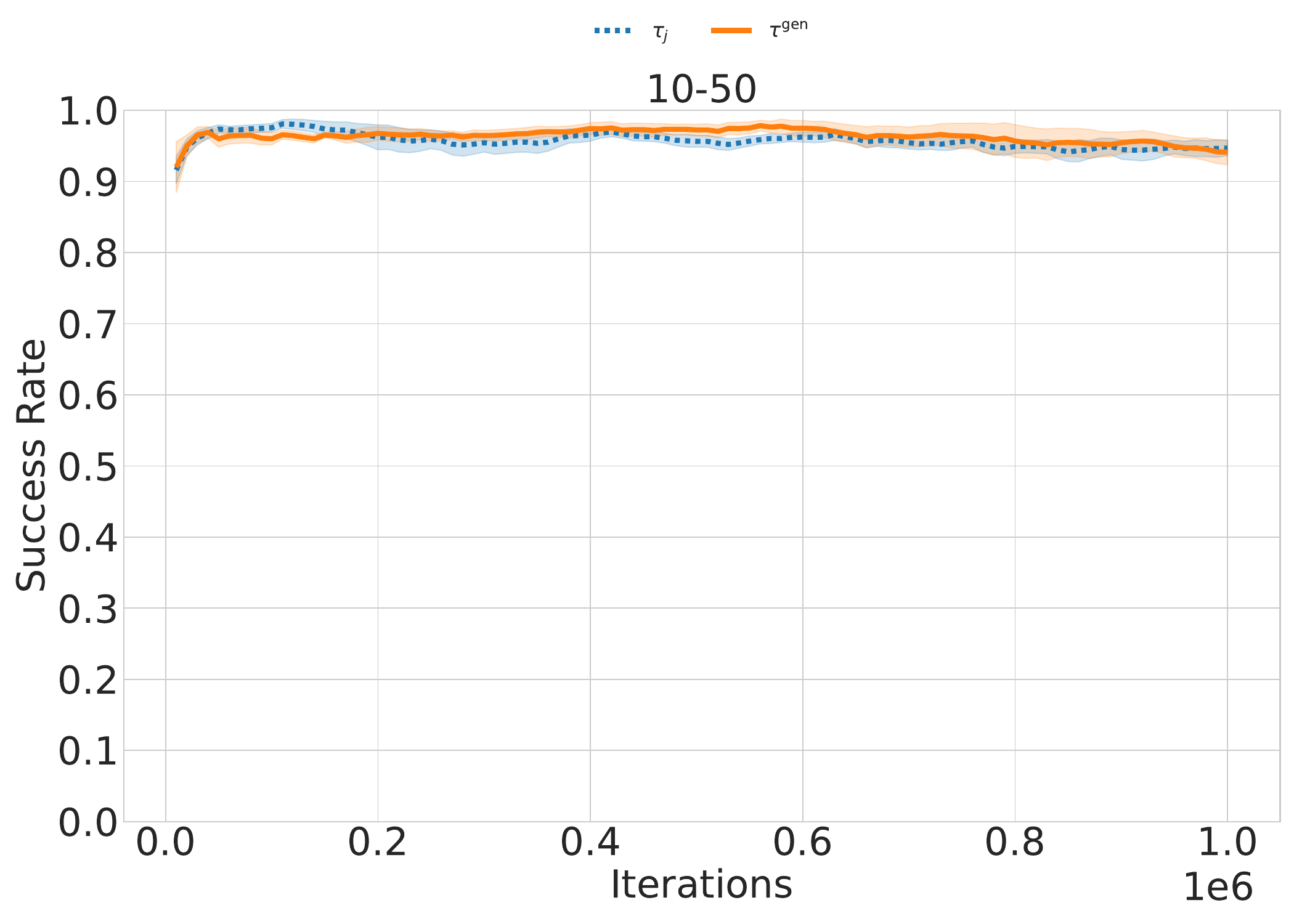}
    \end{subfigure}
    
    \begin{subfigure}[b]{0.32\textwidth}
        \centering
        \includegraphics[width=\textwidth,trim=0 0 0 81,clip]{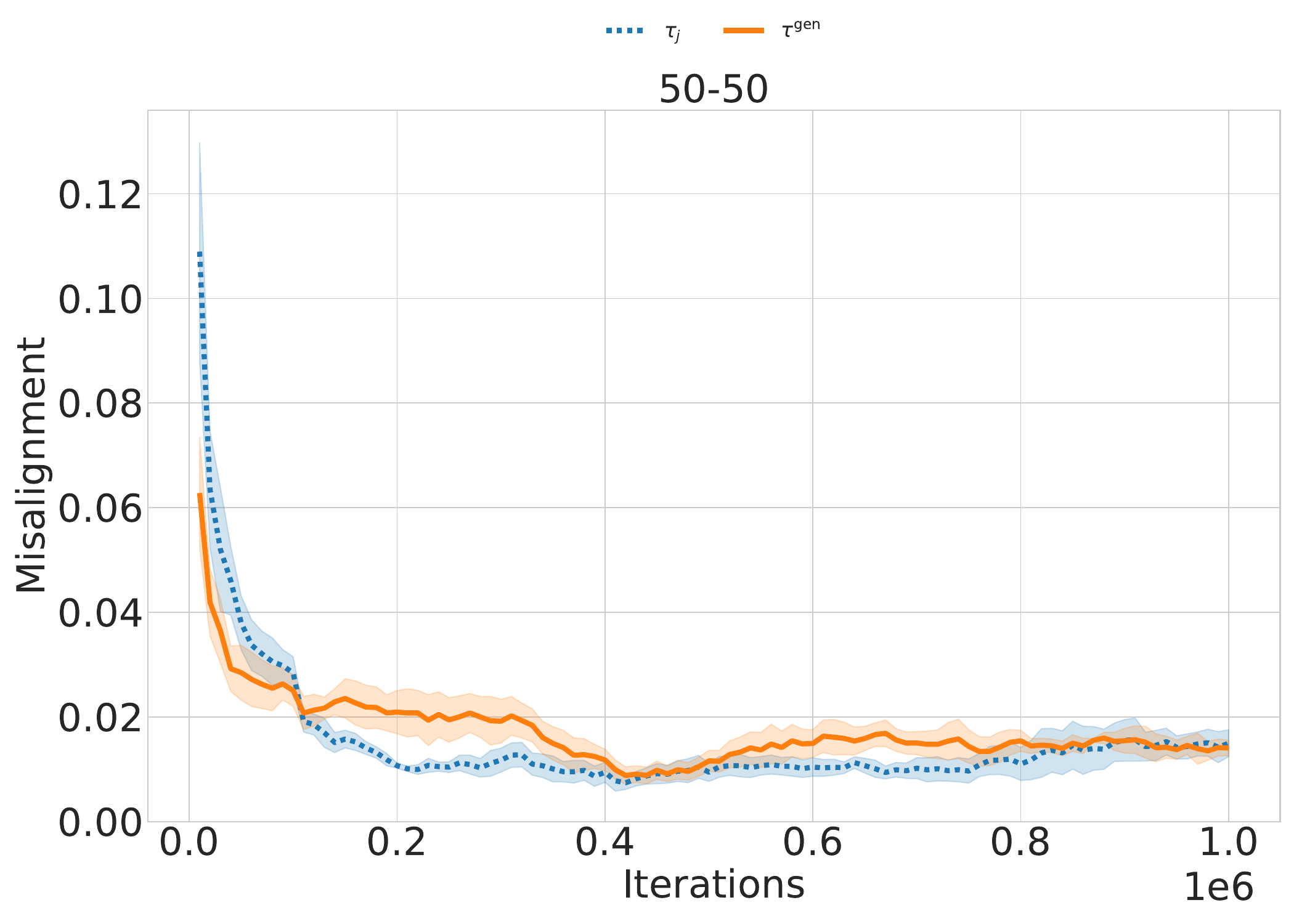}
    \end{subfigure}
    \hfill
    \begin{subfigure}[b]{0.32\textwidth}
        \centering
        \includegraphics[width=\textwidth,trim=0 0 0 81,clip]{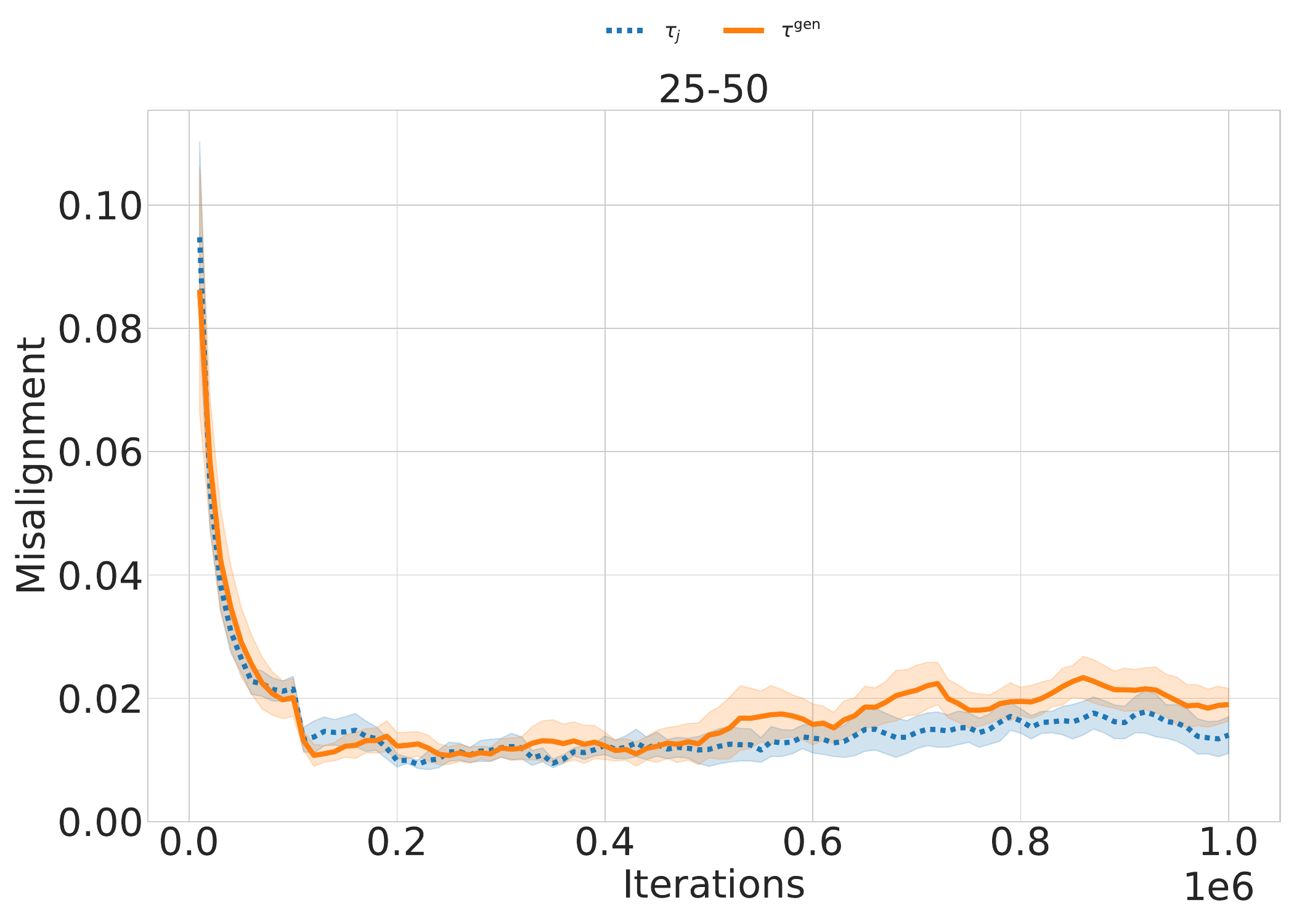}
    \end{subfigure}
    \hfill
    \begin{subfigure}[b]{0.32\textwidth}
        \centering
        \includegraphics[width=\textwidth,trim=0 0 0 81,clip]{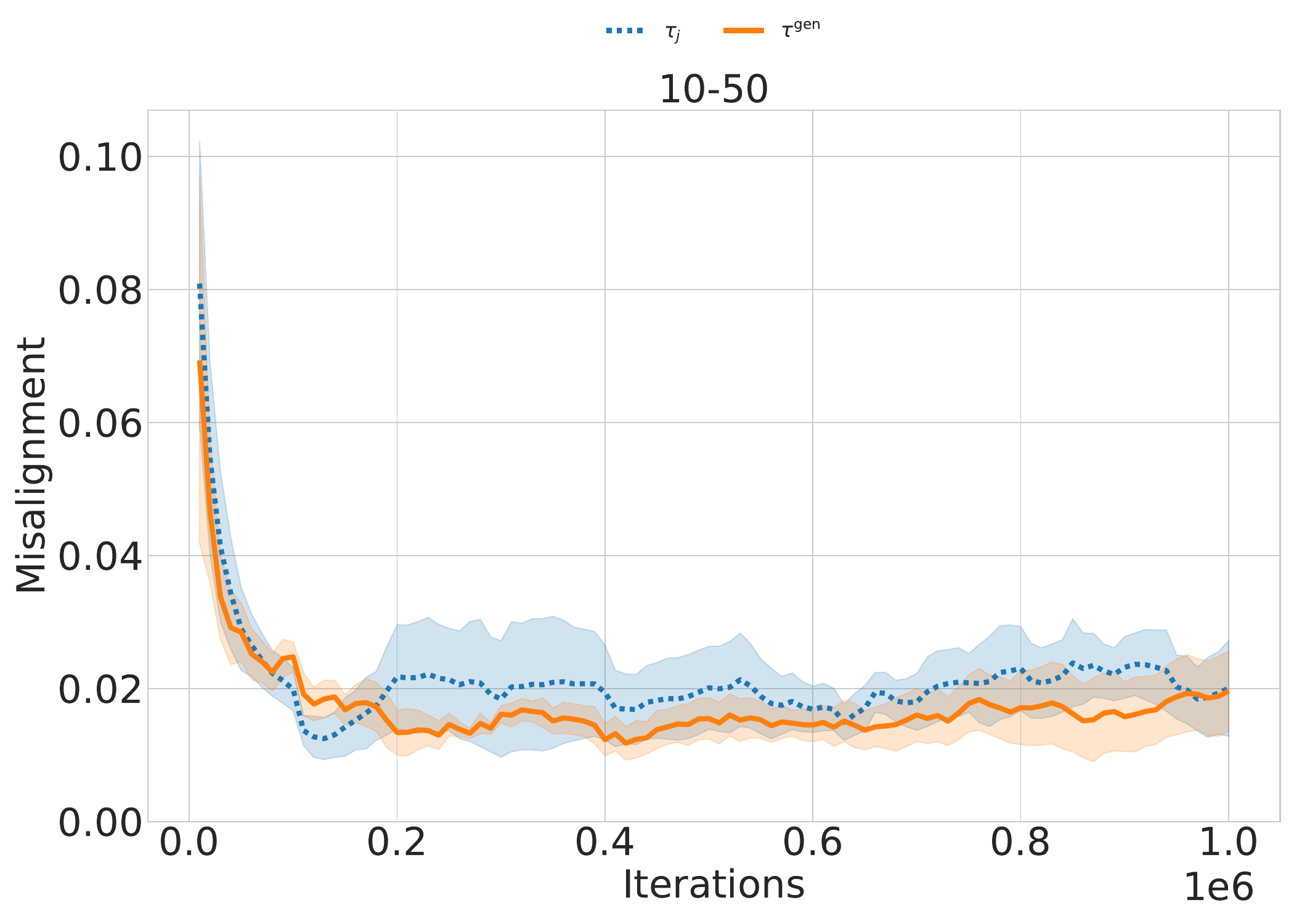}
    \end{subfigure}
    
    \caption{PathM learning curves for ReCOIL+FMR when using $\tau_j$ and $\tau^\text{gen}$. The shaded region represents the standard error.}
    \label{fig:pathm-ataus}
\end{figure}

\begin{figure}[h]
    \centering
    
    \begin{subfigure}[b]{\textwidth}
        \centering
        \includegraphics[width=0.8\textwidth,trim=0 665 0 0,clip]{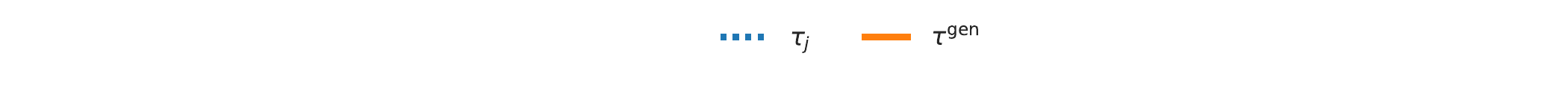}
    \end{subfigure}
    
    \vspace{0.5em}
    
    \begin{subfigure}[b]{0.32\textwidth}
        \centering
        \includegraphics[width=\textwidth,trim=0 44 0 55,clip]{images/gen-tau/PathBB/success_50-50.pdf}
    \end{subfigure}
    \hfill
    \begin{subfigure}[b]{0.32\textwidth}
        \centering
        \includegraphics[width=\textwidth,trim=0 44 0 55,clip]{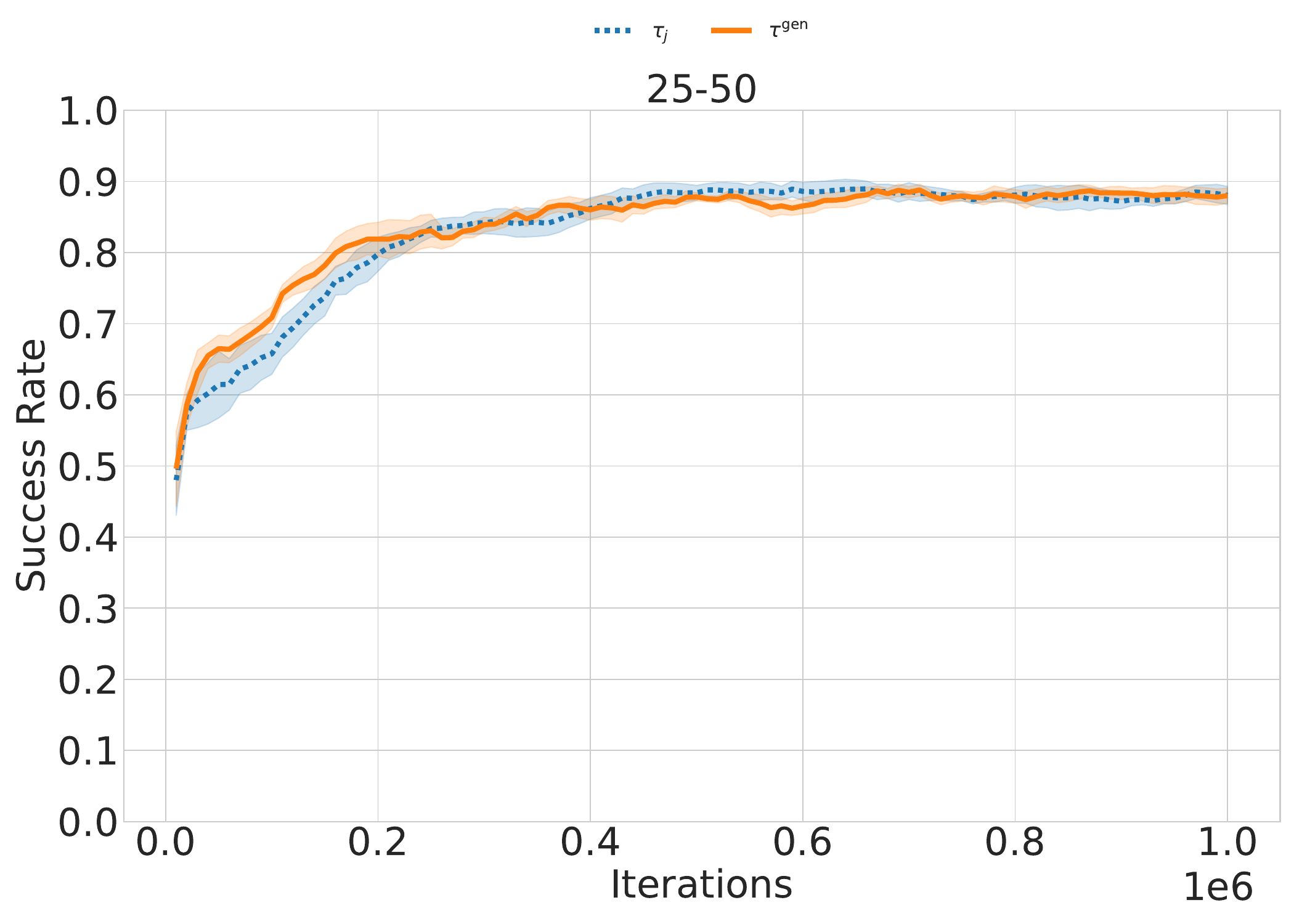}
    \end{subfigure}
    \hfill
    \begin{subfigure}[b]{0.32\textwidth}
        \centering
        \includegraphics[width=\textwidth,trim=0 44 0 55,clip]{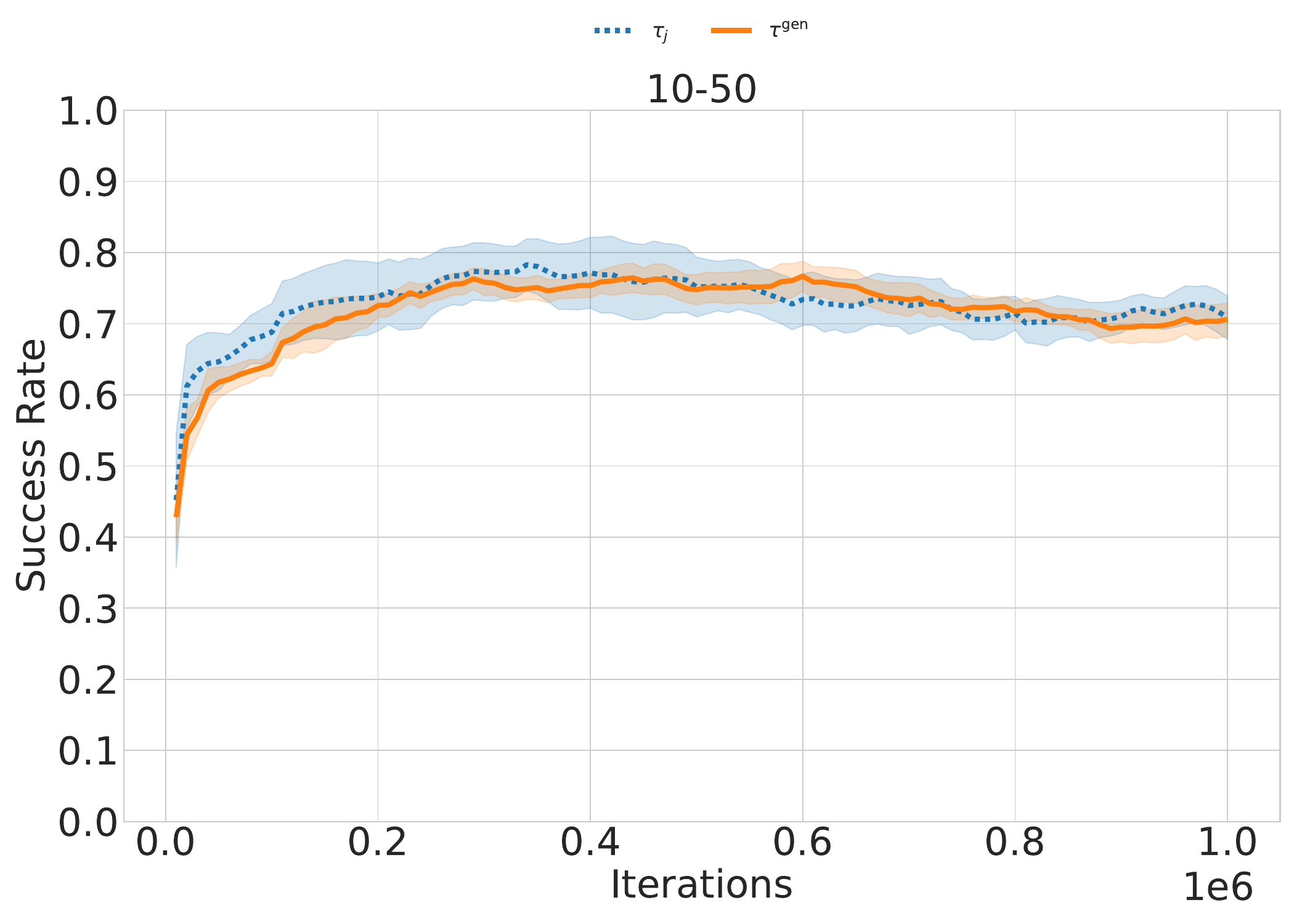}
    \end{subfigure}
    
    \begin{subfigure}[b]{0.32\textwidth}
        \centering
        \includegraphics[width=\textwidth,trim=0 0 0 81,clip]{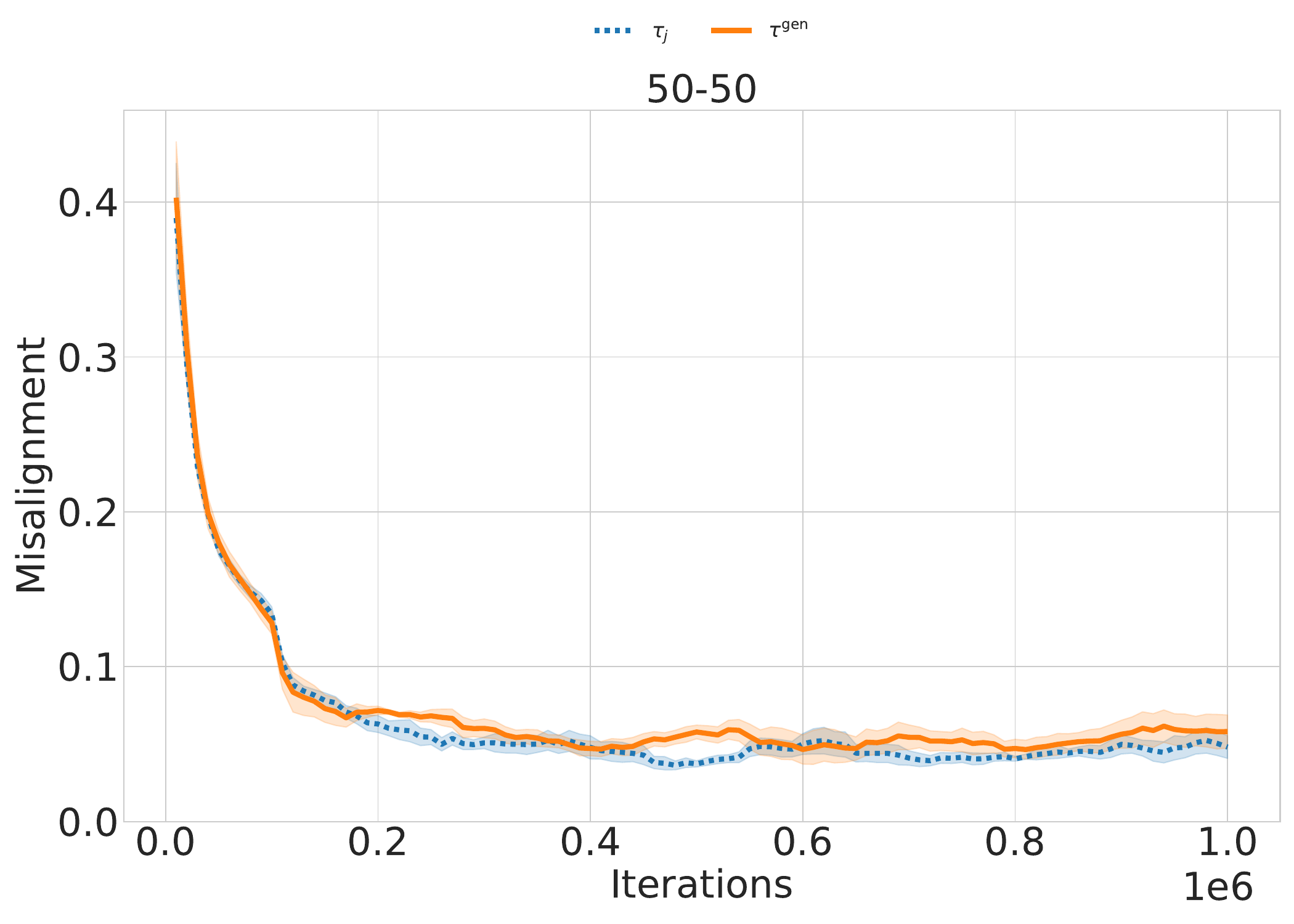}
    \end{subfigure}
    \hfill
    \begin{subfigure}[b]{0.32\textwidth}
        \centering
        \includegraphics[width=\textwidth,trim=0 0 0 81,clip]{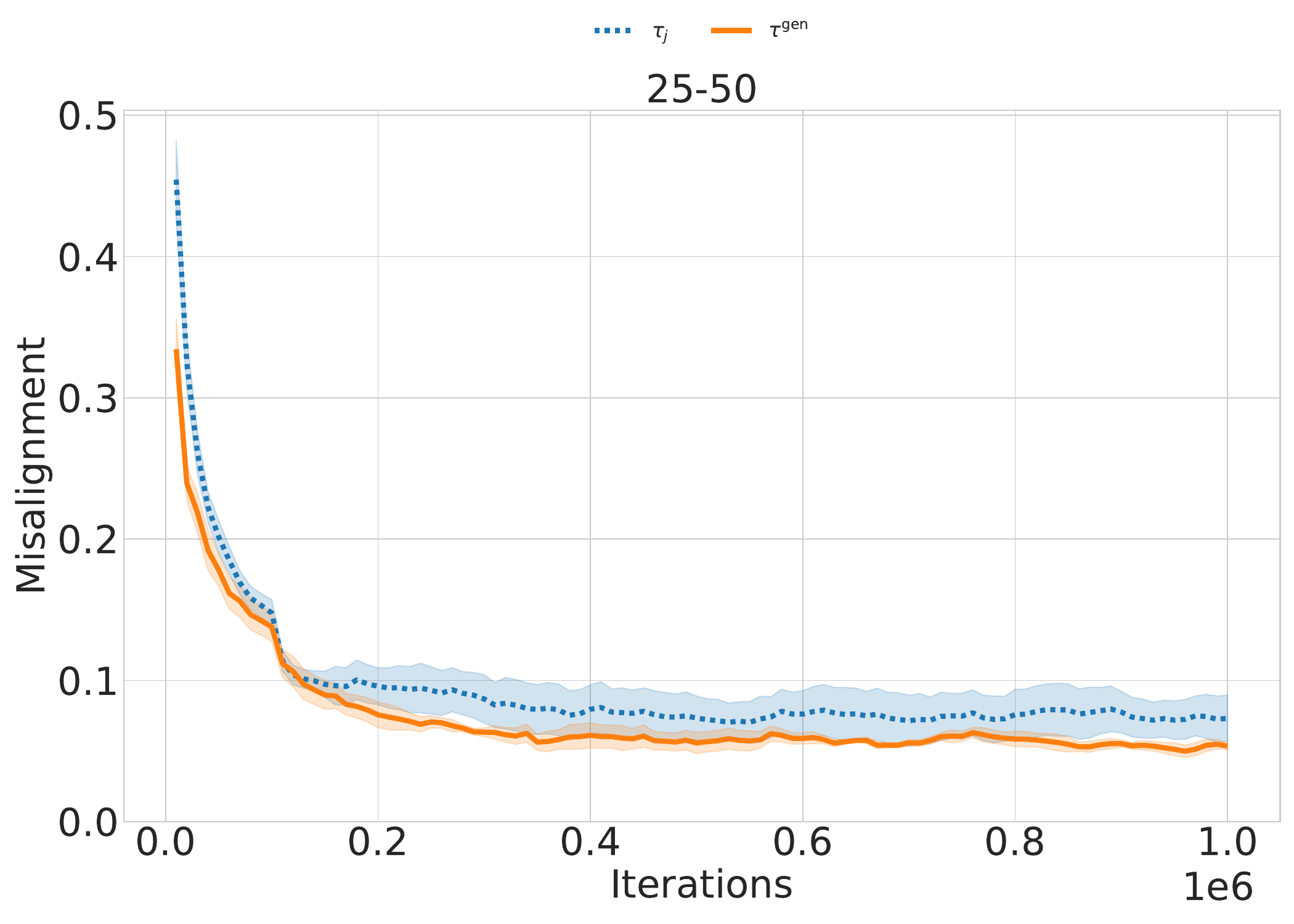}
    \end{subfigure}
    \hfill
    \begin{subfigure}[b]{0.32\textwidth}
        \centering
        \includegraphics[width=\textwidth,trim=0 0 0 81,clip]{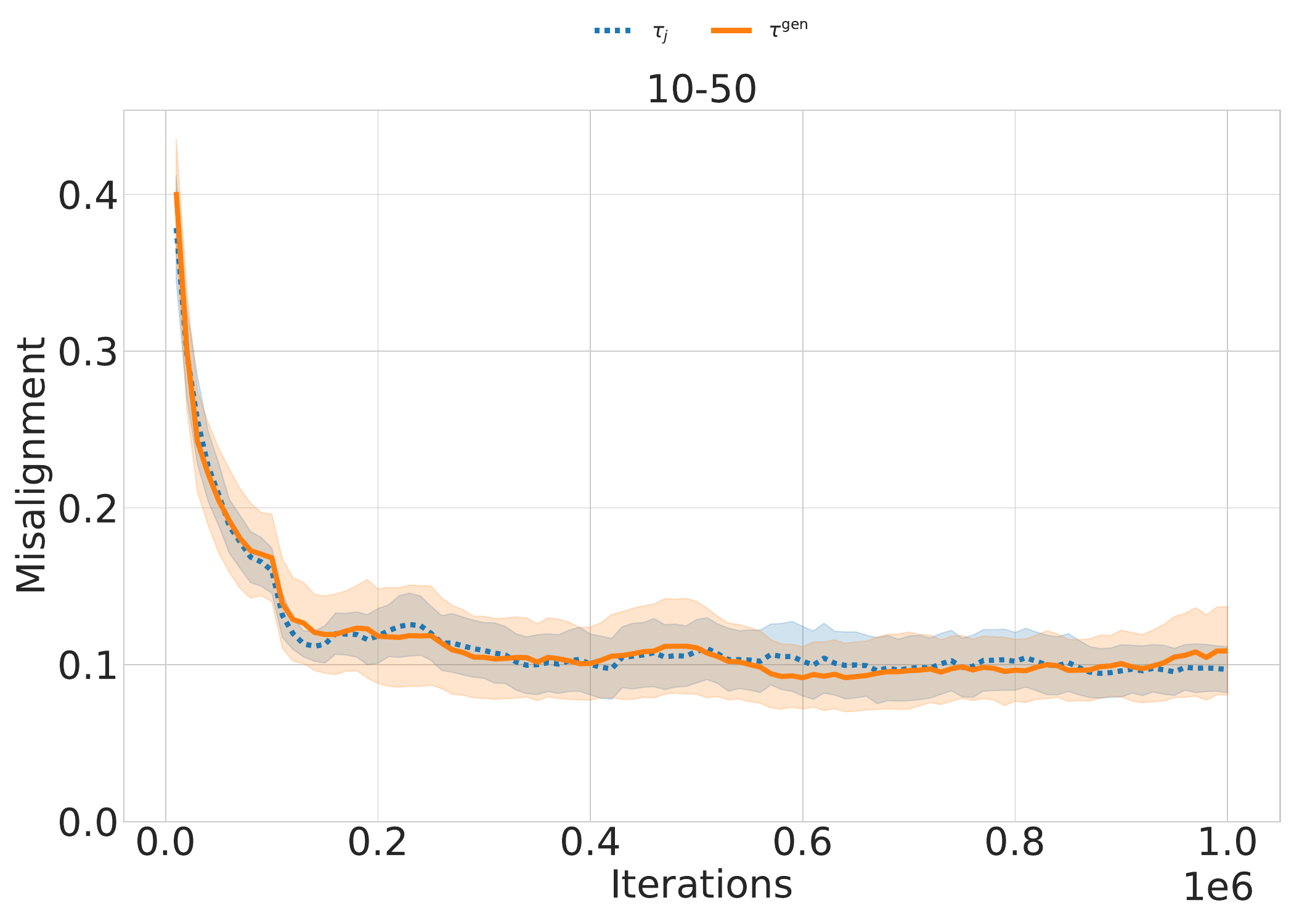}
    \end{subfigure}
    
    \caption{PathBB learning curves for ReCOIL+FMR when using $\tau_j$ and $\tau^\text{gen}$. The shaded region represents the standard error.}
    \label{fig:pathbb-ataus}
\end{figure}

\begin{figure}[h]
    \centering
    
    \begin{subfigure}[b]{\textwidth}
        \centering
        \includegraphics[width=0.8\textwidth,trim=0 665 0 0,clip]{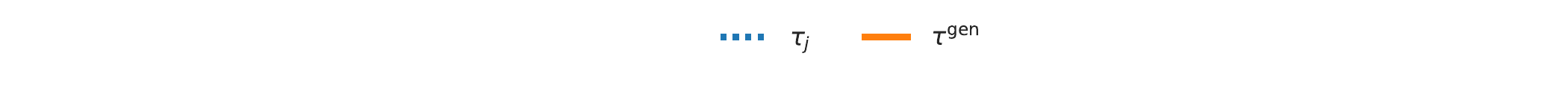}
    \end{subfigure}
    
    \vspace{0.5em}
    
    \begin{subfigure}[b]{0.32\textwidth}
        \centering
        \includegraphics[width=\textwidth,trim=0 44 0 55,clip]{images/gen-tau/SlowSwim/return_50-50.pdf}
    \end{subfigure}
    \hfill
    \begin{subfigure}[b]{0.32\textwidth}
        \centering
        \includegraphics[width=\textwidth,trim=0 44 0 55,clip]{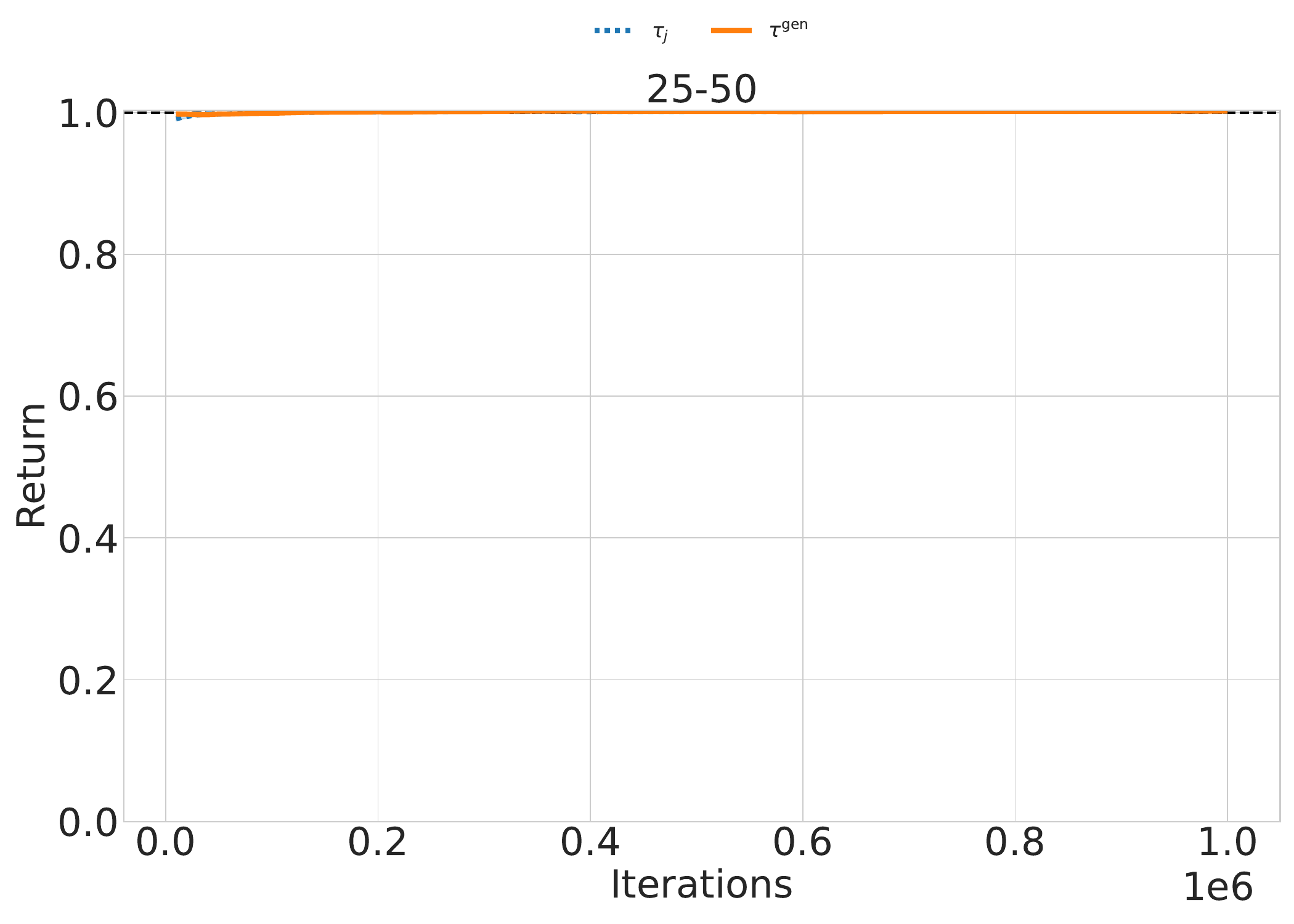}
    \end{subfigure}
    \hfill
    \begin{subfigure}[b]{0.32\textwidth}
        \centering
        \includegraphics[width=\textwidth,trim=0 44 0 55,clip]{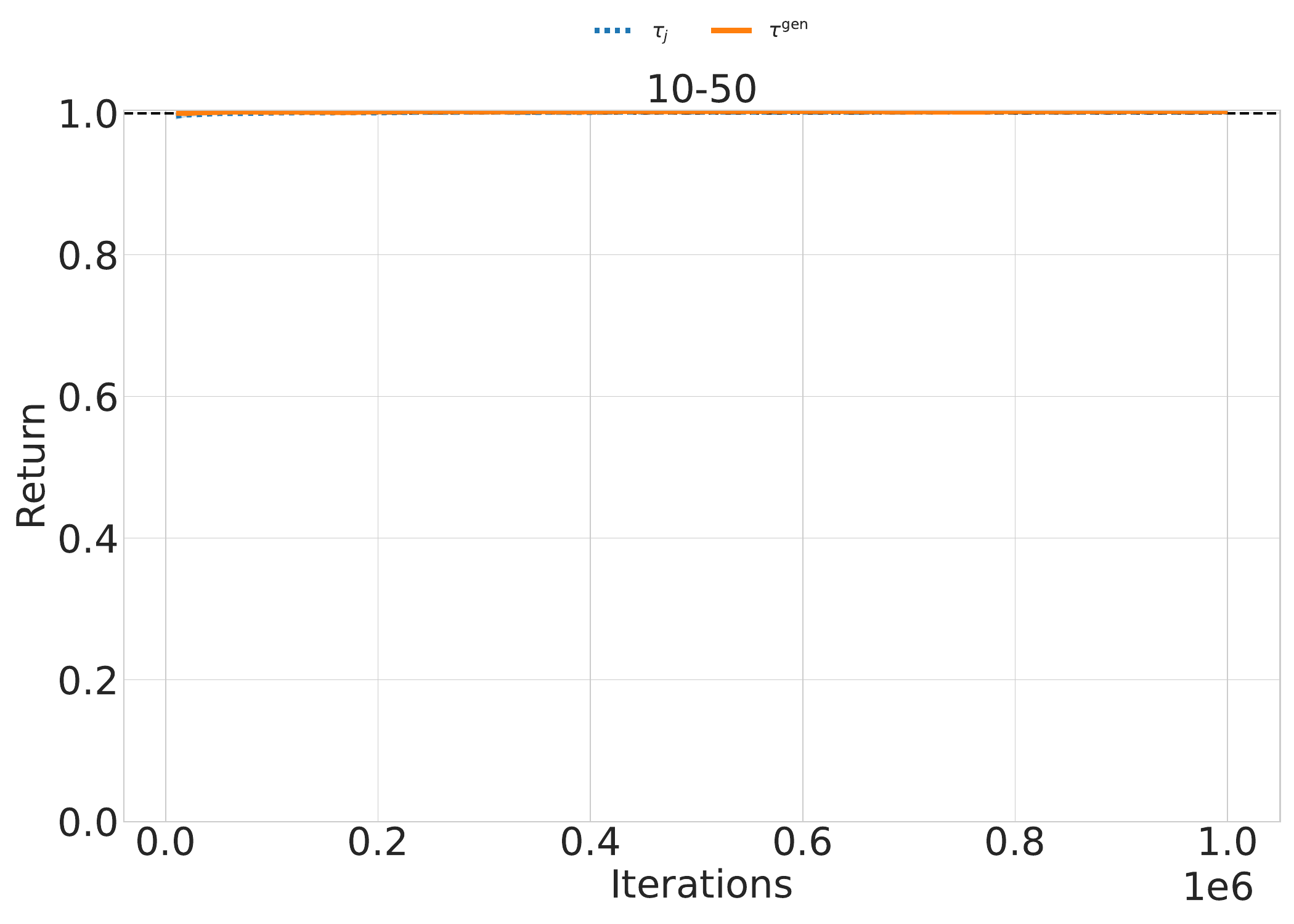}
    \end{subfigure}
    
    \begin{subfigure}[b]{0.32\textwidth}
        \centering
        \includegraphics[width=\textwidth,trim=0 0 0 81,clip]{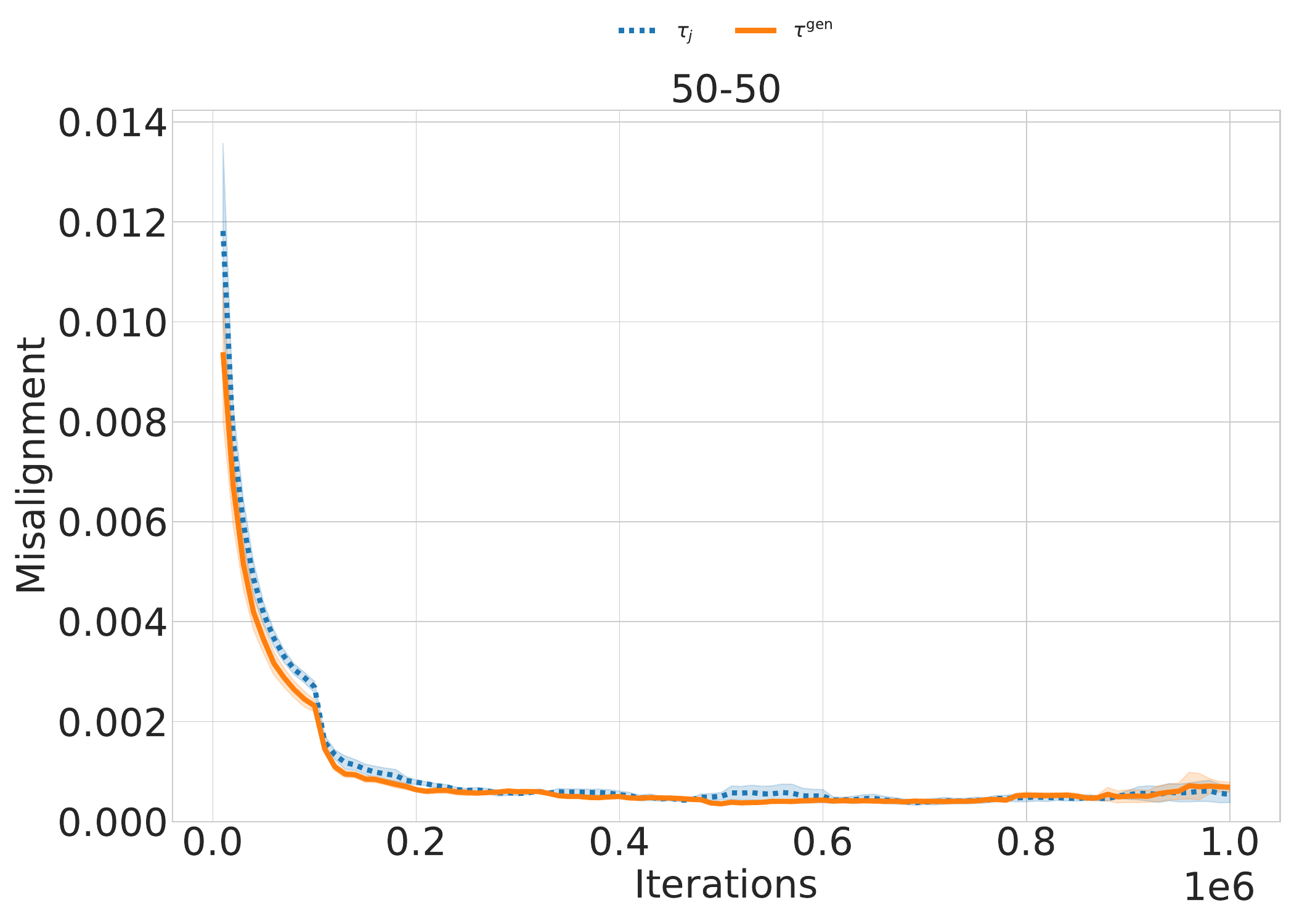}
    \end{subfigure}
    \hfill
    \begin{subfigure}[b]{0.32\textwidth}
        \centering
        \includegraphics[width=\textwidth,trim=0 0 0 81,clip]{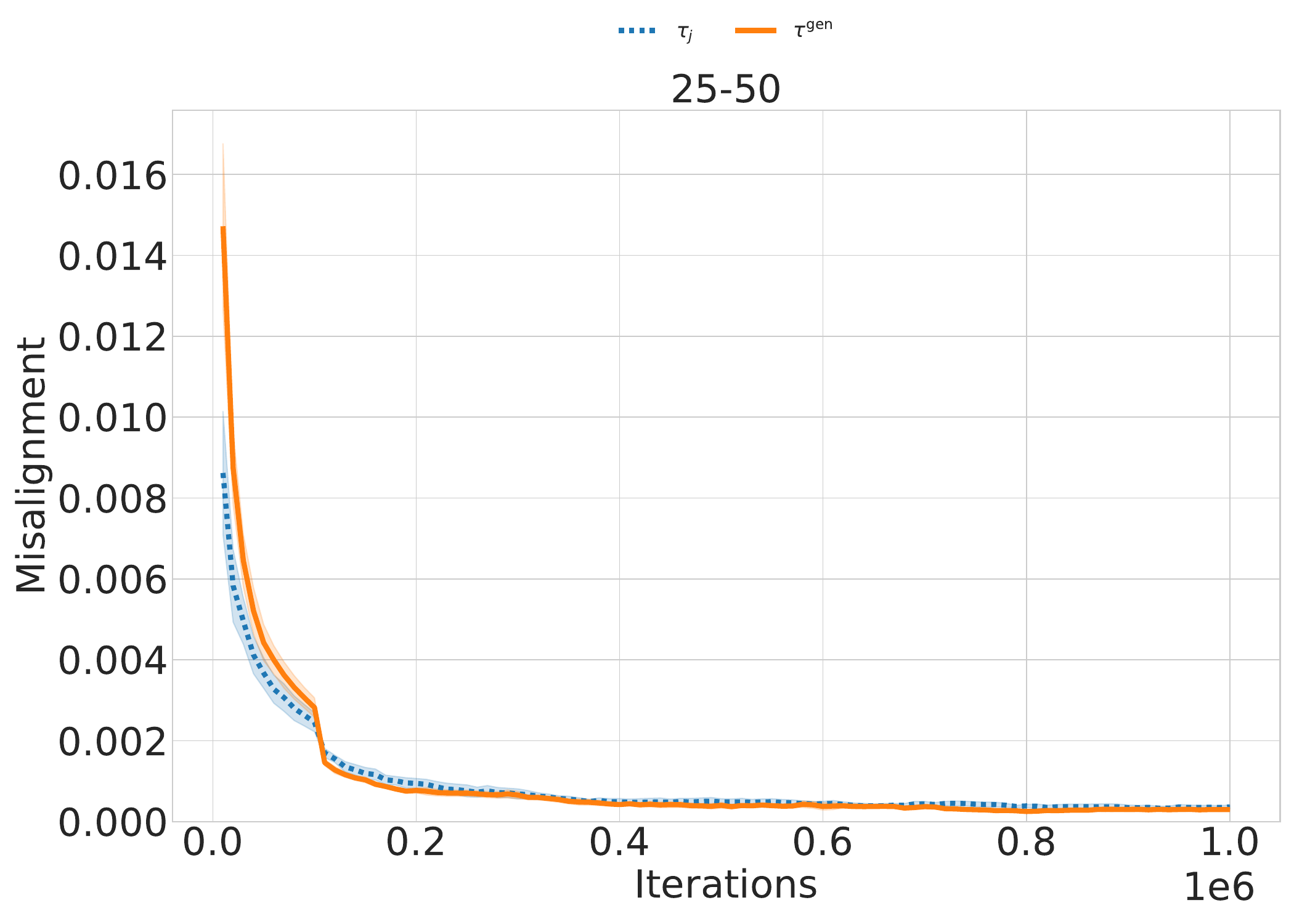}
    \end{subfigure}
    \hfill
    \begin{subfigure}[b]{0.32\textwidth}
        \centering
        \includegraphics[width=\textwidth,trim=0 0 0 81,clip]{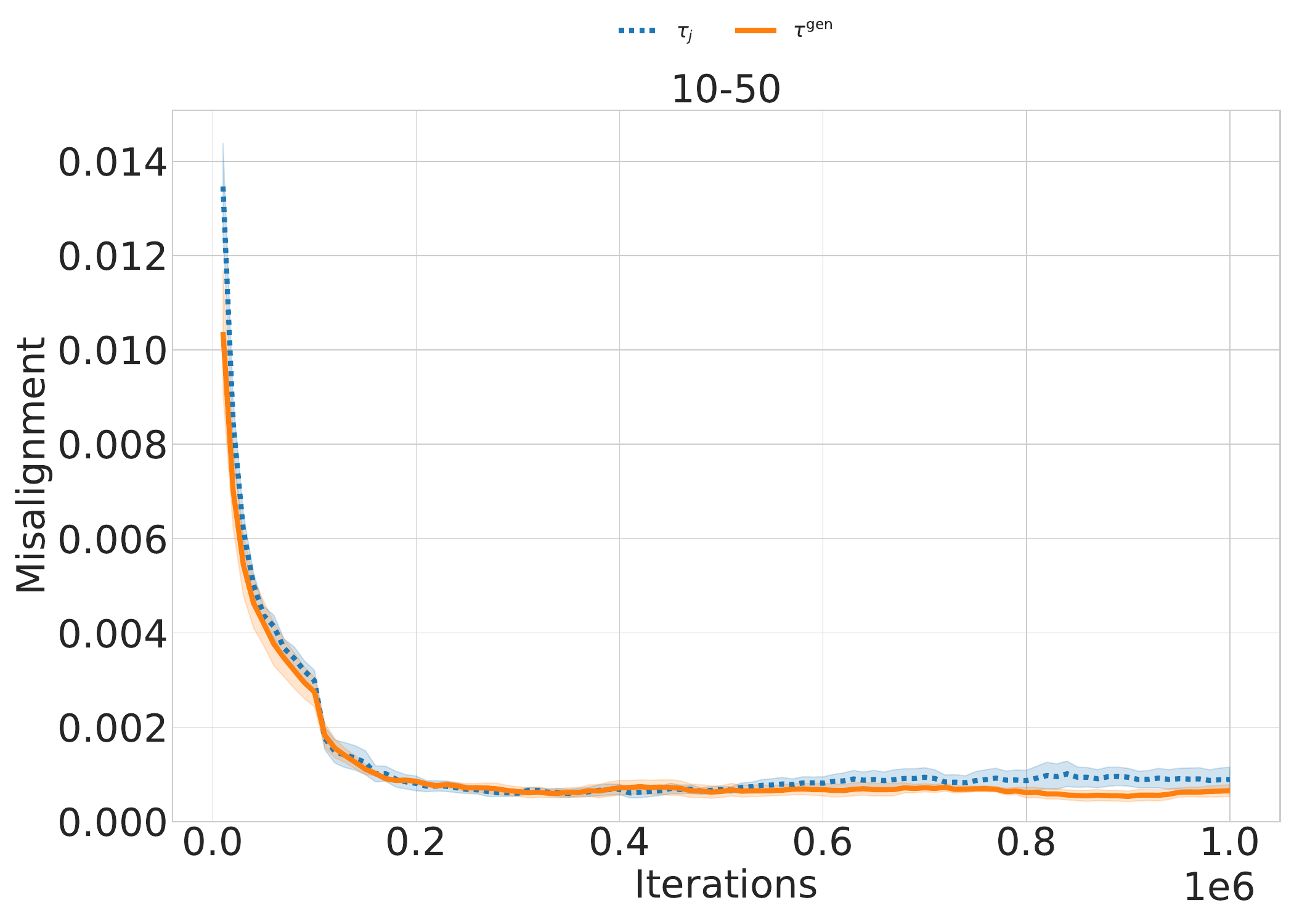}
    \end{subfigure}
    
    \caption{SlowSwim learning curves for ReCOIL+FMR when using $\tau_j$ and $\tau^\text{gen}$. The shaded region represents the standard error.}
    \label{fig:swim-ataus}
\end{figure}

\begin{figure}[h]
    \centering
    
    \begin{subfigure}[b]{\textwidth}
        \centering
        \includegraphics[width=0.8\textwidth,trim=0 665 0 0,clip]{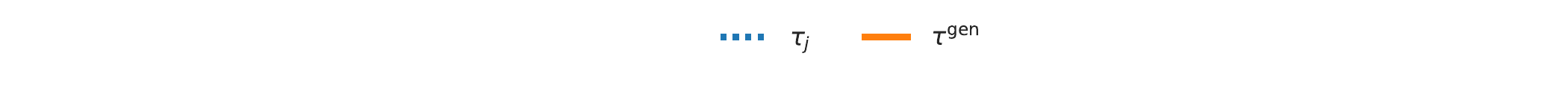}
    \end{subfigure}
    
    \vspace{0.5em}
    
    \begin{subfigure}[b]{0.32\textwidth}
        \centering
        \includegraphics[width=\textwidth,trim=0 44 0 55,clip]{images/gen-tau/SlowHop/return_50-50.pdf}
    \end{subfigure}
    \hfill
    \begin{subfigure}[b]{0.32\textwidth}
        \centering
        \includegraphics[width=\textwidth,trim=0 44 0 55,clip]{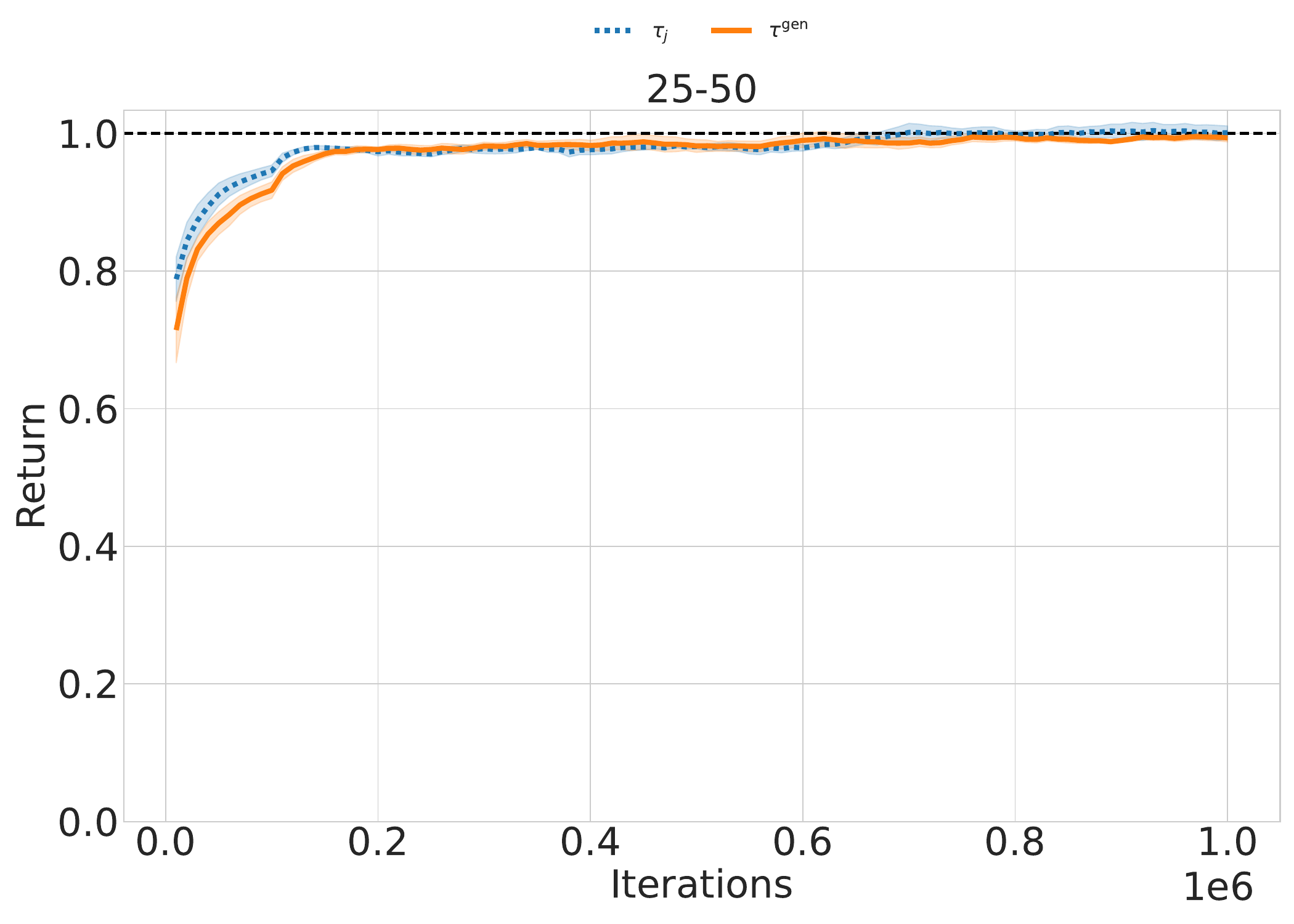}
    \end{subfigure}
    \hfill
    \begin{subfigure}[b]{0.32\textwidth}
        \centering
        \includegraphics[width=\textwidth,trim=0 44 0 55,clip]{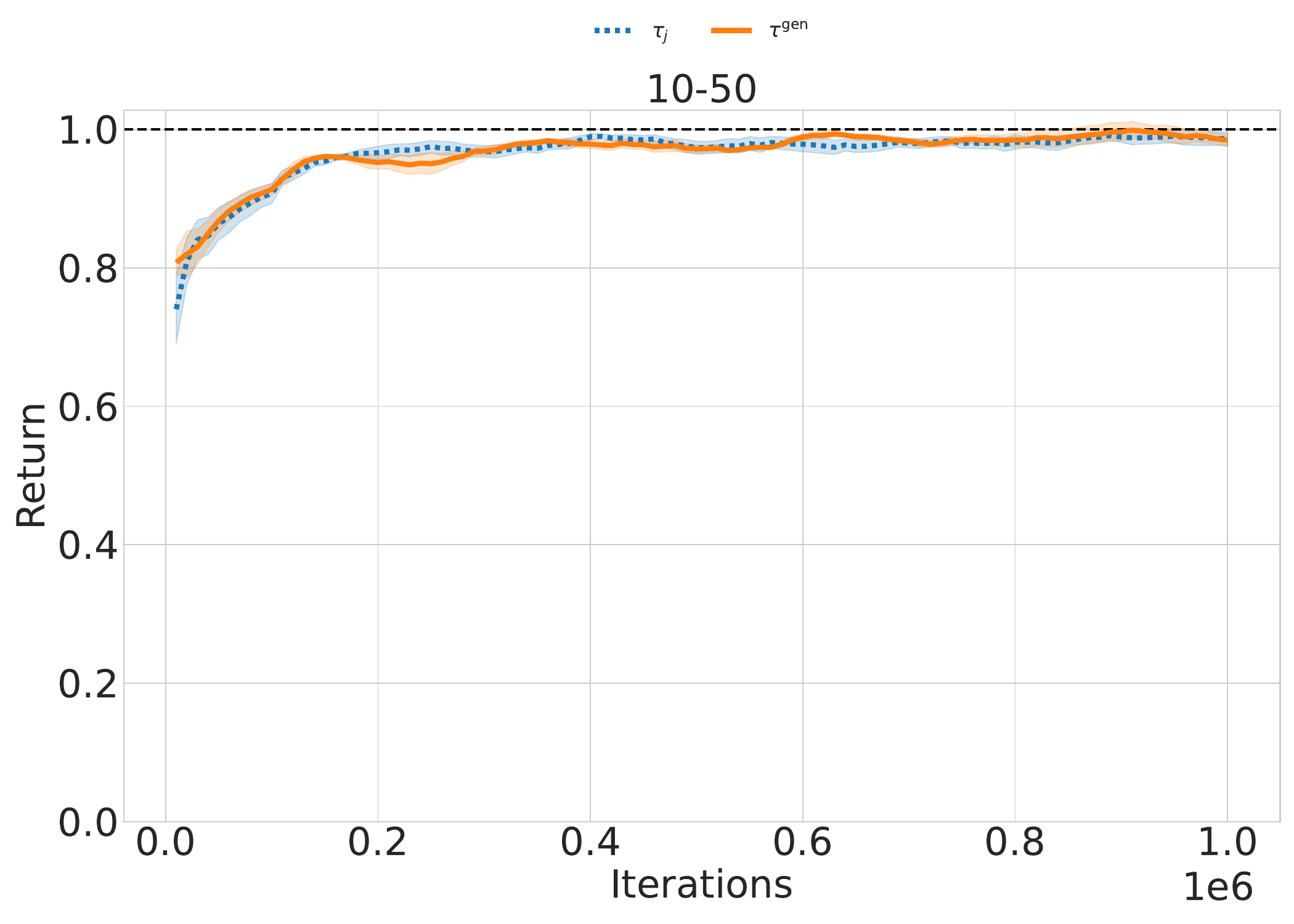}
    \end{subfigure}
    
    \begin{subfigure}[b]{0.32\textwidth}
        \centering
        \includegraphics[width=\textwidth,trim=0 0 0 81,clip]{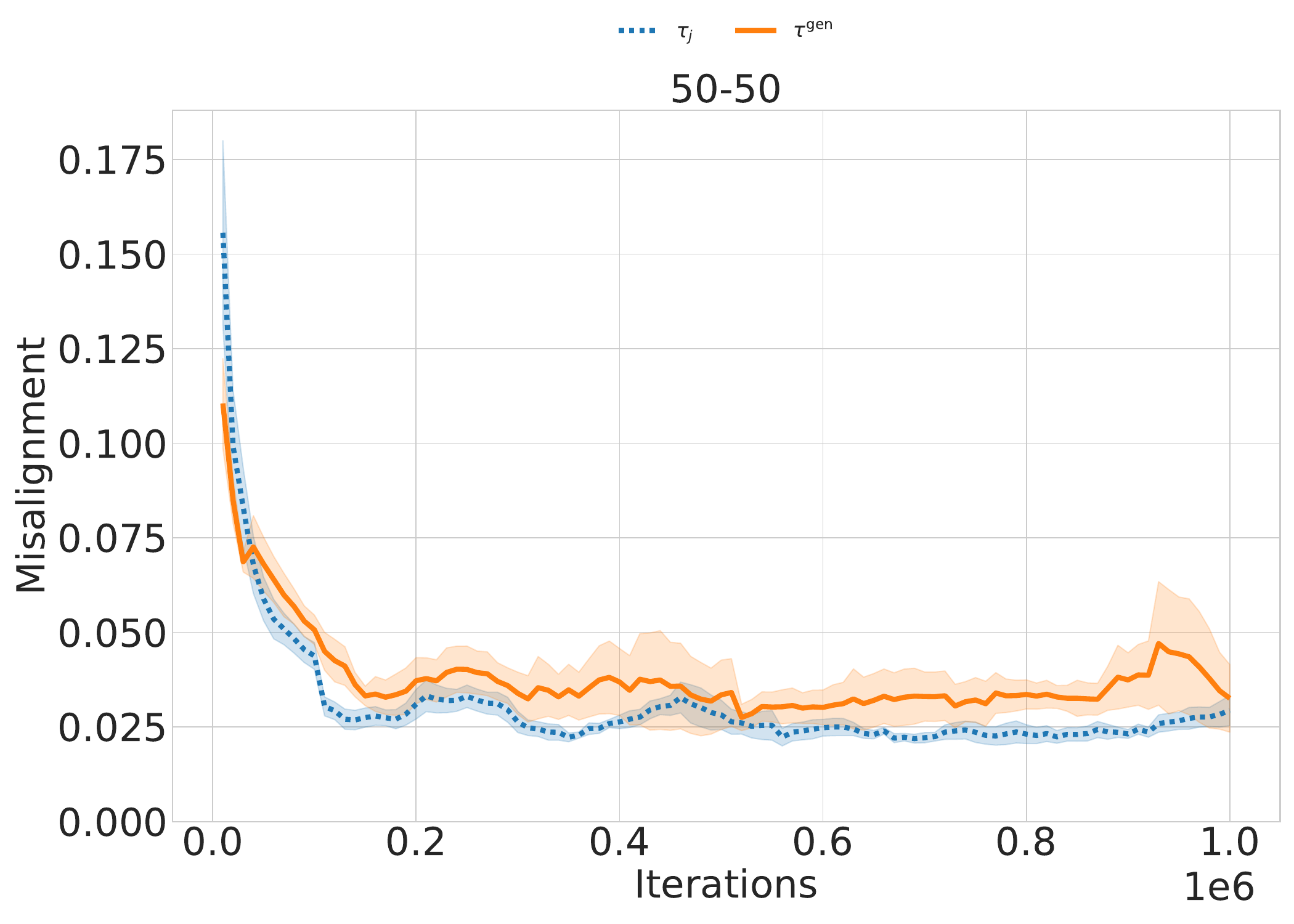}
    \end{subfigure}
    \hfill
    \begin{subfigure}[b]{0.32\textwidth}
        \centering
        \includegraphics[width=\textwidth,trim=0 0 0 81,clip]{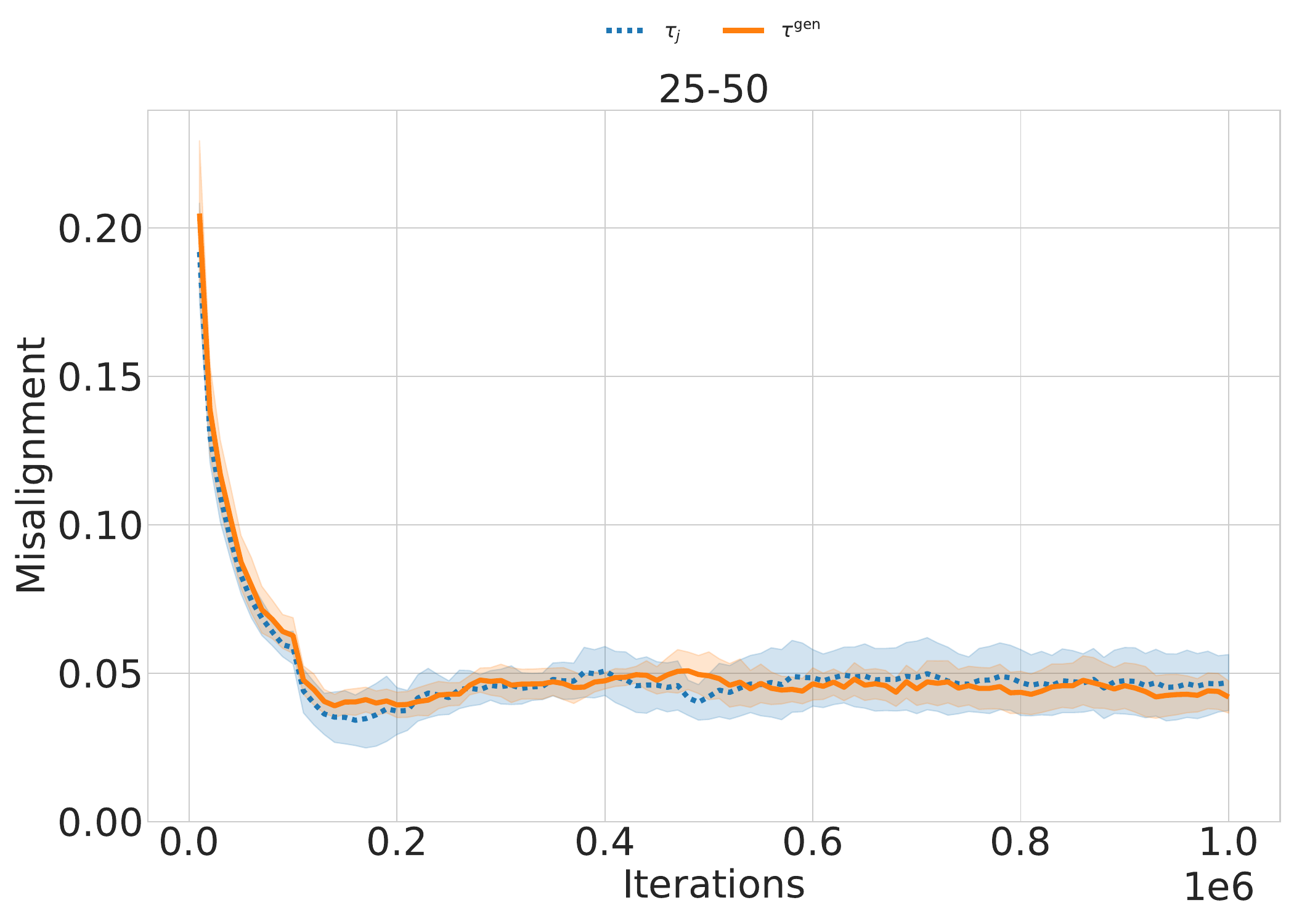}
    \end{subfigure}
    \hfill
    \begin{subfigure}[b]{0.32\textwidth}
        \centering
        \includegraphics[width=\textwidth,trim=0 0 0 81,clip]{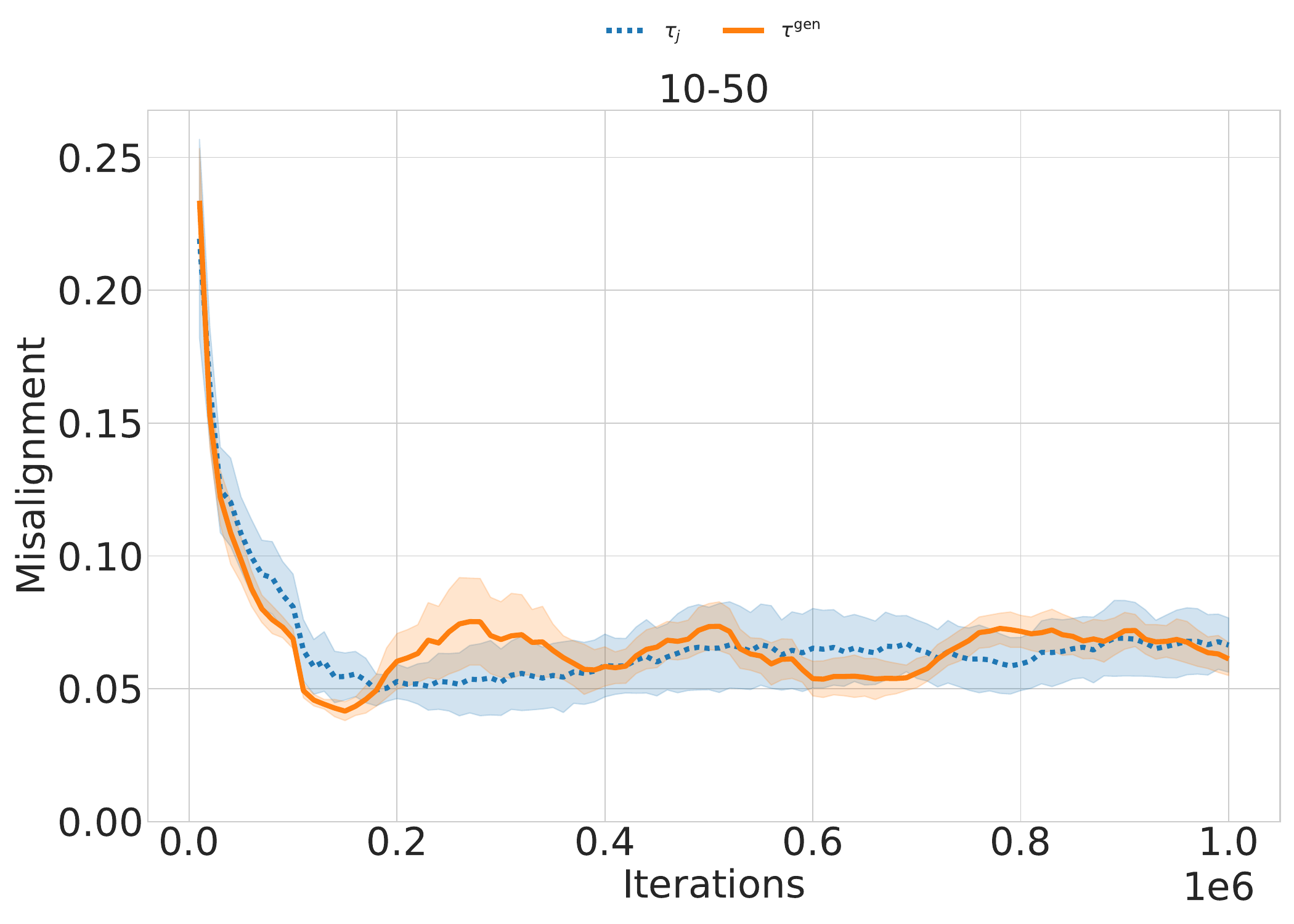}
    \end{subfigure}
    
    \caption{ SlowHop learning curves for ReCOIL+FMR when using $\tau_j$ and $\tau^\text{gen}$. The shaded region represents the standard error.}
    \label{fig:hop-ataus}
\end{figure}

\begin{figure}[h]
    \centering
    
    \begin{subfigure}[b]{\textwidth}
        \centering
        \includegraphics[width=0.8\textwidth,trim=0 665 0 0,clip]{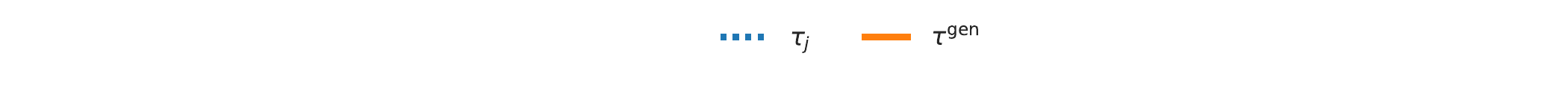}
    \end{subfigure}
    
    \vspace{0.5em}
    
    \begin{subfigure}[b]{0.32\textwidth}
        \centering
        \includegraphics[width=\textwidth,trim=0 44 0 55,clip]{images/gen-tau/SlowWalk/return_50-50.pdf}
    \end{subfigure}
    \hfill
    \begin{subfigure}[b]{0.32\textwidth}
        \centering
        \includegraphics[width=\textwidth,trim=0 44 0 55,clip]{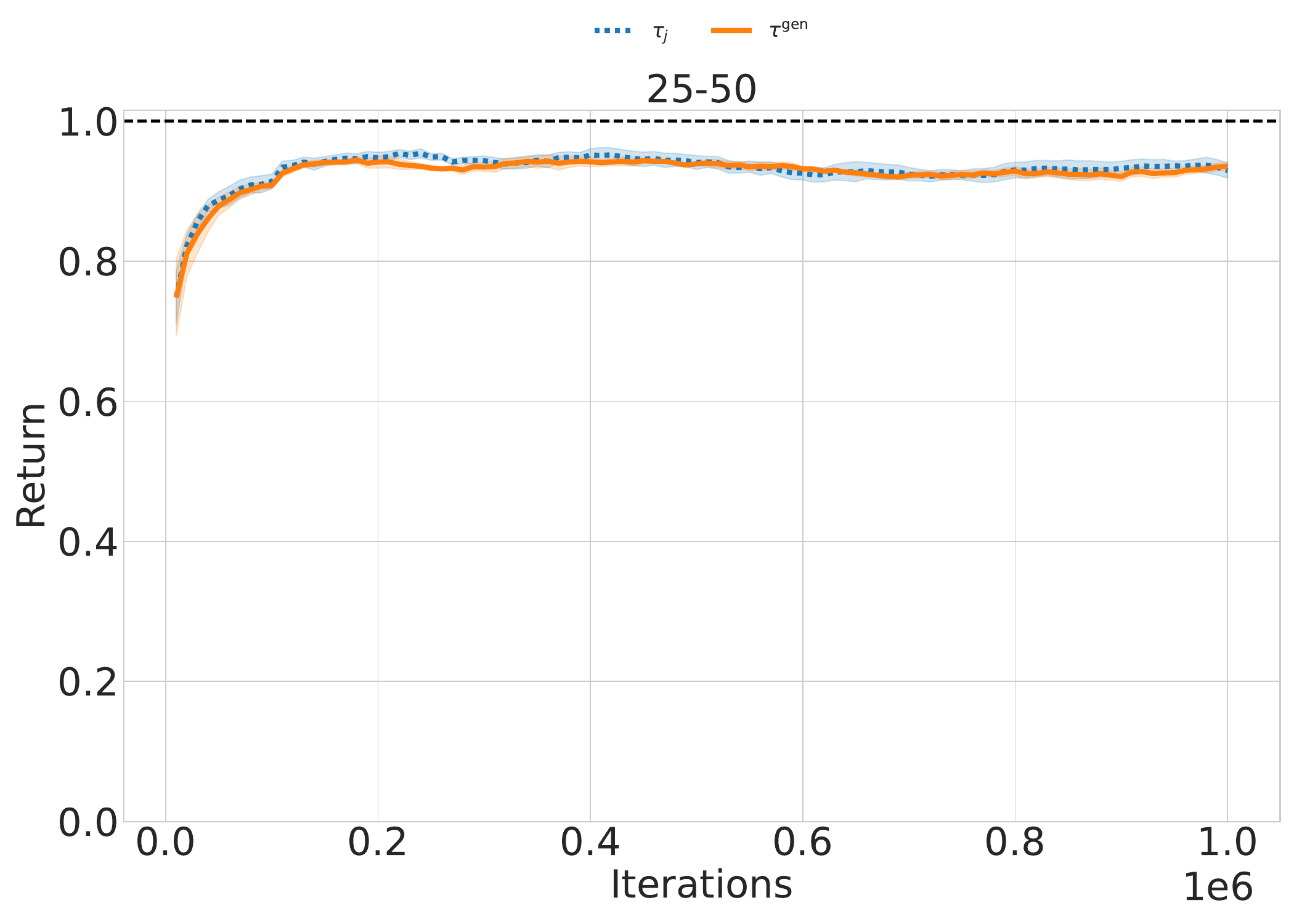}
    \end{subfigure}
    \hfill
    \begin{subfigure}[b]{0.32\textwidth}
        \centering
        \includegraphics[width=\textwidth,trim=0 44 0 55,clip]{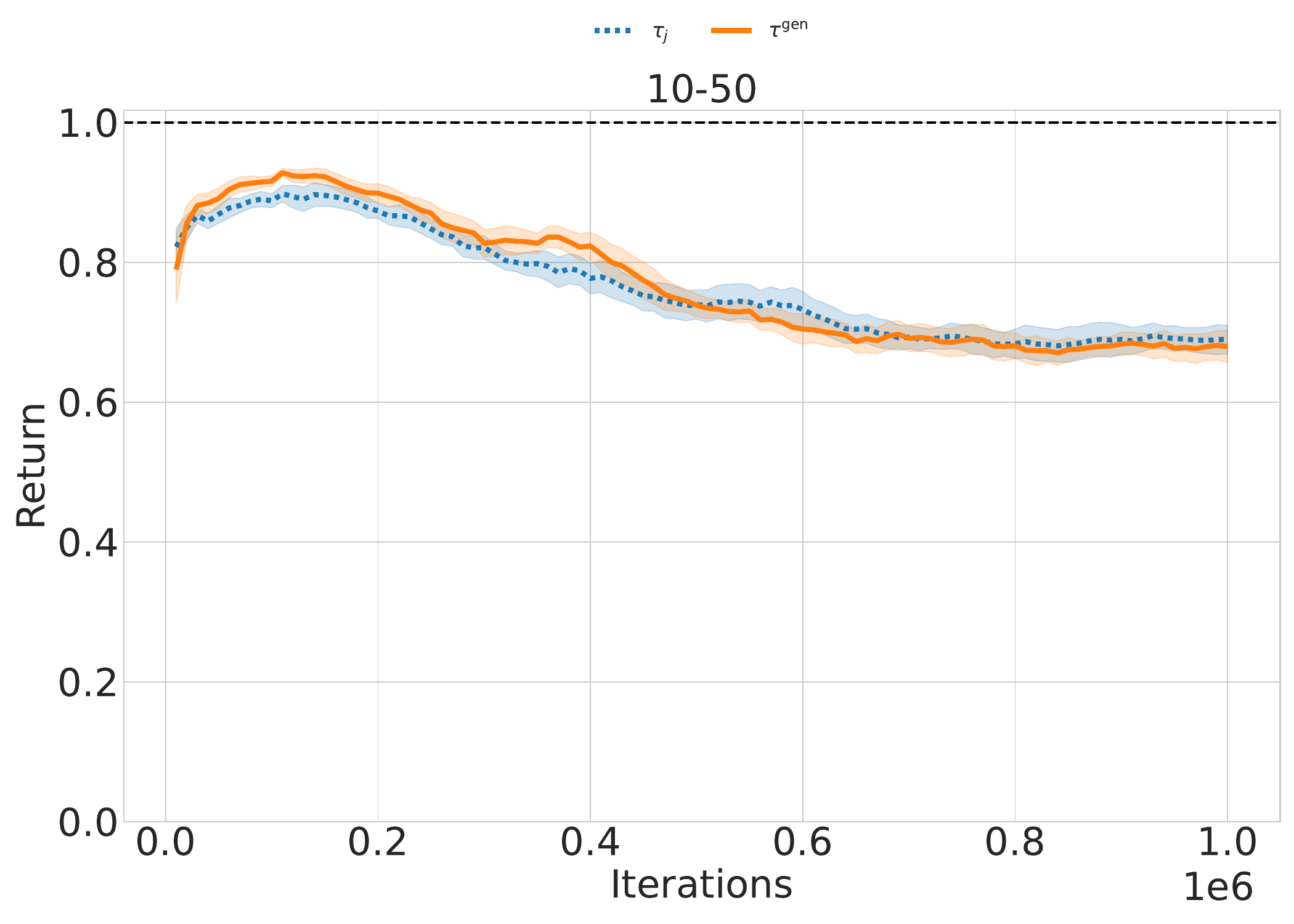}
    \end{subfigure}
    
    \begin{subfigure}[b]{0.32\textwidth}
        \centering
        \includegraphics[width=\textwidth,trim=0 0 0 81,clip]{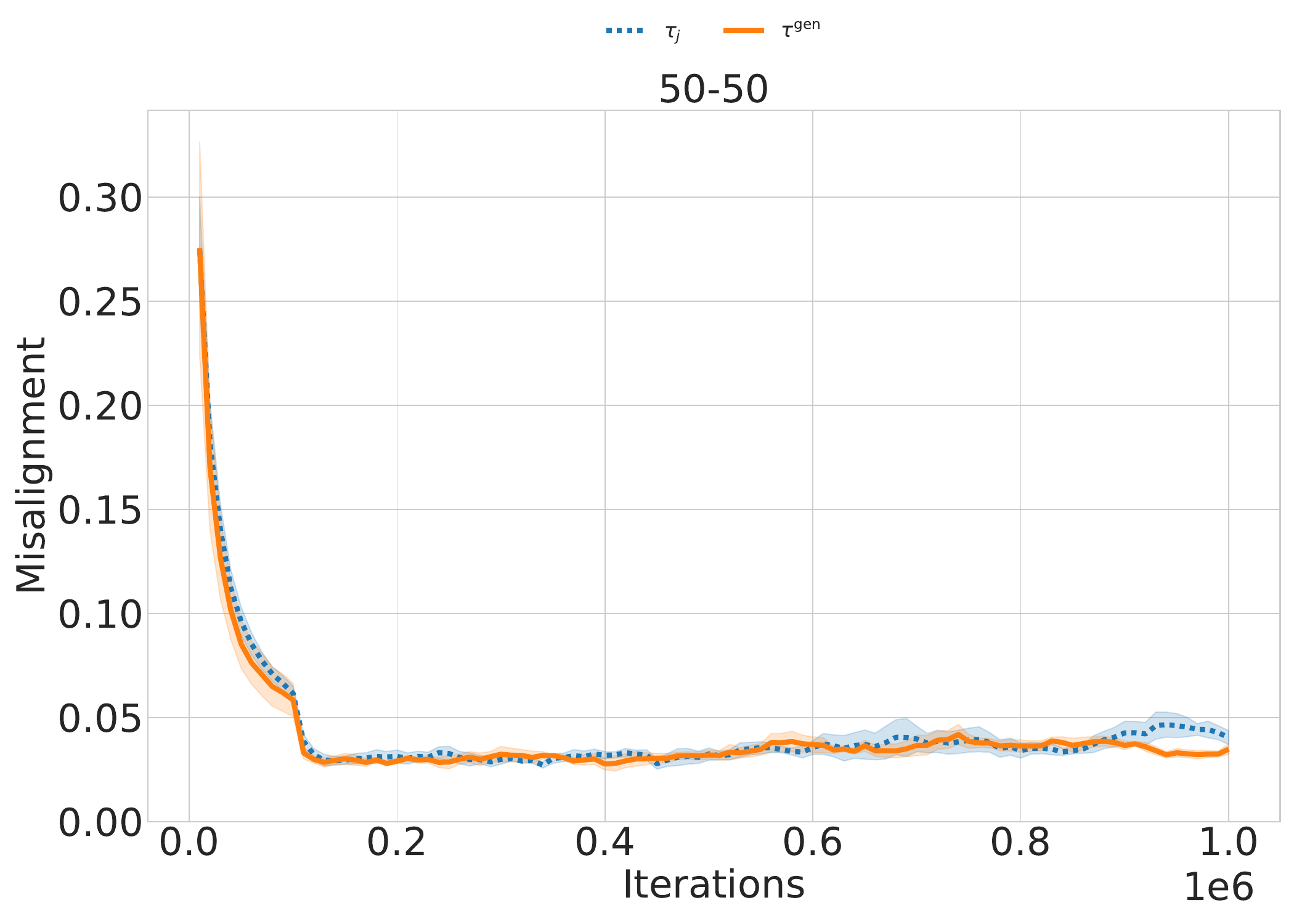}
    \end{subfigure}
    \hfill
    \begin{subfigure}[b]{0.32\textwidth}
        \centering
        \includegraphics[width=\textwidth,trim=0 0 0 81,clip]{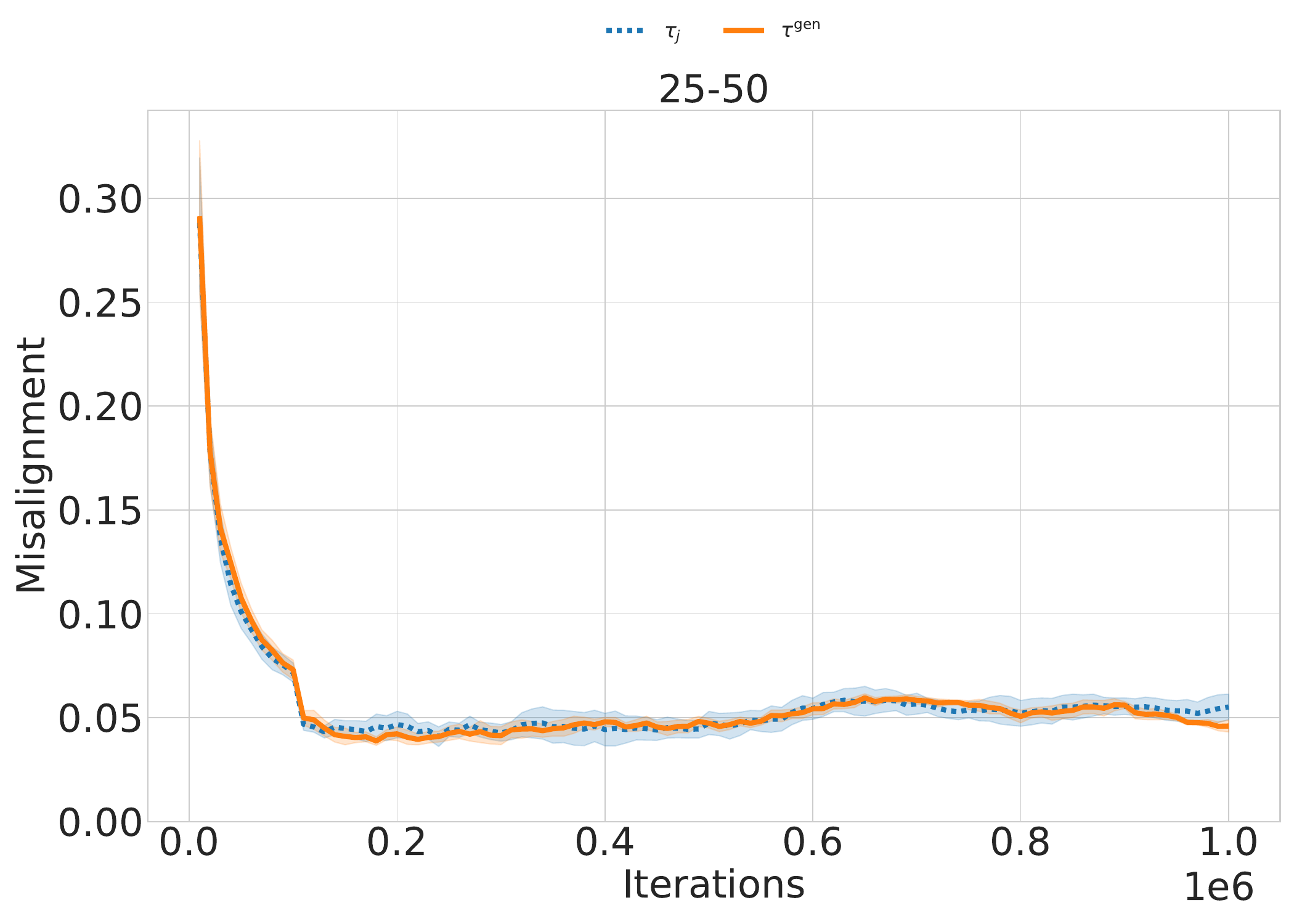}
    \end{subfigure}
    \hfill
    \begin{subfigure}[b]{0.32\textwidth}
        \centering
        \includegraphics[width=\textwidth,trim=0 0 0 81,clip]{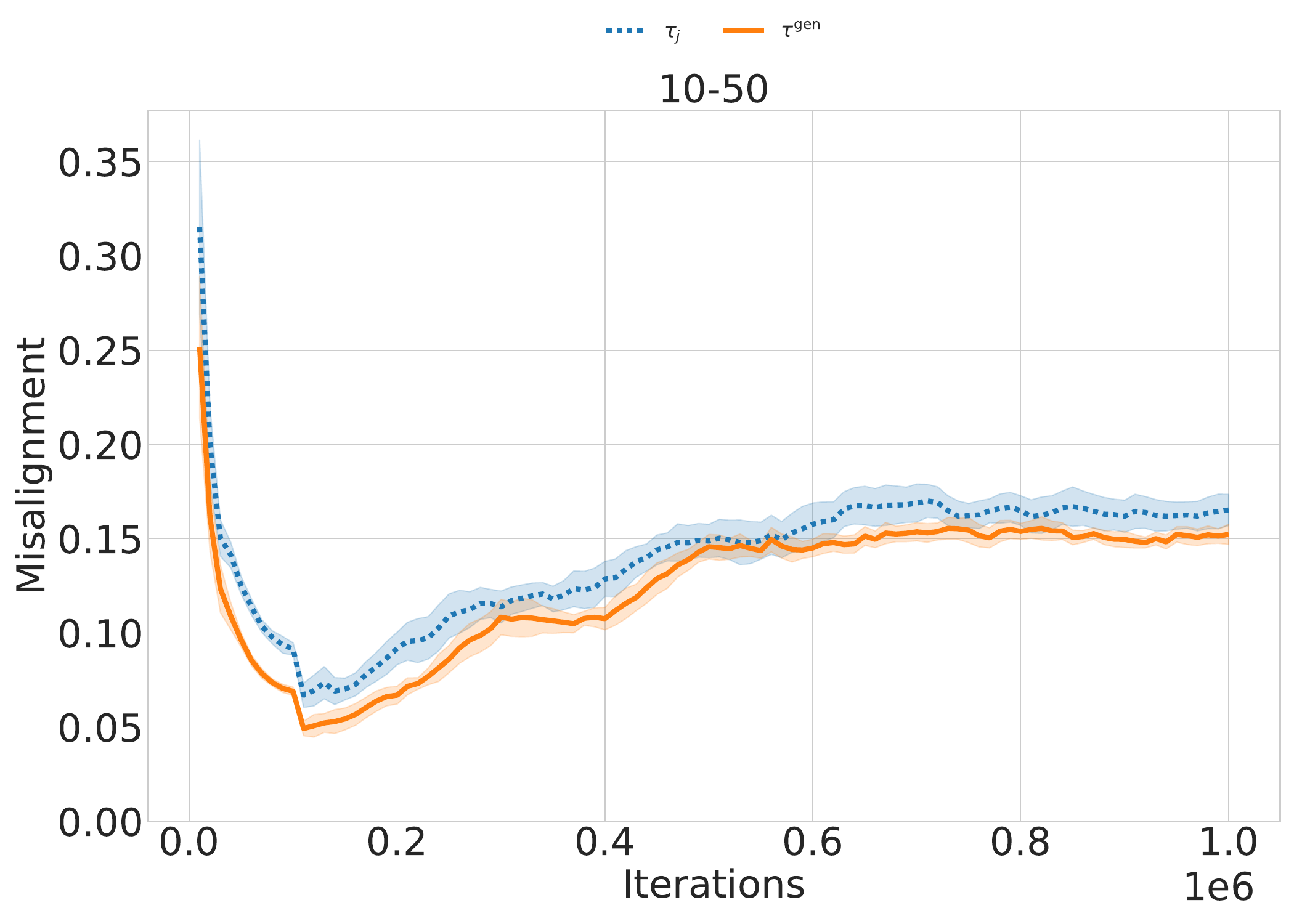}
    \end{subfigure}
    
    \caption{SlowWalk learning curves for ReCOIL+FMR when using $\tau_j$ and $\tau^\text{gen}$. The shaded region represents the standard error.}
    \label{fig:walk-ataus}
\end{figure}

%% file: appendix/hyperparams.tex
\subsection{$\beta$ Values}
Below are the results for various $\beta$ values when computing the temperature $\tau_j$ based on feedback for ReCOIL+FM.

\begin{table}[h]
\centering
\caption{PathBB comparison of different $\beta$ values for ReCOIL+FMR with a 10-50 data ratio. Results show mean ± std for the last 10 evaluations, over 5 seeds.}
\label{tab:pathbb-beta}
\begin{tabular}{ccc}
\toprule
$\beta$ & Suc. & Mis. \\
\midrule
\multirow[t]{1}{*}{1.5} & 0.592 ± 0.49 & 0.134 ± 0.24 \\
\multirow[t]{1}{*}{10} & 0.708 ± 0.45 & 0.097 ± 0.21 \\
\multirow[t]{1}{*}{100} & 0.702 ± 0.46 & 0.191 ± 0.32 \\
\multirow[t]{1}{*}{1000} & 0.709 ± 0.45 & 0.098 ± 0.24 \\
\bottomrule
\end{tabular}
\end{table}

\begin{figure}[h]
    \centering
    
    \begin{subfigure}[b]{\textwidth}
        \centering
        \includegraphics[width=0.8\textwidth,trim=0 665 0 0,clip]{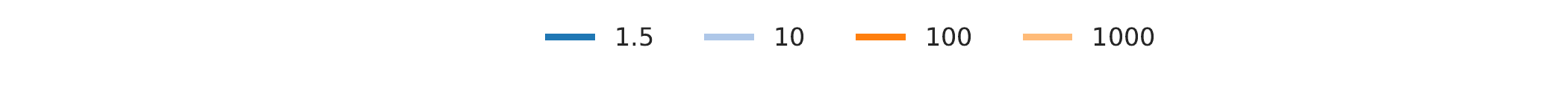}
    \end{subfigure}
    
    \vspace{0.5em}
    
    \begin{subfigure}[b]{0.48\textwidth}
        \centering
        \includegraphics[width=\textwidth,trim=0 0 0 55,clip]{images/beta/PathBB/success_10-50.pdf}
    \end{subfigure}
    \hfill
    \begin{subfigure}[b]{0.48\textwidth}
        \centering
        \includegraphics[width=\textwidth,trim=0 0 0 55,clip]{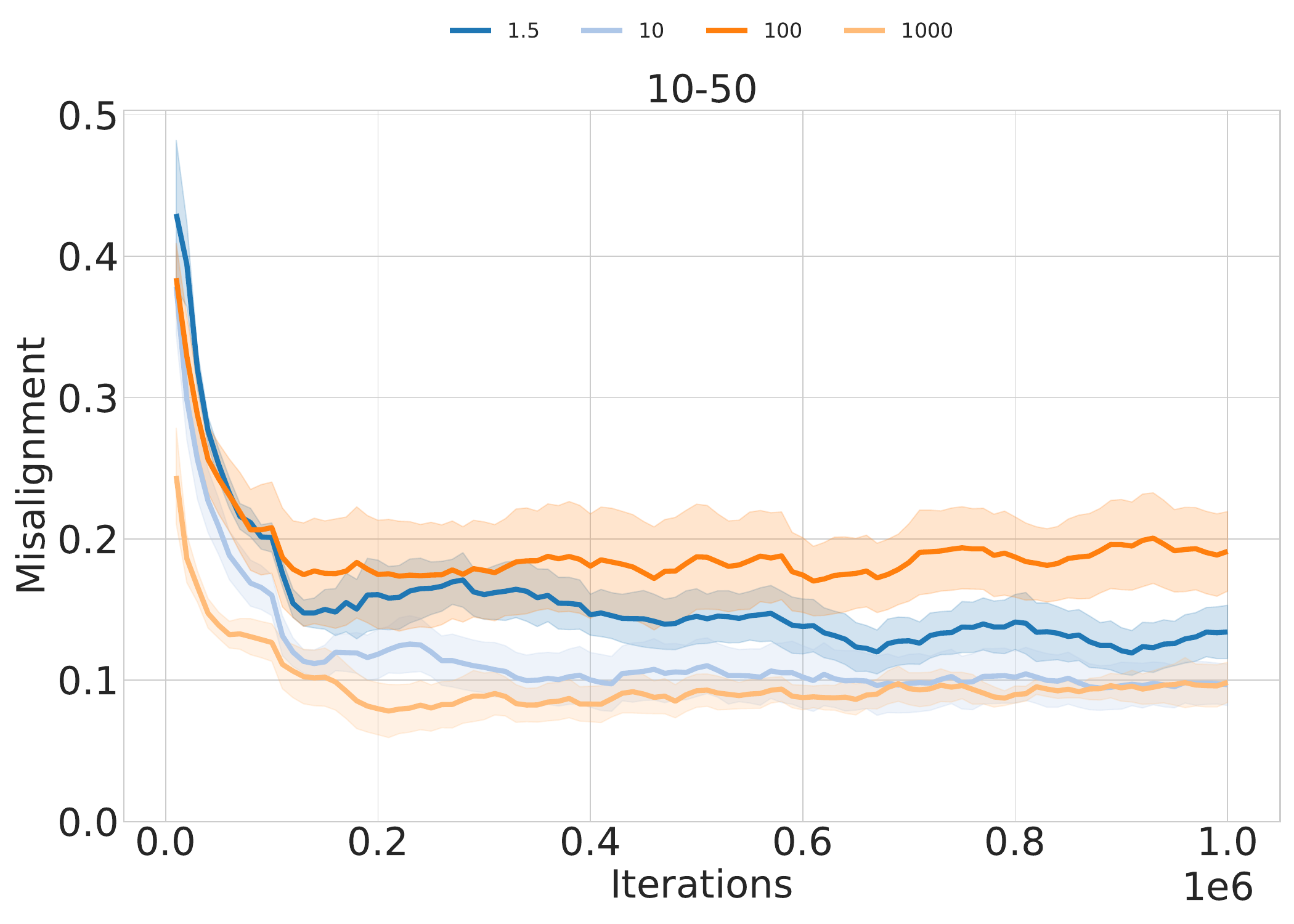}
    \end{subfigure}
    
    \caption{PathBB learning curves of $\beta$ values for ReCOIL+FMR with a 10-50 data ratio. The shaded region represents the standard error.}
    \label{fig:pathbb-beta}
\end{figure}

\clearpage
\subsection{$\alpha$ Values}
Below are the results for various $\alpha$ values for controlling the regularization or alignment strength of ReCOIL+FM. 

\begin{table}[h]
\centering
\caption{PathBB performance comparison of different $\alpha$ values for ReCOIL+FMR with a 10-50 data ratio. Results show mean ± std for the last 10 evaluations, over 5 seeds.}
\label{tab:pathbb-alpha}
\begin{tabular}{ccc}
\toprule
$\alpha$ & Suc. & Mis. \\
\midrule
\multirow[t]{1}{*}{0.1} & 0.506 ± 0.50 & 0.209 ± 0.31 \\
\multirow[t]{1}{*}{1}  & 0.708 ± 0.45 & 0.097 ± 0.21 \\
\multirow[t]{1}{*}{10} & 0.672 ± 0.47 & 0.092 ± 0.21 \\
\multirow[t]{1}{*}{100} & 0.510 ± 0.50 & 0.074 ± 0.17 \\
\bottomrule
\end{tabular}
\end{table}

\begin{figure}[h]
    \centering
    
    \begin{subfigure}[b]{\textwidth}
        \centering
        \includegraphics[width=0.8\textwidth,trim=0 665 0 0,clip]{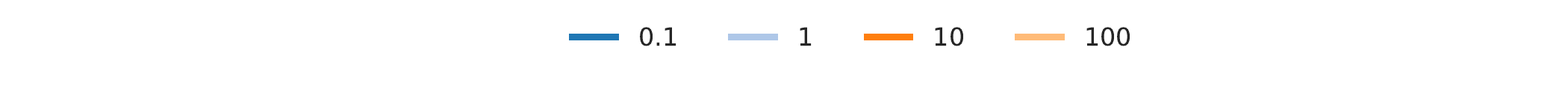}
    \end{subfigure}
    
    \vspace{0.5em}
    
    \begin{subfigure}[b]{0.48\textwidth}
        \centering
        \includegraphics[width=\textwidth,trim=0 0 0 55,clip]{images/kld/PathBB/success_10-50.pdf}
    \end{subfigure}
    \hfill
    \begin{subfigure}[b]{0.48\textwidth}
        \centering
        \includegraphics[width=\textwidth,trim=0 0 0 55,clip]{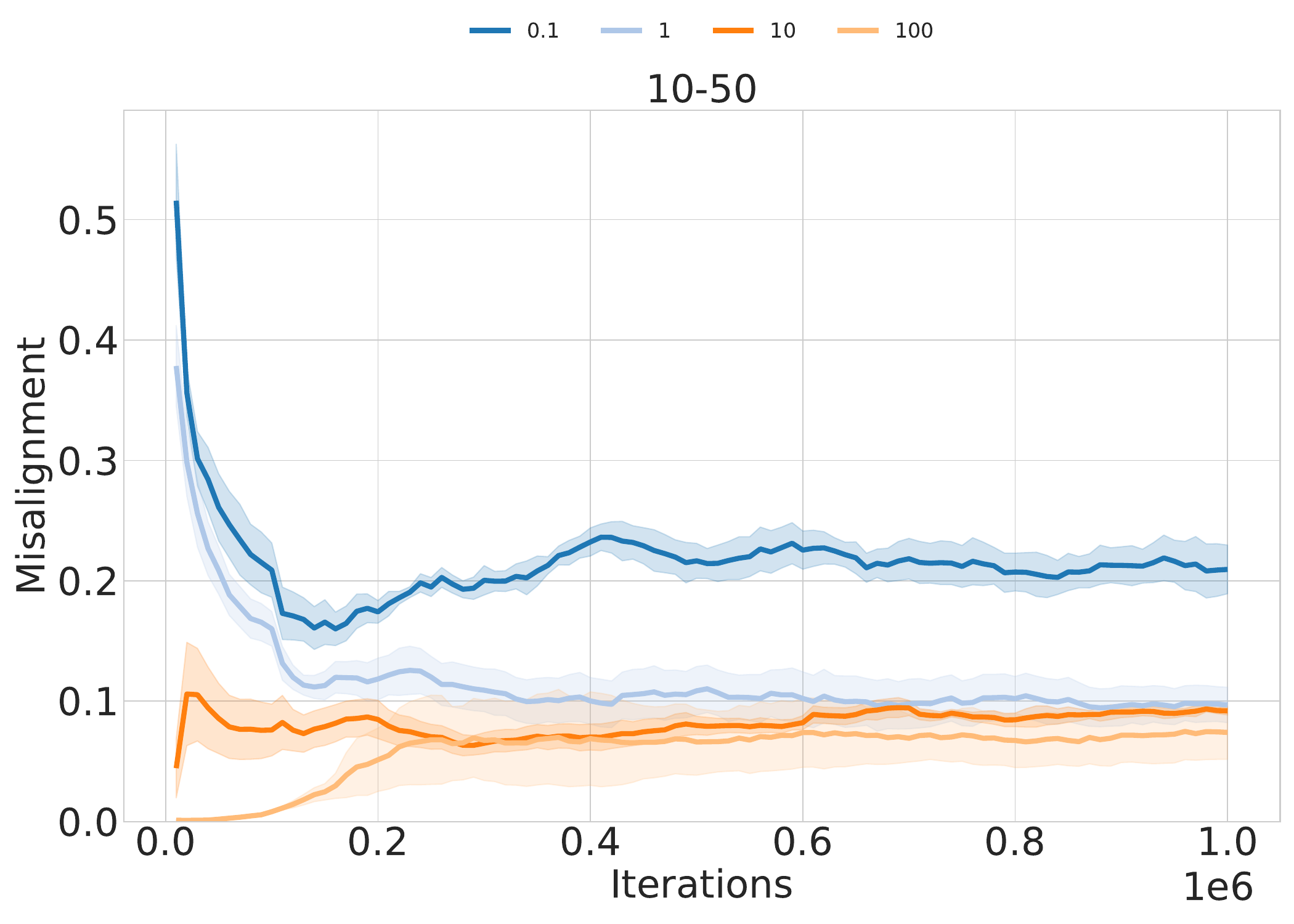}
    \end{subfigure}
    
    \caption{PathBB learning curves of $\alpha$ values for ReCOIL+FMR using PathBB with a 10-50 data ratio. The shaded region represents the standard error.}
    \label{fig:pathbb-alpha}
\end{figure}

\clearpage
\subsection{Credit Assignment Values}
Below are the results for time windows when applying credit assignment to feedback for ReCOIL+FMR. 

\begin{table}[h]
\centering
\caption{PathBB performance comparison of different credit assignment times for ReCOIL+FMR with a 10-50 data ratio. Results show mean ± std for the last 10 evaluations, over 5 seeds.}
\label{tab:pathbb-ca}
\begin{tabular}{ccc}
\toprule
Time & Suc. & Mis. \\
\midrule
\multirow[t]{1}{*}{300ms} & 0.688 ± 0.46 & 0.124 ± 0.25 \\
\multirow[t]{1}{*}{600ms} & 0.708 ± 0.45 & 0.097 ± 0.21 \\
\multirow[t]{1}{*}{1000ms} & 0.758 ± 0.43 & 0.119 ± 0.30 \\
\bottomrule
\end{tabular}
\end{table}

\begin{figure}[h]
    \centering
    
    \begin{subfigure}[b]{\textwidth}
        \centering
        \includegraphics[width=0.8\textwidth,trim=0 665 0 0,clip]{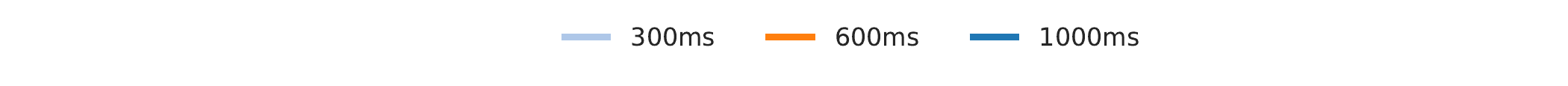}
    \end{subfigure}
    
    \vspace{0.5em}
    
    \begin{subfigure}[b]{0.48\textwidth}
        \centering
        \includegraphics[width=\textwidth,trim=0 0 0 55,clip]{images/ca/PathBB/success_10-50.pdf}
    \end{subfigure}
    \hfill
    \begin{subfigure}[b]{0.48\textwidth}
        \centering
        \includegraphics[width=\textwidth,trim=0 0 0 55,clip]{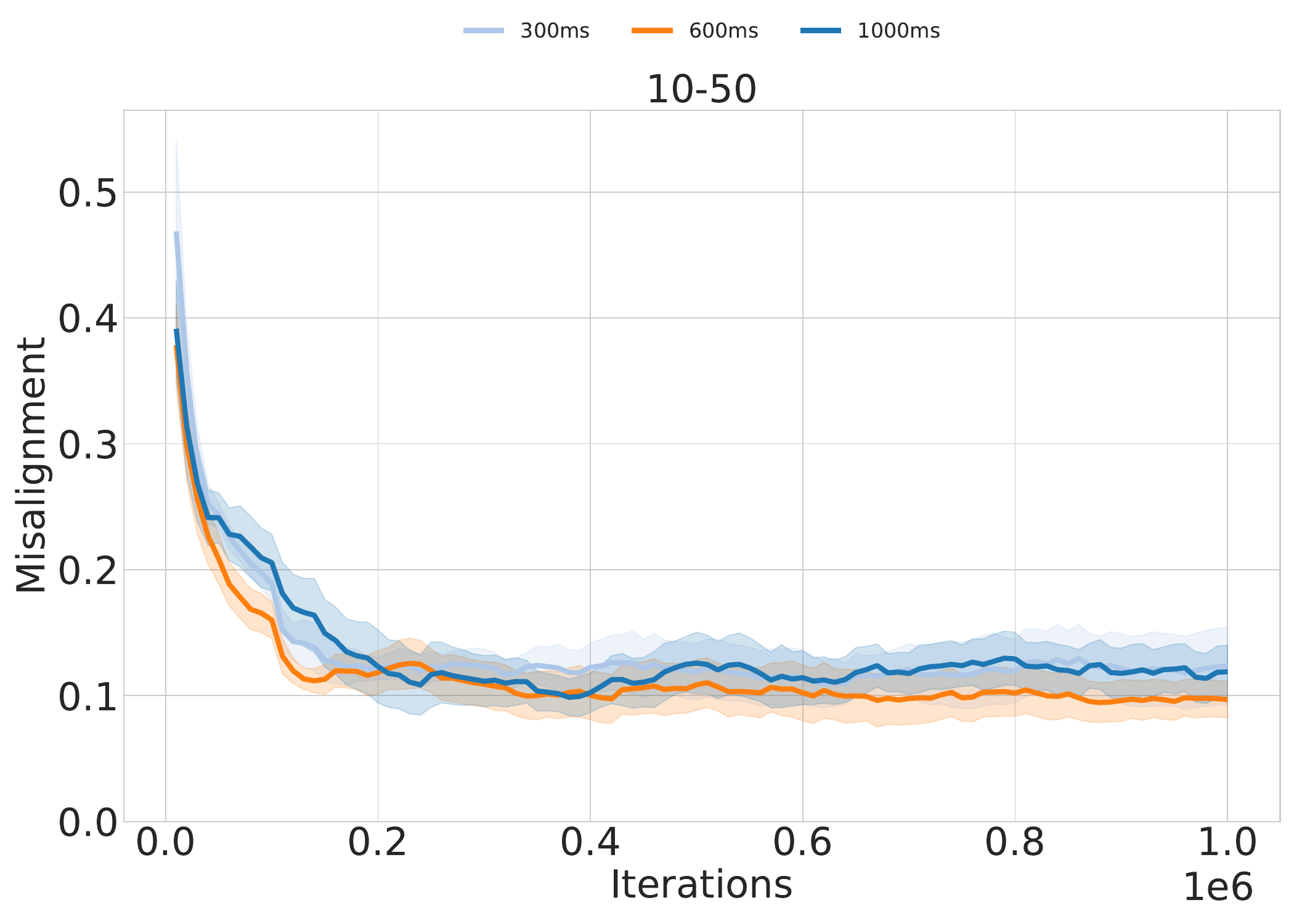}
    \end{subfigure}
    
    \caption{PathBB Learning curves of various credit assignment times for ReCOIL+FMR with a 10-50 data ratio. The shaded region represents the standard error.}
    \label{fig:pathbb-ca}
\end{figure}

%% file: appendix/entropy.tex
\begin{figure}[h]
    \centering
    \begin{subfigure}[b]{0.48\textwidth}
        \centering
        \includegraphics[width=\textwidth]{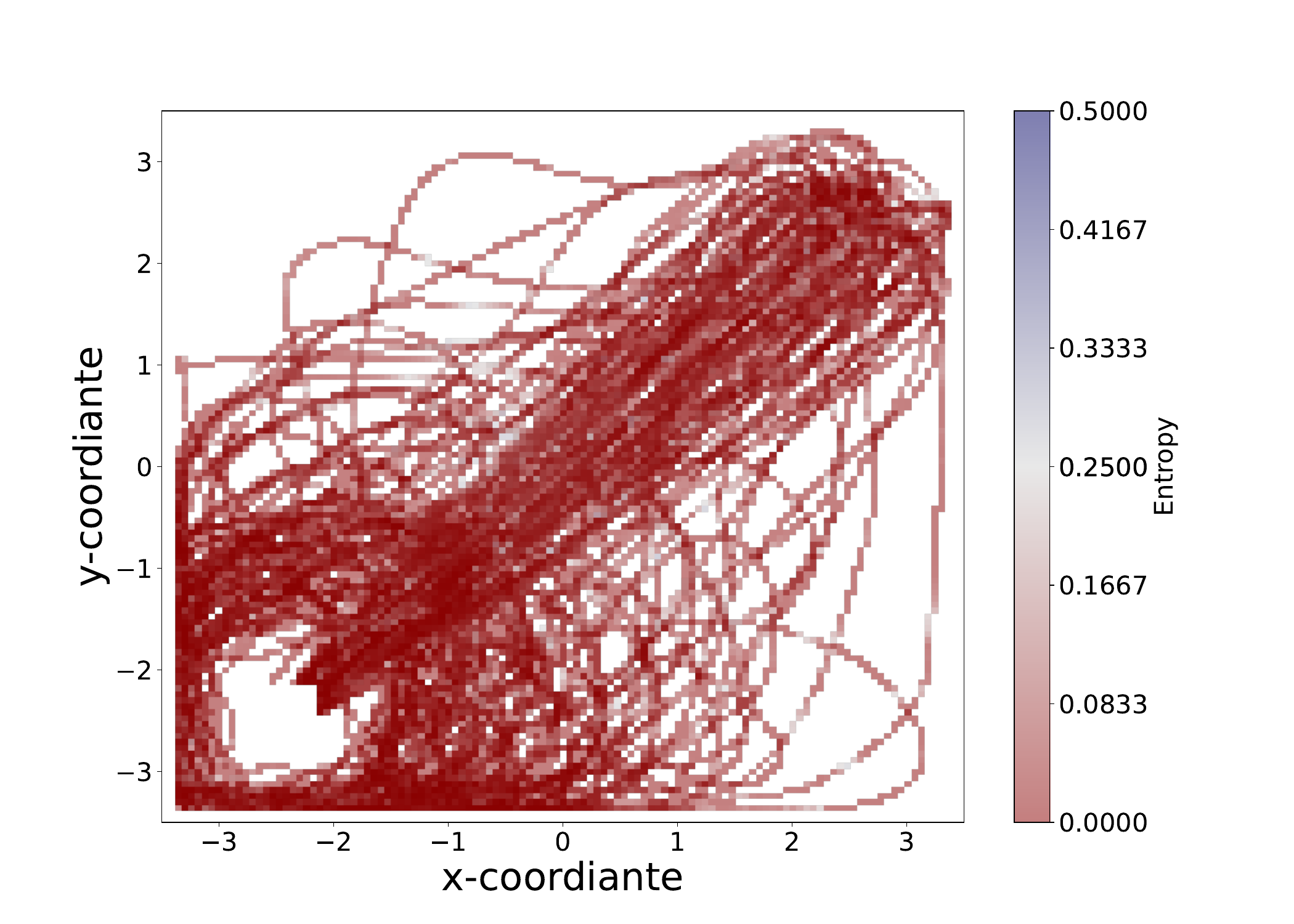}
        \caption{ReCOIL}
    \end{subfigure}
    \hfill
    \begin{subfigure}[b]{0.48\textwidth}
        \centering
        \includegraphics[width=\textwidth]{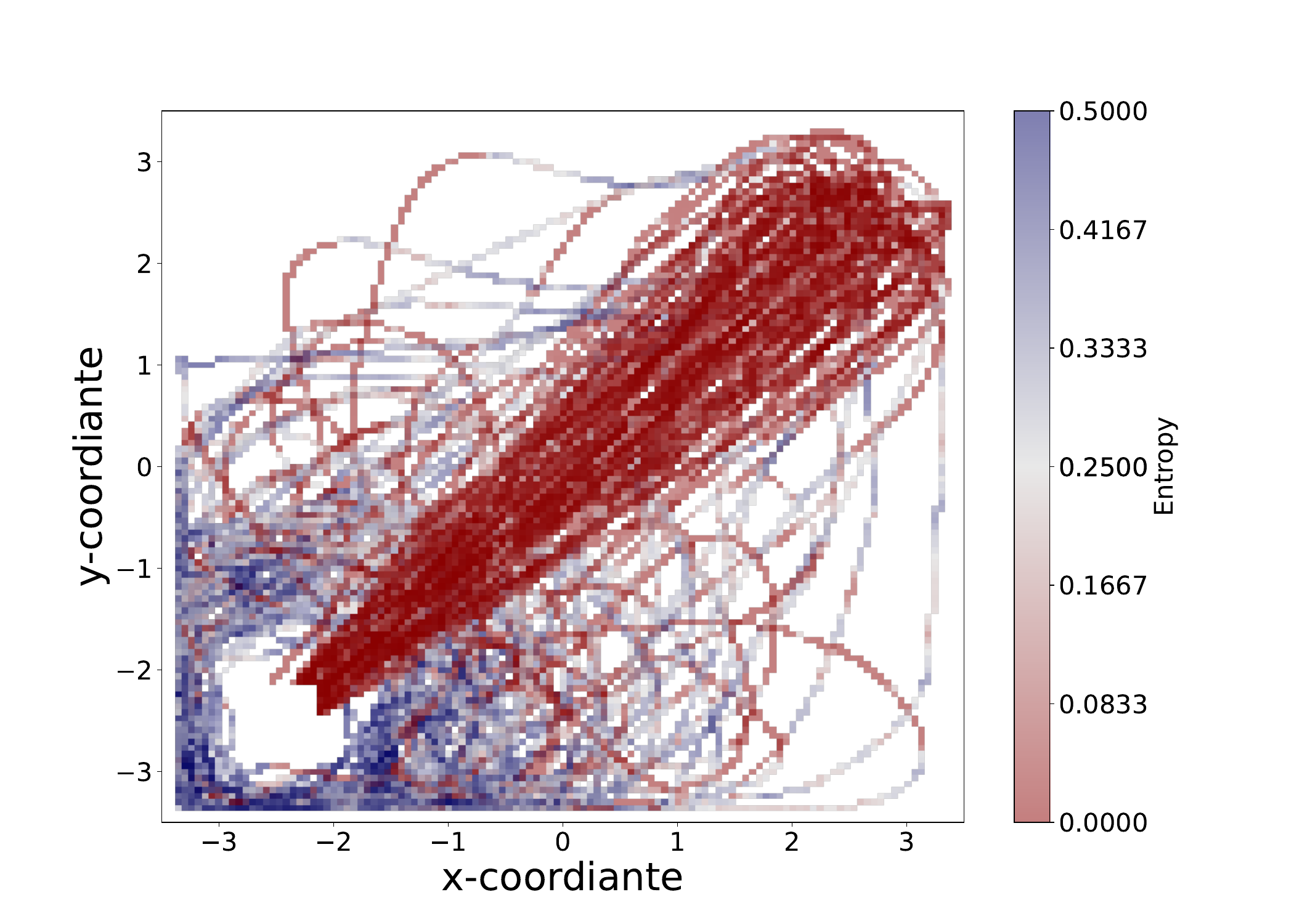}
        \caption{ReCOIL+FMR}
    \end{subfigure}
    \caption{PathM heatmap of normalized entropy for ReCOIL and ReCOIL+FMR.}
    \label{fig:pathm-entropy}
\end{figure}

\begin{figure}[h]
    \centering
    \begin{subfigure}[b]{0.48\textwidth}
        \centering
        \includegraphics[width=\textwidth]{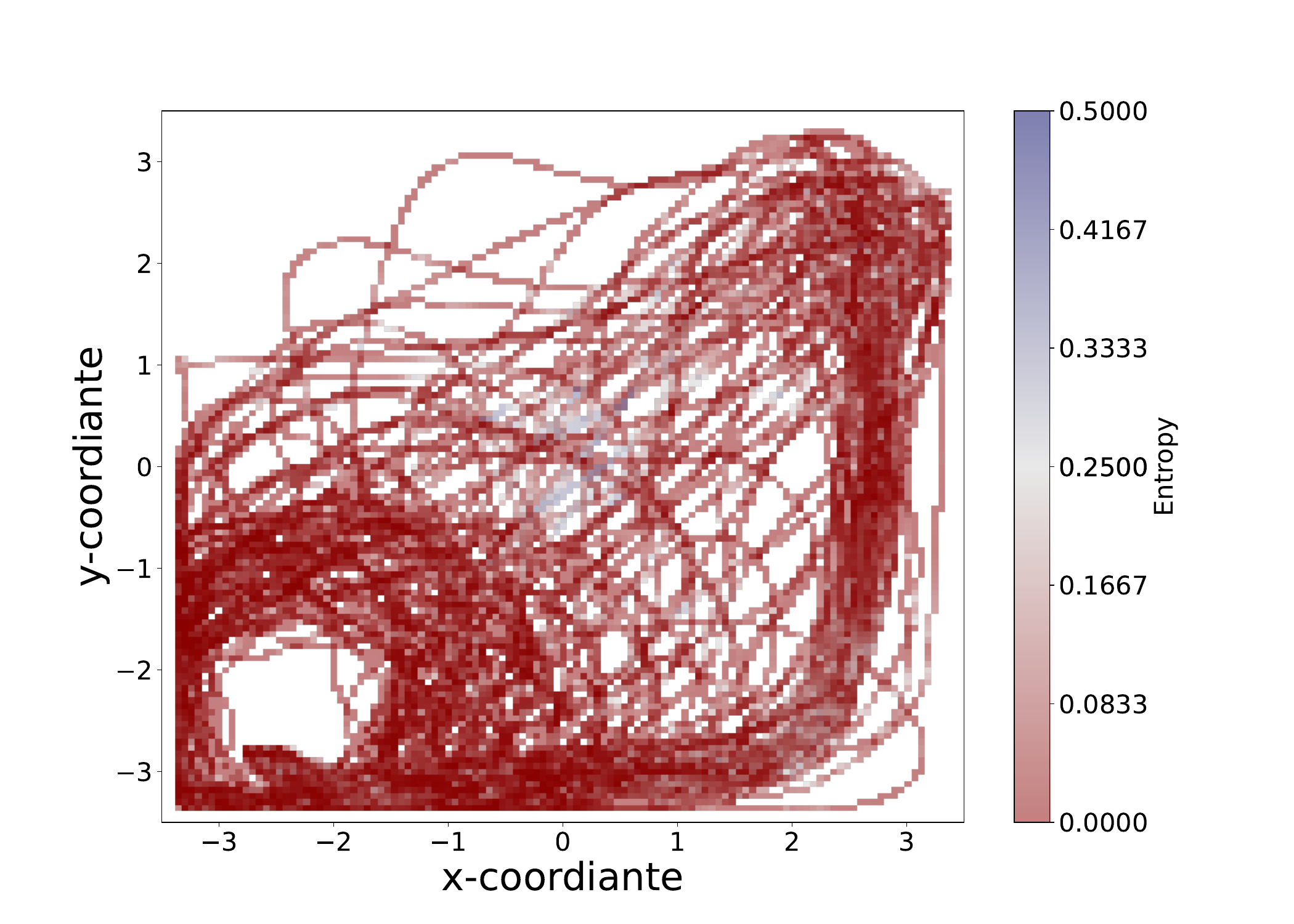}
        \caption{ReCOIL}
    \end{subfigure}
    \hfill
    \begin{subfigure}[b]{0.48\textwidth}
        \centering
        \includegraphics[width=\textwidth]{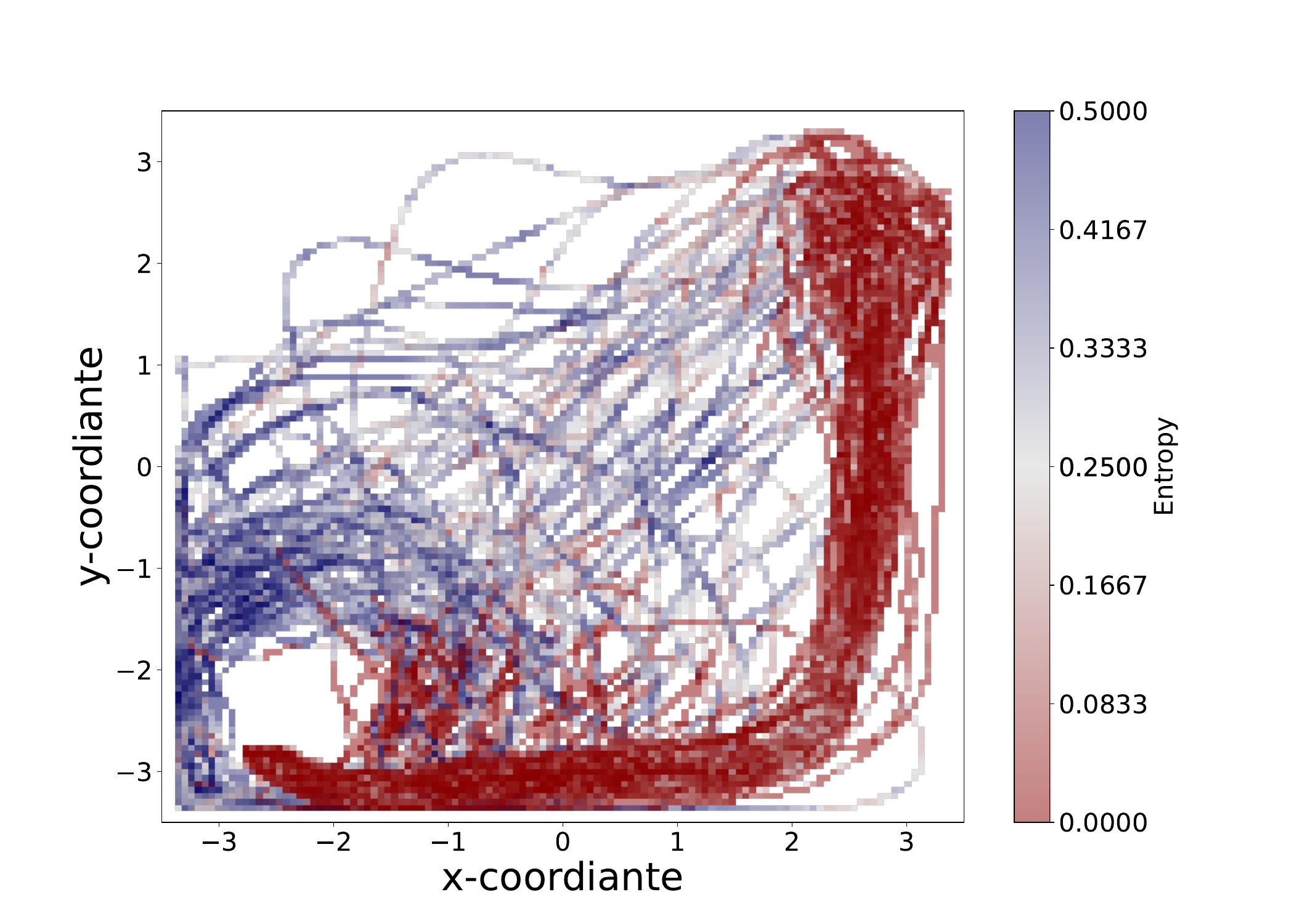}
        \caption{ReCOIL+FMR}
    \end{subfigure}
    \caption{PathBB heatmap of normalized entropy for ReCOIL and ReCOIL+FMR.}
    \label{fig:pathbb-entropy}
\end{figure}